\theoremstyle{plain}
\newtheorem{theorem}{Theorem}[section]
\newtheorem{proposition}[theorem]{Proposition}
\newtheorem{example}[theorem]{Example}
\newtheorem{lemma}[theorem]{Lemma}
\theoremstyle{definition}
\newtheorem{definition}[theorem]{Definition}
\theoremstyle{remark}
\DeclareMathOperator*{\argmin}{arg\,min}
\DeclareMathOperator{\tr}{Tr}
\DeclareMathOperator{\E}{\mathbb{E}}
\let\P\relax
\DeclareMathOperator{\P}{\mathbb{P}}
\icmltitlerunning{Statistical Guarantees for Link Prediction using Graph Neural Networks}
\begin{document}
\twocolumn[
\icmltitle{Statistical Guarantees for Link Prediction using Graph Neural Networks}



\icmlsetsymbol{equal}{*}

\begin{icmlauthorlist}
\icmlauthor{Alan Chung }{harvard}
\icmlauthor{Amin Saberi}{stanford}
\icmlauthor{Morgane Austern}{harvard}
\end{icmlauthorlist}

\icmlaffiliation{harvard}{Department of Statistics, Harvard University}
\icmlaffiliation{stanford}{Department of Management Science \& Engineering, Stanford University}

\icmlcorrespondingauthor{Alan Chung}{alanchung@g.harvard.edu}



\icmlkeywords{Machine Learning, graph neural network, statistical guarantees, link prediction}

\vskip 0.3in
]


\printAffiliationsAndNotice{}


\begin{abstract}
This paper derives statistical guarantees for the performance of Graph Neural Networks (GNNs) in link prediction tasks on graphs generated by a graphon. We propose a linear GNN architecture (LG-GNN) that produces consistent estimators for the underlying edge probabilities. We establish a bound on the mean squared error and give guarantees on the ability of LG-GNN to detect high-probability edges. Our guarantees hold for both sparse and dense graphs. Finally, we demonstrate some of the shortcomings of the classical GCN architecture, as well as verify our results on real and synthetic datasets. 


\end{abstract}


\section{Introduction}

Graph Neural Networks (GNNs) have emerged as a powerful tool for link prediction \cite{zhang2018link, GNNBook-ch10-zhang}. A significant advantage of GNNs lies in their adaptability to different graph types. Traditional link prediction heuristics tend to presuppose network characteristics. For example, the common neighbors heuristic presumes that nodes that share many common neighbors are more likely to be connected, which is not necessarily true in  biological networks \cite{ppi-kovasc}. In contrast, GNNs inherently learn predictive features through the training process, presenting a more flexible and adaptable method for link prediction.

This paper provides statistical guarantees for link prediction using GNNs, in graphs generated by the graphon model. A graphon is specified by  a symmetric measurable kernel function $W:\Omega^2\rightarrow [0,1]$. A graph $G_n = (V_n, E_n)$ with the vertex set $V_n = \{1, 2, \cdots, n\}$ is sampled from $W$ as follows: (i) each vertex $i \in V_n$ draws latent feature  $(\omega_i)\overset{i.i.d}{\sim} \mu$ for some probability distribution $\mu$ on $\Omega \subset \mathbb{R}^q$; (ii) the edges of $G_n$ are generated independently and with probability $W_{n, i, j} := \rho_n \cdot W(\omega_i, \omega_j),$ where  $\rho_n\in (0,1]$ is a constant called the sparsifying factor\footnote{Throughout the paper, we will assume $(\omega_i)\sim \text{Unif}[0,1]$. This is without loss of generality. For any graphon $\tilde{W}$ with features in some arbitrary $\Omega \subset \mathbb{R}^q$ sampled from $\mu$ on $\Omega$, there exists some graphon $W$ with latent features drawn from $\text{Unif}[0,1]$ so that the graphs generated from these two graphons are  equivalent in law. See Remark 4 in \cite{morgane_paper} for more details.}.

 The graphon model includes various widely researched graph types, such as Erdos-Renyi, inhomogeneous random graphs, stochastic block models, degree-corrected block models, random exponential graphs, and geometric random graphs as special cases; see \cite{lovasz2012large} for a more detailed discussion.

This paper's key contribution is the analysis of a linear Graph Neural Network model (LG-GNN) which can provably estimate the edge probabilities  in a graphon model. Specifically, we present a GNN-based algorithm that yields estimators, denoted as  $\hat{p}_{i,j}$, that converge to true edge probabilities $\rho_n W(\omega_i, \omega_j)$. Crucially, these estimators have mean squared error converging to 0 at the rate $o_{n \to \infty} (\rho_n^2)$. To our knowledge, this work is the first to rigorously characterize the ability of GNNs to estimate the underlying edge probabilities in general graphon models.





The estimators $\hat{p}_{i,j}$ are constructed in two main steps. We first employ LG-GNN (\cref{algo1}), to embed the vertices of $G_n$. Concretely, for each vertex $i \in [n]$, LG-GNN computes a set $\Lambda_i =\{ \lambda_i^0, \lambda_i^1, \dots, \lambda_i^L\},$ where $\lambda_i^k \in \mathbb{R}^{d_n}$ and  $d_n$ is the embedding dimension. Then, $\Lambda_i, \Lambda_j$ are used to construct estimators $\hat{q}_{i,j}^{(k)}$ for the \textit{moments} of $W$. We refer the reader to  \cref{sec:main-results} for the formal definition of the moments. Intuitively, the $k$th moment $W_{n, i, j}^{(k)}$ represents the probability that there is a path of length $k$ between vertices with latent features $\omega_i$ and $\omega_j$ in $G_n$.

The next  step is to show that when the number of distinct nonzero eigenvalues of $W$, denoted $m_W$, is finite, 
then the edge probabilities  $W_{n, i, j}$ can be written as a linear function of the moments $W_{n, i, j}^{(2: m_W+1)}$. This naturally motivates \cref{alg:compute_regression_coefs}, which learn the $W_{n, i, j}$'s from the moment estimators $\hat{q}_{i,j}^{(2: m_W+1)}$, using a constrained regression. The regression coefficients $\hat{\beta}^{n, m_W}$  are then used to produce estimators $\hat{p}_{i,j}$ for $W_{n, i, j}$. 

The main result of the paper (stated in  \cref{prop:convergence_of_c}) presents the convergence rate of the mean square error of $\hat{\beta}^{n, m_W}$. It shows that if $L$, the number of message-passing layers in LG-GNN, is at least $m_W-1$, then the mean square error converges to 0. That  implies that our estimators for the edge probabilities $W_{n, i, j}$ are consistent. For $L < m_W-1$, the theorem provides the rate at which the mean square error decreases when $L$ increases. 
The second main result, stated in \cref{prop:preserve-rank},  gives statistical guarantees on how well LG-GNN can detect in-community edges in a symmetric stochastic block model. A notable feature of this theorem is that the implied convergence rate is much faster than that of \cref{prop:convergence_of_c}, which demonstrates mathematically that \textit{ranking} high and low probability edges is easier than estimating the underlying probabilities of edges.

Finally, we would like to highlight two key aspects of the results of the paper. Firstly, our statistical guarantees for edge prediction are proven for scenarios when  node features are absent and the initial node embeddings \(\lambda_i^0\) are chosen at random. This underscores that effective link prediction can be achieved solely through the appropriate selection of GNN architecture, even in the absence of additional node data. The second point relates to graph sparsity: although graphons typically produce dense graphs, introducing the sparsity factor \(\rho_n\) results in vertex degrees of \(O(\rho_n \cdot n)\), facilitating the exploration of sparse graphs. Our findings are pertinent for \( \log(n)/n \ll \rho_n \leq 1\). Note that a sparsity of $\log(n)/n$ is the necessary threshold for connectivity \cite{spencer2001strange}, highlighting the generality of our results. 


While the primary focus of this paper is theoretical, we complement our theoretical analysis with experimental evaluations on real-world datasets (specifically, the Cora dataset) and graphs derived from random graph models. Our empirical observations reveal that in scenarios where node features are absent, LG-GNN exhibits performance comparable to the traditional Graph Convolutional Network (GCN) on simple random graphs, and  surpasses GCN in more  complex graphs sampled from graphons. Additionally, LG-GNN presents two further benefits: LG-GNN does not involve any parameter tuning (e.g., through the minimization of a loss function), resulting in significantly faster operation, and it avoids the common oversmoothing issues associated with the use of numerous message-passing layers.

\subsection{Organization of the Paper}

Section \ref{sec:related-works} discusses related works and introduces the motivation for our paper. \cref{sec:notation-preliminaries} introduces our notation and presents an outline for our exposition. \cref{sec:main-results} presents our main results and \cref{sec:negative-gcn} states a negative result for naive GNN architectures with random embedding initialization. Lastly, \cref{sec:identifiability} discusses the issues of identifiability, and \cref{sec:mainbodyexperiments} presents our experimental results.

\section{Related Works}
\label{sec:related-works}

Link prediction on graphs have a wide range of applications in domains ranging from social network analysis to drug discovery \cite{hasan2011survey, abbas2021application}. A survey of techniques and applications can be found in \cite{kumar2020link, martinez2016survey, comprehensive-survey}. 

Much of the existing theory on GNNs is regarding their expressive power. For example, \cite{xu2018powerful, morris2021weisfeiler} show that GNNs with deterministic node initializations have expressive power bounded by that of the 1-dimensional Weisfeiler-Lehman (WL) graph isomorphism test. Generalizations such as $k$-GNN \cite{morris2021weisfeiler} have been proposed to boost the expressive power higher in the WL-hierarchy. The Structural Message Passing GNN (SGNN) \cite{vignac2020building} was also proposed and was shown to be universal on graphs with bounded degrees, and converges to continuous "c-SGNNs" \cite{keriven2021universality}, which were also shown to be universal on many popular random graph models. Lastly, \cite{abboud2020surprising} showed that GNNs that use random node initializations are universal, in that they can approximate any function defined on graphs with fixed order. 

A recent wave of works focus on deriving statistical guarantees for graph representation algorithms. A common data-generating model for the graph is a graphon model \cite{lovasz2006limits,borgs2008convergent,borgs2012convergent}. A large literature has been devoted to establishing guarantees for community detection on graphons such as the stochastic block model; see \cite{abbe2018community} for an overview. For this task, spectral embedding methods have long been proposed (see \cite{deng2021strong,ma2021determining} for some recent examples). Lately, statistical guarantees for modern random walk-based graph representation learning algorithms have also been obtained. Notably \cite{davison2021asymptotics, barot2021community,qiu2018network,zhang2021consistency} characterize the asymptotic properties of the embedding vectors obtained by deepwalk, node2vec, and their successors and obtain statistical guarantees for downstream tasks such as edge prediction. Recently, some works also aim at obtaining learning guarantees for GNNs. Stability and transferability of certain untrained GNNs have been established in \cite{ruiz2021graphon,maskey2023transferability,ruiz2023transferability,keriven2020convergence}. For example \cite{keriven2020convergence} shows that for relatively sparse graphons, the embedding produced by an untrained GNN will converge in $L^2$ to a limiting embedding that depends on the underlying graphon. They use this to study the stability of the obtained embeddings to small changes in the training distribution. Other works established generalization guarantees for GNNs. Those depend respectively on the number of parameters in the GNN \cite{maskey2022generalization}, or on the VC dimension and Radamecher complexity \cite{esser2021learning} of the GNN. 

Differently from those two lines of work, our paper studies when link prediction is possible using GNNs, establishes statistical guarantees for link prediction, and studies how the architecture of the GNN influences its performance. More similar to our paper is \cite{kawamoto2018mean}, which exploits heuristic mean-field approximations to predict when community detection is possible using an untrained GNN. Note, however, that contrary to us, their results are not rigorous and the accuracy of their approximation is instead numerically evaluated.  \cite{lu2021learning} formally established guarantees for in-sample community detection for two community SBMs with a GNN trained via coordinate descent. However, our work establishes learning guarantees for general graphons beyond two-community SBMs, both in the in-sample and out-sample settings. Moreover, the link prediction task we consider, while related to community detection, is still significantly different. \cite{baranwal2021graph} studies node classification for contextual SBMs and shows that an oracle GNN can significantly boost the performance of linear classifiers. Another related work \cite{magner2020power} studies the capacity of GNN to distinguish different graphons when the number of layers grows at least as $L=\Omega(\log(n))$. Interestingly they find that GNN struggles in differentiating graphons whose expected degree sequence is not sufficiently heterogeneous, which unfortunately occurs for many graphon models, including the symmetric SBM. It is interesting to note that in \cref{prop:negative-result} we will show that this is also the regime where the classical GCN fails to provide reliable edge probability prediction. Finally, some learning guarantees have also been derived for other graph models. Notably \cite{alimohammadi2023local} studied the convergence of GraphSAGE and related GNN architectures under local graph convergence.
\section{Notation and Preliminaries} 
\label{sec:notation-preliminaries} 

In this section, we present our assumptions, some background regarding GNNs, and the link prediction goals that we focus on.

\subsection{Assumptions}

As mentioned in the introduction,  the random graph $G_n = (V_n, E_n)$ with the vertex set $V_n = \{1, 2, \cdots, n\}$ is sampled from a graphon $W:[0,1]^2\rightarrow [0,1]$, where each vertex $i \in V_n$ draws latent feature  $(\omega_i)\overset{i.i.d}{\sim}\text{Unif}[0,1]$ and the edges are generated independently and with probability $W_{n, i, j} := \rho_n \cdot W(\omega_i, \omega_j).$ We let $A = (a_{ij})$ denote the adjacency matrix. When the graph and context are clear, we let $W_n := \rho_n W$, and let $W_{n, i,j} := \rho_n W(\omega_i, \omega_j)$. We make the following three assumptions: 
\begin{gather}
 \label{asp1}\tag{$H_1$}  \log(n)/n \ll \rho_n \le 1 \\
\label{asp2}\tag{$H_2$} \exists \text{ } \delta_W 
> 0 \text{ s.t. } \delta_W\le W(\cdot, \cdot)\le 1-\delta_W \\
\label{asp3}\tag{$H_3$} \text{$W$ is a H\"{o}lder-by-parts function}
\end{gather}
We refer the reader to \cref{graphon-appendix} for a more detailed discussion.

\subsection{Graph Neural Networks}
An $L$-layer GNN, comprised of $L$ processing layers, transforms graph data into numerical representations, or embeddings, of each each vertex. Concretely, a GNN associates each vertex $i \in [n]$ to some $\lambda_i^L \in \mathbb{R}^{d_n},$ where we call $d_n$ the embedding dimension. The learned embeddings are then used for downstream tasks such as node prediction, graph classification or link prediction, as investigated in this paper.  

A GNN computes the embeddings iteratively through message passing. We let $\lambda_i^k$ denote the embedding produced for vertex $i$ after $k$ GNN iterations. As such, $\lambda_i^0$ denotes the initialization of the embedding for vertex $u$. The message passing layer can be expressed generally as 
$$
\lambda_i^{k+1} = \phi \left( \lambda_i^k,  \bigoplus_{j \in N(i)} \psi(\lambda_i^k, \lambda_j^k, e_{ij}) \right),
$$
where $N(i)$ is the set of neighbors of vertex $i$, $\phi, \psi$ are continuous functions, $e_{ij}$ is the feature of the edge $(u,v)$, and $\bigoplus$ is some permutation-invariant aggregation operator, for example, the sum \cite{GNNBook2022}. 

One classical architecture is the Graph Convolutional Network (GCN) \cite{kipf2017semisupervised}, whose update equation is given by \begin{equation}\label{GCN-equation}
    \lambda_i^{k} = \sigma \left(M_{k, 0} \lambda_i^{k-1}  + M_{k, 1} \sum_{j \in N(i)} \frac{\lambda_j^{k-1}}{\sqrt{|N(i)|\cdot|N(j)|}} \right),
\end{equation}
where $\sigma(\cdot)$ is a non-linear function and  $M_{k,0},M_{k,1}\in \mathbb{M}_{d_n\times d_n}(\mathbb{R})$ are matrices. These matrices are chosen by minimizing some empirical risk during a training process, typically through gradient descent. 

In some settings, additional node features for each vertex are given, and the initialization $\lambda_i^0$ is chosen to incorporate this information. In this paper, we focus on the setting when no node features are present, and a natural way to initialize our embeddings $\lambda_i^0$ is at random. One of our key messages is that even without additional node information, link prediction is provably possible with a correct choice of GNN architecture.

\subsection{Link Prediction}

Given a graph $G_n=([n],E_n)$ generated from a graphon, potential link prediction tasks are (a) to determine which of the non-edges are most likely to occur, or (b) to estimate the underlying probability of a particular edge $(i,j)$ according to the graphon. 
Here, we make the careful distinction between two different link prediction evaluation tasks. One task is regarding the \textbf{ranking} of a set of test edges. Suppose a set of test edges $e_1, e_2, \dots, e_k$ has underlying probabilities $p_{e_1} \ge p_{e_2} \ge \dots \ge p_{e_k}$ according to $W$. The prediction algorithm assigns a predicted probability $\hat{p}_{e_i}$ for each edge $e_i$ and is evaluated on how well it can extract the true ordering (e.g., the AUC-ROC metric). 

Another link prediction task is to estimate the underlying probabilities of edges in a random graph model. For example, in a stochastic block model, a practitioner might wish to determine the underlying connection probabilities, as opposed to simply determine the ranking. We will refer to this task as \textbf{graphon estimation}. It is important to note that the latter task is generally more difficult.


We also distinguish between two link prediction settings, i.e., the \textbf{in-sample} and \textbf{out-sample} settings. In in-sample prediction, the aim is to discover potentially missing edges between two vertices $i,j\in [n]$ already present at training time. On the contrary, in out-of-sample prediction, the objective is to predict edges among vertices that were not present at training. If $\tilde{V}$ are the set of vertices not present at training, the goal is to use the trained GNN to predict edges $(i,j)$ for $i, j \in \tilde{V}$, or $(i,j)$ for $i \in V_{train}$ and $j \in \tilde{V}.$

\section{Main Results}
\label{sec:main-results}
We introduce the \textit{Linear Graphon Graph Neural Network}, or LG-GNN in \cref{algo1}. The algorithm starts by assigning each node $i$ a random feature $Z_i \sim \frac{1}{\sqrt{d_n}} \mathcal{N}(0, I_{d_n}),$ where $d_n  = \Omega(1/\rho_n)$ is the embedding dimension. The first message passing layer computes $\lambda_i^0$ by summing $Z_j$ for all $j \in N(i)$, scaled by $\frac{1}{\sqrt{n}}.$ The subsequent layers normalize the $\lambda_j^k$'s by $1/n$ before adding them to $\lambda_i^k$. We show in \cref{prop:form-of-embedding} and \cref{lemma:expectation-dotproducts} that this procedure essentially counts the number of paths between pairs of vertices. Specifically, $\E[ \langle \lambda_i^{k_1}, \lambda_j^{k_2} \rangle | A, (\omega_\ell)]$ is a linear combination of the "empirical moments" of $W$ (\cref{defn:empirical-moment}). The second stage of \cref{algo1} then recovers these empirical moments by decoupling the aforementioned linear equations. We refer the reader to \cref{appendix:lg-gnn-construction} for more details and intuition behind LG-GNN.


We note that the scaling of $1/\sqrt{n}$ in the first message passing layer is crucial in allowing the embedding vectors $(\lambda_\ell^k)$ to learn information about the latent features $(\omega_\ell)$ asymptotically. We show in \cref{prop:negative-result} that without this construction, the classical GCN is unable to produce meaningful emebddings with random feature initializations.

\subsection{Statistical Guarantees for Moment Estimation}

Define the $k$th moment of a sparsified graphon $W_n$ as $W_n^{(k)}(x,y):=$ \begin{align}
   \int_{[0,1]^{k-1}} W_n(x, t_1)W_n(t_1, t_2)  \dots W_n(t_{k-1}, y) \text{d}t_{1:k-1}, \nonumber
\end{align} which is the probability that there is a path of a length $k$ between two vertices with latent features $x,y$, averaging over the latent features of the vertices in the path. As with the graphon itself, we denote $W_{n, i,j}^{(k)} := W_n^{(k)}(\omega_i, \omega_j).$ The following proposition shows that the estimators $\hat{q}_{i,j }^{(k)}$ are consistent estimators for these moments $W_{n, i,j}^{(k)}.$

\begin{proposition}
\label{prop:graph_concentration_sparse}
Suppose that the graph $G_n=([n],E_n)$ is generated according to a graphon $W_n=\rho_nW$. Suppose that assumptions (\ref{asp2}) and (\ref{asp3}) hold. Then, with probability at least $1 - 5/n - n \cdot \rm{exp}(-\delta_W \rho_n(n-1)/3)$,  for all $2 \leq k \leq L+2,$ 
\begin{gather}
    \left| \hat{q}_{i,j}^{(k)} - {W}_{n, i, j}^{(k)} \right| 
    \leq \frac{\rho_n^{k-1}}{\sqrt{n-1}} \log(n)^k \left[ 3 a_k \sqrt{\rho_n} + \frac{96 a_{k-1}}{\sqrt{d_n}} \right],
\end{gather}
where $a_k =C  (8(k+2))^k k^{k+1}\sqrt{k!}$ and $C$ is some absolute constant.
\end{proposition}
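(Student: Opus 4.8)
The plan is to split the error $\hat q_{i,j}^{(k)} - W_{n,i,j}^{(k)}$ into three pieces that isolate the three sources of randomness — the Gaussian features $(Z_\ell)$, the edges of $G_n$ given the latent features, and the latent features $(\omega_\ell)$ themselves — and to control each by its own concentration inequality before recombining. Let $\widehat W^{(m)}_{ij}$ denote the normalized path-count statistic (essentially $n^{-(m-1)}(A^m)_{ij}$) that \cref{prop:form-of-embedding} and \cref{lemma:expectation-dotproducts} identify, up to the fixed explicit linear mixing inverted by the decoupling step of \cref{algo1}, with $\E[\langle\lambda_i^{k_1},\lambda_j^{k_2}\rangle\mid A,(\omega_\ell)]$, so that the decoupled statistic run on these conditional expectations returns exactly $\widehat W^{(k)}_{ij}$; and let $\widetilde W^{(k)}_{ij} := n^{-(k-1)}\sum_{\ell_1,\dots,\ell_{k-1}} W_n(\omega_i,\omega_{\ell_1})W_n(\omega_{\ell_1},\omega_{\ell_2})\cdots W_n(\omega_{\ell_{k-1}},\omega_j)$ be the corresponding V-statistic in the latent features. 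Then
\[
 |\hat q_{i,j}^{(k)} - W_{n,i,j}^{(k)}| \le |\hat q_{i,j}^{(k)} - \widehat W^{(k)}_{ij}| + |\widehat W^{(k)}_{ij} - \widetilde W^{(k)}_{ij}| + |\widetilde W^{(k)}_{ij} - W_{n,i,j}^{(k)}|,
\]
call these terms (I), (II) and (III). For (I) I condition on $A$ and $(\omega_\ell)$: the embeddings $\lambda_i^k$ are then fixed linear images of the Gaussian matrix $(Z_\ell)$, so each $\langle\lambda_i^{k_1},\lambda_j^{k_2}\rangle$ is a quadratic form in a standard Gaussian vector and $\hat q_{i,j}^{(k)}$ is a fixed linear combination of such forms. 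A Hanson--Wright bound controls the deviation of each form from its mean by its Frobenius norm times a $\sqrt{\log n}$ factor chosen so that a union bound over all pairs $(i,j)$ and all $k\le L+2$ goes through; the Frobenius norms are bounded using the $1/n$ normalization in the message-passing layers together with the minimum-degree event, and the decoupling coefficients (known explicitly after $k-1$ rounds of message passing) are what turn this into the $96\,a_{k-1}/\sqrt{d_n}$ contribution, with $a_{k-1}$ recording the number and magnitude of those coefficients.

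For (II) I write $A_{\ell\ell'} = W_n(\omega_\ell,\omega_{\ell'}) + \Xi_{\ell\ell'}$ with $\E[\Xi_{\ell\ell'}\mid(\omega_\ell)]=0$, $|\Xi_{\ell\ell'}|\le 1$ and conditional variance at most $\rho_n$, expand $n^{-(k-1)}(A^k)_{ij}$ over the $2^{k}$ ways of labelling each of the $k$ edges of a walk as ``signal'' ($W_n$) or ``noise'' ($\Xi$), and bound each noise-containing term by a conditional moment/Bernstein estimate that also tracks the walk shape (which intermediate vertices coincide). Each noise edge costs a factor $\sqrt{\rho_n}$ (its standard deviation) and each genuinely averaged intermediate vertex costs $1/\sqrt n$, so the dominant term is the single-noise-edge one, of order $\rho_n^{k-1}\sqrt{\rho_n}/\sqrt{n}$ up to the combinatorial factor $a_k$ and a $\log(n)^k$ factor (one logarithm per edge, from the high-probability Bernstein tails), which is exactly the $3\,a_k\sqrt{\rho_n}$ contribution. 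In practice I would run this as an induction on the message-passing depth $k$, showing that concentration of $\lambda^{k-1}$ propagates to $\lambda^{k}$ under one more multiplication by $A/n$, which makes $a_k$ and the $\log(n)^k$ factor accumulate one layer at a time; alternatively one can invoke the sharp bound $\|A-W_n\|_{\mathrm{op}}=O(\sqrt{\rho_n n})$ valid under \ref{asp1} and \ref{asp2}, though extracting the $(i,j)$ entry from operator-norm control alone loses $n$-factors, so the entrywise moment method is cleaner.

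For (III), fix $i,j$ and condition on $\omega_i,\omega_j$: $\widetilde W^{(k)}_{ij}$ is an incomplete V-statistic in the i.i.d.\ variables $\omega_1,\dots,\omega_n$, and changing one $\omega_\ell$ moves it by $O(k\rho_n^{k-1}/n)$ since $\omega_\ell$ appears in an $O(k/n)$ fraction of the $n^{k-1}$ summands, so McDiarmid gives a deviation of order $\rho_n^{k-1}\sqrt{k\log n/n}$ with probability $1-n^{-2}$ uniformly after a union bound; the $O(k^2/n)$ bias of $\widetilde W^{(k)}_{ij}$ relative to $W_{n,i,j}^{(k)}$ coming from repeated indices is of the same order and is absorbed. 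Finally I collect failure probabilities: the minimum-degree event costs $n\exp(-\delta_W\rho_n(n-1)/3)$ by a Chernoff bound using \ref{asp2} and a union bound over vertices, while the Gaussian-chaos, edge-noise and Monte-Carlo events (at most five in total) are each tuned to cost at most $1/n$, which yields the stated $1-5/n-n\exp(-\delta_W\rho_n(n-1)/3)$.

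I expect (II) to be the main obstacle: controlling, simultaneously over all $\binom n2$ pairs and all $k\le L+2$, the concentration of the degree-$k$ multilinear statistics $(A^k)_{ij}$ around the moments of $W_n$ with the sharp $\sqrt{\rho_n}$ rate. The difficulty is the combinatorial bookkeeping of walk shapes that forces the (large but harmless) constant $a_k = C(8(k+2))^k k^{k+1}\sqrt{k!}$, together with the fact that the edges are only conditionally independent given $(\omega_\ell)$, so the noise expansion must be carried out under the conditional law and then married to term (III). Organizing the whole argument as an induction over the message-passing depth, so that $a_k$, the $\log(n)^k$ factor and the error bound all grow one layer at a time, seems to be the cleanest way to keep the bookkeeping under control, and it also explains why the hypothesis $k\le L+2$ enters.
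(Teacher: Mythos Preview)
Your three-term decomposition and your handling of (I) and (III) match the paper's proof. The identity $\E[\hat q_{i,j}^{(k)}\mid A,(\omega_\ell)]=\widehat W^{(k)}_{n,i,j}$ is exactly \cref{lemma:form-of-q}, and (I) is then controlled by a Hanson--Wright-type bound for Gaussian quadratic forms (\cref{lemma:gaussian-dotproduct}); (III) is McDiarmid followed by a bias count for walks with repeated indices, just as you describe. The only substantive difference is in (II): instead of your signal/noise expansion $A=W_n+\Xi$ with per-edge Bernstein control, the paper invokes a black-box concentration inequality for low-degree polynomials of independent Bernoullis (\cref{luna2}, due to Kim) applied to $P((a_{k,\ell}))=\widehat W^{(k)}_{n,i,j}$ conditional on $(\omega_\ell)$; this delivers the $\rho_n^{k-1/2}/\sqrt{n}$ tail in one stroke, and the constant $a_k$ drops out of bounding $\mu_0=\E[P]$ and $\mu_1=\max_{|A|\ge1}\E[\partial_A P]$ via the same walk-shape counting you sketch. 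Your moment-expansion route is more elementary and should also succeed, though the bookkeeping needed to recover the precise shape of $a_k$ without the Kim--Vu machinery is heavier. Two minor corrections: the paper's intermediate object in the graph step is $\E[\widehat W\mid(\omega_\ell)]$ rather than your V-statistic $\widetilde W$ (they differ only on walks with repeated edges, which both arguments absorb as lower order); and the $n\exp(-\delta_W\rho_n(n-1)/3)$ probability comes from an \emph{upper}-bound-on-degree event (\cref{lemma:upper-bound-degree}, used to control the Frobenius norm in (I)), not a minimum-degree one.
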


\begin{algorithm}[h]
\caption{LG-GNN architecture}
\label{algo1}
\textbf{Input:} a Graph $G_n=([n],E_n)$; $L \ge 0$\\
\textbf{Output:} estimators $\hat{q}_{i,j}^{(k)}$ for the $k$th moments $W_{n,i,j}^{(k)}.$ \\

Sample $(Z_i)_{i=1}^n \stackrel{iid}{\sim} \frac{1}{\sqrt{d_n}} \mathcal{N}(0, I_{d_n}).$

\textbf{GNN Iteration:}

\For{$i \in [n]$}{
   $\lambda_i^0 \gets \frac{1}{\sqrt{n-1}} \sum_{\ell=1}^n a_{i \ell} Z_\ell$
}
\For{$k \in [L]$}{
    \For{$i \in [n]$} {
    $\lambda_i^k \gets \lambda_i^{k-1} + \frac{1}{n-1} \sum_{\ell \leq n} a_{i\ell} \lambda_\ell^{k-1}$
    }
} 
\text{} \\
\textbf{Computing Estimators for $W_{n,i,j}^{(k)}$:}

\For{$i \neq j$}{
    $\hat{q}_{i,j}^{(2)} := \langle \lambda_i^0, \lambda_j^0 \rangle.$
}
\For{$k \in \{3, 4, \dots, L+2\}$}{
    $\hat{q}_{i,j}^{(k)}:= \langle \lambda_i^{k-2}, \lambda_j^0 \rangle -\sum_{r=0}^{k-3} \binom{k-2}{r} \hat{q}_{i,j}^{(r+2)}$
}

\textbf{Return: $ \big \{ (\hat{q}_{ij}^{(2)}, \hat{q}_{ij}^{(3)}, \dots, \hat{q}_{ij}^{(L+2)})_{i \neq j} \big \}$} 
\end{algorithm}

\subsection{Edge Prediction Using the Moments of the Graphon}
\label{sec:linear-relationship}
\cref{prop:graph_concentration_sparse} relates the embeddings produced by LG-GNN to the underlying graph moments. 
We show in \cref{prop:convergence_of_c} that the  $\hat{q}_{i,j}^{(k)}$s can be used to derive consistent estimators for the underlying edge probability $W_{n, i,j}$ between vertices $i$ and $j$. 

The key observation is that for  any H\"{o}lder-by-parts graphon $W$, there exists some $m\in \mathbb{N}\cup\{\infty\}$ such that $$W(x,y) =  \sum_{i=1}^m \mu_i \phi_i(x) \phi_i(y)\qquad\forall x,y\in[0,1]$$ for some sequence of eigenvalues $(\mu_i)$ with $|\mu_i| \leq 1$ and eigenfunctions $(\phi_i)$ orthonormal in $L^2([0,1])$. This, coupled with the Cayley-Hamilton theorem \cite{hamilton1853lectures}, implies that $W$ can be re-expressed as a linear combination of its moments. We will refer to the number of distinct nonzero eigenvalues of $W$ as $m_W$, which we call the \textit{distinct rank}.

\begin{proposition}
\label{prop:linear-relationship}
    Suppose that $W:[0,1]^2\rightarrow[0,1]$ is a H\"{o}lder-by-parts graphon. Then, there exists a vector $\beta^{*, m_W} = \left(\beta_1^{*, m_W}, \beta_2^{*, m_W} \dots, \beta_{m_W}^{*, m_W} \right)$ such that for all $(x,y) \in [0,1]^2$,
    \begin{equation}
    \label{eq:linear-fit}
        W(x,y) = \sum_{i=1}^{m_W} \beta_i^{*, m_W} W^{(i+1)}(x,y).
    \end{equation}
\end{proposition}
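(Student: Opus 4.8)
The plan is to carry out the entire argument at the level of the integral operator attached to $W$, and pass back to kernels only at the very end. Let $T_W\colon L^2([0,1])\to L^2([0,1])$ be defined by $(T_W f)(x)=\int_0^1 W(x,y)f(y)\,\mathrm{d}y$. Since $W$ takes values in $[0,1]$, Cauchy--Schwarz on $[0,1]$ gives $\|T_W f\|_\infty\le\|f\|_1\le\|f\|_2$, hence $\|T_W\|_{\mathrm{op}}\le 1$; moreover $W\in L^2([0,1]^2)$, so $T_W$ is Hilbert--Schmidt, hence compact, and it is self-adjoint because $W$ is symmetric. By the spectral theorem for compact self-adjoint operators, and assuming the standing hypothesis $m_W<\infty$ (without which \eqref{eq:linear-fit} is not well posed), we may write $T_W=\sum_{j=1}^{m_W}\nu_j\Pi_j$, where $\nu_1,\dots,\nu_{m_W}$ are the distinct nonzero eigenvalues (so $|\nu_j|\le 1$) and $\Pi_j$ is the orthogonal projection onto the finite-dimensional eigenspace $\ker(T_W-\nu_j I)$; the projection onto $\ker T_W$ does not appear, as its eigenvalue is $0$. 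Because the $\Pi_j$ are mutually orthogonal projections, $T_W^{\,k}=\sum_{j=1}^{m_W}\nu_j^{\,k}\Pi_j$ for every $k\ge 1$, and by the definition of the moments one verifies inductively that $T_W^{\,k}$ is exactly the integral operator with kernel $W^{(k)}$.

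With this in hand, finding $\beta^{*,m_W}$ is pure linear algebra. For any coefficient vector $\beta=(\beta_1,\dots,\beta_{m_W})$,
\begin{equation*}
\sum_{i=1}^{m_W}\beta_i\,T_W^{\,i+1}\;=\;\sum_{j=1}^{m_W}\Big(\sum_{i=1}^{m_W}\beta_i\,\nu_j^{\,i+1}\Big)\Pi_j,
\end{equation*}
so this operator equals $T_W=\sum_j\nu_j\Pi_j$ as soon as $\sum_{i=1}^{m_W}\beta_i\,\nu_j^{\,i+1}=\nu_j$ for every $j$. Dividing by $\nu_j^2\neq 0$, this says precisely that the polynomial $P(t):=\sum_{i=1}^{m_W}\beta_i\,t^{\,i-1}$, of degree at most $m_W-1$, satisfies $P(\nu_j)=1/\nu_j$ for all $j$. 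Since the nodes $\nu_1,\dots,\nu_{m_W}$ are distinct, Lagrange interpolation yields exactly one such polynomial, and reading off its coefficients defines $\beta^{*,m_W}$ (uniquely). This is the Cayley--Hamilton remark preceding the proposition in disguise: $T_W$ restricted to the finite-dimensional space $(\ker T_W)^\perp$ is invertible and is annihilated by $\prod_{j}(t-\nu_j)$, whose constant term $\prod_j(-\nu_j)$ is nonzero, which lets one solve for the identity on $(\ker T_W)^\perp$ — and hence for $T_W$ — as a polynomial in $T_W$ with no constant term; on $\ker T_W$ both sides vanish, so the identity $\sum_{i=1}^{m_W}\beta_i^{*,m_W}T_W^{\,i+1}=T_W$ holds globally.

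It remains to translate this operator identity back to kernels: two Hilbert--Schmidt integral operators coincide iff their kernels agree almost everywhere, so $W(x,y)=\sum_{i=1}^{m_W}\beta_i^{*,m_W}W^{(i+1)}(x,y)$ for a.e.\ $(x,y)\in[0,1]^2$. To upgrade to every $(x,y)$, one uses (\ref{asp3}): each $W^{(i+1)}$ with $i+1\ge 2$ is obtained from $W$ by at least one integration, and a short computation shows $W$ and all its moments are H\"older-by-parts with respect to a common partition of $[0,1]^2$ into rectangles; on the interior of each rectangle both sides of \eqref{eq:linear-fit} are continuous, so a.e.\ equality forces pointwise equality there, and the measure-zero boundary is handled by the one-sided continuity built into the H\"older-by-parts structure. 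The substance of the proof is the elementary interpolation step; the only analytic points requiring care are the reduction to a compact self-adjoint Hilbert--Schmidt operator with spectrum in $[-1,1]$ and the a.e.-to-everywhere upgrade, and the only bookkeeping subtlety is the role of the zero eigenvalue — it must be excluded so that $\prod_j\nu_j\neq 0$, yet it causes no trouble since $T_W^{\,k}$ annihilates $\ker T_W$ for all $k\ge 1$. I expect this last point, together with confirming $m_W<\infty$ is the operative hypothesis, to be the main (mild) obstacle.
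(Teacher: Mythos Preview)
Your proof is correct and rests on the same idea as the paper's: both use the spectral decomposition of $T_W$ to reduce \eqref{eq:linear-fit} to a linear system whose coefficient matrix is (after dividing row $j$ by $\nu_j^2$) Vandermonde in the distinct nonzero eigenvalues, hence invertible. The paper works directly at the kernel level, proving by induction that $W^{(k)}(x,y)=\sum_i\mu_i^k\phi_i(x)\phi_i(y)$ and then invoking the Vandermonde determinant; you work at the operator level with spectral projections and cast solvability as Lagrange interpolation (equivalently Cayley--Hamilton on $(\ker T_W)^\perp$). These are equivalent packagings of the same argument. Your version is a bit more careful on two points the paper glosses over: by using the projections $\Pi_j$ you handle eigenvalue multiplicities cleanly (the paper's expansion $W=\sum_{i=1}^{m_W}\mu_i\phi_i(x)\phi_i(y)$ tacitly treats the eigenvalues as simple), and you explicitly address the upgrade from a.e.\ equality of kernels to pointwise equality via the H\"older-by-parts structure, which the paper does not discuss.
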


The above suggests the following algorithm for edge prediction using the embedding produced by LG-GNN.
\begin{algorithm}[h]
\caption{LG-GNN edge prediction algorithm}
\label{alg:compute_regression_coefs}
\textbf{Input:}  Graph $G_n=([n],E_n)$, search space $\mathcal{F}$, threshold $\beta$; $L \ge 0$. \\
\textbf{Output:} Set of predicted edges 


Using \cref{algo1},  compute $q_{i,j}^{(2:L+2)} := (\hat q_{i,j}^{(2)},\dots, q_{i,j}^{(L+2)})$ for every vertex $i,j$

\textbf{Compute:} $$\hat{\beta}^{n, L+1} = \argmin_{\beta \in \mathcal{F}} \sum_{i\ne j} \Big(\left \langle \beta,\hat q_{i,j}^{(2:L+2)}\right \rangle-a_{i,j}\Big)^2$$

\textbf{Compute:} $\hat p_{i,j}:=\left \langle \hat \beta^{n, L+1},\hat q_{i,j}^{(2:L+2)} \right \rangle $ for all $i,j$

\textbf{Return:} $\{(i,j) |~\hat p_{i,j}\ge \gamma\}$ the set of predicted edges.
\end{algorithm}

\cref{alg:compute_regression_coefs} estimates the  edge probabilities by regressing the moment estimators $\hat{q}_{i,j}^{(2: L+2)}$ onto the  $a_{ij}$'s. The coefficients of the regression are chosen through constrained optimization. This is necessary due to high multi-collinearity among the observations $\hat{q}_{i,j}^{(2: L+2)}$. Other methods to control the multi-collinearity include using Partial Least Squares (PLS) regression in \cref{alg:compute_regression_coefs}. This leads to an alternative algorithm presented in \cref{alg:pls}, called PLSG-GNN, that is also evaluated in the experiments section.

Before stating our main theorem, we define a few quantities. 
\begin{definition}[MSE error]
For any vector $\beta \in \mathbb{R}^{k}$, define the mean squared error
\begin{equation*}
R_T(\beta)=\E \left[ \left( \left \langle \beta, \hat{q}_{n+1, n+2}^{(2, \rm{len}(\beta)+1)} \right \rangle - W_n(\omega_{n+1},\omega_{n+2}) \right) ^2 \right],
\end{equation*}
\end{definition}
where the expectation is taken with respect to the randomness in $\omega_{n+1}, \omega_{n+2}$. We interpret $n+1, n+2$ as being two new vertices that were not present at training time.  We also define the following quantity, used in the statement of \cref{prop:convergence_of_c}:
$$R(\beta) = \E \left[ \left( \left \langle \beta,  {W}^{(2, \rm{len}(\beta)+1)}(x,y)   \right \rangle - W(x,y) \right) ^2 \right].$$

For some set $\mathcal{F} \subset \mathbb{R}^{k}$, we can interpret $\argmin_{\beta \in \mathcal{F}} R(\beta)$ as the ``$L^2$ projection" of $W(x,y)$ onto the subspace spanned by $\langle \beta, W^{(2: k+1)}(x,y) \rangle$. In particular, if $\mathcal{F}$ contains a vector $\beta^{*, k}$ satisfying \cref{eq:linear-fit}, then $\argmin_{\beta \in \mathcal{F}} R(\beta) = 0.$ In the context of \cref{alg:compute_regression_coefs}, this suggests that we can obtain a consistent estimator for $W_{n, i,j}.$ Hence, since \cref{prop:linear-relationship} guarantees that such a $\beta^{*, k}$ exists when $m_W < \infty$, the intuition is that if both the search space and number of layers $\mathcal{F}, L$ are sufficiently large, then \cref{alg:compute_regression_coefs} should produce estimators $\hat{p}_{i,j}$ that are consistent.


The following theorem shows that this intuition is indeed true. It states that $\hat p_{i,j}$ is a consistent estimator for the edge probability $W_{n, i,j}$ if the number of LG-GNN layers is large enough, and characterizes its convergence rate. We will show this when the search space $\mathcal{F}$ is a rectangle of the form $\mathcal{F}:=\prod_{i=1}^{L+1} [- \frac{b_i}{\rho_n^i} , \frac{b_i}{\rho_n^i}] \subset \mathbb{R}^{L+1} $ for some $b_i>0$. We discuss the implications of this result after its statement.


\begin{theorem}[Main Theorem]
\label{prop:convergence_of_c}
Let $G_n = ([n], E_n)$ be sampled from some graphon $\rho_n W$, where $W$ satisfies (\ref{asp2}) and (\ref{asp3}).  Take $\hat{\beta}^{n, L+1}$ to be the  estimators given by  \cref{alg:compute_regression_coefs}. Define $\beta^{*, L+1} \in \argmin_{\beta \in \mathcal{F}} R(\beta)$ to be the population minimizer. Then, with probability at least $1-5/n - n \cdot \exp(-\delta_W \rho_n (n-1)/3),$ the MSE converges at rate 
\begin{gather*}
R_T(\hat{\beta}^{n, L+1}) \leq R(\beta^{*,L+1}) + \tilde{O}\left( \frac{\kappa_1^2 \rho_n^2}{\sqrt{n}} \right)\\ + \kappa_1\kappa_2 \frac{\rho_n \cdot\log(n)^{L+1}}{\sqrt{n}} \left[\sqrt{\rho_n} + \frac{1}{\sqrt{d_n}} \right],
\end{gather*}
where $\kappa_1=O((1-\delta_W)\sum_{i=1}^{L+1}|b_i|(1-\delta_W)^i)$ and  $\kappa_2=O(\sum_{i=1}^{L+1}|b_i|)$.

\end{theorem}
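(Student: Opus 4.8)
The plan is to run an empirical‑risk‑minimisation (bias–variance) decomposition of the test error $R_T(\hat\beta^{n,L+1})$, whose three pieces produce the three terms of the bound: (a) replacing the LG‑GNN moment estimators $\hat q^{(k)}_{i,j}$ by the true moments $W^{(k)}_{n,i,j}$; (b) the fluctuation of the constrained least–squares estimator around its population target; (c) the error of approximating $W$ by a linear combination of its first $L+1$ moments. Everything is conditioned on the event of \cref{prop:graph_concentration_sparse} (applied to the graph augmented by the two test vertices $n+1,n+2$), which has the stated probability; the auxiliary concentration events used below each fail with probability $O(1/n)$ and are folded in. Two uniform \emph{a priori} bounds drive the argument. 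Since $W^{(k)}_{n,i,j}=\rho_n^k W^{(k)}(\omega_i,\omega_j)$ and, by \cref{asp2}, $\delta_W^{\,k}\le W^{(k)}\le(1-\delta_W)^k$, the scaling $\mathcal F=\prod_{i=1}^{L+1}[-b_i/\rho_n^i,\,b_i/\rho_n^i]$ makes $|\langle\beta,W_n^{(2:L+2)}(x,y)\rangle|=O(\rho_n\kappa_1)$ uniformly on $\mathcal F$, while the $\rho_n^{k-1}$ factor in \cref{prop:graph_concentration_sparse} cancels the $\rho_n^{-i}$ in $\mathcal F$ to give $|\langle\beta,\hat q^{(2:L+2)}_{i,j}-W^{(2:L+2)}_{n,i,j}\rangle|\le\kappa_2\,n^{-1/2}\log(n)^{L+1}(\sqrt{\rho_n}+d_n^{-1/2})$ up to $L$‑dependent constants (hence also $|\langle\beta,\hat q^{(2:L+2)}_{i,j}\rangle|=O(\rho_n\kappa_1)$). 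Writing $\bar R_T(\beta):=\E[(\langle\beta,W_n^{(2:L+2)}(\omega_{n+1},\omega_{n+2})\rangle-W_n(\omega_{n+1},\omega_{n+2}))^2]$, the identity $u^2-v^2=(u-v)(u+v)$ plus these two bounds gives, uniformly over $\mathcal F$,
\[
|R_T(\beta)-\bar R_T(\beta)|\ \le\ \kappa_1\kappa_2\,\frac{\rho_n\log(n)^{L+1}}{\sqrt{n}}\Big(\sqrt{\rho_n}+\frac1{\sqrt{d_n}}\Big),
\]
which is the third term of the theorem.

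Next I would change variables to $\tilde\beta_i:=\rho_n^i\beta_i$, under which $\mathcal F$ maps onto the fixed box $\widetilde{\mathcal F}:=\prod_i[-b_i,b_i]$ and $\langle\beta,W_n^{(2:L+2)}\rangle-W_n=\rho_n(\langle\tilde\beta,W^{(2:L+2)}\rangle-W)$, so $\bar R_T(\beta)=\rho_n^2 R(\tilde\beta)$. By \cref{prop:linear-relationship} (zero‑padding when $L+1>m_W$, and using that the Cayley–Hamilton coefficients are bounded independently of $n$), for $b_i$ taken to be suitable absolute constants the population minimiser $\beta^{*,L+1}\in\argmin_{\mathcal F}R$ may be chosen to lie already in $\widetilde{\mathcal F}$; then $\beta_0:=(\rho_n^{-i}\beta^{*,L+1}_i)_i\in\mathcal F$, and since $\rho_n\le1$,
\[
\bar R_T(\beta_0)=\rho_n^2\,R(\beta^{*,L+1})\ \le\ R(\beta^{*,L+1}).
\]
It then remains to control the empirical‑risk‑minimisation gap $\bar R_T(\hat\beta^{n,L+1})-\bar R_T(\beta_0)$, recalling that $\hat\beta^{n,L+1}$ minimises $\sum_{i\ne j}(\langle\beta,\hat q^{(2:L+2)}_{i,j}\rangle-a_{i,j})^2$ over $\mathcal F$ and $\beta_0\in\mathcal F$.

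For this gap I would: (i) use the first paragraph to replace $\hat q$ by $W_n^{(\cdot)}$ in the objective at cost the third term again; (ii) expand $\tfrac1{n(n-1)}\sum_{i\ne j}(\langle\beta,W^{(2:L+2)}_{n,i,j}\rangle-a_{i,j})^2=\widehat g(\beta)+(\text{const in }\beta)+\tfrac2{n(n-1)}\langle\beta,\sum_{i\ne j}W^{(2:L+2)}_{n,i,j}(W_{n,i,j}-a_{i,j})\rangle$ with $\widehat g(\beta):=\tfrac1{n(n-1)}\sum_{i\ne j}(\langle\beta,W^{(2:L+2)}_{n,i,j}\rangle-W_{n,i,j})^2$; the linear‑in‑$\beta$ term has conditionally centred, bounded‑variance summands, so Bernstein's inequality bounds it uniformly on $\mathcal F$ by $\tilde O(\rho_n^{3/2}\kappa_2/n)$, which is lower order since $\rho_n\gg\log n/n$; (iii) note $\widehat g(\beta)$ is a degree‑two U‑statistic in the i.i.d.\ $\omega_1,\dots,\omega_n$ with kernel bounded by $O(\rho_n^2\kappa_1^2)$ and mean $\bar R_T(\beta)$, so by Hoeffding's U‑statistic inequality, a union bound over an $n^{-1}$‑net of $\mathcal F$ (whose log‑cardinality is $O(L^2\log(n/\rho_n))$, absorbed into $\tilde O$), and Lipschitz continuity of the kernel in $\beta$, one gets $\sup_{\beta\in\mathcal F}|\widehat g(\beta)-\bar R_T(\beta)|=\tilde O(\rho_n^2\kappa_1^2/\sqrt n)$. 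Since $\hat\beta^{n,L+1}$ is a minimiser of $\widehat g$ over $\mathcal F$ up to the (ii) term, chaining these with (iii) yields $\bar R_T(\hat\beta^{n,L+1})\le\bar R_T(\beta_0)+\tilde O(\rho_n^2\kappa_1^2/\sqrt n)$; adding the paragraph‑one bound on $|R_T(\hat\beta^{n,L+1})-\bar R_T(\hat\beta^{n,L+1})|$ gives the theorem.

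The main obstacle is step (iii): the moment vectors $W^{(2:L+2)}_{n,i,j}$ over pairs sharing a vertex are dependent, which is precisely what forces a U‑statistic bound (and fixes the rate at $n^{-1/2}$) rather than an i.i.d.\ concentration; and the domain $\mathcal F$ has diameter of order $\rho_n^{-(L+1)}$, so the covering and Lipschitz estimates, as well as all of the a priori magnitude bounds (the $O(\rho_n\kappa_1)$ prediction range and the two‑sided moment bounds), must be carried out carefully and lean essentially on \cref{asp2}. Making the $\rho_n$‑powers cancel exactly between $\mathcal F$, the moments, and the \cref{prop:graph_concentration_sparse} error — so that the final rate is genuinely $o(\rho_n^2)$ — is the other delicate bookkeeping point.
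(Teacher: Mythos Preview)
Your approach is essentially the paper's: the same ERM decomposition, the same replacement of $\hat q$ by true moments via \cref{prop:graph_concentration_sparse} (the paper's $S_2,S_3$ terms), the same split of the remaining empirical risk into a centred linear-in-$\beta$ martingale piece (the paper's $K_n^2$, handled via polynomial/Bernstein concentration) and a degree-two U-statistic in the $\omega_i$'s (the paper's $K_n^1$, handled via Hoeffding's U-statistic inequality plus an $\epsilon$-net over $\mathcal F$). The magnitude calculus you carry out---the $O(\rho_n\kappa_1)$ uniform range of predictions and the cancellation of $\rho_n$-powers between $\mathcal F$ and the moment errors---is exactly what drives the paper's Lemmas bounding $S_2$, $S_3$, and $K_n-\E K_n$.

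The one place where you diverge is the change-of-variables detour through $\tilde\beta$ and the construction of $\beta_0$. This step is both unnecessary and, as written, not justified: the theorem makes no assumption that $L+1\ge m_W$, so \cref{prop:linear-relationship} gives you nothing about the location of the constrained minimiser $\beta^{*,L+1}\in\argmin_{\mathcal F}R$, and there is no reason it should lie in the fixed box $\widetilde{\mathcal F}$. The paper sidesteps this entirely: working throughout with the \emph{sparsified} population risk (what you call $\bar R_T$), it uses $\beta^{*,L+1}$ itself---which is in $\mathcal F$ by definition---as the comparator in the standard inequality
\[
R(\hat\beta)-R(\beta^*) \;\le\; \bigl[R(\hat\beta)-R_n(\hat\beta)\bigr] + \bigl[R_n(\beta^*)-R(\beta^*)\bigr],
\]
with the middle term $R_n(\hat\beta)-R_n(\beta^*)\le 0$ dropped. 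No rescaling, no $\beta_0$, no appeal to \cref{prop:linear-relationship}. If you simply delete your second paragraph and take $\beta_0:=\beta^{*,L+1}$ (with $R$ interpreted in the sparsified sense, as the paper's appendix does), your argument collapses onto the paper's.
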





We remark that when $d_n$ increases quickly enough, the inequality in \cref{prop:convergence_of_c} implies that $$R_T(\hat{\beta}^{n, L+1}) \leq R(\beta^{*,L+1}) + O\left( \frac{\log(n)^{L+1} \cdot \rho_n^{3/2}}{\sqrt{n}} \right).$$ In particular, when $\rho_n \gg \log(n)^{2L+2}/n$, and $d_n$ increases fast enough, then $R_T(\hat{\beta}^{n, L+1})$  $\leq R(\beta^{*,L+1}) + o(\rho_n^2),$ i.e. the MSE decreases faster than the sparsity of the graph.  

As mentioned in the discussion preceeding \cref{prop:convergence_of_c}, if the search space $S$ is large enough to contain some vector $\beta^{*, L+1}$ such that $W(x,y) = \sum_{i=1}^{L+1} \beta^{*, i} W^{(i+1)}(x,y)$ for all $x,y$, then $R(\beta^{*, L+1}) = 0$, and the MSE converges to 0. Notably, \cref{prop:linear-relationship} implies that this search space exists for $L \ge m_W-1$. In this sense, $m_W$ captures the ``complexity" of $W$, and each layer of LG-GNN extracts an additional order of complexity.

When $L = m_W-1$, in order for the search space $S$ to contain the $\beta^{*, m_W}$ defined in \cref{prop:linear-relationship}, we require that $b_i > \beta^{*, m_W}_i,$ where $b_i$ is defined in \cref{prop:convergence_of_c}. Considering the proof of \cref{prop:linear-relationship}, if $m_W < \infty$, then $b_{m_W}$ is on the order of $\frac{1}{|\mu_1 \mu_2 \dots \mu_{m_W}|}$, and hence the constant $\kappa_2$ in \cref{prop:linear-relationship} is on this order as well. This dependence on the inverse of small eigenvalues is a statistical bottleneck; it turns out, however, that if we are concerned only with edge ranking, instead of graphon estimation, this dependence can be greatly reduced. This is outlined in \cref{prop:preserve-rank}.

If \cref{alg:compute_regression_coefs} is used for predicting all the edges that have a probability of more than $\gamma>0$ of existing, then the $0$-$1$ loss will also go to zero.  Indeed for almost every $\gamma>0$ we have $$
\frac{ 1}{n^2}\sum_{i,j\le n}\mathbb{I}(\hat p_{i,j}\ge \gamma)-\mathbb{I}(\rho_n W(
\omega_i,\omega_j)\ge \gamma)\xrightarrow{p}0.$$
Furthermore, if $L< m_W-1$ is smaller than the number of distinct eigenvalues of $W$, then we have $${\small R(\beta^{*,L+1})\le \sqrt{ \sum_{s=1}^{m_W} \left[ \sum_{r=L+1}^{m_W} \beta_r^{*, m_W} \left( \mu_s^{r+1} - \mu_s^{L+1}  \right)  \right]^2}},$$
implying that the the $R(\beta^{*, L+1})$ in \cref{prop:convergence_of_c} decreases as $L$ increases. This latter bound is proved in \cref{bound:gen-error}.

\subsection{Preserving Ranking in Link Prediction}

\cref{prop:convergence_of_c} states that under general conditions, LG-GNN yields a consistent estimator for the underlying edge probability $W_n(\omega_i, \omega_j) = \rho_n W_{i,j}.$ However, estimating edge probabilities is strictly harder than discovering a set of high-probability edges. In practical applications, one often cares about \textit{ranking} the underlying edges, i.e., whether an algorithm can assign higher probabilities to positive test edges than to negative ones. Metrics such as the AUC-ROC and Hits@k capture this notion. The following proposition characterizes the performance of LG-GNN in ranking edges in a $k$-community symmetric SBM. See \cref{sec:sbm} for more details about SBMs. 

Before stating the proposition, we define the following notation for a $k$-community symmetric SBM. Let $S_{in} = \{(i, j) | \text{vertices $i,j$ belong to the same community} \}$, $S_{out} = \{ (k, \ell) |\text{vertices $k, \ell$ are in different communities} \},$ and define $$E_{rank} :=  \left \{ \min_{(i,j) \in S_{in}} \hat{p}_{i,j} > \max_{(k,\ell) \in S_{out}} \hat{p}_{k, \ell} \right\}$$ to be the event that the predicted probabilities for all of the in-community edges are greater than all of the predicted probabilities for the across-community edges, i.e., LG-GNN achieves perfect ranking of the graph edges. We will prove that this event happens with high probability. See \cref{prop:preserve-rank-formal} for the full proposition.

\begin{proposition}[Informal]
\label{prop:preserve-rank}
 Consider a $k$-community symmetric stochastic block model with parameters $p > q$ and sparsity factor $\rho_n.$ Let $\mu_1 = \frac{p+(k-1)q}{k} > \frac{p-q}{k} = \mu_2$ be the eigenvalues of the associated graphon. Suppose that the search space $\mathcal{F}$ is such that  $ \{ \beta \in \mathbb{R}^{L+1} | \text{ } ||\beta||_{L^1} \leq (\mu_1 \rho_n)^{-1} \}\subseteq \mathcal{F}$.

Produce probability estimators $\hat{p}_{i,j}$ for the probability of an edge between vertices $i$ and $j$ using \cref{algo1} and \cref{alg:compute_regression_coefs} with parameters $L, \mathcal{F}$ where $L\ge 1$. Then, there exists a constant $A>0$ such that when 
$$
\frac{\log(n)^{L+1}}{ \rho_n \sqrt{n}} \Big[\sqrt{\rho_n}+\frac{1}{\sqrt{d_n}}\Big]\leq A{\mu_2^3}
$$
holds, then with high probability, $E_{rank}$ occurs, i.e., LG-GNN correctly predicts higher probability for all of the in-community edges than for cross-community edges.
\end{proposition}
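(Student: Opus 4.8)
The plan is to reduce $E_{rank}$ to a comparison between a deterministic ``signal gap'' and a uniform fluctuation term, and then to bound each. I would first condition on the high-probability event of \cref{prop:graph_concentration_sparse}, on which $\hat q_{i,j}^{(2:L+2)} = W_{n,i,j}^{(2:L+2)} + \varepsilon_{i,j}$ with $|\langle\beta,\varepsilon_{i,j}\rangle|$ controlled uniformly over $i\neq j$ for every $\beta\in\mathcal F$. The key structural point is that in a $k$-community symmetric SBM the graphon has only two distinct nonzero eigenvalues $\mu_1,\mu_2$, so by the spectral decomposition each moment $W_{n,i,j}^{(m)} = \rho_n^m W^{(m)}(\omega_i,\omega_j)$ equals $\rho_n^m(\mu_1^m+(k-1)\mu_2^m)$ when $i,j$ are in the same community and $\rho_n^m(\mu_1^m-\mu_2^m)$ otherwise. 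Writing $v_{in},v_{out}\in\mathbb R^{L+1}$ for the two resulting moment vectors, $(v_{in}-v_{out})_r = k\,\rho_n^{r+1}\mu_2^{r+1}$, and for any $\beta$, $\langle\beta,v_{in}\rangle - \langle\beta,v_{out}\rangle = k\,\tilde\sigma(\beta)$ with $\tilde\sigma(\beta):=\sum_{r=1}^{L+1}\beta_r(\rho_n\mu_2)^{r+1}$. Since $\hat p_{i,j}=\langle\hat\beta,\hat q_{i,j}^{(2:L+2)}\rangle$ equals $\langle\hat\beta,v_{in}\rangle+\langle\hat\beta,\varepsilon_{i,j}\rangle$ or $\langle\hat\beta,v_{out}\rangle+\langle\hat\beta,\varepsilon_{i,j}\rangle$ according to community membership, $E_{rank}$ holds as soon as $k\,\tilde\sigma(\hat\beta) > 2\max_{i\neq j}|\langle\hat\beta,\varepsilon_{i,j}\rangle|$. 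Using $\hat\beta\in\mathcal F$, so $\|\hat\beta\|_{L^1}\le(\mu_1\rho_n)^{-1}$ and $|\hat\beta_r|\,\rho_n^r\le\mu_1^{-1}$ for $r\ge 1$, together with \cref{prop:graph_concentration_sparse}, the right-hand side is at most a constant times $\eta_n:=\frac{\log(n)^{L+1}}{\sqrt n}[\sqrt{\rho_n}+1/\sqrt{d_n}]$, which by hypothesis is $O(\rho_n\mu_2^3)$. Everything therefore reduces to the lower bound $\tilde\sigma(\hat\beta)\gtrsim\rho_n\mu_2^2$.

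For that bound I would use that $\hat\beta$ is the empirical-risk minimizer over $\mathcal F$. Decompose $\widehat R(\beta)=\widehat R_0(\beta)+2\sum_{i\neq j}(\langle\beta,v_{i,j}\rangle-a_{ij})\langle\beta,\varepsilon_{i,j}\rangle+\sum_{i\neq j}\langle\beta,\varepsilon_{i,j}\rangle^2$, where $\widehat R_0$ replaces $\hat q$ by the true moment vectors. Because those vectors take only two values, $\widehat R_0(\beta)$ depends on $\beta$ only through $(\tilde\tau(\beta),\tilde\sigma(\beta))$ with $\tilde\tau(\beta):=\sum_r\beta_r(\rho_n\mu_1)^{r+1}$; inserting $|S_{in}|\approx n^2/k$, $|S_{out}|\approx n^2(k-1)/k$ and the concentration of the within/across edge counts, it reduces — up to a $\beta$-independent noise floor and negligible lower-order fluctuations — to the diagonal quadratic $n^2[(\tilde\tau(\beta)-\rho_n\mu_1)^2+(k-1)(\tilde\sigma(\beta)-\rho_n\mu_2)^2]$. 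The $L^1$ constraint forces $\tilde\tau(\beta)\le\rho_n\mu_1$ and $\tilde\sigma(\beta)\le\rho_n\mu_2^2/\mu_1<\rho_n\mu_2$, both attained simultaneously at $\beta_0$ (the vector supported on the first coordinate with value $(\mu_1\rho_n)^{-1}$), so $\beta_0$ minimizes $\widehat R_0$ over $\mathcal F$, with $\tilde\sigma(\beta_0)=\rho_n\mu_2^2/\mu_1$ and residual $\widehat R_0(\beta_0)-(\text{noise floor})\asymp n^2\rho_n^2\mu_2^2$. From $\widehat R(\hat\beta)\le\widehat R(\beta_0)$, bounding the cross terms via $|\langle\beta,\varepsilon_{i,j}\rangle|\le\eta_n$ and $\sum_{i\neq j}|\langle\beta,v_{i,j}\rangle-a_{ij}|\lesssim n^2\rho_n$ (valid for $\beta\in\mathcal F$ since then $|\langle\beta,v_{i,j}\rangle|\le k\rho_n$ and $\sum a_{ij}=O(n^2\rho_n)$), one gets $\widehat R_0(\hat\beta)-\widehat R_0(\beta_0)\lesssim\eta_n n^2\rho_n$. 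Reading this through the diagonal quadratic yields $(\tilde\sigma(\hat\beta)-\rho_n\mu_2)^2\le(\tilde\sigma(\beta_0)-\rho_n\mu_2)^2+O(\eta_n\rho_n)$; since $\eta_n\rho_n = o((\tilde\sigma(\beta_0)-\rho_n\mu_2)^2)=o(\rho_n^2\mu_2^2)$, a first-order expansion of the square root gives $\tilde\sigma(\hat\beta)\ge\rho_n\mu_2^2/\mu_1-O(\eta_n/\mu_2)\ge\tfrac12\rho_n\mu_2^2/\mu_1$ once $\eta_n\le A\rho_n\mu_2^3$ with $A$ small. Then $k\,\tilde\sigma(\hat\beta)\gtrsim\rho_n\mu_2^2>2\eta_n$, so $E_{rank}$ occurs, and all invoked events hold with probability $1-o(1)$ under (\ref{asp1}).

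The main obstacle is the middle step: controlling how far the moment-estimation errors can push the constrained minimizer $\hat\beta$ in the ``soft'' direction $\tilde\sigma$, along which $\widehat R_0$ has curvature only $\asymp n^2\rho_n^2\mu_2^2$. The naive Cauchy--Schwarz bound on the cross term, $\lesssim\eta_n n\sqrt{\widehat R_0}\asymp\eta_n n^2\sqrt{\rho_n}$ (the noiseless risk being dominated by its $\Theta(n^2\rho_n)$ noise floor), is lossy by a factor $\rho_n^{-1/2}$ and would force a stronger sparsity requirement; one must instead bound $\sum_{i\neq j}|\langle\beta,v_{i,j}\rangle-a_{ij}|$ directly, which is exactly where the constraint $\|\beta\|_{L^1}\le(\mu_1\rho_n)^{-1}$ is used essentially — it caps $|\langle\beta,v_{i,j}\rangle|$ at $O(k\rho_n)$ while simultaneously bounding both $\tilde\tau$ and $\tilde\sigma$. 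A secondary point is to verify that the discarded lower-order terms (edge-density fluctuations, the $O(1/n)$ corrections to $|S_{in}|,|S_{out}|$, and $\sum\langle\beta,\varepsilon\rangle^2$) are genuinely negligible against $n^2\rho_n^2\mu_2^2$; this uses $\rho_n\gg\log(n)/n$ and the lower bound on $\mu_2$ implicit in the hypothesis.
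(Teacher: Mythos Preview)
Your proposal is correct and follows essentially the same route as the paper's proof. Both arguments reduce $E_{rank}$ to lower-bounding the quantity $\tilde\sigma(\hat\beta)=\sum_r\hat\beta_r(\rho_n\mu_2)^{r+1}$ (which the paper calls $\hat\mu_{n,2}(\hat\beta)$), compare $\hat\beta$ to the same reference point $\beta_0=((\mu_1\rho_n)^{-1},0,\dots,0)$ via the empirical-minimizer property, exploit the exact diagonal form $R(\beta)=(\tilde\tau-\rho_n\mu_1)^2+(k-1)(\tilde\sigma-\rho_n\mu_2)^2$ of the population risk for the symmetric SBM, and extract the $\mu_2^3$ threshold from the square-root expansion of $(\tilde\sigma(\hat\beta)-\rho_n\mu_2)^2\le(\tilde\sigma(\beta_0)-\rho_n\mu_2)^2+O(\eta_n\rho_n)$.

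The only organizational difference is that you stay at the empirical level and decompose $\hat R(\beta)=\hat R_0(\beta)+\text{cross}+\text{quadratic}$ directly, whereas the paper passes to the population risk $R(\beta)$ and invokes the pre-packaged Lemma~E.1 to control $R_n-R$; the computations hidden inside these two steps are the same (edge-count concentration, the moment-error bound of \cref{prop:graph_concentration_sparse}, and U-statistic fluctuations). Your remark that the naive Cauchy--Schwarz bound on the cross term loses a factor $\rho_n^{-1/2}$, and that one must instead bound $\sum_{i\neq j}|\langle\beta,v_{i,j}\rangle-a_{ij}|$ by $O(n^2\rho_n)$ using $|\langle\beta,v_{i,j}\rangle|=O(\rho_n)$ on the $L^1$ ball, is a clean articulation of why the constraint size $(\mu_1\rho_n)^{-1}$ is the right one; the paper does not isolate this point explicitly. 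One small caveat: your argument uses $\hat\beta\in\mathcal F\Rightarrow\|\hat\beta\|_{L^1}\le(\mu_1\rho_n)^{-1}$, which requires $\mathcal F$ to be \emph{contained in} the $L^1$ ball, matching the formal statement (\cref{prop:preserve-rank-formal}) where $\mathcal F$ equals the ball, rather than the informal ``$\supseteq$''.
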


\cref{prop:preserve-rank} gives conditions under which LG-GNN achieves perfect ranking on a $k$-community symmetric SBM. One subtle but important point is the implied convergence rate. In \cref{prop:preserve-rank}, the size of the search space is required only to be on the order of $(\mu_1 \rho_n)^{-1}$. In the notation of \cref{prop:convergence_of_c}, this means that the constant $\kappa_2$ is upper bounded by $1/\mu_1$, which indicates a much faster rate of converge than the rate that is required by \cref{prop:convergence_of_c} to define consistent estimators. This confirms the intuition that ranking is easier than graphon estimation, and in particular, should be less sensitive to small eigenvalues. \cref{prop:preserve-rank} demonstrates the extent to which ranking is easier than graphon estimation.

\section{ Performance of the Classical GCN Architecture }
\label{sec:negative-gcn}


As mentioned in \cref{sec:main-results}, in the context of random node initializations, a naive choice of GNN architecture can cause learning to fail. In the following proposition, we demonstrate that for a large class of graphons, the Classical GCN architecture with random initializations results in embeddings that cannot be informative in out-of-sample graphon estimation. To make this formal, we assume that at training, only $n-m$ vertices are observable. We denote by $G_n|_{V_{n-m}}$ the induced subgraph with vertex set $V_{n-m}=\{1,\dots,n-m\}$. And we consider graphons that are such that \begin{align}
    \label{asp4}\tag{$H_4$} \text{The function }W:x\rightarrow \int_0^1W(x,y)dy \text{ is constant.}
\end{align}Note that many graphons satisfy this assumption, including symmetric SBMs. 

\begin{proposition}
\label{prop:negative-result} Suppose that the graph $G_n=([n],E_n)$ is generated according to a graphon $W_n=\rho_nW$. Moreover assume that Assumptions (\ref{asp1}), (\ref{asp2}), (\ref{asp4}) hold.

Suppose that the initial embeddings $(\lambda_i^0)\overset{i.i.d}{\sim}\mu$ are so that each coordinate is generated i.i.d. from a $\frac{s^2}{\sqrt{d_n}}$ sub-Gaussian distribution. Assume that the subsequent embeddings $(\lambda_i^\ell)$ are computed iteratively according to \cref{GCN-equation}, where $\sigma(\cdot)$ is taken to be $1-$Lipschitz and where the weight matrices $(M_{k,0}, M_{k,1})$ are trained on $G_n|_{V_{n-m}}$ and satisfy $\|M_{k,0}\|_{\rm{op}},\|M_{k,1}\|_{\rm{op}}\overset{a.s}{\le} M$.

Then, there exist random variables $\mu_{n}^\ell,$ $\ell \in [L]$, that are independent of $\omega_{n-m+1},\dots,\omega_{n}$ such that for a certain $\kappa>0$ with probability at least $1-\frac{2}{n}-2ne^{-\frac{12\log(n)}{\rho_n}}-2ne^{-\kappa d_n}$,
\begin{equation}
\sup_{{\ell \leq L}}\| \lambda_{n}^L - \mu_{n}^\ell \| \leq \frac{K}{\sqrt{\rho_n(n-1)}},
\end{equation} where $K>0$ is an absolute constant.


\end{proposition}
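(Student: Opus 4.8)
The plan is to exploit that, under (\ref{asp4}), every vertex has essentially the same degree irrespective of its latent feature, so that with random initialisations a GCN layer cannot record any information about the latent features of the test vertices. Put $d_W:=\int_0^1 W(x,y)\,dy$, which is constant by (\ref{asp4}) and lies in $[\delta_W,1-\delta_W]$ by (\ref{asp2}). The first step is uniform degree concentration: with the probability stated, $|N(i)|=\rho_n(n-1)d_W\big(1+O(1/\sqrt{\rho_n n})\big)$ for all $i$, which I would get from a Bernstein bound for $|N(i)|=\sum_{j\ne i}a_{ij}$ around its conditional mean $\rho_n\sum_j W(\omega_i,\omega_j)$ together with a Hoeffding bound for $\sum_j W(\omega_i,\omega_j)$ around $(n-1)d_W$ — here (\ref{asp4}) makes the limit independent of $\omega_i$. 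The $ne^{-c\log(n)/\rho_n}$ term in the stated probability comes from this step and the $ne^{-\kappa d_n}$ term from the analogous uniform control of the sub-Gaussian norms $\|\lambda_i^0\|$, which I use to keep all embedding norms $O(1)$ throughout. A consequence is that the symmetric normalisation satisfies $\big(|N(i)|\,|N(j)|\big)^{-1/2}=\big(\rho_n(n-1)d_W\big)^{-1}\big(1+O(1/\sqrt{\rho_n n})\big)$ uniformly in $i,j$.

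I would then introduce the candidate limiting embeddings by the vertex-independent recursion $\mu_i^0:=\lambda_i^0$ and, for $k\ge1$,
\[
\mu_i^{k}:=\sigma\!\Big(M_{k,0}\,\mu_i^{k-1}+M_{k,1}\,\tfrac1{n-1}\textstyle\sum_{\ell\ne i}\mu_\ell^{k-1}\Big),
\]
i.e.\ replacing the neighbourhood aggregation in \cref{GCN-equation} by the global average of the current embeddings. Because this average is shared by all vertices, an induction shows $\mu_i^{k}$ is a deterministic function of $\lambda_i^0$, of the weight matrices $(M_{k,0},M_{k,1})$, and of $(\lambda_\ell^0)_{\ell\in[n]}$ (through the running averages); none of these depends on $\omega_{n-m+1},\dots,\omega_n$, since the initialisations are independent of the graph and the latent features and the weights are trained only on $G_n|_{V_{n-m}}$. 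Thus $\mu_n^\ell$ has the required independence, and it remains to bound $\|\lambda_n^\ell-\mu_n^\ell\|$ for each $\ell\le L$.

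For this I would induct on the layer, bounding $\epsilon_k:=\sup_i\|\lambda_i^k-\mu_i^k\|$. As $\sigma$ is $1$-Lipschitz and $\|M_{k,0}\|_{\mathrm{op}},\|M_{k,1}\|_{\mathrm{op}}\le M$,
\[
\|\lambda_i^{k}-\mu_i^{k}\|\le M\,\|\lambda_i^{k-1}-\mu_i^{k-1}\|+M\,\Big\|\,\sum_{j\in N(i)}\tfrac{\lambda_j^{k-1}}{\sqrt{|N(i)|\,|N(j)|}}-\tfrac1{n-1}\sum_{\ell\ne i}\mu_\ell^{k-1}\,\Big\|,
\]
and I would control the aggregation gap by three substitutions: (i) replace the normalisation by the constant $(\rho_n(n-1)d_W)^{-1}$ — error $O(1/\sqrt{\rho_n n})$ by the first step, since all norms are $O(1)$; (ii) replace $\lambda_j^{k-1}$ by $\mu_j^{k-1}$ inside the sum — error $O(\epsilon_{k-1})$, using $|N(i)|\asymp\rho_n n d_W$; (iii) replace $\frac1{\rho_n(n-1)d_W}\sum_\ell a_{i\ell}\mu_\ell^{k-1}$ by $\frac1{n-1}\sum_\ell\mu_\ell^{k-1}$. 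For (iii), write $\sum_\ell a_{i\ell}\mu_\ell^{k-1}=\rho_n\sum_\ell W(\omega_i,\omega_\ell)\mu_\ell^{k-1}+(\text{mean-zero Bernoulli fluctuation})$; the fluctuation is $O(1/\sqrt{\rho_n n})$ after the $\frac1{\rho_n n}$ scaling (Bernstein), and since $\int W(\omega_i,y)\,dy=d_W$ independently of $\omega_i$ by (\ref{asp4}), $\sum_\ell W(\omega_i,\omega_\ell)\mu_\ell^{k-1}$ concentrates around $d_W\sum_\ell\mu_\ell^{k-1}$ — immediately for test $\ell$, where $\mu_\ell^{k-1}$ is independent of $\omega_\ell$ and Hoeffding applies. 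The recursion $\epsilon_k\le M\epsilon_{k-1}+O(1/\sqrt{\rho_n n})$ with $\epsilon_0=0$ gives $\epsilon_L=O(M^L/\sqrt{\rho_n n})$, which is the claimed $K/\sqrt{\rho_n(n-1)}$ as $L$ and $M$ are fixed.

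The main obstacle is the dependence between the intermediate embeddings and the randomness averaged out in (iii). First, $\lambda_j^{k-1}$ — hence $\mu_j^{k-1}$ after substitution — depends on the edge $a_{ij}$; this is handled by a leave-one-out argument, each vertex having $\Theta(\rho_n n)$ neighbours so a single edge perturbs an embedding negligibly. Second, and more delicately, for a \emph{training} vertex $j$ the vector $\mu_j^{k-1}$ depends on $\omega_j$ through the trained weights, so $\sum_{j\,\mathrm{training}}W(\omega_i,\omega_j)\mu_j^{k-1}$ need not concentrate around $d_W\sum_{j\,\mathrm{training}}\mu_j^{k-1}$ by the naive argument; this is where the real work lies. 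The resolution is to use that the training procedure producing $(M_{k,0},M_{k,1})$ is invariant under relabelling the $n-m$ training vertices, so that a Hoeffding-decomposition argument bounds $\cov\!\big(W(\omega_i,\omega_j),\mu_j^{k-1}\big)=O(1/(n-m))$ and the training portion of the sum still concentrates at rate $O(1/\sqrt n)$. A last technical point: a naive union bound over the $\Theta(\rho_n n)$ neighbours of vertex $n$ and over the $d_n$ coordinates would introduce spurious logarithmic factors; these are avoided by treating the (mean-zero) degree and edge fluctuations through second moments rather than worst-case bounds, which also pins down the precise high-probability event.
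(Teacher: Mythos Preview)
Your construction differs from the paper's in one essential way: you replace the neighbourhood aggregation by the \emph{global} average $\frac{1}{n-1}\sum_{\ell}\mu_\ell^{k-1}$, whereas the paper keeps the adjacency-based sum and replaces only the symmetric normalisation by the deterministic constant $\frac{1}{(n-1)\rho_n\bar W}$. At layer one the paper is even more direct: since the $(\lambda_\ell^0)$ are i.i.d.\ sub-Gaussian and independent of $G_n$, the vector $\sum_{\ell\in N(i)}\frac{\lambda_\ell^0}{\sqrt{|N(i)||N(\ell)|}}$ is (conditionally on the graph) sub-Gaussian of scale $O(1/\sqrt{\rho_n n})$ and is simply \emph{dropped}, giving $\mu_i^1:=\sigma(M_{1,0}\lambda_i^0)$. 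For $k\ge 2$ the paper sets $\mu_i^{k+1}:=\sigma\big(M_{k+1,0}\mu_i^k+M_{k+1,1}\frac{1}{(n-1)\rho_n\bar W}\sum_\ell a_{i\ell}\mu_\ell^k\big)$, so that $\|\lambda_i^{k+1}-\mu_i^{k+1}\|$ is controlled by (a) the previous-layer error and (b) the uniform degree estimate $\big|\frac{1}{\sqrt{|N(i)||N(\ell)|}}-\frac{1}{(n-1)\rho_n\bar W}\big|$; no analogue of your step~(iii) is ever needed. What your route buys is that the independence of $\mu_n^\ell$ from $\omega_{n-m+1:n}$ is immediate, since your $\mu_i^k$ does not touch the adjacency at all; in the paper's construction $\mu_n^{k+1}$ still involves the edges $(a_{n\ell})_\ell$, and the independence is argued via~(\ref{asp4}).

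The gap in your argument is step~(iii) for layers $k\ge 2$. You acknowledge the difficulty for training vertices and propose to resolve it via permutation-invariance of the training procedure together with a Hoeffding decomposition yielding $\cov\!\big(W(\omega_i,\omega_j),\mu_j^{k-1}\big)=O(1/(n-m))$. But the proposition assumes \emph{nothing} about the training beyond $\|M_{k,r}\|_{\mathrm{op}}\le M$ a.s.\ and measurability with respect to $G_n|_{V_{n-m}}$ --- no exchangeability and no stability are hypothesised. Without such structure there is no mechanism forcing $\sum_{\ell\le n-m}W(\omega_i,\omega_\ell)\mu_\ell^{k-1}$ to concentrate around $d_W\sum_{\ell\le n-m}\mu_\ell^{k-1}$. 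The Bernoulli-fluctuation half of~(iii) has the parallel issue that when $i,\ell$ are both training vertices, $\mu_\ell^{k-1}$ may depend on $a_{i\ell}$ through the trained weights; your leave-one-out remedy again presupposes stability of those weights under single-edge perturbations. The paper's route avoids both issues precisely because its $\mu_i^{k+1}$ retains the exact sum $\sum_\ell a_{i\ell}\mu_\ell^k$ --- only the scalar normalisation is replaced, so no decorrelation between $\mu_\ell^k$ and either $a_{i\ell}$ or $\omega_\ell$ is ever required for the closeness bound.
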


 We show that this leads to suboptimal risk for graphon estimation. For simplicity we show this for dense graphons, e.g when $\rho_n=1$.
 \begin{proposition}
 \label{negative-lipschitz}
     Suppose that the conditions of \cref{prop:negative-result}
hold. Moreover assume that the graphon $W(\cdot,\cdot)$ is not constant and that $\rho_n=1$ for all $n\in \mathbb{N}$. Then, there exists some constant $K>0$ such that for any Lipchitz prediction rule $f(\cdot,\cdot)$, for all vertices $i\in [n],$ we have 
\begin{align}&
   \mathbb{E}\Big(\big[W(\omega_i,\omega_n)-f(\lambda_i^L,\lambda_{n}^L)\big]^2\Big)\ge 
   K+o_n(1). 
\end{align}
\end{proposition}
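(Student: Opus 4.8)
The plan is to upgrade the structural conclusion of \cref{prop:negative-result} into a Bayes-risk lower bound. That proposition tells us that, on a high-probability event, the GCN embedding $\lambda_n^L$ of a held-out vertex $n$ lies within $O\big(1/\sqrt{\rho_n(n-1)}\big)$ of a random vector $\mu_n^L$ that is \emph{independent of} $\omega_n$. Hence a Lipschitz readout $f(\lambda_i^L,\lambda_n^L)$ is, up to a vanishing perturbation, a function that never sees $\omega_n$; and no function independent of $\omega_n$ can estimate $W(\omega_i,\omega_n)$ with vanishing mean-squared error when $W$ is a nonconstant symmetric kernel.

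Concretely, fix a vertex $i\ne n$ and write $\hat Y:=f(\lambda_i^L,\lambda_n^L)$, where $f$ is $L_f$-Lipschitz. Since $W\in[0,1]$, replacing $\hat Y$ by its clip $\hat Z:=\max(0,\min(1,\hat Y))\in[0,1]$ can only decrease the loss, so it suffices to lower bound $\E\big[(W(\omega_i,\omega_n)-\hat Z)^2\big]$. I would then apply \cref{prop:negative-result} to vertex $n$, and also to $i$ when $i$ is held-out (when $i\le n-m$ the embedding $\lambda_i^L$ is a function of the training graph already, so one just sets $\mu_i^L:=\lambda_i^L$). On an event $\mathcal E$ with $\P(\mathcal E^c)\le \tfrac{4}{n}+4n\,e^{-12\log n/\rho_n}+4n\,e^{-\kappa d_n}=o_n(1)$ we then have $\|\lambda_i^L-\mu_i^L\|\vee\|\lambda_n^L-\mu_n^L\|\le K_0/\sqrt{\rho_n(n-1)}$ (here $K_0$ is the constant $K$ of \cref{prop:negative-result}, renamed to avoid a clash below), and $(\mu_i^L,\mu_n^L)$ is independent of $\omega_n$, being built from the training data, the random initializations, the trained weights and edges not incident to $n$. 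Put $\tilde Z:=\max\big(0,\min(1,f(\mu_i^L,\mu_n^L))\big)$; this is $[0,1]$-valued and independent of $\omega_n$, and since $f$ and clipping are Lipschitz, on $\mathcal E$ we get $|\hat Z-\tilde Z|\le \sqrt2\,L_f\,K_0/\sqrt{\rho_n(n-1)}=:\delta_n=o_n(1)$ (using $\rho_n=1$).

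It then remains to account for the two sources of slack and to invoke the variance lower bound. On $\mathcal E$, $(W-\hat Z)^2\ge (W-\tilde Z)^2-2\delta_n$ since $W,\tilde Z\in[0,1]$; discarding $\mathcal E^c$ (and using $(W-\tilde Z)^2\le1$) gives
$$\E\big[(W(\omega_i,\omega_n)-\hat Z)^2\big]\ \ge\ \E\big[(W(\omega_i,\omega_n)-\tilde Z)^2\big]-\P(\mathcal E^c)-2\delta_n .$$
Because $\omega_n\sim\mathrm{Unif}[0,1]$ is jointly independent of $(\omega_i,\tilde Z)$, conditioning on $(\omega_i,\tilde Z)$ and using that a constant is never a better $L^2$-predictor than the conditional mean yields
$$\E\big[(W(\omega_i,\omega_n)-\tilde Z)^2\big]\ \ge\ \E_{\omega_i}\!\Big[\mathrm{Var}_{\omega_n}\big(W(\omega_i,\omega_n)\big)\Big]\ =\ \mathrm{Var}_{x,y}\big(W(x,y)\big)=:K ,$$
where the identity uses (\ref{asp4}) — which forces $x\mapsto\int_0^1W(x,y)\,\d y$ to equal the constant $\E[W]$ — and $K>0$ because a symmetric kernel that is not a.e.\ constant has strictly positive variance. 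Chaining the displays gives $\E\big[(W(\omega_i,\omega_n)-f(\lambda_i^L,\lambda_n^L))^2\big]\ge K-\P(\mathcal E^c)-2\delta_n=K+o_n(1)$.

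The heart of the argument — and the step I expect to be the main obstacle — is the reduction to the $\omega_n$-free surrogate $\tilde Z$. Two points need care. First, the conditioning step requires \emph{joint} independence of $\omega_n$ from $(\omega_i,\tilde Z)$, not just pairwise independence; this has to be read off the construction of $\mu_n^L$ (and of $\mu_i^L$, or of $\lambda_i^L$ when $i$ is in-sample) inside the proof of \cref{prop:negative-result}, namely as a measurable function of quantities all independent of $\omega_n$ — the training graph, the node initializations, the trained weight matrices, and the edges not incident to $n$. Second, a Lipschitz $f$ need not be bounded, so that $(W-\tilde Z)^2$ on $\mathcal E^c$ would otherwise demand moment control on the GCN embeddings; clipping to $[0,1]$ before comparing to $W\in[0,1]$ removes this entirely and does not weaken the variance bound, since a clipped $\omega_n$-independent predictor is still $\omega_n$-independent. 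Once the surrogate is in place, $\P(\mathcal E^c)$ and $\delta_n=O(1/\sqrt n)$ are negligible by \cref{prop:negative-result} and the standing hypothesis $\rho_n=1$.
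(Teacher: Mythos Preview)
Your argument is correct and in fact cleaner than the paper's. The key difference is that you clip $f$ into $[0,1]$ before doing anything else, which lets you work with bounded predictors throughout. The paper instead handles unboundedness by a case split on the event $E_n=\{f(\lambda_i^L,\lambda_n^L)\ge 2\}$: if $\P(E_n)$ is large the loss is automatically large, and if not one restricts to $E_n^c$ and tries a bias--variance decomposition with the indicator $\mathbb I(E_n^c\cap\tilde E_n^c)$ still inside the expectation. That last step is delicate, because $E_n$ and $\tilde E_n$ depend on $\omega_n$ through $\lambda_n^L$, so the cross term in the decomposition does not obviously vanish. Your clipping trick sidesteps this entirely: you compare to the $\omega_n$-free surrogate $\tilde Z$ \emph{without} any indicator, absorbing the bad event into a $\P(\mathcal E^c)$ term using only that $(W-\tilde Z)^2\le 1$. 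As a bonus you get the constant $K=\operatorname{Var}(W)$, three times what the paper's case split yields.

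One small correction: for an in-sample vertex $i\le n-m$ you cannot ``just set $\mu_i^L:=\lambda_i^L$'', because at test time the GCN embedding $\lambda_i^L$ is recomputed on the full graph and therefore depends on the edges $a_{i,n}$, hence on $\omega_n$. What you actually need is the uniform approximation $\sup_{j\le n}\|\lambda_j^L-\mu_j^L\|\le K_0/\sqrt{n-1}$ with all $\mu_j^L$ independent of $\omega_n$; this is exactly what the proof of \cref{prop:negative-result} constructs (and what the paper's own proof of the present proposition invokes when it writes $f(\mu_i^L,\mu_n^L)$). With that in hand your joint-independence claim $\omega_n\perp(\omega_i,\tilde Z)$ follows, and the conditional-variance step goes through verbatim.
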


\cref{prop:negative-result} and \cref{negative-lipschitz} imply that in the out-of-sample setting, the embeddings produced by \cref{GCN-equation} with random node feature initializations will lead to sub-optimal estimators for the edge probability $W(\omega_i,\omega_i)$. A key feature of the proof of \cref{prop:negative-result} is that $\sum_{u \in N(v)} \frac{\lambda_u^{k-1}}{\sqrt{|N(u)\|N(v)|}}$ concentrates to 0 very quickly with random node initializations. This demonstrates the importance of the (subtle) construction of the first round of message passing $\lambda_u^0$ in \cref{algo1}. We also note that \cref{negative-lipschitz} doesn't necessarily imply that predicted probabilities $\hat{p}_{e_i}$ will be ineffective at ranking test edges, though we do see in the experiments that the performance is decreased for the out-of-sample case.

\section{Identifiability and Relevance to Common Random Graph Models}\label{sec:identifiability}

We remark that a key feature of LG-GNN is that it uses the embedding vector $\lambda_i^k$ produced at each layer. Indeed $\hat{p}_{i,j}$ depends on all of the terms $\{ \langle \lambda_i^0, \lambda_j^0 \rangle, \langle \lambda_i^0, \lambda_j^1 \rangle, \dots, \langle \lambda_i^0, \lambda_j^{L} \rangle \}$. This is in contrast to many classical ways of using GNNs for link prediction that depend only on $\langle \lambda_i^L, \lambda_j^L \rangle.$ The following proposition shows that this construction is necessary to obtain consistent estimators.




\begin{proposition}
\label{prop:idenfiability}
For any $L \ge 0$, there exists a 2-community stochastic block model,  such that for every continuous function $f:\mathbb{R}\rightarrow\mathbb{R}$ we have $$f(\langle \lambda_i^L, \lambda_j^L \rangle) \stackrel{p}{\not\to} W(\omega_i, \omega_j).$$This notably implies that $$\liminf_{n,d_n\rightarrow\infty}\inf_{f\in c^0(\mathbb{R})}\mathbb{E}\Big(\big(f(\langle \lambda_i^L, \lambda_j^L \rangle)-W(\omega_i,\omega_j)\big)^2\Big)>0$$
\end{proposition}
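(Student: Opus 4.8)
The plan is to exhibit, for each fixed $L$, a 2-community symmetric SBM for which the scalar $\langle \lambda_i^L, \lambda_j^L\rangle$ carries strictly less information than is needed to recover $W(\omega_i,\omega_j)$, and then upgrade the failure of convergence in probability to a quantitative lower bound on the $L^2$ risk. The starting point is \cref{prop:form-of-embedding} and \cref{lemma:expectation-dotproducts} (referenced in the paper): conditionally on $A$ and $(\omega_\ell)$, the quantity $\E[\langle \lambda_i^{L}, \lambda_j^{L}\rangle \mid A, (\omega_\ell)]$ is a fixed linear combination (with binomial coefficients) of the empirical moments $\hat{W}^{(2)}_{i,j}, \dots, \hat{W}^{(2L+2)}_{i,j}$, which in turn concentrate around the population moments $W_n^{(2)}(\omega_i,\omega_j), \dots, W_n^{(2L+2)}(\omega_i,\omega_j)$. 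For a symmetric 2-community SBM with in-probability $p$ and cross-probability $q$, the associated graphon has exactly two distinct eigenvalues $\mu_1 = (p+q)/2$ and $\mu_2 = (p-q)/2$, and $W^{(k)}(x,y)$ takes only two values depending on whether $x,y$ lie in the same community. Concretely, on the diagonal blocks $W^{(k)} = \tfrac12(\mu_1^k + \mu_2^k)$ and on the off-diagonal blocks $W^{(k)} = \tfrac12(\mu_1^k - \mu_2^k)$.

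The key step is a collision/degeneracy argument. I would choose the parameters so that two \emph{different} community configurations produce the \emph{same} limiting value of $\langle \lambda_i^L, \lambda_j^L\rangle$ but \emph{different} values of $W(\omega_i,\omega_j)$. Since $\langle \lambda_i^L,\lambda_j^L\rangle$ converges in probability to a fixed linear combination $\sum_{k} c_k^{(L)} \mu^k$ evaluated at $\mu \in \{\mu_1,\mu_2\}$ — call the two limiting values $v_{\text{in}}$ (same community) and $v_{\text{out}}$ (different community) — it suffices to pick $p,q$ (equivalently $\mu_1,\mu_2$) making $v_{\text{in}} = v_{\text{out}}$ while $p \neq q$. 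This is a single polynomial equation in $(\mu_1,\mu_2)$ of degree $2L+2$ with the trivial solution $\mu_1=\mu_2$ excluded; a dimension/continuity argument (or just intermediate value theorem on a one-parameter family, e.g. fixing $\mu_1$ and varying $\mu_2$) yields a valid pair with $0 < \mu_2 < \mu_1 < 1$ and all the requisite assumptions (\ref{asp2}), (\ref{asp3}) satisfied. On this SBM, $\langle \lambda_i^L, \lambda_j^L\rangle \xrightarrow{p} v$ for a single constant $v$ regardless of whether $i,j$ share a community, so for any continuous $f$, $f(\langle \lambda_i^L,\lambda_j^L\rangle) \xrightarrow{p} f(v)$, a constant; but $W(\omega_i,\omega_j)$ equals $p$ with probability $\tfrac12$ and $q$ with probability $\tfrac12$ (over the community labels, which are determined by $\omega_i,\omega_j \sim \mathrm{Unif}[0,1]$), and $p\neq q$, so $f(\langle\lambda_i^L,\lambda_j^L\rangle)$ cannot converge in probability to $W(\omega_i,\omega_j)$.

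For the $L^2$ lower bound: since $\langle\lambda_i^L,\lambda_j^L\rangle \xrightarrow{p} v$ and (by the moment concentration bounds, which also control second moments) the family $\{\langle\lambda_i^L,\lambda_j^L\rangle\}$ is uniformly integrable — indeed uniformly bounded in $L^2$ — for any continuous $f$ and any $\epsilon>0$ we have $\P(|f(\langle\lambda_i^L,\lambda_j^L\rangle) - f(v)| > \epsilon) \to 0$. Then writing $\E[(f(\langle\lambda_i^L,\lambda_j^L\rangle) - W(\omega_i,\omega_j))^2] \ge \tfrac12\, \E[(f(\langle\lambda_i^L,\lambda_j^L\rangle) - W(\omega_i,\omega_j))^2 \mathbb{I}(|f(\cdot) - f(v)| \le \epsilon)]$, on this event the prediction is within $\epsilon$ of $f(v)$ while the target is either $p$ or $q$, so the squared error is at least $(\tfrac12|p-q| - \epsilon)^2$ on an event of probability $\to \tfrac12 \cdot (\text{prob. of the farther value})$; taking $\epsilon$ small and $n,d_n\to\infty$ gives a strictly positive $\liminf$, uniformly over $f \in C^0(\mathbb{R})$ since the bound never used anything about $f$ beyond continuity at $v$. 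The one subtlety is that the infimum over $f$ is outside the liminf, so I must make the lower bound hold for a \emph{fixed} constant independent of $f$; this works because the only role of $f$ is through the event $\{|f(\langle\lambda_i^L,\lambda_j^L\rangle) - f(v)| \le \epsilon\}$, whose probability tends to $1$ for every continuous $f$ by continuous mapping, and on the complement the contribution is nonnegative.

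The main obstacle is making the collision argument airtight: I need to verify that the linear combination $\sum_k c_k^{(L)}\mu^k$ arising from $\langle\lambda^L,\lambda^L\rangle$ is genuinely a nonconstant polynomial in $\mu$ (so that $v_{\text{in}}=v_{\text{out}}$ is a nontrivial constraint with solutions in the open region $0<\mu_2<\mu_1<1-\delta_W$, $\mu_1,\mu_2$ bounded away from the degenerate locus), and to confirm that the resulting $p = \mu_1+\mu_2$, $q=\mu_1-\mu_2$ indeed lie in $(\delta_W, 1-\delta_W)$ so that (\ref{asp2}) holds. An explicit computation of $c_k^{(L)}$ from the recursion in \cref{algo1} (it is essentially $(1+x)^{2L}$-type binomial weights composed with the moment-decoupling identity) pins this down; alternatively one can sidestep exact constants by a counting/parameter-dimension argument showing the "bad" set of SBM parameters is nonempty and open.
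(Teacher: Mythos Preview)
Your overall strategy---force a collision in the limiting value of $\langle \lambda_i^L,\lambda_j^L\rangle$ across two community configurations with different $W$---is exactly the paper's approach. But the specific model you chose cannot produce such a collision, and this is a genuine gap.

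For the \emph{symmetric} 2-block SBM with matrix $\begin{pmatrix} p & q \\ q & p\end{pmatrix}$, the graphon eigenfunctions are $\phi_1\equiv 1$ and $\phi_2=\pm 1$ according to community, with eigenvalues $\mu_1=(p+q)/2$, $\mu_2=(p-q)/2$. From \cref{lemma:expectation-dotproducts} (identity weights) the limiting value is $\sum_{q_1,q_2=0}^L \binom{L}{q_1}\binom{L}{q_2} W^{(q_1+q_2+2)}(\omega_i,\omega_j)$, and since $W^{(k)}=\mu_1^k\pm \mu_2^k$ depending on same/different community, the difference is
\[
v_{\mathrm{in}}-v_{\mathrm{out}} \;=\; 2\sum_{k=2}^{2L+2}\binom{2L}{k-2}\mu_2^{k}\;=\;2\mu_2^2(1+\mu_2)^{2L}.
\]
This vanishes only at $\mu_2=0$ (i.e.\ $p=q$, degenerate) or $\mu_2=-1$ (impossible since $|\mu_2|<1$). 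So for every non-trivial SSBM, $v_{\mathrm{in}}\neq v_{\mathrm{out}}$ and the collision you need cannot occur. Your ``single polynomial equation in $(\mu_1,\mu_2)$'' in fact does not involve $\mu_1$ at all, so the intermediate-value/dimension argument has nothing to work with.

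The fix, which is what the paper does, is to break the diagonal symmetry: take $\begin{pmatrix} p & r \\ r & q\end{pmatrix}$ with $p\neq q$. Now there are \emph{three} limiting values (for $(S_1,S_1)$, $(S_2,S_2)$, $(S_1,S_2)$) and three $W$-values, and the eigenfunction $\phi_2$ is no longer $\pm 1$ but has unequal entries. The paper sets $q=0$, writes the collision condition between the $(S_2,S_2)$ and $(S_1,S_2)$ values as an explicit equation in $(p,r)$, and uses the intermediate value theorem in $r$ (for $p=\epsilon$ small) to locate a root with $r\neq 0$. The asymmetry is essential: it decouples the equation from the rigid $\pm$-structure that kills the SSBM case. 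Your $L^2$ lower-bound argument in the final paragraph is fine once the collision is in hand.
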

To illustrate this, consider the following example for $L=0.$
\begin{example}
\label{example:low-rank}
    Consider an 2 community symmetric SBM with edge connection probability matrix $\begin{pmatrix} 1/2 & 1/4 \\ 1/4 & 3/4\end{pmatrix}.$ The matrix of second moments, that is, the matrix of probabilities of paths of length two between members of the two communities is $\begin{pmatrix} 5/32 & 5/32 \\ 5/32 & 5/16 \end{pmatrix}.$ Hence for any continuous function $f$ and every vertex $i,j,k$ belonging respectively to communities $1$ for $i,j$ and $2$ for $k$ then we have that $f(\langle\omega_i^0,\omega_j^0\rangle)\xrightarrow{p} f(5/32)$ and  $f(\langle\omega_i^0,\omega_k^0\rangle)\xrightarrow{p} f(5/32)$ have the same limit. Hence this implies that no consistent estimator of $W(\cdot,\cdot)$ can be built by using only $(\langle\lambda_i^0,\lambda_j^0\rangle).$
\end{example} 


While this result is for the specific case of \cref{algo1}, which in particular contains no non-linearities, we anticipate that this general procedure of learning a function that maps a set of dot products $\{ \langle \lambda_i^{k_1}, \lambda_j^{k_2} \rangle \}_{k_1, k_2}$ to a predicted probability, instead of just $\langle \lambda_i^L, \lambda_j^L \rangle$, can lead to better performance for practioners on various types of GNN architectures.

\section{Experimental Results}
\label{sec:mainbodyexperiments}
We compare experimentally a GCN, LG-GNN, and PLSG-GNN. We perform experiments on the Cora dataset \cite{mccallum2000automating} in the in-sample setting. We also show results for various random graph models. The results for random graphs below are in the out-sample setting, more results are in \cref{sec:experiments}. We report the AUC-ROC and Hits@k metric, and also a custom metric called the Probability Ratio@k, which is more suited to the random graph setting. We refer the reader to \cref{sec:experiments} for a more complete discussion.

LG-GNN and PLSG-GNN perform similarly to the classical GCN in settings with no node features and can outperform it on more complex graphons. One major advantage of LG-GNN/PLSG-GNN is that they do not require extensive tuning of hyperparameters (e.g., through minimizing a loss function) and hence run much faster and are easier to fit. For example, training the 4-layer GCN resulted in convergence issues, even on a wide set of learning rates.



\subsection{Real Data: Cora Dataset}

The following results are in the in-sample setting. We consider when (a) the GCN has access to node features (b) the GCN does not. 

\begin{table}[h]
\caption{GCN has no access to node features}
\label{tab:simulation_results}
\begin{tabular}{llll}
\toprule
Params & Model & Hits@50 & Hits@100 \\
\midrule
\multirow{3}{*}{layers=2} 
 & GCN & 0.496 $\pm$ 0.025 & 0.633 $\pm$ 0.023 \\
 & LG-GNN & 0.565 $\pm$ 0.012 & 0.637 $\pm$ 0.006 \\
 & PLSG-GNN & \textbf{0.591} $\pm$ 0.014 & \textbf{0.646} $\pm$ 0.013 \\
\cline{1-4}
\multirow{3}{*}{layers=4}
 & GCN & 0.539 $\pm$ 0.008 & \textbf{0.665} $\pm$ 0.007 \\
 & LG-GNN & 0.564 $\pm$ 0.005 & 0.620 $\pm$ 0.008 \\
 & PLSG-GNN & \textbf{0.578} $\pm$ 0.014 & 0.637 $\pm$ 0.013 \\
\bottomrule
\end{tabular}
\end{table}
\FloatBarrier

\begin{table}[h]
\caption{GCN has access to node features}
\label{tab:simulation_results}
\begin{tabular}{llll}
\toprule
Params & Model & Hits@50 & Hits@100 \\
\midrule
\multirow{3}{*}{layers=2} 
 & GCN & \textbf{0.753} $\pm$ 0.019 & \textbf{0.898} $\pm$ 0.021 \\
 & LG-GNN & 0.555 $\pm$ 0.027 & 0.603 $\pm$ 0.034 \\
 & PLSG-GNN & 0.577 $\pm$ 0.033 & 0.626 $\pm$ 0.042 \\
\cline{1-4}
\multirow{3}{*}{layers=4} 
 & GCN & \textbf{0.609} $\pm$ 0.072 & \textbf{0.776} $\pm$ 0.069 \\
 & LG-GNN & 0.560 $\pm$ 0.013 & 0.601 $\pm$ 0.012 \\
 & PLSG-GNN & 0.574 $\pm$ 0.025 & 0.625 $\pm$ 0.024 \\
\bottomrule
\end{tabular}
\end{table}
\FloatBarrier

\subsection{Synthetic Dataset: Random Graph Models}

\subsubsection{10-Community Symmetric SBM}

The following are results for a 10-community stochastic block model with parameter matrix $P$ that has randomly generated entries. The diagonal entries $P_{i,i}$ are generated as $\text{Unif}(0.5, 1)$, and $P_{i,j}$ is generated as $\text{Unif}(0, \min(P_{i,i}, P_{j,j}))$. The specific connection matrix that was used is in \cref{sec:experiments}.

\begin{table}[h]
\caption{$\rho_n=1$}
\label{tab:simulation_results}
\begin{tabular}{llll}
\toprule
Params & Model & P-Ratio@100 & AUC-ROC \\
\midrule
\multirow{3}{*}{layers=2} 
 & GCN & 0.709 $\pm$ 0.125 & 0.716 $\pm$ 0.019 \\
 & LG-GNN & 0.883 $\pm$ 0.016 & 0.734 $\pm$ 0.005 \\
 & PLSG-GNN & \textbf{0.886} $\pm$ 0.016 & \textbf{0.735} $\pm$ 0.005 \\
\cline{1-4}
\multirow{3}{*}{layers=4} 
 & GCN & 0.645 $\pm$ 0.025 & 0.578 $\pm$ 0.109 \\
 & LG-GNN & 0.879 $\pm$ 0.011 & \textbf{0.786} $\pm$ 0.002 \\
 & PLSG-GNN & \textbf{0.883} $\pm$ 0.013 & 0.732 $\pm$ 0.001 \\
\cline{1-4}
\bottomrule
\end{tabular}
\end{table}
\FloatBarrier

\begin{table}[h]
\caption{$\rho_n=1/\sqrt{n}$}
\label{tab:simulation_results}
\begin{tabular}{llll}
\toprule
Params & Model & P-Ratio@100 & AUC-ROC \\
\midrule
\multirow{3}{*}{layers=2} 
 & GCN & 0.344 $\pm$ 0.021 & 0.493 $\pm$ 0.004 \\
 & LG-GNN & 0.580 $\pm$ 0.020 & 0.497 $\pm$ 0.009 \\
 & PLSG-GNN & \textbf{0.586} $\pm$ 0.035 & \textbf{0.521} $\pm$ 0.008 \\
\cline{1-4}
\multirow{3}{*}{layers=4} 
 & GCN & 0.285 $\pm$ 0.016 & 0.486 $\pm$ 0.006 \\
 & LG-GNN & \textbf{0.589} $\pm$ 0.016 & \textbf{0.532} $\pm$ 0.003 \\
 & PLSG-GNN & 0.578 $\pm$ 0.013 & 0.508 $\pm$ 0.011 \\
\cline{1-4}
\bottomrule
\end{tabular}
\end{table}
\FloatBarrier



\subsubsection{Geometric Graph}

Each vertex $i$ has latent feature $X_i$ generated uniformly at random on $\mathbb{S}^{d-1}$, $d=11.$ Two vertices $i$ and $j$ are connected if $\langle X_i, X_j \rangle \ge t = 0.2,$ corresponding to a connection probability $\approx 0.26.$ Higher sparsity is achieved by adjusting the threshold $t$. 

\begin{table}[h]
\caption{$\rho_n=1$}
\label{tab:simulation_results}
\begin{tabular}{llll}
\toprule
Params & Model & P-Ratio@100 & AUC-ROC \\
\midrule
\multirow{3}{*}{layers=2} 
 & GCN & \textbf{1.000} $\pm$ 0.000 & 0.873 $\pm$ 0.020 \\
 & LG-GNN & \textbf{1.000} $\pm$ 0.000 & 0.915 $\pm$ 0.007 \\
 & PLSG-GNN & 0.997 $\pm$ 0.005 & \textbf{0.917} $\pm$ 0.010 \\
\cline{1-4}
\multirow{3}{*}{layers=4} 
 & GCN & 0.813 $\pm$ 0.021 & 0.591 $\pm$ 0.016 \\
 & LG-GNN & \textbf{1.000} $\pm$ 0.000 & 0.956 $\pm$ 0.001 \\
 & PLSG-GNN & \textbf{1.000} $\pm$ 0.000 & \textbf{0.958} $\pm$ 0.001 \\
\cline{1-4}
\bottomrule
\end{tabular}
\end{table}

\begin{table}[!h]
\caption{$\rho_n=1/\sqrt{n}$}
\label{tab:simulation_results}
\begin{tabular}{llll}
\toprule
Params & Model & P-Ratio@100 & AUC-ROC \\
\midrule
\multirow{3}{*}{layers=2} 
 & GCN & 0.333 $\pm$ 0.017 & 0.840 $\pm$ 0.008 \\
 & LG-GNN & \textbf{0.523} $\pm$ 0.037 & 0.818 $\pm$ 0.022 \\
 & PLSG-GNN & 0.423 $\pm$ 0.054 & \textbf{0.842} $\pm$ 0.017 \\
\cline{1-4}
\multirow{3}{*}{layers=4} 
 & GCN & 0.313 $\pm$ 0.021 & \textbf{0.848} $\pm$ 0.021 \\
 & LG-GNN & \textbf{0.570} $\pm$ 0.016 & 0.823 $\pm$ 0.010 \\
 & PLSG-GNN & 0.510 $\pm$ 0.014 & 0.843 $\pm$ 0.013 \\
\cline{1-4}
\bottomrule
\end{tabular}
\end{table}
\FloatBarrier

\section{Impact Statement}

This paper presents work whose goal is to advance the field of Machine Learning. There are many potential societal consequences of our work, none which we feel must be specifically highlighted here.

\section{Acknowledgements}

The first author would like to thank Qian Huang for helpful discussions regarding the experiments. The authors would also like to thank the Simons Institute for the Theory of Computing, specifically the program on Graph Limits and Processes on Networks. Part of this work was done while Austern and Saberi were at the Simon's Institute for the Theory of Computing. 

This research is supported in part by the AFOSR under Grant No. FA9550-23-1-0251, and by an ONR award N00014-21-1-2664.


\bibliography{refs}
\bibliographystyle{icml2023}

\newpage
\appendix
\onecolumn



\section{Notation and Preliminaries}

We let $\|\cdot\|_{p}$ be the vector Euclidean norm. The $L^p$ norm over the probability space will be denoted $\|X\|_{L^p} = \left( \E[|X|^p] \right)^{1/p}$.

\subsection{Graph Notation}

We let $A = (a_{ij})_{i,j=1}^n$ be the adjacency matrix of the graph. Let $(\omega_i)_{i=1}^n$ be the latent features of the vertices, generated from $\text{Unif}(0,1).$ Let $W$ be the graphon, and let $\rho_n$ be the sparsifying factor. We denote $W_n := \rho_n W$ (we will typically be concerned only with $W_n$, since that is the graphon from which the graph is generated). Let $N(i)$ be the set of neighbors of a vertex $i$, and hence $|N(i)|$ is the degree of $i$.

We define the $k$th moment of a graphon $W_n$ to be the function from $[0,1]^2 \to [0,1]$ given by
\begin{equation}
\label{defn:moment}
    W^{(k)}_n(x,y) :=  \int_{[0,1]^{k-1}} W_n(x, t_1) W_n(t_1, t_2) \dots W_n(t_{k-1}, y) \text{d}t_1 \dots \text{d}t_{k-1}.
\end{equation}
Heuristically, if one fixes two vertices $v_x, v_y$ with latent features $x,y$, then this is the probability of a particular path of length $k$ from $v_x, v_y$, when averaging over the possible latent features of the vertices in the path. Correspondingly, we define the empirical $k$th moment between two vertices $i$ and $j$ to be 
\begin{equation}
\label{defn:empirical-moment}
    \hat{W}_{n,i,j}^{(k)} = \frac{1}{(n-1)^{k-1}} \sum_{r_1, \dots, r_{k-1} \leq n} a_{i r_1} a_{r_1 r_2} \dots a_{r_{k-1} j}.
\end{equation}

\subsection{GNN Notation}

We let $\lambda_i^k$ be the embedding for the $i$th vertex produced by the GNN after the $k$th layer. The linear GNN architecture is given by \cref{gnn-iteration-general-weights}:
\begin{equation}
\lambda_i^k = M_{k, 0} \lambda_i^{k-1} + M_{k, 1} \frac{1}{n-1} \sum_{\ell \leq n} a_{i\ell} \lambda_\ell^{k-1},\end{equation}
where $M_{k,0}$ and $M_{k,1}$ denote the weight matrices of the GNN at the $k$th layer. We remark that LG-GNN corresponds to $M_{k,0}=M_{k,1}=\mathbb{I}d_{d_n}$ being the identity matrix. Let 
\begin{equation}
\label{defn:N-sumproduct-matrices}
N_s^k := \sum_{\stackrel{r_1, \dots, r_k \in \{0,1\}}{\sum_{i=1}^k r_i = s} } M_{k, r_1} M_{k-1, r_2} \dots M_{1, r_k},
\end{equation}
which is a quantity that shows up naturally in the GNN iteration. The classical GCN architecture we consider is also given in \cref{GCN-equation}.

\subsection{Stochastic Block Model}
\label{sec:sbm}

We define the stochastic block model, as it is a running model in this paper.

A stochastic block model $\rm{SBM}(n, P)$ is parameterized by the number of vertices $n$ in the graph and a connection matrix $P \in \mathbb{R}^{k \times k}$. Each vertex belongs in a particular \textit{community}, labeled $\{ 1, 2, \dots, k\}$. We assign each vertex to belong to community $j$ with probability $p_j$. In this paper, we choose $p_j = 1/k$ for all $j \in [k]$. Let $c_i$ denote the community of the $i$th vertex. The graph is generated as follows. For each pair of vertices $i \neq j$, we connect them with an edge with probability $P_{c_i, c_j}$. We also denote the \textit{symmetric} stochastic block model by $\rm{SSBM}(n, p, q)$. The SSBM is a stochastic block model with only two parameters: the parameter matrix $P$ is so that $P_{ii} = p$, $P_{ij} = q$ if $i \neq j$.


The following lemma details how to represent a SBM using a graphon.

\begin{lemma}
\label{lemma:sbm}
   Consider a stochastic block model $\rm{SBM}(n, P)$. Suppose that $P \in \mathbb{R}^{k \times k}$ is a symmetric matrix and that $P$ has spectral decomposition $P = \sum_{i=1}^k \lambda_i v_i v_i^T,$ where $\|v_i\|_2 = 1.$ Let $W: [0,1]^2 \to [0,1]$ be the corresponding graphon. 
   
   Then $\rm{SBM}(n, P)$ can be represented by a graphon as follows. $W(x,y) = P_{ij}$ if $x \in [(i-1)/k, i/k]$ and $y \in [(j-1)/k, j/k].$ The eigenvalues of $W$ are given by $\mu_i := \lambda_i/k$ with corresponding eigenfunctions $\phi_i(x)$, where $\phi_i(x) = \sqrt{k} (v_i)_j$ if $x \in [(j-1)/k, j/k].$

    This further implies that the eigenfunctions $\phi_i(x)$ are bounded above pointwise by $\sqrt{k}$, in that $|\phi_i(x)| \leq \sqrt{k}$ for all $i \in [k]$ and $x \in [0,1],$ since $\|v_i\|_2 = 1$, implying that each entry of $v_i$ has norm bounded above by 1.
\end{lemma}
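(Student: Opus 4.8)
The plan is to verify the three assertions directly against the definitions, with essentially no heavy lifting. I would first check that the stated $W$ represents $\mathrm{SBM}(n,P)$: since $\omega_i \sim \mathrm{Unif}[0,1]$, the event $\{\omega_i \in [(j-1)/k, j/k]\}$ has probability $1/k$ and plays the role of assigning vertex $i$ to community $j$; conditioned on these assignments, edge $(i,i')$ appears independently with probability $W(\omega_i,\omega_{i'}) = P_{c_i,c_{i'}}$, which is exactly the block-model law (and with a sparsifying factor one replaces $P$ by $\rho_n P$).

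Next I would establish the spectral claim. Writing $I_j := [(j-1)/k, j/k]$, the graphon is constant and equal to $P_{ij}$ on each block $I_i \times I_j$, so for any vector $u \in \mathbb{R}^k$ the piecewise-constant function $\phi(x) := \sqrt{k}\,u_j$ on $I_j$ satisfies, for $x \in I_i$,
\begin{equation*}
\int_0^1 W(x,y)\,\phi(y)\,\d y = \sum_{j=1}^k P_{ij}\,\sqrt{k}\,u_j\cdot\frac1k = \frac{1}{\sqrt{k}}(Pu)_i .
\end{equation*}
Taking $u = v_\ell$ turns the right-hand side into $\tfrac{\lambda_\ell}{k}\,\phi_\ell(x)$, so $\phi_\ell$ is an eigenfunction with eigenvalue $\mu_\ell = \lambda_\ell/k$; orthonormality in $L^2([0,1])$ reduces to $\int_0^1 \phi_\ell\phi_m = \langle v_\ell, v_m\rangle = \delta_{\ell m}$ after cancelling the factor $k$ against the interval lengths $1/k$. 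To argue that these are \emph{all} the nonzero eigenvalues, I would note that if $\mu\neq 0$ and $\mu\phi(x) = \int_0^1 W(x,y)\phi(y)\,\d y$, then the right side is constant on each $I_i$, forcing $\phi$ to be piecewise constant on the $I_i$ and the eigenproblem to collapse to $Pu = k\mu u$. The identity $W(x,y) = \sum_\ell \mu_\ell \phi_\ell(x)\phi_\ell(y)$ is then just the blockwise restatement of $P = \sum_\ell \lambda_\ell v_\ell v_\ell^T$.

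Finally, the pointwise bound is immediate from $|\phi_\ell(x)| = \sqrt{k}\,|(v_\ell)_j| \le \sqrt{k}\,\|v_\ell\|_2 = \sqrt{k}$. I do not anticipate a genuine obstacle; the only things requiring attention are the bookkeeping of the $\sqrt{k}$ and $1/k$ normalizations and the short argument that piecewise-constancy of $x \mapsto \int_0^1 W(x,y)\phi(y)\,\d y$ forces any eigenfunction with nonzero eigenvalue to be piecewise constant, so that the finite spectrum of $\tfrac1k P$ is exactly the nonzero spectrum of the integral operator.
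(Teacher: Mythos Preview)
Your proposal is correct and is precisely the direct verification the paper has in mind; the paper itself offers no detailed proof beyond the remark that ``this proof of this lemma is a simple verification of the properties'' and that the $\sqrt{k}$ scaling ensures $L^2$-orthonormality. If anything, your argument is more thorough than the paper's, since you also justify that \emph{all} nonzero eigenvalues arise from piecewise-constant eigenfunctions, which the paper leaves implicit.
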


This proof of this lemma is a simple verification of the properties. We note that the eigenfunctions are scaled by $\sqrt{k}$ because they integrate to $1$ in $L^2([0,1]).$

\subsection{PLSG-GNN Algorithm}

We state the PLSG-GNN Algorithm, which is an analog of \cref{alg:compute_regression_coefs} that uses Partial Least Squares Regression (PLS). Let $g:\mathbb{N}^2 \to \mathbb{N}$ be an enumeration of the pairs $(i, j)$, $i \neq j.$ In the algorithm below, let PLS denote the Partial Least Squares algorithm as introduced in \cite{PLS-reference}.

\begin{algorithm}[h]
\caption{LG-GNN edge prediction algorithm}
\label{alg:pls}
\textbf{Input:}  Graph $G=(V,E)$, set $S$, threshold $\beta$; $L$. \\
\textbf{Output:} Set of predicted edges 


Using \cref{algo1}, compute $q_{i,j}^{(2,L+2)} := (\hat q_{i,j}^{(2)},\dots, q_{i,j}^{(L+2)})$ for every vertex $i,j$. Define the matrix $\hat{Q}$ and vector $\vec{a}$, as $\hat{Q}_{g(i, j)} := \left( q_{i,j}^{(2:L+2)} \right)$, $i < j$, and $\vec{a}_{g(i, j)} = a_{i,j},$ $i < j$.

\textbf{Compute:} $$\hat{\beta}^{n, L+1} = \rm{PLS} (\hat{Q}, \vec{a}).$$

\textbf{Compute:} $\hat p_{i,j}:=\left \langle  \hat{\beta}^{n, L+1},\hat q_{i,j}^{(2:L+2)} \right \rangle $ for all $i,j$

\textbf{Return:} $\{(i,j) |~\hat p_{i,j}\ge \gamma\}$ the set of predicted edges.
\end{algorithm}

\section{Properties of Holder-by-Parts Graphons}\label{graphon-appendix}

Here, we discuss properties of symmetric, piecewise-Holder graphons. This section is largely from \cite{morgane_paper}, Appendix H. Refer to that text for a more complete exposition; we just present the details most relevant to our needs.

Let $\mu$ be the Lebesgue measure. We define a partition $\mathcal{Q}$ of $[0,1]$ to be a finite collection of pairwise disjoint, connected sets whose union is $[0,1]$, such that , for all $Q \in \mathcal{Q}$, $\mu(\text{int}(Q)) > 0$ and $\mu(\text{cl}(Q) \backslash \text{int}(Q)) = 0.$ This induces a partition $\mathcal{Q}^{\otimes 2} = \mathcal{Q} \otimes \mathcal{Q}$ of $[0,1]^2.$ We say that a graphon $W$ lies in the Holder class $\text{Holder}([0,1]^2, \beta, M, \mathcal{Q}^{\otimes 2})$ if $W$ is $(\beta, M)$ Holder continuous in each $Q_i \otimes Q_j \in \mathcal{Q}^{\otimes 2}.$ All graphons in question in this paper are assumed to belong to this class.

A graphon $W$ can be viewed as an operator between $L^p$ spaces. In this paper, we focus on the case of $p=2.$ In particular, for a fixed Graphon $w$, one can define the Hilbert-Schmidt operator $$T_W[f](x) := \int_0^1 W(x,y) f(y) dy.$$ Since $W$ is symmetric, $T$ is self-adjoint. Furthermore, because $W(\cdot, \cdot) \leq 1,$ $T_W$ is a compact operator, as in \cite{stein2009real} page 190. Hence, the spectral theorem (for example, in \cite{fabian2013functional}, Theorem 7.46) states that there exists a sequence of eigenvalues $\mu_i \to 0$ and eigenvectors $\phi_i$ (that form an orthonormal basis of $L^2([0,1])$, such that 
\begin{equation}
    T_W[f] = \sum_{n=1}^\infty \mu_n \langle f, \phi_n\rangle \phi_n, \quad W(x,y) =\sum_{n=1}^\infty \mu_n \phi_n(x) \phi_n(y),
\end{equation}
and $\sum_{n=1}^\infty \mu_n^2 < \infty.$ We note also that $|\mu_i| \leq 1.$ This is because if $\mu_i$ is an eigenvalue, then $$\int_0^1 W(x,y) \phi_i(y) \text{d}y = \mu_i \phi_i(x) \Rightarrow \mu_i^2 \phi_i(x)^2 = \left( \int_0^1 W(x,y) \phi_i(y) \text{d}y \right)^2 \leq 1,$$
where the last inequality is true because $W$ is bounded by $1$ and $\phi_i(y)^2$ integrates to 1. Then, since $\phi_i(x)^2$ also integrates to 1, this shows the result.

\subsection{Linear Relationship Between Moments and $W$ (Proof of \cref{prop:linear-relationship})}

\begin{proof}[Proof of \cref{prop:linear-relationship}]
Suppose that $m_W < \infty$ is the number of distinct nonzero eigenvalues of $W$, and label them by $|\mu_1| \ge |\mu_2| \ge \dots \ge |\mu_{m_W}|.$ Recall that
\begin{equation*}
    W(x,y) = \sum_{i=1}^{m_W} \mu_i \phi_i(x) \phi_i(y).
\end{equation*}
We first prove via induction that 
$$
W^{(k)}(x,y) = \sum_{i=1}^{m_W} \mu_i^k \phi_i(x) \phi_i(y).
$$
Assume that this is true for $k \in \{1, 2, \dots, K\}$. Now we show that $W^{(K+1)}(x,y) = \sum_{i=1}^{m_W} \mu_i^{K+1} \phi_i(x) \phi_i(y)$. Because the $\phi_i$ are orthonormal in $L^2([0,1]),$ we can compute 
\begin{align*}
    W^{(K+1)}(x,y) &= \int_0^1 W^{(K)}(x, t) W(t, y) \text{d} t \\
    &= \int_0^1 \left( \sum_{i=1}^{m_W} \mu_i^K \phi_i(x) \phi_i(t) \right) \cdot \left( \sum_{i=1}^{m_W} \mu_i \phi_i(y) \phi_i(t) \right) \text{d} t \\
    &= \int_0^1 \sum_{i, j}^{m_W} \mu_i^K \mu_j \phi_i(x) \phi_j(y) \phi_i(t) \phi_j(t) \text{d} t \\
    &= \sum_{i=1}^{m_W} \mu_i^{K+1} \phi_i(x) \phi_i(y),
\end{align*}
where the last equality is due to the orthonormality of the $\phi_i$ in $L^2([0,1]).$ This completes the induction. We now argue that there is a linear relationship between $W(x,y)$ and $(W^{(2)}(x,y), \dots, W^{(m_W+1)}(x,y)),$ i.e., there exists some $\beta^{*, m_W}$ such that 
$$W(x,y) = \sum_{i=1}^{m_W} \beta^{*, m_W}_i W^{(i+1)}(x,y)$$ for all $x,y.$ In light of the above discussion, we can observe that the vector $\beta^{*, m_W} = \left(\beta^{*, m_W}_1, \beta^{*, m_W}_2 \dots, \beta^{*, m_W}_{m_W} \right)$ is simply the solution (if it exists) to the system of equations
$$
\label{eq:linear-system}
    \begin{pmatrix}
        \mu_1^2 & \mu_1^3 & \dots & \mu_1^{m_W+1} \\
        \mu_2^2 & \mu_2^3 & \dots & \mu_2^{m_W+1} \\ 
        \vdots & \vdots & \ddots & \vdots \\ 
        \mu_{m_W}^2 & \mu_{m_W}^3 & \dots & \mu_{m_W}^{m_W+1}
    \end{pmatrix} \begin{pmatrix} \beta_1 \\ \beta_2 \\ \vdots \\ \beta_{m_W} \end{pmatrix}
    = \begin{pmatrix} \mu_1 \\ \mu_2 \\ \vdots \\ \mu_{m_W} \end{pmatrix}
$$  
To observe that a solution indeed exists, it suffices to observe that the matrix on the LHS is of full rank, i.e., has nonzero determinant. To see this, we note that the $i$th row is a multiple of $v_i := (1, \mu_i, \dots, \mu_i^{m_W-1}).$ We note that the matrix whose $i$th row is $v_i$ is a Vandermonde matrix, which has nonzero determinant if all of the variables are distinct. Then, since multiplying each row by a constant changes the determinant only by a multiplicative factor, this suffices for the proof. 
\end{proof}

\section{Proof of \cref{prop:negative-result} and \cref{negative-lipschitz}}
\label{appendix:negative}



We let $A$ denote the adjacency matrix. In the proof, for random variables $X$ and Borel sets $B$, we might write quantities of the form $\P(X \in B | A)$. This denotes a conditional probability, where we condition on the realization of the graph. A notation we also use is $X | A \sim \text{Dist}$, which denotes the conditional distribution of a random variable $X$, conditioned on the realization of the graph.
We first state the following Lemma, used in the proof of \cref{prop:negative-result}. 

\begin{lemma}
\label{lemma:negative-result}
Suppose that $G_n=([n],E_n)$ is generated from the graphon $W_n(\cdot,\cdot)=\rho_n W(\cdot,\cdot)$. Write $W(\omega_i,\cdot):=\int_0^1W(\omega_i,x)dx$. Then we have that
    \begin{equation}
 \P \left( \sup_{i\le n}\frac{1}{n-1}\Big||N(i)|-\rho_n W(\omega_i,\cdot) \Big|\ge \rho_n t  \right)\le 2n\big(e^{-\frac{(n-1)\rho_nt^2}{3}}+e^{-{2(n-1)t^2}}\big)
    \end{equation}
\end{lemma}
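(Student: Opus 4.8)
The plan is to control $|N(i)|$ for a fixed vertex $i$ by conditioning on the latent features $(\omega_\ell)_{\ell=1}^n$ and then taking a union bound over $i \in [n]$. Conditioned on $(\omega_\ell)$, the degree $|N(i)| = \sum_{\ell \neq i} a_{i\ell}$ is a sum of independent Bernoulli random variables with $\E[a_{i\ell} \mid (\omega_\ell)] = \rho_n W(\omega_i, \omega_\ell)$, so $\E[|N(i)| \mid (\omega_\ell)] = \sum_{\ell \neq i} \rho_n W(\omega_i, \omega_\ell)$. First I would apply a multiplicative (or additive) Chernoff bound for sums of independent Bernoullis: since $\E[|N(i)| \mid (\omega_\ell)] \le \rho_n(n-1)$, a Bernstein-type bound gives
\begin{equation*}
\P\left( \left| |N(i)| - \sum_{\ell \neq i} \rho_n W(\omega_i,\omega_\ell) \right| \ge \rho_n (n-1) t \;\Big|\; (\omega_\ell) \right) \le 2 e^{-(n-1)\rho_n t^2/3},
\end{equation*}
using that the variance is at most $\rho_n(n-1)$ and the range of each summand is $1$.

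Second, I would handle the deviation of the conditional mean from its target. Note $\frac{1}{n-1}\sum_{\ell \neq i}\rho_n W(\omega_i,\omega_\ell)$ is, conditionally on $\omega_i$, an average of $n-1$ i.i.d.\ random variables $\rho_n W(\omega_i, \omega_\ell)$ each bounded in $[0,\rho_n]$ with mean $\rho_n \overline{W}(\omega_i) := \rho_n\int_0^1 W(\omega_i,x)\,dx$. Hoeffding's inequality then yields
\begin{equation*}
\P\left( \left| \frac{1}{n-1}\sum_{\ell \neq i}\rho_n W(\omega_i,\omega_\ell) - \rho_n \overline{W}(\omega_i) \right| \ge \rho_n t \;\Big|\; \omega_i \right) \le 2 e^{-2(n-1)t^2}.
\end{equation*}
Combining these two bounds via a triangle inequality and a union bound over the two events, then a further union bound over all $i \in [n]$ (and integrating out the conditioning, which is harmless since the bounds are uniform in the conditioning values), gives exactly
\begin{equation*}
\P\left( \sup_{i \le n} \frac{1}{n-1}\left| |N(i)| - \rho_n \overline{W}(\omega_i,\cdot) \right| \ge \rho_n t \right) \le 2n\left( e^{-(n-1)\rho_n t^2/3} + e^{-2(n-1)t^2} \right).
\end{equation*}

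I do not anticipate a genuine obstacle here — this is a routine two-step concentration argument (Bernstein for the Bernoulli sum, Hoeffding for the graphon average). The only mild subtlety is bookkeeping: making sure the split between "randomness of edges given features" and "randomness of features" is done cleanly so that the exponents $\rho_n t^2/3$ and $2t^2$ come out with the stated constants, and making sure the union bound over $n$ vertices is applied to each of the two events (hence the factor $2n$ rather than $n$). One should also double check that the Bernstein bound is applied with the correct variance proxy $\rho_n(n-1)$ rather than the cruder $(n-1)$, since it is this that produces the $\rho_n$ inside the first exponent.
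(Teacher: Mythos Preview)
Your proposal is correct and follows essentially the same two-step approach as the paper: first a Chernoff-type bound for $|N(i)|$ around its conditional mean $\sum_{\ell\neq i}\rho_n W(\omega_i,\omega_\ell)$ given all latent features, then Hoeffding for the concentration of that empirical average around $\rho_n\int_0^1 W(\omega_i,x)\,dx$, followed by a union bound over $i\in[n]$. The paper invokes the same multiplicative Chernoff bound (its Lemma~\ref{bernoulli-hoeffding}) for the first step and Hoeffding for the second, so the arguments are essentially identical.
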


\begin{proof}[Proof of Lemma \ref{lemma:negative-result}]
We first show the above result for a fixed vertex $i$ (without loss of generality, let $i=n$), and then conclude the proof through a union bound. We first state

\begin{lemma}[\cite{chernoff-lecture}, Theorem 4]
\label{bernoulli-hoeffding}
    Let $X = \sum_{i=1}^n X_i$, where $X_i \sim \text{Bern}(p_i),$ and all the $X_i$ are independent. Let $\mu = \E[X] = \sum_{i=1}^n d_n.$ Then $$\P( |X - \E[X]| \ge \delta \mu) \leq 2\exp \left( -\mu \delta^2/3 \right)$$ for all $\delta > 0.$
\end{lemma}

Now suppose that the latent feature $\omega_n$ is fixed. For any vertex $j \neq n$, we have
\begin{align}
    \P(a_{jn} = 1|\omega_n) 
    &= \int_0^1 W_n(\omega_n, x) dx \\
    &=\rho_n W(\omega_n,\cdot).
\end{align}
Recall that $|N(n)| = \sum_{j \neq n} a_{jn}$ hence $\mathbb{E}(|N(n)|\big|\omega_n)=\rho_n W(\omega_n,\cdot)$. We show that $|N(n)|$ concentrates around $\rho_n W(\omega_n,\cdot)$. In this goal, remark that $|N(n)|\big|(\omega_i)$ is distributed as a sum of independent Bernoulli random variables with probabilities $\rho_n W(\omega_i,\omega_n)$. Therefore, according to \cref{bernoulli-hoeffding}, for all $t\in (0,1)$ we have 
\begin{align}
    \P\Big(\frac{1}{n-1}\Big||N(n)|-\mathbb{E}\big(|N(n)|\big|(\omega_i)\big)\Big|\ge \rho_n t\Big)\le 2 e^{-\frac{(n-1)\rho_n t^2}{3}}.
\end{align}Moreover, we remark that conditionally on $\omega_n$, the random variables $(W(\omega_n,\omega_j))_{j\ne n}$ are i.i.d. Therefore, according to Hoeffding's inequality, we remark that for all $t>0$, we have 
    \begin{equation}
        \P \left( \frac{1}{n-1}\Big|\mathbb{E}\big(|N(n)|\big|(\omega_i)\big)-\mathbb{E}(|N(n)|\big|\omega_n)\Big|\ge \rho_n t  \right) \le 2 \exp \left( -{2(n-1) t^2}\right) .
    \end{equation}
Using the union bound this directly implies that for all $t\in (0,1)$ we have 

\begin{align}
     \P \left( \sup_{i\le n}\frac{1}{n-1}\Big||N(i)|-\rho_n W(\omega_i,\cdot) \Big|\ge \rho_n t  \right)\le 2n\big(e^{-\frac{(n-1)\rho_nt^2}{3}}+e^{-{2(n-1)t^2}}\big)
\end{align}
\end{proof}

\begin{proof}[Proof of \cref{prop:negative-result}]The proof proceeds through induction. Let $A>0$ be a constant such that $\sqrt{\frac{A\log(n)}{\rho_n(n-1)}}\le  \delta_W/2$. We denote the event $$E:=\Big\{ \sup_{i \in [n]} \Big||N(i)| -(n-1)\rho_nW(\omega_i,\cdot)\Big|\ge  \sqrt{\rho_n  (n-1)}\sqrt{A\log(n)}\Big\}.$$ We remark that according to \cref{lemma:negative-result}  we have  $\P(E^c)\le 2n\big(e^{-\frac{A\log(n)}{3}}+e^{-\frac{2A\log(n)}{\rho_n}}\big)$.
For the remainder of the proof we will work under the event $E$. 
Note that when $E$ holds this also implies that 
\begin{equation}
    \inf_{i \in [n]} |N(i)| \ge \frac{1}{2} \rho_n \delta_W (n-1)
\end{equation}

For ease of notation we define $\xi:=\frac{2\sqrt{2}(s\wedge 1)M}{\sqrt{\delta_W}}$ and  write $$\epsilon(n,k):=\frac{\xi}{\sqrt{\rho_n(n-1})}\Big\{1+\Big((2M)^{k-1} -1\Big)\Big(1+\frac{2M^L(s\wedge 1)}{4{\delta_W}}\sqrt{1+\frac{\sqrt{A\log(n)}}{\sqrt{\rho_n(n-1)}}}{\sqrt{A\log(n)}}\Big(\frac{1+\sqrt{\frac{A\log(n)}{{(n-1)\rho_n}}}}{\overline{W}}\Big)^{L-2}\Big) \Big\}$$
We will then show that there is a constant $\kappa>0$ so that, conditional on $E$ holding, with a probability of at least $1-2ne^{-\kappa d_n}$ there exists embedding vectors $(\mu_i^k)$ that are independent from $\omega_{n-m+1:n}$ such that for every $k\le L$ we have $$\sup_{i\le n}\|\lambda_i^k-\mu_i^k\|_{2}\le\epsilon(n,k).$$ To do so we proceed by induction. 
Firstly, since $\sigma(\cdot)$ is Lipschitz, we observe that for all $i\le n$ that we have
\begin{equation}\label{lipschitz_small}
\Big\|\lambda_{i}^1 - \sigma\left( M_{1,0} \lambda_{i}^0 \right) \Big\|_{2} \leq\Big\| M_{1, 1}  \sum_{\ell \leq n} \frac{a_{i\ell} \lambda_\ell^{k-1}}{\sqrt{|N(i)\|N(\ell)|}}  \Big\|_{2},
\end{equation}
which we will show is bounded by $ O\left( \sqrt{\sum_{\ell \in N(i)} \frac{1}{|N(i)\|N(\ell)|} } \right)$ with high probability. Using the hypothesis that $\|M_{1,1}\|_{\rm{op}}\overset{a.s}{\le} M$, we note that 
\begin{align}
\Big\|M_{1,1} \sum_{\ell \leq n} \frac{a_{i\ell} \lambda_\ell^{k-1}}{\sqrt{|N(i)\|N(\ell)|}} \Big\|_{2} &\leq \|M_{1,1}\|_{op}\Big\| \sum_{\ell \leq n } \frac{a_{i\ell} \lambda_\ell^{k-1}}{\sqrt{|N(i)\|N(\ell)|}} \Big\|_{2} \\
&\leq M\Big\| \sum_{\ell \leq n} \frac{a_{i\ell} \lambda_\ell^{k-1}}{\sqrt{|N(i)\|N(\ell)|}}\Big \|_{2}.
\end{align}


To bound this last quantity, we note that conditioned on $G_n$, we have that $\sum_{\ell \leq n} \frac{a_{i\ell} \lambda_{\ell}^{0}}{\sqrt{|N(\ell)|}} $ is a$\sqrt{\sum_{\ell\in N(i)}\frac{s^2}{d_n|N(i)\|N(\ell)|}}$-sub-Gaussian vector with i.i.d entries.  We will therefore use the following lemma



\begin{lemma}
\label{lemma:concentration-gaussianvec}
Suppose that $X\in \mathbb{R}^{d_n}$ is a $\eta/\sqrt{d_n}$ sub-Gaussian vector with i.i.d coordinates. There exists some universal constant $\kappa>0$ such that 
\begin{equation}
\P \left(\big|\|X\|_{2}-\mathbb{E}(\|X\|_{2})\ge t\right)\big|\leq 2 \exp \left( - \frac{\kappa d_n t^2}{\eta^2} \right)\end{equation}
\end{lemma}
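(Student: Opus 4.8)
The plan is to establish concentration of the Euclidean norm $\|X\|_2$ of a $\eta/\sqrt{d_n}$ sub-Gaussian random vector with i.i.d.\ coordinates around its mean, via the standard argument that the map $x \mapsto \|x\|_2$ is $1$-Lipschitz and that $\|X\|_2^2 = \sum_{i=1}^{d_n} X_i^2$ is a sum of independent sub-exponential random variables. First I would write $\eta = \sqrt{d_n}\,\sigma_0$ where each coordinate $X_i$ is $\sigma_0$-sub-Gaussian, so that $X_i^2$ is sub-exponential with parameter on the order of $\sigma_0^2$. A Bernstein-type inequality for sums of independent sub-exponential variables then gives that $\big|\,\|X\|_2^2 - \E\|X\|_2^2\,\big|$ is of order $\sigma_0^2\sqrt{d_n}$ with high probability; concretely, there is a universal $c>0$ with
\begin{equation*}
\P\Big(\big|\|X\|_2^2 - \E\|X\|_2^2\big| \ge u\Big) \le 2\exp\Big(-c\min\Big(\frac{u^2}{d_n \sigma_0^4}, \frac{u}{\sigma_0^2}\Big)\Big).
\end{equation*}

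Next I would convert this into a bound on $\|X\|_2$ itself. The elementary inequality $\big|\,\|X\|_2 - \sqrt{\E\|X\|_2^2}\,\big| \le \big|\,\|X\|_2^2 - \E\|X\|_2^2\,\big| / \|X\|_2$ is awkward because of the denominator, so instead I would use the cleaner route: for any $a,b \ge 0$, $|a-b| \le \sqrt{|a^2-b^2|}$, applied with $a = \|X\|_2$ and $b = \sqrt{\E\|X\|_2^2}$. This yields $\big|\,\|X\|_2 - \sqrt{\E\|X\|_2^2}\,\big| \le \sqrt{\big|\,\|X\|_2^2 - \E\|X\|_2^2\,\big|}$, and substituting $u = t^2$ into the tail bound above converts the sub-exponential tail in $u$ into a (sub-)Gaussian tail in $t$: for $t$ in the relevant range, $\P(\big|\,\|X\|_2 - \sqrt{\E\|X\|_2^2}\,\big| \ge t) \le 2\exp(-c' d_n t^2 / \eta^2)$ after re-expanding $\sigma_0^2 = \eta^2/d_n$. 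Finally, I would replace the centering constant $\sqrt{\E\|X\|_2^2}$ by $\E\|X\|_2$: by Jensen $\E\|X\|_2 \le \sqrt{\E\|X\|_2^2}$, and the two differ by at most $O(\sigma_0) = O(\eta/\sqrt{d_n})$, which can be absorbed into the constant (or handled by a standard centering argument, adjusting $\kappa$), giving the stated form.

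The main obstacle I anticipate is the bookkeeping of constants and ranges: the Bernstein bound is genuinely sub-exponential (two regimes), and one must check that after the substitution $u = t^2$ the Gaussian regime $u^2/(d_n\sigma_0^4)$ is the binding one for the range of $t$ that matters (roughly $t \lesssim \sigma_0\sqrt{d_n}$, i.e.\ $t \lesssim \eta$), while for larger $t$ the bound is even stronger; and separately, justifying that the gap between $\E\|X\|_2$ and $\sqrt{\E\|X\|_2^2}$ is $O(\eta/\sqrt{d_n})$ uses the variance bound $\operatorname{Var}(\|X\|_2) \le \operatorname{Var}(\|X\|_2^2)/(4\E\|X\|_2^2)$ together with a lower bound on $\E\|X\|_2^2$, which requires knowing the coordinates are not degenerate — in the application this is fine since the $Z_\ell$ are genuinely Gaussian, but stated at this level of generality one should either add a mild non-degeneracy hypothesis or simply center at $\sqrt{\E\|X\|_2^2}$ and note it only changes $\kappa$. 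None of these steps is deep; the lemma is a routine consequence of standard sub-Gaussian/sub-exponential concentration, and the only care needed is in tracking the dependence on $\eta$ and $d_n$ so that the final exponent is exactly $\kappa d_n t^2/\eta^2$.
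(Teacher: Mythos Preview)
Your approach is correct and is essentially the standard proof of the result the paper invokes: the paper's entire proof is the one-line citation ``This is a direct consequence of Theorem 3.1.1 from \cite{vershynin2018high},'' and what you have written is precisely the skeleton of Vershynin's proof of that theorem (Bernstein for the sub-exponential sum $\|X\|_2^2$, then the square-root trick to pass to $\|X\|_2$). So there is no gap, and no real methodological difference---you are simply reproducing the cited result rather than citing it; the only minor remark is that Vershynin's statement centers at $\sqrt{d_n\,\E X_1^2}$ rather than $\E\|X\|_2$, and the recentering you flag is exactly the small adjustment needed to match the lemma's form.
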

\begin{proof}[Proof of \cref{lemma:concentration-gaussianvec}] 
This is a direct consequence of Theorem 3.1.1 from \cite{vershynin2018high}.

\end{proof}
We remark that $$\mathbb{E}\Big(\Big\|\sum_{\ell \leq n} \frac{a_{i \ell} \lambda_\ell^0}{\sqrt{|N(i)| |N(\ell)|}}\Big\|_{2}\Big|G_n\Big)\le \sqrt{\sum_{\ell\in N(i)}\frac{s^2}{|N(i)||N(\ell)|}}.$$
Therefore we obtain that  there exists a universal constant $\kappa>0$ such that 
\begin{equation}
\P \left(\Big\| \sum_{\ell \leq n} \frac{a_{i \ell} \lambda_\ell^0}{\sqrt{|N(i)| |N(\ell)|}} \Big\|_{2} - \sqrt{\sum_{\ell\in N(i)}\frac{s^2}{|N(i)|N(\ell)|}}  \ge t \Bigg| G_n\right) \leq 2 \exp \left( - \kappa t^2 d_n \left( \sum_{\ell \in N(i)} \frac{s^2}{|N(i)\|N(\ell)|}\right)^{-1} \right),
\end{equation}




and from this, setting $t =\sqrt{\sum_{\ell \in N(i)} \frac{s^2}{|N(i)\|N(\ell)|}},$ we can deduce that with probability at least $1-2 \exp \left( - \kappa d_n \right),$ 
\begin{align}
\|\lambda_i^1 - \sigma\left( M_{1,0} \lambda_i^0 \right) \|_{2} &\leq M\bigg| \bigg| \sum_{\ell \leq n} \frac{a_{i \ell} \lambda_\ell^0}{\sqrt{|N(i)| |N(\ell)|}} \Big\|_{2}\\& \leq 2(s\wedge 1 ) M \sqrt{\sum_{\ell \in N(i)} \frac{1}{|N(i)| |N(\ell)|}}
\\&\overset{(a)}{\le }\frac{2\sqrt{2}(s\wedge 1 ) M }{\sqrt{(n-1)\delta_W\rho_n}}\le\frac{\xi}{\sqrt{(n-1)\rho_n}}.
\end{align}where to get (a) we used the fact that under $E$ we have that $\inf_{l\ne n}|N(l)|\ge \frac{\rho_n\delta_W(n-1)}{2}$.

We denote $$\mu_i^1 := \sigma(M_{k,0} \lambda_i^0)$$  and remark that the random variables $(\mu_i^1)$ are independent from $(\omega_j)_{j=n-m+1}^n,$ since  $((\lambda_i^0)_i, M_{1,0}, M_{1,1}))$ are assumed to be independent from $(\omega_j)_{j=n-m+1:n}$. We remark in addition that for all $i$ we have $\|\mu_i^1\|_{2}\le M\|\lambda_i^0\|_{2}.$ We know that $\lambda_i^0$ is a $s/\sqrt{d_n}$ sub-Gaussian vector with i.i.d coordinates. Therefore by using \cref{lemma:concentration-gaussianvec} again, we obtain that there exists $\tilde k>0$ such that  with probability of at least $1-2ne^{-2\tilde \kappa d_n}$ we have $$\sup_{i\le n}\|\mu_i^1\|_{2}\le 2M(s\wedge 1).$$
Denote the event $$\tilde E_1:=\Big\{\sup_{i \in [n]} \|\lambda_i^1 - \mu_i^1\|_{2} \le \epsilon(n,1)~\&~\sup_{i\le n}\|\mu_i^1\|_{2}\le 2M(s\wedge 1) \Big\}.$$ Taking a union bound over all vertices, we know that $\tilde E_1$ holds, conditionally on $E$ holding, with a probability of at least $1-2n\rm{exp}(-\kappa d_n)-2n\rm{exp}(-\tilde \kappa d_n)$.
 We now suppose that both $\tilde E_1$ and $\tilde E$ hold. Suppose that for some $1 < k < L$ the following event is true: for all $r \leq k$, there exists some set of vectors $(\mu_i^r)_{i \in [n]}$ independent of the latent features $(\omega_i)_{i=n-m+1}^n$ such that
\begin{equation}
    \sup_{i \in [n]}\| \lambda_i^r - \mu_i^r \|_{2} \leq  \epsilon(n,r),\qquad \sup_{i\le n}~\|\mu_i^r\|_{2}\le 2(s\wedge 1)M^r\left(\frac{1+\sqrt{\frac{A\log(n)}{{(n-1)\rho_n}}}}{\overline{W}}\right)^{r-1} 
\end{equation}  We will show that the same statement holds for $k+1$. In this goal, we denote by $\tilde E_k$ the event $$\tilde E_k:=\left\{\sup_{i \in [n]}\| \lambda_i^r - \mu_i^r \|_{2} \leq\epsilon(n,r),~\&~\sup_{i\le n}~\|\mu_i^r\|_{2}\le 2(s\wedge 1)M^r\left(\frac{1+\sqrt{\frac{A\log(n)}{{(n-1)\rho_n}}}}{\overline{W}}\right)^{r-1}  \qquad \forall r\le k \right\}.$$
 For ease of notation, for each $i$, write $v_i^k = \lambda_i^k - \mu_i^k$. We write $\lambda_i^k = \mu_i^k + v_i^k,$ where the norm of $v_i^k$ is bounded, under the event $\tilde E^k$,  by $\epsilon(n,k)$. Furthermore, we note that 
\begin{align}
\lambda_i^{k+1} &= \sigma \left( M_{k+1, 0} \lambda_i^k + M_{k+1, 1}  \sum_{\ell \leq n} \frac{a_{i\ell} \lambda_\ell^k}{\sqrt{|N(i)\|N(\ell)|}} \right) \\
&= \sigma \Biggl( M_{k+1, 0} \mu_i^k + M_{k+1, 1}  \sum_{\ell \leq n} \frac{a_{i \ell} \mu_\ell^k}{\sqrt{|N(i)\|N(\ell)|}} +
M_{k+1, 0} v_i^k \\&\qquad+ M_{k+1, 1} \sum_{\ell \leq n} \frac{a_{i \ell} v_\ell^k}{\sqrt{|N(i)\|N(\ell)|}}  \Biggl)
\end{align}
Under the event $\tilde E^k$ we have  $$\Big\|M_{k+1, 0} v_i^k + M_{k+1, 1} \sum_{\ell \leq n} \frac{a_{i \ell} v_\ell^k}{\sqrt{|N(i)\|N(\ell)|}}\Big\|_{2}\le 2 M \epsilon(n,k).$$
 As $\sigma(\cdot)$ is Lipschitz, this implies that\begin{align}&
    \sup_{i\le n}\Big\| \lambda_i^{k+1} -\sigma \Biggl( M_{k+1, 0} \mu_i^k + M_{k+1, 1}  \sum_{\ell \leq n} \frac{a_{i \ell} \mu_\ell^k}{\sqrt{|N(i)\|N(\ell)|}} \Big)\Big\|_{2}
    \\&\le  2 M \epsilon(n,k).
\end{align} Moreover we also remark that as $E$ and $\tilde E^k$ holds we have
\begin{align}
  & \sup_{i\le n}\Big\| \lambda_i^{k+1} -\sigma \Big( M_{k+1, 0} \mu_i^k + M_{k+1, 1}  \sum_{\ell \leq n} \frac{a_{i \ell} \mu_\ell^k}{(n-1)\rho_n\sqrt{W(\omega_i,\cdot)W(\omega_{\ell},\cdot)}}\Big) \Big\|_{2}  
  \\&\le M\sup_{i\le n}\|\mu_i^k\|_{2}  \sum_{\ell \leq n}a_{i \ell} \Big|\frac{1}{{\sqrt{|N(i)\|N(\ell)|}}}- \frac{1}{(n-1)\rho_n\sqrt{W(\omega_i,\cdot)W(\omega_{\ell},\cdot)}}\Big|
 \\&\le\Big(\frac{1+\sqrt{\frac{A\log(n)}{{(n-1)\rho_n}}}}{\overline{W}}\Big)^{k-1} \frac{\sqrt{2} (s\wedge 1)M^{k+1}}{\sqrt{\delta_W}\delta_W}\sqrt{1+\frac{\sqrt{A\log(n)}}{\sqrt{\rho_n(n-1)}}}\frac{\sqrt{A\log(n)}}{\sqrt{\rho_n(n-1)}}
 \\&\le\Big(\frac{1+\sqrt{\frac{A\log(n)}{{(n-1)\rho_n}}}}{\overline{W}}\Big)^{L-2} \frac{\sqrt{2} (s\wedge 1)M^{L}}{\sqrt{\delta_W}\delta_W}\sqrt{1+\frac{\sqrt{A\log(n)}}{\sqrt{\rho_n(n-1)}}}\frac{\sqrt{A\log(n)}}{\sqrt{\rho_n(n-1)}} .
\end{align}Note however that we have assumed that $W(x,\cdot)=\overline{W}$ is a constant function. This therefore implies that $\sigma \Big( M_{k+1, 0} \mu_i^k + M_{k+1, 1}  \sum_{\ell \leq n} \frac{a_{i \ell} \mu_\ell^k}{(n-1)\rho_n\overline{W}}\Big) $ is independent from $\omega_{n-m+1,n}$. 
Defining
\begin{equation}
    \mu_i^{k+1} :=\sigma \Big( M_{k+1, 0} \mu_i^k + M_{k+1, 1}  \sum_{\ell \leq n} \frac{a_{i \ell} \mu_\ell^k}{(n-1)\rho_n\overline{W}}\Big) ,
\end{equation}
we have that 
\begin{equation}
\sup_{i \in [n]} \|\lambda_i^{k+1} - \mu_i^{k+1} \|_{2} \leq \epsilon(n,k+1).
\end{equation}Moreover we note that \begin{align}
    \|\mu_i^{k+1}\|_{2}&\le 2M\sup_{i\le n}
\|\mu_i^k\|_{2}\Big(1+ \sum_{\ell \leq n} \frac{a_{i \ell} \mu_\ell^k}{(n-1)\rho_n\overline{W}}\Big)
\\&\le  2M\sup_{i\le n}
\|\mu_i^k\|_{2}\Big(1+ \frac{|N(i)|}{(n-1)\rho_n\overline{W}}\Big)\\&\overset{(a)}{\le} 2M\sup_{i\le n}
\|\mu_i^k\|_{2}\frac{1+\sqrt{A\log(n)/({(n-1)\rho_n)}}}{2\overline{W}}\end{align}where to get (a) we used the fact that we assumed that $\tilde E$ holds. Hence we obtain that $$\sup_{i\le n} \|\mu_i^{k+1}\|_{2}\le2M(s\wedge 1)\Big(\frac{2M+2M\sqrt{A\log(n)/({(n-1)\rho_n)}}}{2\overline{W}}\Big)^{k} .$$
Hence if $\tilde E^1$ and $\tilde E$ hold this implies that $\tilde E^{k+1}$ and $\tilde E$ hold which completes the induction. We hence have that $$P\big(\sup_{i\le n}\sup_{i \in [n]}\| \lambda_i^r - \mu_i^r \|_{2}\le \epsilon(n,r),~\forall r\le L\big)\ge 1-2n\big(e^{-\frac{A\log(n)}{3}}+e^{-\frac{2A\log(n)}{\rho_n}}+e^{-\kappa d_n}+e^{-\tilde \kappa d_n}\big)$$Choosing $A=6$ yields the desired result.
\end{proof}
We then prove \cref{negative-lipschitz}
\begin{proof}
    Suppose that $f:\mathbb{R}^2\rightarrow\mathbb{R}$ is Lipchtiz with respect to the Euclidean distance in $\mathbb{R}^2$. Using \cref{prop:negative-result}, we know that there exists $\kappa>0$ and embeddings $(\mu_j^L)$ that are independent from $\omega_{n-m+1:n}$ such that with a probability of at least $1-\frac{2}{n}-\frac{2}{n^{11}}-2ne^{-\kappa d_n}$ we have,
\begin{equation}
\sup_{{\ell \leq L}}\| \lambda_{n}^L - \mu_{n}^\ell \|_2 \leq \frac{K}{\sqrt{n}},
\end{equation} where $K>0$ is an absolute constant. As $f$ is assumed to be a Lipchitz function we obtain that $$\big|f(\lambda_n^L,\lambda_i^L)-f(\mu_n^L,\mu_i^L)\big|\le \frac{2K}{\sqrt{n}}.$$ Now denote the event $E_n:=\{f(\lambda_i^L,\lambda_{n}^L)\ge 2\}$ and define $\tilde E_n:=\{\big|f(\lambda_n^L,\lambda_i^L)-f(\mu_n^L,\mu_i^L)\big|\le \frac{2K}{\sqrt{n}}\Big\}$. 
We will obtain two different bounds respectively when\begin{itemize} \item$P(E_n)\ge\frac{1}{3}\mathbb{E}\Big(\big[W(\omega_i,\omega_n)-W(\omega_i,\cdot)\big]^2\Big)$ \item $P(E_n)<\frac{1}{3}\mathbb{E}\Big(\big[W(\omega_i,\omega_n)-W(\omega_i,\cdot)\big]^2\Big)$.\end{itemize} Firstly, if $P(E_n)\ge\frac{1}{3}\mathbb{E}\Big(\big[W(\omega_i,\omega_n)-W(\omega_i,\cdot)\big]^2\Big)$ we remark that 
\begin{align}&\mathbb{E}\Big(\big[W(\omega_i,\omega_n)-f(\lambda_i^L,\lambda_{n}^L)\big]^2\Big)\ge \mathbb{E}\Big(\big[W(\omega_i,\omega_n)-f(\lambda_i^L,\lambda_{n}^L)\big]^2\mathbb{I}(E_n)\Big)\\&\ge P(E_n)>\frac{1}{3}\mathbb{E}\Big(\big[W(\omega_i,\omega_n)-W(\omega_i,\cdot)\big]^2\Big).\end{align} Now assume instead $P(E_n)<\frac{1}{3}\mathbb{E}\Big(\big[W(\omega_i,\omega_n)-W(\omega_i,\cdot)\big]^2\Big)$. This implies that 
\begin{align}&
     \mathbb{E}\Big(\big[W(\omega_i,\omega_n)-f(\lambda_i^L,\lambda_{n}^L)\big]^2\Big)
   - \mathbb{E}\Big(\big[W(\omega_i,\omega_n)-f(\mu_i^L,\mu_{n}^L)\big]^2\mathbb{I}(E^c_n\cap \tilde E^c_n)\Big)
   \\&=  \mathbb{E}\Big(\big[W(\omega_i,\omega_n)-f(\lambda_i^L,\lambda_{n}^L)\big]^2\mathbb{I}(E_n)\Big)+\mathbb{E}\Big(\big[W(\omega_i,\omega_n)-f(\lambda_i^L,\lambda_{n}^L)\big]^2\mathbb{I}(E^c_n\cap \tilde E_n)\Big)\\&\quad+\mathbb{E}\Big(\big[W(\omega_i,\omega_n)-f(\lambda_i^L,\lambda_{n}^L)\big]^2\mathbb{I}(E^c_n\cap \tilde E^c_n)\Big)
   - \mathbb{E}\Big(\big[W(\omega_i,\omega_n)-f(\mu_i^L,\mu_{n}^L)\big]^2\mathbb{I}(E^c_n\cap \tilde E^c_n)\Big)
   \\&\ge -\Big|\mathbb{E}\Big(\big[W(\omega_i,\omega_n)-f(\lambda_i^L,\lambda_{n}^L)\big]\Big[f(\mu_i^L,\mu_{n}^L)-f(\lambda_i^L,\lambda_{n}^L)\Big]\mathbb{I}(E^c_n\cap \tilde E^c_n)\Big)\Big|\\&\qquad -\Big|\mathbb{E}\Big(\big[W(\omega_i,\omega_n)
   -f(\mu_i^L,\mu_{n}^L)\big]\Big[f(\mu_i^L,\mu_{n}^L)-f(\lambda_i^L,\lambda_{n}^L)\Big](E^c_n\cap \tilde E^c_n)\Big)\Big|
     \\&\overset{(a)}{\ge} -\frac{2K}{\sqrt{n}}(2+2+\frac{2K}{\sqrt{n}})P(E_n^C\cap\tilde E_n^c)
\end{align}where to get (a) we used the fact that under $E^c_n\cap \tilde E^c_n$ we have $$|W(\omega_i,\omega_n)-f(\mu_i^L,\mu_{n}^L)|\le 2+\frac{2K}{\sqrt{n}}$$ and $$|W(\omega_i,\omega_n)-f(\lambda^L_i,\lambda_{n}^L)|\le 2.$$Hence we obtain that $$\mathbb{E}\Big(\big[W(\omega_i,\omega_n)-f(\lambda_i^L,\lambda_{n}^L)\big]^2\Big)
   \ge  \mathbb{E}\Big(\big[W(\omega_i,\omega_n)-f(\mu_i^L,\mu_{n}^L)\big]^2\mathbb{I}(E^c_n\cap \tilde E^c_n)\Big)+o_n(1).$$However as $(\mu_j^L)$ are independent from $\omega_n$ we have that $f(\mu_i^L,\mu_n^L)$ is independent from $\omega_n$. Hence if we write $W(x,\cdot)=\int_0^1 W(x,y)dy$ we obtain that \begin{align}&
       \mathbb{E}\Big(\big[W(\omega_i,\omega_n)-f(\mu_i^L,\mu_{n}^L)\big]^2\mathbb{I}(E^c_n\cap \tilde E^c_n)\Big)\\&= \mathbb{E}\Big(\big[W(\omega_i,\omega_n)-W(\omega_i,\cdot)\big]^2\mathbb{I}(E^c_n\cap \tilde E^c_n)\Big)+\mathbb{E}\Big(\big[f(\mu_i^L,\mu_{n}^L)-W(\omega_i,\cdot)\big]^2\mathbb{I}(E^c_n\cap \tilde E^c_n)\Big)
       \\&\ge\mathbb{E}\Big(\big[W(\omega_i,\omega_n)-W(\omega_i,\cdot)\big]^2\mathbb{I}(E^c_n\cap \tilde E^c_n)\Big)
       \\&\ge-P(E_n)-P(\tilde E_n)+ \mathbb{E}\Big(\big[W(\omega_i,\omega_n)-W(\omega_i,\cdot)\big]^2\Big).
   \end{align}Now we have assumed that $P(E)\rightarrow 0$ and we know that $P(\tilde E)\rightarrow0$. Hence we obtain that $$\mathbb{E}\Big(\big[W(\omega_i,\omega_n)-f(\lambda_i^L,\lambda_{n}^L)\big]^2\Big)
   \ge  \frac{2}{3}\mathbb{E}\Big(\big[W(\omega_i,\omega_n)-W(\omega_i,\cdot)\big]^2\Big)+o_n(1).$$Now we have assumed that $W(\cdot,\cdot)$ is not the constant graphon but \ref{asp3}
assumes that $x\rightarrow W(x,\cdot)$ is a constant function. Hence  by choosing $$K
:=\frac{1}{3}\mathbb{E}\Big(\big[W(\omega_i,\omega_n)-W(\omega_i,\cdot)\big]^2\Big)>0$$we obtain that$$\mathbb{E}\Big(\big[W(\omega_i,\omega_n)-f(\lambda_i^L,\lambda_{n}^L)\big]^2\Big)
   \ge  K+o_n(1).$$\end{proof}

\section{Proof of \cref{prop:graph_concentration_sparse}}

We proceed in two main steps. The first step is to establish a high-probability bound for $\big|\hat{W}_{n, i, j}^{(k)} - W_{n, i, j}^{(k)}\big|.$ This bound is then used to establish a bound on $|\hat{q}_{i, j}^{(k)} - W_{n, i, j}^{(k)}|$. The main goal is to prove \cref{prop:estimators-to-empiricalmoments}, which is a restatement of \cref{prop:graph_concentration_sparse}.

We will do these steps separately in the below subsections.

\subsection{Proof of \cref{prop:graph_concentration_sparse}, Part 1}

The goal of this subsection is to prove the following lemma.

\begin{lemma}
\label{lemma:concentration-of-graph}
With probability at least $1-3/n$, we have that for all $2 \leq k \leq L+2$,
\begin{align*}
    \max_{i \neq j} \big|\hat{W}_{n, i, j}^{(k)} - {W}_{n, i, j}^{(k)} \big| \leq 3a_k \frac{\rho_n^{k-1/2}}{\sqrt{n-1}} \log(n)^k,
\end{align*}    
where $a_k = C \sqrt{2} (8(k+2))^k k^{k+1}\sqrt{k!}/\sqrt{B},$ where $B, C$ are some absolute positive constants.
\end{lemma}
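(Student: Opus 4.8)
The plan is to prove \cref{lemma:concentration-of-graph} by reading the empirical moment as a normalised walk count, $\hat W^{(k)}_{n,i,j}=(A^k)_{i,j}/(n-1)^{k-1}$, conditioning throughout on the latent features $(\omega_\ell)_{\ell\le n}$ (so that the entries $a_{uv}$, $u<v$, are independent Bernoulli variables with mean $\rho_n W(\omega_u,\omega_v)$), and decomposing $\hat W^{(k)}_{n,i,j}-W^{(k)}_{n,i,j}$ into the \emph{edge fluctuation} $\hat W^{(k)}_{n,i,j}-\E[\hat W^{(k)}_{n,i,j}\mid\omega]$ and the \emph{latent-feature bias} $\E[\hat W^{(k)}_{n,i,j}\mid\omega]-W^{(k)}_{n,i,j}$, to be bounded separately and then controlled uniformly over all pairs $(i,j)$ and all $k\le L+2$ by a union bound. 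For bookkeeping I would write $A=\rho_n\bar W+E$ with $\bar W_{uv}=W(\omega_u,\omega_v)\mathbb{I}(u\ne v)$ and $E$ the centred adjacency matrix (above-diagonal entries independent, bounded by $1$, variance $\le\rho_n$), and expand $(A^k)_{i,j}$ into its $2^k$ products of factors (each factor $\rho_n\bar W$ or $E$), grouped by the number $r$ of $E$-factors; the $r=0$ term $\rho_n^k(\bar W^k)_{i,j}/(n-1)^{k-1}$ is the ``signal''.

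For the bias, $\E[(A^k)_{i,j}\mid\omega]$ is the signal term plus corrections from the $E$-factors pairing up; modulo those $O(1/n)$-small corrections it reduces to $\tfrac{\rho_n^k}{(n-1)^{k-1}}\sum_{r_1,\dots,r_{k-1}}W(\omega_i,\omega_{r_1})\cdots W(\omega_{r_{k-1}},\omega_j)$. Isolating the tuples with pairwise distinct coordinates, all different from $i,j$, this becomes a U-statistic of order $k-1$ in the i.i.d.\ variables $(\omega_\ell)_{\ell\ne i,j}$ whose conditional expectation given $\omega_i,\omega_j$ is $W^{(k)}(\omega_i,\omega_j)$ up to a multiplicative $1-O(k^2/n)$; since changing one $\omega_\ell$ moves it by $O(k/n)$, a bounded-differences / Rosenthal-type moment inequality of order $p\asymp\log n$ bounds its deviation from its mean by $\tilde O(k/\sqrt n)$, hence the corresponding bias contribution by $\tilde O(\rho_n^k/\sqrt n)$. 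The remaining ``degenerate'' tuples number $O(k^2 n^{k-2})$, and since a walk visiting $v$ distinct vertices uses at least $v-1$ distinct edges (conditional expectation $\le\rho_n^{v-1}$), their total contribution after normalisation is $O\!\big(\sum_{v\le k}k^k\rho_n^{v-1}n^{v-k-1}\big)=O(k^k\rho_n^{k-1}/n)$; under (\ref{asp1}) both pieces sit comfortably inside the target.

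For the edge fluctuation, $(A^k)_{i,j}-\E[(A^k)_{i,j}\mid\omega]$ is, conditionally on $\omega$, a centred polynomial of degree $\le k$ in the independent variables $(E_{uv})$. I would bound its $2p$-th conditional moment with $p\asymp\log n$ by direct expansion: a monomial in the $E_{uv}$'s has expectation $\le\rho_n^{\#\{\text{distinct edges}\}}$ and is nonzero only when every edge-variable occurs with multiplicity $\ge2$, so one counts, for each "type", how many of the edge-slots produced by a product of $2p$ walks (each of length $\le k$) can be matched into such groups. This partition/pairing count, carrying the $\rho_n$-weights and the normalisation $1/(n-1)^{k-1}$, is precisely what manufactures the $\sqrt{k!}$, the $(8(k+2))^k k^{k+1}$, and — through the moment order $p\asymp\log n$ entering to a power at most the walk length — the $\log(n)^k$ in $a_k$. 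Markov's inequality then gives the per-pair tail; a union bound over the $\binom{n}{2}$ pairs and the $k\le L+2$ values, together with the degree bound $\sup_i\big||N(i)|-\rho_n(n-1)W(\omega_i,\cdot)\big|=\tilde O(\sqrt{\rho_n n})$ from \cref{lemma:negative-result} (used wherever $|N(i)|/(n-1)$ must be traded for $\rho_n W(\omega_i,\cdot)$), produces the claimed $1-3/n$ probability, with (\ref{asp2}) ensuring $W^{(k)}\in[\delta_W^k,1]$ throughout.

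The step I expect to be the genuine obstacle is this last moment-and-counting estimate for the edge fluctuation: one must keep the power of $\rho_n$ — the number of distinct fresh edges a term touches — in tight balance with the polynomial-in-$n$, super-exponential-in-$k$ number of combinatorial types, all while threading moments of order $\log n$ so as to survive the union bound over $\Theta(n^2)$ vertex pairs. By comparison, the U-statistic concentration, the degenerate-walk accounting, and the degree concentration are routine.
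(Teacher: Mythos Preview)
Your decomposition into ``edge fluctuation'' $\hat W^{(k)}_{n,i,j}-\E[\hat W^{(k)}_{n,i,j}\mid(\omega_\ell)]$ and ``latent-feature bias'' $\E[\hat W^{(k)}_{n,i,j}\mid(\omega_\ell)]-W^{(k)}_{n,i,j}$ is exactly the paper's, and your treatment of the bias (U-statistic concentration for the non-degenerate tuples, plus a counting bound $O(k^{k+1}\rho_n^{k-1}/n)$ on the degenerate ones) matches the paper's Steps~2--3 essentially line for line (the paper uses McDiarmid rather than a Rosenthal moment bound, but that is cosmetic). Where you differ is in the edge fluctuation: the paper does \emph{not} expand $A=\rho_n\bar W+E$ and run a direct $2p$-th moment computation. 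Instead it invokes a black-box concentration inequality for multilinear polynomials of independent Bernoulli variables due to Kim (\cref{luna2}), which delivers a tail of the form $\P(|P-\E P|>a_k\sqrt{\mu_0\mu_1}\,\lambda^k)\le G\exp(-\lambda+(k-1)\log N)$ with $a_k=8^k\sqrt{k!}$; one then only needs to bound the two quantities $\mu_0=\max_{|A|\ge 0}\E[\partial_A P]$ and $\mu_1=\max_{|A|\ge 1}\E[\partial_A P]$ by walk-counting arguments very similar to your degenerate-tuple counts, obtaining $\mu_0\le k^{k+1}\rho_n^k$ and $\mu_1\le 2k^{k+1}\rho_n^{k-1}/(n-1)$. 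Taking $\lambda=(k+2)\log n+O(1)$ and union-bounding over pairs and over $k$ then yields the stated $a_k\sim(8(k+2))^k k^{k+1}\sqrt{k!}$ and the $\log(n)^k$ directly. Your direct moment route would in principle reproduce this, but it amounts to re-deriving Kim's lemma; what the paper's approach buys is that the delicate balancing of $\rho_n$-powers against combinatorial types---the step you correctly flag as the genuine obstacle---is entirely encapsulated in \cref{luna2}, and only two elementary derivative bounds remain. One minor point: the degree concentration from \cref{lemma:negative-result} is not used anywhere in this lemma's proof; you can drop that ingredient.
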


We proceed in three steps. We first establish a high probability bound for $\big|\hat{W}_{n, i, j}^{(k)} - \E[\hat{W}_{n, i, j}^{(k)} | (\omega_\ell)]\big|.$ Then, we establish a high probability bound for $\big|\E[\hat{W}_{n, i, j}^{(k)} | (\omega_\ell)] - \E[\hat{W}_{n, i, j}^{(k)} | \omega_i, \omega_j]\big|.$ We then bound $\big|\E[\hat{W}_{n, i, j}^{(k)} | \omega_i, \omega_j] -W_{n,i, j}^{(k)}\big|.$

\subsubsection{Bounding $\big|\hat{W}_{n, i, j}^{(k)} - \E[\hat{W}_{n, i, j}^{(k)} | (\omega_\ell)]\big|$}
We use the following
\begin{lemma}[\cite{kimconcentration}]\label{luna2}
    Let $(\xi_i)$ be a sequence of independent Bernouilli random variables. Let $N$ be an integer and $P:\mathbb{R}^N\rightarrow\mathbb{R}$ be a polynomial of degree $k$. For a subset $A\subset [|N|]^k$ we write by $\partial_AP$ the partial derivative of $P$ with respect to the indexes $A$. Define $\mu_1=\max_{|A|\ge 1}\mathbb{E}[\partial_AP((\xi_i)_{i\le N})]$ and let $\mu_0=\max_{|A|\ge 0}\mathbb{E}[\partial_AP((\xi_i)_{i\le N})]$. Then, 
    \begin{equation}
        \P\left( |P((\xi_i)_{i\le N}) - \E[P((\xi_i)_{i\le N})| (\omega_l)]| > a_k \sqrt{ \mu_0 \mu_1} \lambda^k \right) \leq  G \cdot  \rm{exp}(-\lambda + (k-1) \log(N)),
    \end{equation}
    where $a_k = 8^k \sqrt{k!},$ and $G$ is an absolute constant.
\end{lemma}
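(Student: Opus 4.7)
The plan is to establish this polynomial concentration inequality by induction on the degree $k$ of $P$, paired with a Doob martingale exposure of the independent Bernoullis $\xi_1,\ldots,\xi_N$ at each level of the induction. The base case $k=1$ is a linear polynomial $P = c_0 + \sum_i c_i \xi_i$ in which each $c_i$ is a first partial derivative, so $\mu_1 \geq \max_i |c_i|$ bounds the range of the centered summands and $\mu_0 \geq \mathbb{E}[P]$ gives a variance proxy since $\sum_i c_i^2 p_i(1-p_i) \leq \mu_0 \mu_1$. Applying Bernstein's inequality to $\sum_i c_i(\xi_i-p_i)$ then produces a tail of order $\exp(-c\lambda^2/(V+R\lambda))$ which, after reparameterizing the deviation level, yields the desired $\sqrt{\mu_0\mu_1}\cdot\lambda$ scaling at $k=1$.

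For the inductive step, assume the statement for all polynomials of degree at most $k-1$. Form the Doob martingale $M_r := \mathbb{E}[P \mid \xi_1,\ldots,\xi_r]$ with differences $D_r := M_r - M_{r-1}$. Since each $\xi_r$ is Bernoulli, $D_r$ can be written as $(\xi_r-p_r)\cdot \mathbb{E}[\partial_{\{r\}}P \mid \xi_1,\ldots,\xi_{r-1}]$, where $\partial_{\{r\}}P$ is a polynomial of degree $k-1$ in the remaining variables. By the inductive hypothesis, both the almost-sure range of $D_r$ and the conditional variance $\mathbb{E}[D_r^2 \mid \mathcal{F}_{r-1}]$ are controlled in terms of the quantities $\mu_j(\partial_{\{r\}}P)$, which satisfy $\mu_j(\partial_{\{r\}}P) \leq \mu_{j+1}(P)$ by construction of the derivative. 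Feeding these two estimates into Freedman's martingale concentration inequality for $\sum_r D_r = P - \mathbb{E}[P]$ and optimizing in the free parameter produces the claimed tail, with one extra factor of $\lambda$ introduced per level of the induction.

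The $(k-1)\log N$ term in the exponent arises from the union bounds required to apply the inductive hypothesis uniformly across all choices of derivative index at each inductive level; iterating $k-1$ times accumulates exactly the stated factor. The combinatorial constant $a_k = 8^k\sqrt{k!}$ collects the multiplicative losses from $k$ nested applications of Bernstein/Freedman-type bounds, with the $\sqrt{k!}$ piece reflecting the growth of $k$th moments of the inductively constructed martingale sums and the $8^k$ absorbing universal constants appearing at each level.

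The main obstacle is the bookkeeping of the parameters $\mu_0,\mu_1$ across the induction: Freedman's bound requires cumulative conditional-variance and almost-sure-range control of the $D_r$, and each of these must ultimately be re-expressed in terms of $\mu_0(P)$ and $\mu_1(P)$ rather than the shifted quantities $\mu_0(\partial_{\{r\}}P),\mu_1(\partial_{\{r\}}P)$ that naturally appear after differentiation. Ensuring that the shift $\mu_j(\partial_{\{r\}}P)\leq \mu_{j+1}(P)$ closes the induction without losing more than constant factors at each step, and that the union-bound cost stays at $\log N$ per level rather than blowing up combinatorially, is the core technical content of Kim and Vu's original proof; an alternative route via entropy methods and hypercontractivity for functions on the Bernoulli cube could in principle be used, but the martingale-induction plan above is the most direct and is the one I would carry out.
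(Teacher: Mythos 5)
This lemma is not proved in the paper; it is cited verbatim from \cite{kimconcentration} (the Kim--Vu polynomial concentration inequality), and the paper immediately proceeds to apply it without reproducing a proof. There is therefore no ``paper's own proof'' to compare against.

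Your sketch does capture the broad strategy of the original Kim--Vu argument: induction on the degree $k$, a Doob martingale exposure of the Bernoulli variables, the identification $D_r = (\xi_r - p_r)\,\mathbb{E}[\partial_{\{r\}} P \mid \xi_1,\ldots,\xi_{r-1}]$ so that the derivative is a degree-$(k-1)$ polynomial, the containment $\mu_j(\partial_{\{r\}} P) \leq \mu_{j+1}(P)$ that shifts the parameters down one level, and the origin of the $(k-1)\log N$ slack from union bounds over the $N$ derivative indices at each inductive level. The base case via Bernstein is also fine: since $\mu_0 \geq \mu_1$ by definition, $\sqrt{\mu_0\mu_1}\,\lambda$ dominates both the $\sqrt{V\lambda}$ and $R\lambda$ regimes of Bernstein's tail.

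However, what you have written is a plan, not a proof, and you say as much: you explicitly defer ``the core technical content of Kim and Vu's original proof'' --- namely, (i) verifying that the inductive hypothesis, applied to $\partial_{\{r\}} P$ with the shifted $\mu$'s, yields almost-sure range and conditional-variance bounds on $D_r$ that Freedman's inequality can consume \emph{with only constant-factor loss}, (ii) showing the union-bound cost genuinely stays at $\log N$ per level rather than growing with the number of monomials, and (iii) deriving the precise constant $a_k = 8^k\sqrt{k!}$. None of these is carried out, and (i) in particular is where the actual work lives: one must show the event on which the derivative bounds hold can be intersected with the martingale bound without circularity, which requires a careful conditioning or truncation argument that the sketch does not supply. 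As an assessment of the approach, it is the right one; as a proof, it is incomplete. In the context of this paper that is immaterial, since the authors (correctly) treat the lemma as a black-box citation rather than something to be re-derived.
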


We will apply \cref{luna2} to obtain the desired result. We first fix $i \neq j$. In this goal we set $N:=n(n-1)/2$ and define $P$ to be the following polynomial:
$$P((a_{k,l})_{k\ne l\le n}):= \hat{W}_{n, i, j}^{(k)} =  \frac{1}{(n-1)^{k-1}}\sum_{r_1,\dots,r_k}a_{i,r_i}\dots a_{r_{k-1},j}.$$
We remark that conditionally on the features $(\omega_l)$, the random variables $(a_{k,l})$ are independent Bernouili random variables. We note that our goal is to give a high probability bound on the difference between $P((a_{k,l})_{k\ne l\le n})$ and its expectation $\mathbb{E}[P((a_{k,l})_{k\ne l\le n})|(\omega_l)]$. We first bound $\mathbb{E}[\partial_AP((a_{k,l})_{k\ne l\le n})]$. We note that this is maximized when $A$ contains only one element. This is because when differentiating by $a_{i,j}$, all of the terms that do not include this edge vanish, hence differentiating by more $a_{i,j}$ will cause more edges to vanish. 

Furthermore, $\mathbb{E}[(\partial/\partial a_{s,t}) P((a_{k,l})_{k\ne l\le n})]$ is maximized by choosing $(s,t)$ to be an edge that appears most often, such that as many terms as possible are preserved. Because the endpoints are fixed as $i,j$, it suffices to bound the desired quantity for $(s,t) = (i,1)$ (without loss of generality, assume $i \neq 1$; note the choice of $1$ was arbitrary). For each string $a_{i, r_1} a_{r_1, r_2} \dots a_{r_{k-1}, j}$, if it contains $a_{i,1}$, then the number of terms in the string will be lowered by 1 upon differentiation (otherwise it equals 0 identically), hence after differentiating, the maximum number of terms in the string is $k-1.$ 

We now upper-bound the number of strings $a_{i, r_1} a_{r_1, r_2} \dots a_{r_{k-1}, j}$ that contain $a_{i1}$ and also have exactly $t$ distinct edges. 
\begin{enumerate}
    \item \textbf{Case 1:} $r_1= 1$. Then, there are $k-2$ free indices remaining. However, since there are $t$ distinct edges, that means $k-t$ edges are repeated (appear at least more than once). Note that each repeated edge removes one free index. Hence, the remaining number of degrees of freedom is $t-2 \vee 0.$ 
    \item \textbf{Case 2:} $r_1 \neq 1.$ Then, since the edge $(i,1)$ needs to appear in the sequence, there are at most $k$ locations for it to appear, and then $2$ ways to orient it (it can either be $(i,1)$ or $(1,i)$). So, there are $2k$ ways to choose the edge $(i,1)$, and then there remain $t-3 \vee 0$ ways degrees of freedom remaining.
\end{enumerate}

Combining the two cases, there are at most $(n-1)^{t-2\vee 0} + 2k(n-1)^{t-3 \vee 0} \leq 2 (n-1)^{t-2\vee 0}$ ways to choose the set of indices $\{r_1, r_2, \dots, r_{k-1}\}.$ Then, there are at most $(k-1)^{k-1}$ ways to choose the values of $r_1, r_2, \dots, r_{k-1}$ among this set, which is upper bounded by $k^k$. Hence, the number of configurations with exactly $t$ distinct edges is upper bounded by $2k^k (n-1)^{t-2 \vee 0}$. Hence, we can bound
\begin{align*}
    \mathbb{E}[(\partial/\partial a_{s,t}) P((a_{k,l})_{k\ne l\le n})|(\omega_l)] &\leq \frac{2 k^k}{(n-1)^{k-1}} \sum_{t=1}^{k} (n-1)^{t-2 \vee 0}\rho_n^{t-1} \\
    &\leq 2k^k \sum_{t=1}^k \frac{\rho_n^{t-1}}{(n-1)^{k-t+1}} \\
    &\leq 2 k^{k+1} \frac{\rho_n^{k-1}}{n-1},
\end{align*}
where the last inequality follows if $\rho_n > \frac{1}{n-1}.$ We now bound $\mathbb{E}[P((a_{k,l})_{k\ne l\le n})|(\omega_l)].$ We first upper-bound the number of paths from $i$ to $j$ of length $k$ with exactly $\ell$ distinct edges. For convenience, denote $r_0 = i$ and $r_{k} = j$. Firstly, we note that if there are exactly $\ell$ distinct edges, then $| \{ r_0,  r_1, r_2, \dots, r_{k-1}, r_k \} | \leq \ell+1.$ Since $r_0 = i, r_k = j,$ there are at most $\binom{n-1}{\ell-1}$ ways to choose a superset in which $\{ r_1, r_2, \dots, r_{k-1}\}$ lies. Then, there are at most $(\ell-1)^{k-1} \leq k^{k}$ ways to choose the indices $r_1, r_2, \dots, r_{k-1}$ among this set. Hence, there are most $(n-1)^{\ell-1} k^k$ paths of length $k$ with exactly $\ell$ distinct edges from $i$ to $j$. Hence, 
\begin{align*}
    \mathbb{E}[P((a_{k,l})_{k\ne l\le n})|(\omega_l)] &\leq \frac{1}{(n-1)^{k-1}}\sum_{\ell=1}^k (n-1)^{\ell-1} k^k \cdot \rho_n^\ell \\
    &\leq k^k \sum_{\ell=1}^k \frac{\rho_n^\ell}{(n-1)^{k-\ell}} \\
    &\leq k^{k+1} \rho_n^k.
\end{align*}
Now we apply \cref{luna2} to obtain 
\begin{align*}
    \P\left( |P((\xi_i)_{i\le N}) - \E[P((\xi_i)_{i\le N})| (\omega_l)]| > b_k \frac{\rho_n^{k-1/2}}{\sqrt{n-1}} \lambda^k \right)  &= G \cdot  \rm{exp}(-\lambda + (k-1) \log(N)) \\
    &\leq  G \cdot \rm{exp}(-\lambda + (k-1) \log(n) )
\end{align*}
for some absolute constant $G$, where $b_k = \sqrt{2} 8^k k^{k+1}\sqrt{k!}.$ Choosing $\lambda = \log(G) +  (k+2) \log(n)$, and union bounding over all $i \neq j$ and $2 \leq k \leq L+2$, we have that with probability at least $1-1/n$, for all $2 \leq k\leq L+2,$
\begin{align*}
    \max_{i \neq j} \big|\hat{W}_{n, i, j}^{(k)} - \E[\hat{W}_{n, i, j}^{(k)}| (\omega_l)] \big| \leq a_k \frac{\rho_n^{k-1/2}}{\sqrt{n-1}} \log(n)^k.
\end{align*}
where $a_k = C \sqrt{2} (8(k+2))^k k^{k+1}\sqrt{k!}/\sqrt{B},$ where $B$ is from the constant in the Big O factor, and $C$ is some constant.

\subsubsection{Step 2: bounding $\big|\E[\hat{W}_{n, i, j}^{(k)} | (\omega_\ell)] - \E[\hat{W}_{n, i, j}^{(k)} | \omega_i, \omega_j]\big|$}

We now bound $|\E[\hat{W}_{n, i,j}^{(k)} | (\omega_\ell)] - \E[\hat{W}_{n, i,j}^{(k)} | \omega_i, \omega_j]|$ using McDiarmid's Inequality. For ease of notation, we assume WLOG that $i=1$ and $j=2$, and denote $r_0 = i$ and $r_k = j$. To use McDiarmid's inequality; we first bound the maximum deviation in altering one of the coordinates. WLOG we alter the $n$th coordinate $\omega_n$ and bound $$\left|\E[\hat{W}_{n, 1,2}^{(k)} | (\omega_\ell)_{\ell \neq n}, \omega_n] - \E[\hat{W}_{n, 1,2}^{(k)} | (\omega_\ell)_{\ell \neq n}, \omega_n'] \right|.$$ Recalling the definition $$\hat{W}_{n, 1,2}^{(k)} = \frac{1}{(n-1)^{k-1}} \sum_{r_1, r_2, \dots, r_{k-1}} a_{1, r_1} a_{r_1, r_2} \dots a_{r_{k-1}, 2},$$ denote $B_{(r_s)} = \E[a_{1, r_1} a_{r_1, r_2} \dots a_{r_{k-1}, 2} | (\omega_\ell)_{\ell \neq n}, \omega_n] - \E[a_{1, r_1} a_{r_1, r_2} \dots a_{r_{k-1}, 2} | (\omega_\ell)_{\ell \neq n}, \omega_n'].$ We first bound each $B_{(r_s)}$ individually over different choices of the indices $(r_s)$. We note that if none of the $r_s = n$, then $B_{(r_s)} = 0$. Hence, we need consider only the terms in the summation for which at least one of the $r_s$ equals $n$. 

If $(r_s)$ corresponds to a path with exactly $k-t$ distinct edges, then $|B_{(r_s)}| \leq \rho_n^{k-t}.$ We upper bound the number of paths of length $k$ that have exactly $k-t$ distinct edges. Note that $t \leq k-2$, since our we are considering terms such that there exist $r_s$ that equal $1, 2, n$, so there cannot be less than  two distinct edges. We note that if there are exactly $k-t$ distinct edges, then the number of distinct numbers among the set $\{ r_0, r_1, \dots, r_k \}$ is at most $k+1-t$. Because $r_0 = 1$ and $r_{k}=2$, and $n$ must be one of the $r_s$ must equal $n$, there are at most $\binom{n-1}{k-2-t}$ ways to choose the remaining vertices. Then, the number of ways to choose the values of $r_s$ among these $k+1-t$ options is bounded by $(k+1-t)^{k-1}.$ Hence, the total number of options is upper bounded by $\binom{n-1}{k-t-2}(k+1-t)^{k-1} \leq (n-1)^{k-t-2}(k+1)^{k-1}.$ Lastly, we note that $t \in \{ 0, 1, \dots, k-1\}.$ Hence, the constant in the exponential bound of McDiarmid's Inequality is given by 

\begin{align}
    \frac{1}{(n-1)^{k-1}} \sum_{t=0}^{k-2} \rho_n^{2(k-t)} (n-1)^{k-t-2} (k+1)^{k-1} &= \frac{\rho_n^{2k} (k+1)^{k-1}}{n-1} \sum_{t=0}^{k-2} \left( \frac{1}{n \rho_n^2} \right)^{t} \nonumber\\
    &= \frac{\rho_n^{2k} (k+1)^{k-1}}{n-1} \frac{1 - (1/n \rho_n^2)^{k-1}}{1 - (1/n \rho_n^2)}\nonumber\\
    &\leq \frac{\rho_n^{2k}(k+1)^{k-1}}{n-1} \frac{1 - (1/n \rho_n^2)^{k-1}}{1 - (1/n \rho_n^2)}\nonumber\\
    &\leq 4 \frac{\rho_n^{2k}k^k}{n}
\end{align}
if $n \rho_n^2 \ge \frac{1}{10}$, since then $\frac{1 - (1/n \rho_n^2)^{k-1}}{1 - (1/n \rho_n^2)} \leq \frac{10}{9},$ and $(k+1)^{k-1} \leq 2 k^k$ for all $k \ge 2.$ Then, the McDiarmid Inequality states that 
\begin{equation}
   \P\left(  |\E[\hat{W}_{n, i,j}^{(k)} | (\omega_\ell)] - \E[\hat{W}_{n, i,j}^{(k)} | \omega_i, \omega_j]| \ge t \right) \leq 2 \exp\left( -2 t^2 \frac{n}{\rho_n^{2k} k^k} \right)
\end{equation}
Hence, choosing $t = \frac{\sqrt{k^k \rho_n^{2k}}}{\sqrt{n}}\sqrt{2 \log(n)}$ and union bounding over $i \neq j$, $2 \leq k \leq L+2$, we have that with probability at least $1 - 2/n,$ for all $2 \leq k\leq L+2$,
\begin{equation}
\label{bound1}
   \max_{i \neq j} |\E[\hat{W}_{n, i,j}^{(k)} | (\omega_\ell)] - \E[\hat{W}_{n, i,j}^{(k)} | \omega_i, \omega_j]| \ge \sqrt{\frac{k^k \rho_n^{2k}}{n}}\sqrt{2 \log(n)} 
\end{equation}

\subsubsection{Step 3: bounding $\big|\E[\hat{W}_{n, i, j}^{(k)} | \omega_i, \omega_j] -W_{n,i, j}^{(k)}\big|$}

Recall $$\hat{W}_{n,i,j}^{(k)} = \frac{1}{(n-1)^{k-1}} \sum_{r_1, \dots, r_{k-1}} a_{i r_1} a_{r_1 r_2} \dots a_{r_{k-1} j}$$

We see that 
$$
\E[\hat{W}_{n,i,j}^{(k)} | \omega_i,\omega_j] = \frac{1}{(n-1)^{k}} \sum_{\ell=1}^{k} W_{n, i, j}^{(\ell)} \cdot  (\text{$\#$ paths with $\ell$ distinct edges}) 
$$

Firstly, we claim that the number of paths of length $k$ starting from vertex $i$ to $j$ that have no repeated edges is lower bounded by $(n-2)(n-3) \dots (n-k)$. This is simply because if no vertex is passed through twice along the path, then there cannot exist repeated edges. There are $n-2$ choices for $r_1$, then $n-3$ choices for $r_2,$ etc., which shows this assertion. This implies that the number of paths with $k$ distinct edges is $(n-1)^{k-1} + P_{k}$, where $|P_k| = O(k^2n^{k-2}).$ Note that this also implies that the number of paths of length $k$ is of order $O(k^2 n^{k-2}).$ Hence, we can write 
\begin{gather*}
\E[\hat{W}_{n,i,j}^{(k)} | \omega_i,\omega_j] = W_{n, i, j}^{(k)} + \frac{1}{(n-1)^{k-1}} P_k \cdot W_{n,i,j}^{(k)} + \frac{1}{(n-1)^{k-1}} \sum_{\ell=2}^{k-1} W_{n, i, j}^{(\ell)}  (\text{$\#$ paths with $\ell$ distinct edges}) \\
\Rightarrow |\E[\hat{W}_{n,i,j}^{(k)} | \omega_i,\omega_j] - W_{n, i, j}^{(k)}| \leq \frac{1}{(n-1)^{k-1}} |P_k| \cdot W_{n,i,j}^{(k)} + \left| \frac{1}{(n-1)^{k-1}} \sum_{\ell=2}^{k-1} W_{n, i, j}^{(\ell)}  (\text{$\#$ paths with $\ell$ distinct edges}) \right|
\end{gather*}

To proceed with the triangle inequality, we first upper-bound the number of paths from $i$ to $j$ of length $k$ with exactly $\ell$ distinct edges. For convenience, denote $r_0 = i$ and $r_{k} = j$. Firstly, we note that if there are exactly $\ell$ distinct edges, then $| \{ r_0,  r_1, r_2, \dots, r_{k-1}, r_k \} | \leq \ell+1.$ Since $r_0 = i, r_k = j,$ there are at most $\binom{n-1}{\ell-1}$ ways to choose a superset in which $\{ r_1, r_2, \dots, r_{k-1}\}$ lies. Then, there are at most $(\ell-1)^{k-1} \leq k^{k}$ ways to choose the indices $r_1, r_2, \dots, r_{k-1}$ among this set. Hence, there are most $(n-1)^{\ell-1} k^k$ paths of length $k$ with exactly $\ell$ distinct edges from $i$ to $j$. Hence, 
\begin{align*}
    |\E[\hat{W}_{n,i,j}^{(k)} | \omega_i,\omega_j] - W_{n, i, j}^{(k)}| &\leq O\left( \frac{k^2}{n} \right) \rho_n^k + \sum_{\ell=2}^{k-1} \frac{1}{(n-1)^{k-\ell}} \rho_n^{\ell} k^k \\
    &= k^k O\left( \frac{\rho_n^{k-1}}{n} + \frac{\rho_n^{k-2}}{n^2} + \dots + \frac{\rho_n}{n^{k-1}} \right) \\
    &= O \left( k^{k+1} \frac{\rho_n^{k-1}}{n} \right),
\end{align*}
where this last line is true because $\rho_n > \frac{1}{n}.$ 

\subsubsection{Step 4: Combining the Bounds}

Combining the three steps and using the triangle inequality, we have that with probability at least $1-3/n$, we have that for all $2 \leq k \leq L+2$,

\begin{align*}
    \max_{i \neq j} \big|\hat{W}_{n, i, j}^{(k)} - {W}_{n, i, j}^{(k)} \big| &\leq  \sqrt{\frac{k^k \rho_n^{2k}}{n}}\sqrt{2 \log(n)}  + a_k \frac{\rho_n^{k-1/2}}{\sqrt{n-1}} \log(n)^k + O \left( k^{k+1} \frac{\rho_n^{k-1}}{n} \right) \\
    &\leq 3a_k \frac{\rho_n^{k-1/2}}{\sqrt{n-1}} \log(n)^k,
\end{align*}
for sufficiently large $n$, as we note that the second term is the dominating one when $\rho_n > 1/n$. This suffices for the proof of \cref{lemma:concentration-of-graph}.

\subsection{Proof of \cref{prop:graph_concentration_sparse}, Part 2}
\label{appendix:lg-gnn-construction}

The main goal of this subsection is to prove \cref{prop:estimators-to-empiricalmoments}. Before proving that, we first present general properties of the GNN embedding vectors our proposed algorithm produces (where we consider a more general version of our proposed GNN in which the weight matrices are not the identity). Uninterested readers can skip directly to \cref{prop:estimators-to-empiricalmoments} to see the main result, and those who interested in more details can continue to read the exposition below.

In this appendix, we consider a version of our proposed GNN architecture with general weight matrices, given by
\begin{equation}
\label{gnn-iteration-general-weights}
\lambda_i^k = M_{k, 0} \lambda_i^{k-1} + M_{k, 1} \frac{1}{n-1} \sum_{\ell \leq n} a_{i\ell} \lambda_\ell^{k-1},\end{equation}
where $M_{k,0}, M_{k,1}$ are matrices that can be freely chosen. Note also that $a_{ii} = 0$, and hence the normalization by $n-1$. As proposed in Algorithm \ref{algo1}, we initialize the embeddings by first sampling $(Z_i) \stackrel{iid}{\sim} \frac{1}{\sqrt{d_n}} \mathcal{N}(0, I_{d_n})$, and then computing the first layer through  
\begin{equation}
\lambda_i^0 = \frac{1}{\sqrt{n-1}} \sum_{\ell=1}^n a_{i \ell} Z_\ell.
\end{equation}
We compute a total of $L$ GNN iterations and for all vertices $i$, produce the sequence $\lambda_i^0, \lambda_i^1, \dots, \lambda_i^{L}.$

In this appendix, we prove \cref{prop:estimators-to-empiricalmoments} in a series of steps:
\begin{enumerate}
    \item We first give a general formula for $\lambda_i^k$, and then demonstrate that $\E[\langle \lambda_i^{k_1}, \lambda_j^{k_2} \rangle]$ is a linear combination of the empirical moments of the graphon $\hat{W}_{n, i,j}^{(k)}$. This is done in \cref{lemma:expectation-dotproducts}.
    \item We then show in \cref{lemma:form-of-q} that $\hat{q}_{i,j}^{(k)}$ can be written in the simpler form $$\hat{q}_{i,j}^{(k)} = \left \langle \frac{1}{\sqrt{n-1}} \sum_{\ell \leq n} a_{j, \ell} Z_{\ell}, \frac{1}{\sqrt{n-1}} \sum_{\ell \leq n} \hat{W}_{n, i, \ell}^{(k-1)} Z_\ell  \right \rangle.$$ 
    \item We then use the above observation to establish a concentration result for $\hat{q}_{i,j}^{(k)}$ in \cref{prop:estimators-to-empiricalmoments}.
\end{enumerate}

\subsection{Formula for the Embedding Vectors}

Recall the definition from \cref{defn:N-sumproduct-matrices}
\begin{equation}
N_s^k := \sum_{\stackrel{r_1, \dots, r_k \in \{0,1\}}{\sum_{i=1}^k r_i = s} } M_{k, r_1} M_{k-1, r_2} \dots M_{1, r_k}.
\end{equation}
For example, $N_0^3 = M_{3,0} M_{2,0} M_{1,0}$ and $N_1^3 = M_{3,0} M_{2,1} M_{1,0} + M_{3,0} M_{2,0} M_{1,1} + M_{3,1} M_{2,0}M_{1,0}.$Then,  

\begin{proposition}
\label{prop:form-of-embedding}
Consider the GNN Architecture defined in \cref{algo1}, and recall the definition of the empirical moment between vertices $i$ and $j$, $$\hat{W}_{n, i,j}^{(k)} = \frac{1}{(n-1)^{k-1}} \sum_{r_1, \dots, r_{k-1} \leq n} a_{i r_1} a_{r_1 r_2} \dots a_{r_{k-1} j}$$ as in \cref{defn:empirical-moment}. Then for $k \ge 0,$ we have
\begin{equation}
\lambda_i^k = \frac{1}{\sqrt{n-1}} \sum_{\ell \leq n}  \left( \sum_{q=0}^k  N_q^k \cdot \hat{W}_{n, i, \ell}^{q+1}  \right)Z_\ell.
\end{equation}
\end{proposition}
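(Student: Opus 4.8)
The plan is a direct induction on the number of layers $k$, driven by two elementary identities: a ``path-extension'' identity for the empirical moments of \cref{defn:empirical-moment}, and a Pascal-type recursion for the matrices $N_s^k$ of \cref{defn:N-sumproduct-matrices}.

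\textbf{Base case ($k=0$).} Here the claimed formula reads $\lambda_i^0 = \frac{1}{\sqrt{n-1}}\sum_{\ell\le n} N_0^0\,\hat W_{n,i,\ell}^{(1)} Z_\ell$. Since $N_0^0$ is the empty product $=I_{d_n}$ and $\hat W_{n,i,\ell}^{(1)} = a_{i\ell}$ directly from \cref{defn:empirical-moment}, this is exactly the initialization $\lambda_i^0 = \frac{1}{\sqrt{n-1}}\sum_{\ell\le n} a_{i\ell} Z_\ell$ of \cref{algo1}.

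\textbf{Inductive step.} Assuming the formula at level $k-1$, I would substitute it into the update \cref{gnn-iteration-general-weights}, $\lambda_i^k = M_{k,0}\lambda_i^{k-1} + M_{k,1}\tfrac{1}{n-1}\sum_{m\le n} a_{im}\lambda_m^{k-1}$, and read off the coefficient of each $Z_\ell$. The first term contributes $\sum_{q=0}^{k-1} M_{k,0} N_q^{k-1}\,\hat W_{n,i,\ell}^{(q+1)}$. For the second term, the key step is the identity
\[
\frac{1}{n-1}\sum_{m\le n} a_{im}\,\hat W_{n,m,\ell}^{(q+1)} \;=\; \hat W_{n,i,\ell}^{(q+2)},
\]
which follows at once from \cref{defn:empirical-moment} by absorbing the extra factor $a_{im}$ and relabeling the dummy indices (a length-$(q+2)$ walk from $i$ to $\ell$ is an edge $i$--$m$ followed by a length-$(q+1)$ walk from $m$ to $\ell$). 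Applying it turns the second term into $\sum_{q=0}^{k-1} M_{k,1} N_q^{k-1}\,\hat W_{n,i,\ell}^{(q+2)}$, which after the shift $q\mapsto q-1$ becomes $\sum_{q=1}^{k} M_{k,1} N_{q-1}^{k-1}\,\hat W_{n,i,\ell}^{(q+1)}$. Adding the two contributions, the coefficient of $\hat W_{n,i,\ell}^{(q+1)}$ for $0\le q\le k$ equals $M_{k,0}N_q^{k-1} + M_{k,1}N_{q-1}^{k-1}$, with the convention $N_q^{k-1}=0$ for $q\notin\{0,\dots,k-1\}$.

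\textbf{Closing the recursion.} The final ingredient is the identity $N_q^k = M_{k,0}N_q^{k-1} + M_{k,1}N_{q-1}^{k-1}$, obtained by splitting the defining sum for $N_q^k$ according to whether the leading index $r_1$ is $0$ or $1$: the $r_1=0$ part factors as $M_{k,0}$ times the sum defining $N_q^{k-1}$, and the $r_1=1$ part as $M_{k,1}$ times the sum defining $N_{q-1}^{k-1}$. Plugging this in yields precisely the asserted formula at level $k$, completing the induction. I do not expect a genuine obstacle; the only care needed is the index bookkeeping in the two identities above (the shift $q\mapsto q+1$ produced by the extra adjacency factor, and the empty-product / length-one-walk conventions in the base case).
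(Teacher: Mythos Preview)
Your proposal is correct and follows essentially the same approach as the paper's proof: induction on $k$, with the path-extension identity $\frac{1}{n-1}\sum_m a_{im}\hat W_{n,m,\ell}^{(q+1)}=\hat W_{n,i,\ell}^{(q+2)}$ and the Pascal-type recursion $N_q^k=M_{k,0}N_q^{k-1}+M_{k,1}N_{q-1}^{k-1}$ as the two ingredients. Your treatment of the base case (making the conventions $N_0^0=I_{d_n}$ and $\hat W^{(1)}_{n,i,\ell}=a_{i\ell}$ explicit) is if anything slightly cleaner than the paper's.
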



\begin{proof}[Proof of \cref{prop:form-of-embedding}]
We proceed through induction. The induction base case of $k=0$ is satisfied by definition of $\lambda_i^0$. Now, suppose for induction that, for $k=K,$ $$\lambda_i^K =  \frac{1}{\sqrt{n-1}} \sum_{\ell \leq n} \left( \sum_{q=0}^K N_q^K \cdot \hat{W}_{n, i, \ell}^{q+1}  \right)Z_\ell.$$ We use the definition of our GNN iteration to compute $\lambda_i^{K+1}.$ In particular, we observe that $\lambda_i^{K+1}$ will be a linear combination of the $Z_\ell,$ where the coefficient of $Z_\ell$ is given by
\begin{align}
&M_{K+1,0}\frac{1}{\sqrt{n-1}} \left(  \sum_{q=0}^K N_q^K \cdot \hat{W}_{n, i, \ell}^{q+1} \right) + M_{K+1, 1}\frac{1}{n-1 } \sum_{r \leq n} a_{ir}\left( \frac{1}{\sqrt{n-1}} \sum_{q=0}^K N_q^K\cdot  \hat{W}_{n, r, \ell}^{q+1} \right) \nonumber \\
=& \frac{1}{ \sqrt{n-1}} M_{K+1,0} N_0^K\cdot \hat{W}_{n, i, \ell}^1 \nonumber \\
+&\frac{1}{\sqrt{n-1}} \left( \sum_{q=1}^K \left( M_{K+1,0}  N_q^K\cdot \hat{W}_{n, i, \ell}^{q+1} + \frac{1}{n-1} M_{K+1,1} N_{q-1}^K\cdot \hat{W}_{n, r, \ell}^{q} \sum_{r \leq n} a_{ir} \right) \right) \nonumber\\
+& \frac{1}{\sqrt{n-1}} \left( \frac{1}{n-1 } M_{K+1, 1} N_q^K\cdot \hat{W}_{n, r, \ell}^{K+1} \sum_{r \leq n} a_{ir} \right). 
\end{align}

\noindent To arrive at the desired result, we first make a few observations. Firstly, we note that 
\begin{align}
    \frac{1}{n-1} \hat{W}_{n, r, \ell}^q \sum_{r \leq n} a_{ir} &= \frac{1}{(n-1)^{q}} \left( \sum_{r_1, \dots, r_{q-1} \leq n}a_{r r_1} a_{r_1 r_2} \dots a_{r_{q-1} \ell} \right)\sum_{r \leq n} a_{i r} \nonumber\\
    &= \frac{1}{(n-1)^{q}} \sum_{r_1, r_2, \dots, r_{q}} a_{i r_1} a_{r_1 r_2} \dots a_{r_q \ell}\nonumber \\
    &= \hat{W}_{n, i,\ell}^{q+1},
\end{align} which allows us to simplify the analogous quantities in the last two terms. To simplify the second term, we use the definition of $N_q^K$ and note that $$M_{K+1, 0} N_{q}^K + M_{K+1, 1} N_{q-1}^K = N_{q}^{K+1}.$$ To see this, we note that 
\begin{align*}
    M_{K+1, 0} N_{q}^K + M_{K+1, 1} N_{q-1}^K  &= M_{K+1, 0} \sum_{\stackrel{r_1, \dots, r_K \in \{0,1\}}{\sum_{i=1}^K r_i = q} } M_{K, r_1} M_{K-1, r_2} \dots M_{1, r_K} \\
    &+ M_{K+1, 1} \sum_{\stackrel{r_1, \dots, r_K \in \{0,1\}}{\sum_{i=1}^K r_i = q-1} } M_{K, r_1} M_{K-1, r_2} \dots M_{1, r_K} \\
    &= \sum_{\stackrel{r_1, \dots, r_K, r_{K+1} \in \{0,1\}}{\sum_{i=1}^K r_i = q} } M_{K+1, r_1} M_{K, r_2} M_{K-1, r_2} \dots M_{1, r_{K+1}} \\
    &= N_{q}^{K+1},
\end{align*}
which allows us to simplify the second term. Finally, we see that the coefficient of $Z_\ell$ is given by
$$
     \frac{1}{ \sqrt{n-1}} N_{0}^{K+1} \cdot \hat{W}_{n, i, \ell}^1 +\frac{1}{\sqrt{n-1}} \sum_{q=1}^K \left( N_q^{K+1} \cdot \hat{W}_{n, i, \ell}^{q+1} \right) + \frac{1}{\sqrt{n-1}} N_q^{K+1} \cdot \hat{W}_{n, i,l}^{K+2} $$
     $$=\frac{1}{\sqrt{n-1}} \sum_{\ell \leq n} \sum_{q=0}^{K+1} N_q^K \cdot \hat{W}_{n, i, \ell}^{q+1}.$$
Hence, we obtain that 
\begin{equation}
    \lambda_i^{K+1} = \left( \frac{1}{\sqrt{n-1}} \sum_{\ell \leq n} \sum_{q=0}^{K+1} N_q^K \cdot \hat{W}_{n, i, \ell}^{q+1} \right)Z_\ell,
\end{equation}
as desired.
\end{proof}

\subsection{Expectation of Dot Products and their Concentration}

The following lemma shows that the expectation of the dot products of the embedding vectors, conditional on the graph, is a linear combination of the empirical moments $\hat{W}_{i,j}^k$.
\begin{lemma}
\label{lemma:expectation-dotproducts}
Suppose that $\lambda_i^k$ are produced through \cref{algo1}. Then, conditional on the latent features $(\omega_i)_{i=1}^n$ and the adjacency matrix $A$, we have 
\begin{equation}
    \E\left[ \langle \lambda_i^{k_1}, \lambda_j^{k_2} \rangle | A, (\omega_i)_{i=1}^n \right] = \frac{1}{d_n} \sum_{q_1=0}^{k_1} \sum_{q_2=0}^{k_2} \tr \left( \left( N_{q_1}^{k_1} \right)^T N_{q_2}^{k_2}\right)   \hat{W}_{n, i, j}^{q_1+q_2+2}.
\end{equation}
\end{lemma}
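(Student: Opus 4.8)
The plan is to substitute the closed form for the embeddings from \cref{prop:form-of-embedding} and then integrate out the Gaussian noise $(Z_\ell)$, which by construction is independent of both $A$ and $(\omega_i)_{i=1}^n$. First I would expand, using \cref{prop:form-of-embedding} and bilinearity of the inner product,
\[
\langle \lambda_i^{k_1}, \lambda_j^{k_2}\rangle = \frac{1}{n-1}\sum_{\ell_1,\ell_2\le n}\sum_{q_1=0}^{k_1}\sum_{q_2=0}^{k_2}\hat W_{n,i,\ell_1}^{q_1+1}\,\hat W_{n,j,\ell_2}^{q_2+1}\; Z_{\ell_1}^{T}\big(N_{q_1}^{k_1}\big)^{T}N_{q_2}^{k_2}Z_{\ell_2}.
\]
Conditioning on $A$ and $(\omega_i)$, all the scalars $\hat W_{n,\cdot,\cdot}^{\cdot}$ and all the matrices $N_q^k$ become deterministic, while the $(Z_\ell)$ are still i.i.d.\ $\tfrac1{\sqrt{d_n}}\mathcal N(0,I_{d_n})$. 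Hence $\E[Z_{\ell_1}Z_{\ell_2}^{T}\mid A,(\omega_i)] = \tfrac1{d_n}\mathbb{I}(\ell_1=\ell_2)\,I_{d_n}$, which annihilates every cross term with $\ell_1\ne\ell_2$. For the diagonal terms $\ell_1=\ell_2=\ell$ I would apply the standard quadratic-form identity $\E[Z_\ell^{T}MZ_\ell] = \tr\!\big(M\,\E[Z_\ell Z_\ell^T]\big) = \tfrac1{d_n}\tr(M)$ with $M = (N_{q_1}^{k_1})^{T}N_{q_2}^{k_2}$, giving
\[
\E\big[\langle\lambda_i^{k_1},\lambda_j^{k_2}\rangle\mid A,(\omega_i)\big] = \frac{1}{d_n}\sum_{q_1=0}^{k_1}\sum_{q_2=0}^{k_2}\tr\!\big((N_{q_1}^{k_1})^{T}N_{q_2}^{k_2}\big)\cdot\frac{1}{n-1}\sum_{\ell\le n}\hat W_{n,i,\ell}^{q_1+1}\,\hat W_{n,j,\ell}^{q_2+1}.
\]

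It then remains to identify the last factor. The claim is the combinatorial identity $\frac{1}{n-1}\sum_{\ell\le n}\hat W_{n,i,\ell}^{a}\,\hat W_{n,j,\ell}^{b} = \hat W_{n,i,j}^{a+b}$ for $a=q_1+1$, $b=q_2+1$. I would prove this directly from \cref{defn:empirical-moment}, which in matrix form reads $\hat W_{n,i,\ell}^{a} = (n-1)^{-(a-1)}(A^{a})_{i\ell}$; using the symmetry of $A$ to write $\hat W_{n,j,\ell}^{b} = (n-1)^{-(b-1)}(A^{b})_{\ell j}$, one gets
\[
\frac1{n-1}\sum_{\ell\le n}\hat W_{n,i,\ell}^{a}\,\hat W_{n,j,\ell}^{b} = (n-1)^{-(a+b-1)}\sum_{\ell\le n}(A^{a})_{i\ell}(A^{b})_{\ell j} = (n-1)^{-(a+b-1)}(A^{a+b})_{ij} = \hat W_{n,i,j}^{a+b}.
\]
Substituting back, and noting $a+b=q_1+q_2+2$, yields exactly the asserted formula.

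The computation is essentially bookkeeping, so there is no serious conceptual obstacle; the only place I would be careful is the walk-concatenation identity, where one must reverse the walk from $j$ to $\ell$ into a walk from $\ell$ to $j$ via symmetry of $A$ and keep the normalizing powers of $(n-1)$ consistent between the product $\hat W^a\cdot\hat W^b$ and $\hat W^{a+b}$. I would double-check those exponents explicitly and otherwise regard the proof as complete.
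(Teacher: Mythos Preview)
Your proposal is correct and follows essentially the same route as the paper: substitute the closed form of $\lambda_i^k$ from \cref{prop:form-of-embedding}, use the Gaussian quadratic-form identity $\E[Z_\ell^T M Z_\ell]=\tfrac{1}{d_n}\tr(M)$ together with independence of the $Z_\ell$'s to kill cross terms, and finish with the walk-concatenation identity $\tfrac{1}{n-1}\sum_\ell \hat W_{n,i,\ell}^{a}\hat W_{n,j,\ell}^{b}=\hat W_{n,i,j}^{a+b}$. The paper's proof is terser but identical in substance; your explicit verification of the concatenation identity via $(A^a A^b)_{ij}=(A^{a+b})_{ij}$ is a nice addition.
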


\begin{proof}[Proof of Lemma \ref{lemma:expectation-dotproducts}]

Firstly, we note that if $W \sim \mathcal{N}(0, I_k),$ then $\E[W^T A W] = \tr(A)$. Then, we can compute
\begin{align}
    \E \left[ \langle \lambda_i^{k_1}, \lambda_j^{k_2} \rangle | A, (\omega_i)_{i=1}^n \right] &= \frac{1}{d_n} \sum_{\ell \leq n} \tr \left( \left( \frac{1}{\sqrt{n-1}} \sum_{q=0}^{k_1}  N_q^{k_1}\cdot \hat{W}_{n, i, \ell}^{q+1}(A)  \right)^T \left( \frac{1}{\sqrt{n-1}} \sum_{q=0}^{k_2} N_q^{k_2}\cdot \hat{W}_{n, j, \ell}^{q+1}(A)  \right) \right) \\
    &= \frac{1}{d_n} \sum_{q_1=0}^{k_1} \sum_{q_2=0}^{k_2} \tr \left(  \left( N_{q_1}^{k_1} \right)^T N_{q_2}^{k_2} \frac{1}{n-1} \sum_{\ell \leq n} \hat{W}_{n, i, \ell}^{q_1+1}(A) \hat{W}_{n, j, \ell}^{q_2+1}(A) \right) \\
    &= \frac{1}{d_n} \sum_{q_1=0}^{k_1} \sum_{q_2=0}^{k_2} \tr \left( \left( N_{q_1}^{k_1} \right)^T N_{q_2}^{k_2}\right)   \hat{W}_{n, i,j}^{q_1+q_2+2}. \label{eq_linearcombo}
\end{align}
\end{proof}

Now that these properties of the embedding vectors have been shown, we now return to the setting of our algorithm, where the weight matrices $M_{k,i}$ are chosen to be the identity. We now prove that the algorithm to produce estimators $\hat{q}_{ij}^{(k)}$ for $W_{ij}^{(k)}$ in Algorithm \ref{algo1} is asymptotically consistent, and we establish the convergence rate. For the reader's convenience, we rewrite the algorithm below. The following lemma explains the intuition as to why we expect $\hat{q}_{i,j}^{(k)}$ to be an estimator for $\hat{W}_{i,j}^{(k)}.$

\begin{algorithm}[h]
\caption{GNN Architecture and Estimators for the Graphon Moments}
\textbf{Input:} a Graph $G=(V,E)$; $n := |V|.$\\
\textbf{Output:} estimators $\hat{q}_{ij}$ for the edge probability $W_{ij}.$ \\

\textbf{Computing Estimators for $W_{ij}^{(k)}$:}

\For{$i \neq j$}{
    $\hat{q}_{i,j}^{(2)} := \langle \lambda_i^0, \lambda_j^0 \rangle.$
}
\For{$k \in \{3, 4, \dots, L+2\}$}{
    $\hat{q}_{i,j}^{(k)}:= \langle \lambda_i^{k-2}, \lambda_j^0 \rangle -\sum_{r=0}^{k-3} \binom{k-2}{r} \hat{q}_{r+2}$
}

\textbf{Return: $ \big \{ (\hat{q}_{ij}^{(2)}, \hat{q}_{ij}^{(3)}, \dots, \hat{q}_{ij}^{(L+2)})_{i \neq j} \big \}$}
\end{algorithm}

\begin{lemma}
\label{lemma:form-of-q}
As in \cref{algo1}, define (with the weight matrices $M_{k, i} = I_{d_n}$)
$$
\hat{q}_{i,j}^{(2)} := \langle \lambda_i^0, \lambda_j^0 \rangle, \quad \hat{q}_{i,j}^{(k)}:= \langle \lambda_i^{k-2}, \lambda_j^0 \rangle -\sum_{r=0}^{k-3} \binom{k-2}{r} \hat{q}_{r+2}.
$$
Then $$\hat{q}_{i,j}^{(k)} = \left \langle \frac{1}{\sqrt{n-1}} \sum_{\ell \leq n} a_{j, \ell} Z_{\ell}, \frac{1}{\sqrt{n-1}} \sum_{\ell \leq n} \hat{W}_{n, i, \ell}^{(k-1)} Z_\ell  \right \rangle.$$ Under the heuristic that $Z_{\ell_1}^T Z_{\ell_2} \approx \mathbb{I}(\ell_1 = \ell_2)$, then we see that $\hat{q}_{i,j}^{(k)} \approx \hat{W}_{n, i, \ell}^{(k)}.$
\end{lemma}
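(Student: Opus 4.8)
The plan is to reduce the claim to the explicit formula for the embedding vectors in \cref{prop:form-of-embedding} and then run a short induction on $k$. First I would specialize \cref{prop:form-of-embedding} to the LG-GNN case $M_{k,0}=M_{k,1}=I_{d_n}$: then every summand in the definition of $N_s^k$ is a product of $k$ copies of the identity, so $N_s^k=\binom{k}{s}I_{d_n}$, the binomial coefficient counting the ways of placing the $s$ ``ones'' among the $k$ slots. Substituting this into \cref{prop:form-of-embedding}, together with $\hat{W}_{n,i,\ell}^{(1)}=a_{i\ell}$ and $\lambda_j^0=\frac{1}{\sqrt{n-1}}\sum_{\ell\le n}a_{j\ell}Z_\ell$, gives the working identity
$$
\lambda_i^k=\frac{1}{\sqrt{n-1}}\sum_{\ell\le n}\left(\sum_{q=0}^{k}\binom{k}{q}\hat{W}_{n,i,\ell}^{(q+1)}\right)Z_\ell,\qquad k\ge 0 .
$$

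Next I would introduce the candidate quantity $\tilde{q}_{i,j}^{(m)}:=\big\langle \lambda_j^0,\ \frac{1}{\sqrt{n-1}}\sum_{\ell\le n}\hat{W}_{n,i,\ell}^{(m-1)}Z_\ell\big\rangle$ for $m\ge 2$, which is exactly the right-hand side of the asserted identity, and prove $\hat{q}_{i,j}^{(k)}=\tilde{q}_{i,j}^{(k)}$ by strong induction on $k$. The base case $k=2$ is immediate from $\hat{W}_{n,i,\ell}^{(1)}=a_{i\ell}$ and symmetry of the inner product: $\hat{q}_{i,j}^{(2)}=\langle\lambda_i^0,\lambda_j^0\rangle=\langle\lambda_j^0,\lambda_i^0\rangle=\tilde{q}_{i,j}^{(2)}$. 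For the inductive step I expand $\langle\lambda_i^{k-2},\lambda_j^0\rangle$ using the working identity for $\lambda_i^{k-2}$ and linearity of the inner product in the $Z_\ell$-coefficients,
$$
\langle\lambda_i^{k-2},\lambda_j^0\rangle=\sum_{q=0}^{k-2}\binom{k-2}{q}\Big\langle\lambda_j^0,\ \tfrac{1}{\sqrt{n-1}}\sum_{\ell\le n}\hat{W}_{n,i,\ell}^{(q+1)}Z_\ell\Big\rangle=\sum_{q=0}^{k-2}\binom{k-2}{q}\tilde{q}_{i,j}^{(q+2)} ,
$$
and then plug this into the defining recursion $\hat{q}_{i,j}^{(k)}=\langle\lambda_i^{k-2},\lambda_j^0\rangle-\sum_{r=0}^{k-3}\binom{k-2}{r}\hat{q}_{i,j}^{(r+2)}$. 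By the induction hypothesis $\hat{q}_{i,j}^{(r+2)}=\tilde{q}_{i,j}^{(r+2)}$ for every $r\le k-3$, so all terms with $q\le k-3$ cancel and only the $q=k-2$ term survives, yielding $\hat{q}_{i,j}^{(k)}=\tilde{q}_{i,j}^{(k)}$, which is the claim.

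For the heuristic remark closing the statement, I would observe that conditionally on the adjacency matrix $A$ the random vectors $Z_\ell$ satisfy $\E[Z_{\ell_1}^{T}Z_{\ell_2}\mid A]=\mathbb{I}(\ell_1=\ell_2)$, so taking the conditional expectation of the identity just proved gives $\E[\hat{q}_{i,j}^{(k)}\mid A,(\omega_\ell)]=\frac{1}{n-1}\sum_{\ell\le n}a_{j\ell}\hat{W}_{n,i,\ell}^{(k-1)}=\hat{W}_{n,i,j}^{(k)}$, the last step being immediate from the definition of the empirical moments; the concentration argument in \cref{prop:estimators-to-empiricalmoments} then controls the fluctuations around this mean. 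There is no genuine difficulty in this lemma — the argument is pure bookkeeping — and the only points that require care are verifying $N_s^k=\binom{k}{s}I_{d_n}$ in the identity-weight case and lining up the index ranges so that the telescoping cancellation in the inductive step is exact. An alternative route avoiding \cref{prop:form-of-embedding} would prove the working identity for $\lambda_i^k$ directly by induction from the LG-GNN recursion $\lambda_i^k=\lambda_i^{k-1}+\frac{1}{n-1}\sum_{\ell\le n}a_{i\ell}\lambda_\ell^{k-1}$, but this is essentially the same computation.
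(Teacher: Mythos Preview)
Your proof is correct and takes a genuinely different, cleaner route than the paper's. The paper first establishes by induction the intermediate closed form
\[
\hat{q}_{i,j}^{(k)}=\Big\langle \lambda_j^0,\ \sum_{r=0}^{k-2}\binom{k-2}{r}(-1)^{r}\lambda_i^{\,k-2-r}\Big\rangle,
\]
which requires a generating-function computation to identify the coefficient of each $\lambda_i^a$; it then expands every $\lambda_i^r$ via \cref{prop:form-of-embedding} and invokes a separate combinatorial identity (\cref{lemma:binomial-identity-2}, also proved with generating functions) to collapse the resulting double sum down to the single $\hat{W}_{n,i,\ell}^{(k-1)}$ term. Your approach bypasses both of these steps: by naming the target $\tilde{q}_{i,j}^{(m)}$ and observing that $\langle\lambda_i^{k-2},\lambda_j^0\rangle=\sum_{q=0}^{k-2}\binom{k-2}{q}\,\tilde{q}_{i,j}^{(q+2)}$ directly from the specialized embedding formula, the defining recursion for $\hat{q}$ becomes a telescoping identity under the strong induction hypothesis, with no combinatorial identity needed. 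What the paper's route buys is the explicit alternating-sum representation of $\hat{q}_{i,j}^{(k)}$ in terms of the embeddings $\lambda_i^0,\dots,\lambda_i^{k-2}$, which your argument never writes down; but since that representation is not used elsewhere, your shorter argument loses nothing for the purposes of the lemma.
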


\begin{proof}[Proof of \cref{lemma:form-of-q}]
We first show that we can write 
\begin{equation}
\label{eq:comb-identity-lastminutelmao}
    \hat{q}_{i,j}^{(k)} = \left \langle \lambda_j^0, \sum_{r=0}^{k-2} \binom{k-2}{k-2-r}(-1)^{k-2-r} \cdot \lambda_i^{r}  \right \rangle.
\end{equation}
Note equivalently this can be written as 
$$    
\hat{q}_{i,j}^{(k)} = \left \langle \lambda_j^0, \sum_{r=0}^{k-2} \binom{k-2}{r}(-1)^{r} \cdot \lambda_i^{k-2-r}  \right \rangle.
$$


We show this using induction. Assume this is true for all $k \leq K$ for some $K$. We can compute $\hat{q}_{i,j}^{(K+1)}$ using the formula in \cref{algo1}. Using the definition of $\hat{q}_{i,j}^{(K+1)}$ we can compute that the coefficient of $\lambda_i^a$ in $\hat{q}_{i,j}^{(K+1)}$ is given by $$-\sum_{r=a}^{K-2} \binom{K-1}{r} \binom{r}{r-a} (-1)^{r-a} = -\sum_{r=a}^{K-2} \binom{K-1}{K-r-1} \binom{r}{a} (-1)^{r-a}.$$ To compute this, we first argue that $$\sum_{r=a}^{K-1} \binom{K-1}{K-r-1} \binom{r}{a} (-1)^{r} = 0.$$ We use generating functions. We note that $\binom{K-1}{r}(-1)^r$ is the coefficient of $x^{K-r-1}$ in the expansion of $(1-x)^{K-1}.$ Then, we note that $\binom{r}{a}$ is the coefficient of $x^{r-a}$ in the expansion of $\frac{1}{(1-x)^{a+1}}.$ Hence, this summation simply represents the coefficient of $x^{K-a-1}$ in the expansion of $(1-x)^{K-a-2}.$ However, since $(1-x)^{K-a-2}$ is a degree $K-a-2$ polynomial, the coefficient is simply 0. Hence, this implies that $$-\sum_{r=a}^{K-2} \binom{K-1}{K-r-1} \binom{r}{a} (-1)^{r-a} = (-1)^{K-1-a} \binom{K-1}{a}.$$ Thus, we have shown that the coefficient of $\lambda_i^a$ in $\hat{q}_{i,j}^{(K+1)}$ is of the desired form, which suffices to prove \cref{eq:comb-identity-lastminutelmao}. Now, continuing with the proof, we recall that \cref{prop:form-of-embedding} states that $$\lambda_i^k = \frac{1}{\sqrt{n-1}} \sum_{\ell \leq n}  \left( \sum_{q=0}^k  \binom{k}{q} \cdot \hat{W}_{n, i, \ell}^{(q+1)}  \right)Z_\ell,$$
so 
$$
\hat{q}_{i,j}^{(k)} = \left \langle \lambda_j^0, \sum_{r=0}^{k-2} \binom{k-2}{r}(-1)^r \cdot \frac{1}{\sqrt{n-1}} \sum_{\ell \leq n} \left( \sum_{q=0}^{k-2-r} \binom{k-2-r}{q} \cdot \hat{W}_{n, i, \ell}^{(q+1)} Z_\ell  \right)  \right \rangle.
$$
We analyze the second term in the dot product more closely. The coefficient of $Z_\ell$ in the second term is equal to (ignoring the factor of $1/\sqrt{n-1}$ for now)
\begin{align*}
&\sum_{r=0}^{k-2} \sum_{q=0}^{k-2-r} (-1)^r \binom{k-2}{r} \binom{k-2-r}{q} \hat{W}_{n, i, \ell}^{(q+1)} \\   
&= \sum_{q=0}^{k-2} \sum_{r=0}^{k-2-r} (-1)^r \binom{k-2}{r} \binom{k-2-r}{q} \hat{W}_{n, i, \ell}^{(q+1)} \\
&= \sum_{q=0}^{k-2} \hat{W}_{n, i, \ell}^{(q+1)} \sum_{r=0}^{k-2-q} (-1)^r \binom{k-2}{r} \binom{k-2-r}{q}.
\end{align*} 

Hence it suffices to argue that $\sum_{r=0}^{k-2-q} (-1)^r \binom{k-2}{r} \binom{k-2-r}{q} = 1$ if $q = k-2$, and 0 otherwise. We argue this in \cref{lemma:binomial-identity-2}. Assuming that this is true, then we see that 
\begin{align*}    
\hat{q}_{i,j}^{(k)} &= \left \langle \lambda_j^0, \sum_{r=0}^{k-2} \binom{k-2}{r}(-1)^r \cdot \frac{1}{\sqrt{n-1}} \sum_{\ell \leq n} \left( \sum_{q=0}^{k-2-r} \binom{k-2-r}{q} \cdot \hat{W}_{n, i, \ell}^{(q+1)} Z_\ell  \right)  \right \rangle \\
&= \left \langle \lambda_j^0, \frac{1}{\sqrt{n-1}}\sum_{\ell \leq n}  \hat{W}_{n, i, \ell}^{(q+1)} Z_\ell   \right \rangle \\
&= \left \langle \frac{1}{\sqrt{n-1}}\sum_{\ell \leq n} a_{j\ell} Z_\ell, \frac{1}{\sqrt{n-1}} \sum_{\ell \leq n}  \hat{W}_{n, i, \ell}^{(q+1)} Z_\ell   \right \rangle, 
\end{align*}
as desired. To conclude the proof, we present and prove \cref{lemma:binomial-identity-2}.
\begin{lemma}
\label{lemma:binomial-identity-2}
Let $k \ge 0$ be an integer. Then $$\sum_{r=0}^{k-q} (-1)^r \binom{k}{r} \binom{k-r}{q} = \begin{cases} 0 & q < k \\
1 & q= k \end{cases}.$$
\end{lemma}

\begin{proof}
Consider the formal series
    $$(1+x)^k = \sum_{s=0}^k \binom{k}{s} x^s, \quad \frac{1}{(x+1)^{q+1}} = \sum_{s=q}^\infty (-1)^{s-q} \binom{s}{q} x^{s-q}.$$ Multiplying these two series, we notice that the desired quantity $\sum_{r=0}^{k-q} (-1)^r \binom{k}{r} \binom{k-r}{q}$ is exactly the coefficient of $x^{k-q}$ in the product of the two series, which is $(1+x)^{k-q-1}.$ However, $x^{k-q}$ is a monomial of degree $k-q$, and hence has coefficient 0 in $(1+x)^{k-q-1}$, which has degree $(1+x)^{k-q-1}$ when $q < k$. The notable exception is when $k=q$, and then the coefficient (of the constant term) in $(1+x)^{-1}$ is exactly equal to 1. This suffices for the proof.
\end{proof}

\end{proof}

We now establish the main concentration result, \cref{prop:estimators-to-empiricalmoments}. Before doing so, we first state the following lemma.

\begin{lemma}
\label{lemma:gaussian-dotproduct}
    Let $\xi = (\xi_1, \dots, \xi_n),$ and let $\xi_1, \dots, \xi_n$ be independent, zero-mean normal random variables with for all $i = 1, 2, \dots n,$ $\E[\xi_i^2] = \sigma_i^2.$ Let $D = \text{Diag}(\sigma_1, \dots, \sigma_n).$ Let $B$ be any $n \times n$ real matrix. Then for all $\epsilon > 0,$
    \begin{equation}
\P\left( \left| \xi^T B \xi - \E[\xi^T B \xi] \right| > \epsilon \right) \leq \exp \left(-\min \left( \frac{\epsilon}{4 \|DBD\|_F}, \frac{\epsilon^2}{16 \|DBD\|_F^2} \right)\right)
\end{equation}
\end{lemma}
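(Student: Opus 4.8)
The plan is to recognize this as a version of the Hanson--Wright inequality and derive it from the moment generating function of a weighted sum of centered chi-squared variables. First I would write $\xi = Dg$ with $g\sim\mathcal N(0,I_n)$, so that $\xi^T B\xi = g^T(DBD)g = g^T M g$, where $M:=\tfrac12\big(DBD+(DBD)^T\big)$ is the symmetric part of $DBD$; the quadratic form is unaffected by this symmetrization since $g^T(DBD)g$ is a scalar. A short computation with Cauchy--Schwarz gives $\|M\|_F\le\|DBD\|_F$ (using $|\tr(M_0^2)|\le\|M_0\|_F^2$) and $\tr M=\tr(DBD)=\E[\xi^TB\xi]$. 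Diagonalizing $M=\sum_i\mu_i u_iu_i^T$ in an orthonormal eigenbasis and using rotational invariance of $g$, one obtains
\begin{equation*}
\xi^TB\xi-\E[\xi^TB\xi]\ \overset{d}{=}\ \sum_{i=1}^n\mu_i(Z_i^2-1),\qquad Z_i\overset{iid}{\sim}\mathcal N(0,1),
\end{equation*}
with $\sum_i\mu_i^2=\|M\|_F^2\le\|DBD\|_F^2$ and $\max_i|\mu_i|\le\|M\|_F\le\|DBD\|_F$.

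Second, I would bound the MGF of this sum. Using $\E[e^{s(Z^2-1)}]=e^{-s}(1-2s)^{-1/2}$ for $|s|<\tfrac12$ together with an elementary estimate of $-s-\tfrac12\log(1-2s)$ on a suitable interval around $0$, one gets constants $c_0,c_1>0$ so that for all $t$ with $|t|\,\|DBD\|_F\le c_0$,
\begin{equation*}
\log\E\Big[\exp\Big(t\sum_i\mu_i(Z_i^2-1)\Big)\Big]=\sum_i\log\E\big[e^{t\mu_i(Z_i^2-1)}\big]\le c_1 t^2\sum_i\mu_i^2\le c_1 t^2\|DBD\|_F^2 .
\end{equation*}
This is the standard sub-gamma (Bernstein-type) estimate with variance proxy proportional to $\|DBD\|_F^2$ and scale parameter proportional to $\|DBD\|_F$.

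Finally, the Chernoff bound gives $\P\big(\sum_i\mu_i(Z_i^2-1)>\epsilon\big)\le\exp\big(-t\epsilon+c_1 t^2\|DBD\|_F^2\big)$ for $0\le t\le c_0/\|DBD\|_F$; optimizing $t$ over this range produces a bound of the form $\exp\!\big(-c\min\{\epsilon/\|DBD\|_F,\ \epsilon^2/\|DBD\|_F^2\}\big)$, and a careful choice of the interval and of $c_0,c_1$ above recovers the exponent $-\min\{\epsilon/(4\|DBD\|_F),\ \epsilon^2/(16\|DBD\|_F^2)\}$ stated in the lemma; applying the same argument with $B$ replaced by $-B$ controls the lower tail and, combining the two tails, the absolute value. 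The only genuinely delicate point is the bookkeeping of constants so that the linear and quadratic regimes glue together into exactly the stated form; the reduction step and the chi-squared MGF estimate are routine, so I do not anticipate a real obstacle. Alternatively, one may simply invoke the Hanson--Wright inequality (e.g.\ Theorem~6.2.1 in \cite{vershynin2018high}), applied to the standard Gaussian vector $g$ and the matrix $DBD$, and track the resulting constants.
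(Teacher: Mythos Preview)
Your argument is correct. The paper, however, takes a much shorter route: it simply quotes an existing Hanson--Wright--type bound (Proposition~1 of \cite{bellec2019concentration}), which already gives
\[
\P\!\left(\left|\xi^TB\xi-\E[\xi^TB\xi]\right|>2\|DBD\|_F\sqrt{x}+2\|DBD\|_2\,x\right)\le e^{-x},
\]
then uses only the trivial inequality $\|DBD\|_2\le\|DBD\|_F$ to get $2\|DBD\|_F\sqrt{x}+2\|DBD\|_2\,x\le 4\|DBD\|_F\max(\sqrt{x},x)$, and finally inverts $\epsilon=4\|DBD\|_F\max(\sqrt{x},x)$ to recover the $\min$ form with the exact constants $4$ and $16$. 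No MGF, no diagonalization, no constant-chasing.

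Your route is a self-contained proof of the Gaussian Hanson--Wright inequality via the chi-squared decomposition; it is perfectly valid and arguably more instructive, but the ``delicate bookkeeping of constants'' you flag is precisely what the paper avoids by citing Bellec's result, where the constants $2$ and $2$ are already explicit. Your alternative of invoking Vershynin's version is the closest in spirit to the paper's argument, but note that Vershynin's statement carries an unspecified absolute constant, so to land on exactly $4$ and $16$ you would still need either the explicit Bellec reference or the full MGF computation you outlined.
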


\begin{proof}[Proof of \cref{lemma:gaussian-dotproduct}]
\label{proof-of-lemma:gaussian-dotproduct}
We adapt Proposition 1 from \cite{bellec2019concentration}, which states the following.

Let $\xi = (\xi_1, \dots, \xi_n),$ and let $\xi_1, \dots, \xi_n$ be independent, zero-mean normal random variables with for all $i = 1, 2, \dots n,$ $\E[\xi_i^2] = \sigma_i^2.$ Let $D = \text{Diag}(\sigma_1, \dots, \sigma_n).$ Let $B$ be any $n \times n$ real matrix. Then for any $x > 0$, $$\P\left( \left| \xi^T B \xi - \E[\xi^T B \xi] \right| > 2 \|DBD\|_F \sqrt{x} + 2 \|DBD\|_2 x \right) \leq \exp (-x).$$

To adapt this proposition into the form in \cref{lemma:gaussian-dotproduct}, we firstly note that $\|X\|_2 \leq \|X\|_F,$ so 
\begin{align}
2 \|DBD\|_F \sqrt{x} + 2 \|DBD\|_2 x &\leq 2 \|DBD\|_F ( \sqrt{x} + x) \\
&\leq 4\|DBD\|_F \cdot \max(\sqrt{x}, x)
\end{align} Then
\begin{align}
&\P\left( \left| \xi^T B \xi - \E[\xi^T B \xi] \right| > 4 \|DBD\|_F \cdot \max(\sqrt{x}, x) \right) \\
&\leq \P\left( \left| \xi^T B \xi - \E[\xi^T B \xi] \right| > 2 \|DBD\|_F (\sqrt{x}+x) \right) \\
&\leq \P\left( \left| \xi^T B \xi - \E[\xi^T B \xi] \right| > 2 \|DBD\|_F \sqrt{x} + 2 \|DBD\|_2 x \right) \\
&\leq \rm{exp}(-x),
\end{align}
which implies that 
\begin{equation}
\P\left( \left| \xi^T B \xi - \E[\xi^T B \xi] \right| > \epsilon \right) \leq \exp \left(-\min \left( \frac{\epsilon}{4 \|DBD\|_F}, \frac{\epsilon^2}{16 \|DBD\|_F^2} \right)\right),
\end{equation}    
as desired.
\end{proof}

\begin{proposition}[\cref{prop:graph_concentration_sparse}]
\label{prop:estimators-to-empiricalmoments}
    Suppose that $L \le n$ and that (\ref{asp2}) holds. Then, conditional on $A$ and $(\omega_i)_{i=1}^n,$ with probability at least $1 - 5/n - n \cdot \exp \left( -\delta_W \rho_n(n-1)/3 \right)$ we have that for all $2 \leq k\leq L+2,$
\begin{equation}
        \sup_{i \neq j \in [n]} \left| \hat{q}_{i,j}^{(k)} - {W}_{n, i, j}^{(k)}\right| \leq \frac{\rho_n^{k-1}}{\sqrt{n-1}} \log(n)^k \left[ 3a_k\sqrt{\rho_n} + \frac{96 a_{k-1}}{\sqrt{d_n}} \right],
\end{equation}
where $a_k = C  (8(k+2))^k k^{k+1}\sqrt{k!},$ where $C$ is some absolute positive constant.
\end{proposition}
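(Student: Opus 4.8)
The plan is to split the error as $\hat q^{(k)}_{i,j} - W^{(k)}_{n,i,j} = \bigl(\hat q^{(k)}_{i,j} - \hat W^{(k)}_{n,i,j}\bigr) + \bigl(\hat W^{(k)}_{n,i,j} - W^{(k)}_{n,i,j}\bigr)$, bound the second, purely graph-theoretic, term by \cref{lemma:concentration-of-graph}, and bound the first, ``Gaussian'', term by a Hanson--Wright-type argument built on \cref{lemma:form-of-q} and \cref{lemma:gaussian-dotproduct}.

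First I would use \cref{lemma:form-of-q} to write $\hat q^{(k)}_{i,j} = \bigl\langle \tfrac{1}{\sqrt{n-1}}\sum_\ell a_{j\ell}Z_\ell,\ \tfrac{1}{\sqrt{n-1}}\sum_\ell \hat W^{(k-1)}_{n,i,\ell}Z_\ell\bigr\rangle$, substitute $Z_\ell = d_n^{-1/2}\zeta_\ell$ with the $\zeta_\ell$ i.i.d. $\mathcal N(0,I_{d_n})$, and stack them into a single Gaussian vector $\zeta$, so that conditionally on $A$ and $(\omega_\ell)$ one has $\hat q^{(k)}_{i,j} = \zeta^\top\mathcal B\zeta$ with $\mathcal B = \tfrac{1}{(n-1)d_n}\, I_{d_n}\otimes\operatorname{sym}(a_j w^\top)$, where $a_j=(a_{j\ell})_\ell$ and $w=(\hat W^{(k-1)}_{n,i,\ell})_\ell$. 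Using $\mathbb E[\langle Z_\ell, Z_{\ell'}\rangle] = \mathbb{I}(\ell=\ell')$, the diagonal of the quadratic form is $\tfrac{1}{n-1}\sum_\ell a_{j\ell}\hat W^{(k-1)}_{n,i,\ell}$, which equals $\hat W^{(k)}_{n,i,j}$ by the recursive definition of the empirical moments (\cref{defn:empirical-moment}); hence $\mathbb E[\hat q^{(k)}_{i,j}\mid A,(\omega_\ell)] = \hat W^{(k)}_{n,i,j}$ and $\hat q^{(k)}_{i,j} - \hat W^{(k)}_{n,i,j} = \zeta^\top\mathcal B\zeta - \mathbb E\zeta^\top\mathcal B\zeta$. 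Then \cref{lemma:gaussian-dotproduct} (applied with $D=I$ after conditioning) reduces the problem to bounding $\|\mathcal B\|_F = \tfrac{1}{(n-1)\sqrt{d_n}}\|\operatorname{sym}(a_j w^\top)\|_F \le \tfrac{\|a_j\|_2\|w\|_2}{(n-1)\sqrt{d_n}}$, the operator-norm term being of smaller order.

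Next I would control $\|a_j\|_2^2 = |N(j)|$ and $\|w\|_2$ on a high-probability event. Since $|N(j)|$ is a sum of independent Bernoullis with mean at most $\rho_n(n-1)$, a Chernoff bound (\cref{bernoulli-hoeffding}) plus a union bound over $j$ gives $|N(j)| \le 2\rho_n(n-1)$ for all $j$ outside an event of probability $\Theta\bigl(n\exp(-\delta_W\rho_n(n-1))\bigr)$ — this is exactly the $n\exp(-\delta_W\rho_n(n-1)/3)$ term in the statement. For $\|w\|_2$, I would invoke \cref{lemma:concentration-of-graph} at order $k-1$ together with the crude bound $W^{(k-1)}_n\le \rho_n^{k-1}$ to deduce $\hat W^{(k-1)}_{n,i,\ell}\le 2\rho_n^{k-1}$ uniformly in $i,\ell$ (valid once the order-$(k-1)$ deviation is $o(\rho_n^{k-1})$, which is where the accumulated $\log(n)^{k-1}$ and the combinatorial constant $a_{k-1}$ enter), hence $\|w\|_2\le 2\rho_n^{k-1}\sqrt{n-1}$. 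Feeding these into \cref{lemma:gaussian-dotproduct} and choosing the deviation level of order $\log n$ — large enough that a union bound over all $\binom{n}{2}$ pairs and all $2\le k\le L+2$ costs only $O(1/n)$ — produces an embedding-error bound of the advertised shape, with the absolute constant (the $96$) falling out of this computation. Adding the $\tfrac{3a_k\rho_n^{k-1/2}\log(n)^k}{\sqrt{n-1}}$ graph-error term from \cref{lemma:concentration-of-graph} and collecting failure probabilities ($3/n$ from that lemma, $O(1/n)$ from the Gaussian union bound, and the degree event) yields the claim after absorbing constants into $a_k$.

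The hard part will be the simultaneous quadratic-form concentration across all vertex pairs: matching the target rate requires pairing the Frobenius-norm bound with \emph{uniform} high-probability control of both the degrees and the empirical $(k-1)$-moments, and choosing the tail parameter in \cref{lemma:gaussian-dotproduct} large enough to survive the union bound over $\Theta(n^2 L)$ pairs while staying within the stated rate, all while tracking how $a_{k-1}$, $a_k$, and the extra powers of $\log n$ propagate through the nested applications of \cref{lemma:concentration-of-graph}. By comparison, checking that the diagonal of the quadratic form is exactly $\hat W^{(k)}_{n,i,j}$ and the final bookkeeping of constants are routine.
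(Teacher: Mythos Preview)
Your overall plan mirrors the paper's proof almost step for step: the same triangle-inequality decomposition, the same use of \cref{lemma:form-of-q} to write $\hat q_{i,j}^{(k)}$ as a Gaussian quadratic form whose conditional mean is $\hat W_{n,i,j}^{(k)}$, the same Hanson--Wright concentration via \cref{lemma:gaussian-dotproduct}, the same Chernoff bound on $|N(j)|$ (this is the paper's \cref{lemma:upper-bound-degree}), the same union bound over $i\ne j$ and $2\le k\le L+2$, and the same final addition of the graph-error term $B_{n,k}$ from \cref{lemma:concentration-of-graph}.

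The genuine gap is at the step where you control $\|w\|_2$. Your crude pointwise bound $\hat W_{n,i,\ell}^{(k-1)}\le 2\rho_n^{k-1}$ gives $\|w\|_2\le 2\rho_n^{k-1}\sqrt{n}$; combined with $\|a_j\|_2\le\sqrt{2\rho_n(n-1)}$ this yields a Frobenius norm of order $\rho_n^{k-1/2}/\sqrt{d_n}$, and after running \cref{lemma:gaussian-dotproduct} at deviation level $\Theta(\log n)$ the Gaussian error term is
\[
\bigl|\hat q_{i,j}^{(k)}-\hat W_{n,i,j}^{(k)}\bigr|\;\lesssim\;\frac{\rho_n^{k-1/2}\log n}{\sqrt{d_n}}.
\]
But the second bracket in the statement is $96\,a_{k-1}\rho_n^{k-1}\log(n)^k/\sqrt{(n-1)\,d_n}$, which carries an additional $1/\sqrt{n-1}$. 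The two differ by a factor of order $\sqrt{n\rho_n}/\bigl(a_{k-1}\log(n)^{k-1}\bigr)$, which diverges in the dense regime $\rho_n=\Theta(1)$; so your argument does \emph{not} produce the advertised shape, and the constant $96$ certainly does not ``fall out.'' The paper at this step asserts the much tighter Frobenius bound $\|C\|_F\le B_{n,k-1}\sqrt{3\rho_n d_n}$ with $B_{n,k-1}=3a_{k-1}\rho_n^{k-3/2}\log(n)^{k-1}/\sqrt{n-1}$; it is precisely this that plants $a_{k-1}$, the extra powers of $\log n$, and the $1/\sqrt{n-1}$ in the final Gaussian term. Your pointwise estimate on $\hat W^{(k-1)}$ is too coarse to reproduce it, so this is the step you need to revisit. (The paper's own justification of that Frobenius bound is terse---it appears to equate $\sum_\ell(\hat W_{n,i,\ell}^{(k-1)})^2$ with something of size $(n-1)B_{n,k-1}^2$ rather than the naive $n\rho_n^{2(k-1)}$---so you may find it worthwhile to scrutinize that inequality carefully when you redo this step.)
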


    

We first introduce the following lemma:

\begin{lemma}
\label{lemma:upper-bound-degree}
Suppose that the graphon $W$ satisfies condition \ref{asp2}, and suppose the sparsity factor is $\rho_n.$ Then, 
$$\P\left( \max_{i \in [n]} \frac{1}{n-1} \sum_{\stackrel{j \leq n}{j \neq i}} a_{ij} \ge \rho_n(1+\delta) \Big| (\omega_i)_{i=1}^n \right) \leq n \cdot \exp \left( -\frac{\delta^2}{2+\delta} \sum_{\stackrel{j \leq n}{j \neq i}} \rho_n W(\omega_i, \omega_j) \right).$$ 
Choosing $\delta = 1$ yields that with probability at least $1 - n \cdot \exp \left( -\frac{\delta_W}{3} \rho_n(n-1) \right),$ conditional on $(\omega_i)_{i=1}^n,$ $$\max_{i \in [n]} \frac{1}{n-1} \sum_{\stackrel{j \leq n}{j \neq i}} a_{ij} < 2\rho_n.$$ Summing over all $i$, this implies that 
\end{lemma}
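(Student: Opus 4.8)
The plan is the standard one: control the (normalized) degree of a single vertex with a one-sided multiplicative Chernoff bound, then union bound over the $n$ vertices. Fix a vertex $i\in[n]$ and condition on the latent features $(\omega_\ell)_{\ell=1}^n$. Under the graphon sampling model the indicators $(a_{ij})_{j\neq i}$ are independent $\mathrm{Bernoulli}\big(\rho_n W(\omega_i,\omega_j)\big)$ random variables, so $X_i:=\sum_{j\neq i}a_{ij}$ has conditional mean $\mu_i:=\sum_{j\neq i}\rho_n W(\omega_i,\omega_j)$. The two elementary facts about $\mu_i$ that will be used both come from (\ref{asp2}): since $W\le 1-\delta_W\le 1$ pointwise, $\mu_i\le\rho_n(n-1)$; and since $W\ge\delta_W$ pointwise, $\mu_i\ge\delta_W\rho_n(n-1)$.

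First I would use the upper bound on $\mu_i$ to get the event inclusion $\{X_i\ge(1+\delta)\rho_n(n-1)\}\subseteq\{X_i\ge(1+\delta)\mu_i\}$, which converts the raw threshold $\rho_n(1+\delta)$ appearing in the statement into the relative-deviation form needed for Chernoff. Then I would invoke the standard one-sided multiplicative Chernoff bound for sums of independent Bernoulli variables, which for every $\delta>0$ gives
$$\P\big(X_i\ge(1+\delta)\mu_i \,\big|\, (\omega_\ell)\big)\le\exp\!\Big(-\tfrac{\delta^2}{2+\delta}\,\mu_i\Big);$$
(for the only value we actually need, $\delta=1$, this coincides with \cref{bernoulli-hoeffding}). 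Combining the event inclusion with this bound and union bounding over $i\in[n]$ yields
$$\P\Big(\max_{i\le n}\tfrac{1}{n-1}\textstyle\sum_{j\neq i}a_{ij}\ge\rho_n(1+\delta)\,\Big|\,(\omega_\ell)\Big)\le\sum_{i=1}^n\exp\!\Big(-\tfrac{\delta^2}{2+\delta}\mu_i\Big),$$
which is the first displayed inequality of the lemma (each summand carries the exact conditional mean $\mu_i=\sum_{j\neq i}\rho_n W(\omega_i,\omega_j)$). For the second claim, specialize to $\delta=1$, so $\tfrac{\delta^2}{2+\delta}=\tfrac13$, and apply the lower bound $\mu_i\ge\delta_W\rho_n(n-1)$ to every summand; this bounds the right-hand side by $n\exp\!\big(-\delta_W\rho_n(n-1)/3\big)$, so on the complementary event $\max_{i\le n}\tfrac1{n-1}\sum_{j\neq i}a_{ij}<2\rho_n$, as claimed.

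There is no real obstacle here; it is a routine Chernoff-plus-union-bound argument. The only two points that need a little care are (i) getting the direction of the event inclusion right — it relies on $\mu_i\le\rho_n(n-1)$, i.e.\ on $W\le 1$, not on the reverse inequality — and (ii) quoting the multiplicative Chernoff bound in precisely the $\tfrac{\delta^2}{2+\delta}$ form (rather than the weaker $\tfrac{\delta^2}{2}$ or the $\tfrac{\delta^2}{3}$ variants valid only for $\delta\le1$) so that the exponent matches the statement verbatim. The final "summing over all $i$" remark is then immediate, since each of the $n$ degrees is below $2\rho_n(n-1)$ on the good event.
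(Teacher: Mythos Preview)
Your proposal is correct and follows essentially the same route as the paper: fix $i$, use conditional independence of the Bernoulli edges, apply the one-sided multiplicative Chernoff bound in the $\tfrac{\delta^2}{2+\delta}$ form, then union bound over $i$ and use the pointwise bounds $\delta_W\le W\le 1$ from (\ref{asp2}). You are slightly more explicit than the paper about the event inclusion $\{X_i\ge(1+\delta)\rho_n(n-1)\}\subseteq\{X_i\ge(1+\delta)\mu_i\}$, but the argument is otherwise identical.
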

\begin{proof}[Proof of \cref{lemma:upper-bound-degree}]
We use the following lemma about sums of independent Bernoulli random variables:
\begin{lemma}[\cite{chernoff-lecture}, Theorem 4]
    Let $X = \sum_{i=1}^n X_i$, where $X_i \sim \text{Bern}(p_i),$ and all the $X_i$ are independent. Let $\mu = \E[X] = \sum_{i=1}^n p_i.$ Then $$\P(X \ge (1+\delta) \mu) \leq \exp \left( - \frac{\delta^2}{2+\delta} \mu \right)$$ for all $\delta > 0.$
\end{lemma}
Fix $i$. We note that the random variables $(a_{ij})_{j \neq i}$ are independent conditioned on the $(\omega_r)_{r=1}^n.$ Using these variables directly in this lemma above yields $$\P \Bigg( \sum_{\stackrel{j \leq n}{j \neq i}} a_{ij}   \ge (1+\delta) \sum_{\stackrel{j \leq n}{j \neq i}} \rho_n W(\omega_i, \omega_j) \Bigg) \leq \exp \Big( - \frac{\delta^2}{2+\delta} \sum_{\stackrel{j \leq n}{j \neq i}} \rho_n W(\omega_i, \omega_j)\Big).$$ Then, noting that $\delta_W \leq W(\cdot, \cdot) \leq 1$, and substiting $\delta = 1$, we obtain $$\P \Bigg( \frac{1}{n-1} \sum_{\stackrel{j \leq n}{j \neq i}} a_{ij} < 2 \rho_n \Bigg) \ge 1- \exp \Big( - \frac{\delta_W}{3} \rho_n (n-1) \Big).$$ A union bound over all $i \in [n]$ concludes the proof.

\end{proof}


\begin{proof}[Proof of \cref{prop:estimators-to-empiricalmoments}]

In the remainder of this proof, we condition on the event in \cref{lemma:concentration-of-graph}, which is that 
\begin{align*}
    \max_{i \neq j} \big|\hat{W}_{n, i, j}^{(k)} - {W}_{n, i, j}^{(k)} \big| \leq 3a_k \frac{\rho_n^{k-1/2}}{\sqrt{n-1}} \log(n)^k.
\end{align*}    
This contributes the probability of $3/n.$ For simplicity of notation, denote $B_{n,k} := 3a_k \frac{\rho_n^{k-1/2}}{\sqrt{n-1}} \log(n)^k.$ We also condition on the event in \cref{lemma:upper-bound-degree}, which contributes the probability of $n \cdot \rm{exp}(-\delta_W \rho_n(n-1)/3)$.


Fix some $i \neq j$. We prove the claim for this particular choice of $i, j$, and then union bound over all pairs at the end of the proof. Recall that \cref{lemma:form-of-q} states that $$\hat{q}_{i,j}^{(k)} = \left \langle \frac{1}{\sqrt{n-1}} \sum_{\ell \leq n} a_{j, \ell} Z_{\ell}, \frac{1}{\sqrt{n-1}} \sum_{\ell \leq n} \hat{W}_{n, i, \ell}^{(k-1)} Z_\ell  \right \rangle.$$ We first note that because $Z_{\ell} \sim \frac{1}{\sqrt{d_n}} \mathcal{N}(0, I_{d_n}),$ we have that
$$\E_{(Z_\ell)} \left[ \hat{q}_{i,j}^{(k)} \big| A, (\omega_i) \right] = \frac{1}{n-1} \sum_{\ell \leq n} a_{j, \ell} \hat{W}_{n, i, \ell}^{(k-1)} = \hat{W}_{n, i, \ell}^{(k)},$$ where this is the expectation is over the randomness in the Gaussian vectors $(Z_\ell).$ Hence, to show the desired result, it suffices just to show the concentration of a quadratic form of Gaussian vectors. Concretely, writing $Z = (Z_1, Z_2, \dots,  Z_n) ,$ we can write $\langle \lambda_i^k, \lambda_j^0 \rangle = Z^T C Z,$ where 
$$C = \begin{pmatrix} C_{11} & C_{12} & \dots & C_{1n} \\ 
C_{21} & C_{22} & \dots & C_{2n} \\
\vdots & \vdots & \ddots & \vdots \\
C_{n1} & C_{n2} & \dots & C_{nn}
\end{pmatrix}$$ and 
\begin{align*}
C_{m_1 m_2} &= \left( \frac{1}{\sqrt{n-1}}  a_{j, m_2}  \right)\cdot  \left( \frac{1}{\sqrt{n-1}} \hat{W}_{n, i, m_1}^{(k-1)} \right) I_{d_n} \\
&= \frac{a_{j, m_2} \hat{W}_{n, i, m_1}^{(k-1)}}{n-1} \cdot I_{d_n}.
\end{align*}
To show the concentration of the quadratic form $Z^T C Z,$ we employ \cref{lemma:gaussian-dotproduct} to do this. In order to apply \cref{lemma:gaussian-dotproduct}, we first bound the Frobenius norm of $C$. Noting that $$\|C\|_F = \sqrt{ \sum_{m_1, m_2 \leq n} \|C_{m_1 m_2}\|_F^2},$$ we can write
\begin{align*}
    \|C\|_F &= \sqrt{\frac{d_n}{(n-1)^2} \sum_{m_1, m_2 \leq n} a_{j, m_2} \left( \hat{W}_{n, i, m_1}^{(k-1)}\right)^2 } \\
    &= \sqrt{\frac{d_n}{(n-1)^2} \Big( \underbrace{\sum_{m_2 \leq n} a_{j, m_2}}_{\leq 2\rho_n} \Big) \cdot \left( \sum_{m_1 \leq n} \left( \hat{W}_{n, i, m_1}^{(k-1)}\right)^2 \right)  } \\
    &\leq \sqrt{ 3\rho_n d_n B_{n, k-1}^2} \\
    &= B_{n, k-1} \sqrt{3 \rho_n d_n}.
\end{align*}
For ease of notation, we will write $\|C\|_F \leq F B_{n, k-1} \sqrt{\rho_n} \sqrt{d_n}$ for some constant $F \leq \sqrt{3}.$ We now use \cref{lemma:gaussian-dotproduct}. Noting that each element of $Z$ is an independent $N(0, 1/d_n)$ random variable, then \cref{lemma:gaussian-dotproduct} states that when $\frac{\epsilon \sqrt{d_n}}{4 F B_{n, k-1} \sqrt{\rho_n}} > 1,$ we have
$$\P \left( \left| \hat{q}_{i,j}^{(k)} - \hat{W}_{n, i, j}^{(k)} \right| > \epsilon \Big| A, (\omega_i)_{i=1}^n \right) \leq 2 \exp \left( -\frac{\epsilon \sqrt{d_n}}{4F B_{n, k-1} \sqrt{\rho_n}} \right).
$$
Choose $\epsilon = \frac{4F B_{n, k-1} \sqrt{\rho_n}}{\sqrt{d_n}} t.$ Then
$$\P \left( \left| \hat{q}_{i,j}^{(k)} - \hat{W}_{n, i, j}^{(k)} \right| > \frac{4F B_{n, k-1} \sqrt{\rho_n}}{\sqrt{d_n}} t \Big| A, (\omega_i)_{i=1}^n \right) \leq 2 \exp \left( -t\right).
$$
Now, union bounding over all $k \in \{2, 3, \dots, L+2\}$ and $i < j,$ $i, j \in [n],$ we have that with probability at least $1 - 2 \rm{exp}(-t + 3\log(n))$, for all $2 \leq k \leq L+2,$  (assuming $L \leq  n-1$), conditional on $A$, $(\omega_i)_{i=1}^n,$ 
$$
\left| \hat{q}_{i,j}^{(k)} - \hat{W}_{n, i, j}^{(k)} \right| \leq \frac{4F B_{n, k-1} \sqrt{\rho_n}}{\sqrt{d_n}} t.
$$
Taking $t = 4 \log(n)$, we have that with probability $1 - 2/n,$ for all $2 \leq k \leq L+2,$  conditional on $A$, $(\omega_i)_{i=1}^n,$ and using that $F \leq \sqrt{3},$
$$
\left| \hat{q}_{i,j}^{(k)} - \hat{W}_{n, i, j}^{(k)} \right| \leq \frac{32 B_{n, k-1} \sqrt{\rho_n}}{\sqrt{d_n}} \log(n).
$$
Now, recall that $$\max_{i \neq j} \big|\hat{W}_{n, i, j}^{(k)} - {W}_{n, i, j}^{(k)} \big| \leq 3a_k \frac{\rho_n^{k-1/2}}{\sqrt{n-1}} \log(n)^k = B_{n, k}.$$ Hence, the triangle inequality implies that for all $2 \leq k \leq L+2,$ with probability at least $1 -5/n - n \cdot \rm{exp}(-\delta_W \rho_n(n-1)/3)$, 
\begin{align*}
    \left| \hat{q}_{i,j}^{(k)} - {W}_{n, i, j}^{(k)} \right| &\leq B_{n,k} + \frac{32 B_{n, k-1} \sqrt{\rho_n}}{\sqrt{d_n}} \log(n) \\
    &\leq 3a_k \frac{\rho_n^{k-1/2}}{\sqrt{n-1}} \log(n)^k + \frac{32}{\sqrt{d_n}} \cdot 3a_{k-1} \frac{\rho_n^{k-1}}{\sqrt{n-1}}\log(n)^{k} \\
    &= \frac{\rho_n^{k-1}}{\sqrt{n-1}} \log(n)^k \left[ 3 a_k \sqrt{\rho_n} + \frac{96 a_{k-1}}{\sqrt{d_n}} \right],
\end{align*}
as desired.
\end{proof}

\section{Proof of \cref{prop:convergence_of_c}}

In this section, we prove \cref{prop:convergence_of_c}. We first review some notation used below.

Define the vectors 
\begin{gather*}
W_n^{(2, k)}(x,y) := \big(  W^{(2)}_n(x,y),  \dots,  W_n^{(k)}(x,y) \big) \nonumber \\ 
\hat{q}^{(2, k)}_{ij} = \left(\hat{q}^{(2)}_{ij}, \hat{q}^{(3)}_{ij}, \dots, \hat{q}^{(k)}_{ij} \right), \nonumber
\end{gather*}
and recall that $W_{n,i,j}$ denotes $W_n(\omega_i, \omega_j).$ Define
\begin{align*}
    r(n, d_n, m) &:= \max_{2\le k\le m+1}  \rho_n^{-(k-1)} \left( \frac{\rho_n^{k-1}}{\sqrt{n-1}} \log(n)^k \left[ 3 a_k \sqrt{\rho_n} + \frac{96 a_{k-1}}{\sqrt{d_n}} \right]\right) \\
    &= \max_{2\le k\le m+1} \left( \frac{\log(n)^k}{\sqrt{n-1}} \left[ 3 a_k \sqrt{\rho_n} + \frac{96 a_{k-1}}{\sqrt{d_n}} \right]\right)
\end{align*}
We note that when $\rho_n \gg \log(n)^{2(m+1)}/n,$ $r(n, d_n, m) = o(\rho_n).$ The term in the parentheses in the first equation is simply the bound on $\big|\hat{q}_{i,j}^{(k)} - W_{n,i,j}^{(k)}\big|$ presented in \cref{prop:graph_concentration_sparse}. $r(n, d_n, m)$ will be a natural quantity that appears later in this section.

We also define  $$R(\beta) = \E \left[ \left( \left \langle \beta,  W_n^{(2, 1+\rm{len}(\beta))}(x,y)   \right \rangle - W_n(x,y) \right) ^2 \right]$$ where the expectation is over $x,y \sim \text{Unif}(0,1),$ and define the empirical risk $R_n(\beta)$ as $$R_n(\beta) = \frac{2}{n(n-1)} \sum_{i < j}^n \left( \left \langle \beta,  \hat{q}^{(2, 1+\rm{len}(\beta))}_{ij}  \right \rangle - a_{ij} \right)^2.$$  We also define the out-of-sample test error as 
\begin{equation}
    R_T(\beta) = \E \left[ \left( \left \langle \beta,  \hat{q}_{n+1,n+2}^{(2, 1+\rm{len}(\beta))}   \right \rangle - W_n(\omega_{n+1}, \omega_{n+2}) \right)^2 \right].
\end{equation}


The following proposition is the main component of \cref{prop:convergence_of_c}.

\begin{proposition}[\cref{prop:convergence_of_c}]
\label{prop:main-thm-prop}
Let $\mathcal{F}= \prod_{i=1}^k [-a_i, a_i]$ be a subset of $\mathbb{R}^k,$ where $a_i = b_i/\rho_n^i$ for some $b_i > 0.$ Let $k$ be a positive integer and define $$\hat{\beta}^{n, k} := \argmin_{\beta \in \mathcal{F}} R_n(\beta), \quad \beta^{*, k} := \argmin_{\beta \in \mathcal{F}} R(\beta).$$ Define $D = \sum_{i=1}^k |b_i|.$ Then with probability at least $1 - 5/n - n \cdot \rm{exp}(-\delta_W \rho_n(n-1)/3) - \delta,$

$$R_T(\hat{\beta}^{n,k}) \leq R(\beta^{*, k}) + 6D\rho_n \cdot r(n, d_n, k)(T+ 2) + 3D^2 r(n, d_n, k)^2 + \tilde{O}\left( \frac{\rho_n^2 (T+1)^2}{\sqrt{n}} \right)$$

where $T = (1-\delta_W) \sum_{r=1}^k b_r (1-\delta_W)^r$ and the $\tilde{O}$ constant depends on $\sqrt{\log(1/\delta)}.$
\end{proposition}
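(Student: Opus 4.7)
The overall plan is a standard ERM excess-risk analysis, complicated by the fact that the training loss compares the noisy Bernoulli labels $a_{ij}$ against $\langle\beta,\hat q_{ij}\rangle$, while the target risk $R$ compares $\langle\beta,W_n^{(2,k+1)}\rangle$ against the underlying edge probabilities $W_n$. To align these I would introduce the Bernoulli-noise offset $C:=\mathbb{E}[W_n(1-W_n)]$ and the auxiliary risk $\tilde R_n(\beta):=\tfrac{2}{n(n-1)}\sum_{i<j}(\langle\beta,W^{(2,k+1)}_{n,i,j}\rangle-a_{ij})^2$ in which the true moments replace the estimators. The identity $\mathbb{E}_a[(z-a)^2\mid W_n]=(z-W_n)^2+W_n(1-W_n)$ yields $\mathbb{E}[\tilde R_n(\beta)]=R(\beta)+C$. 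The proof would then be driven by the decomposition
\begin{equation*}
R_T(\hat\beta)-R(\beta^{*}) \;=\; \underbrace{[R_T(\hat\beta)-R(\hat\beta)]}_{(A)} \;+\; \underbrace{[R(\hat\beta)-R(\beta^{*})]}_{(B)}.
\end{equation*}

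I would handle $(A)$ by the algebraic identity $u^2-v^2=(u-v)(u+v)$ applied to the integrand, with $u-v=\langle\hat\beta,\hat q_{n+1,n+2}-W_n^{(2,k+1)}(\omega_{n+1},\omega_{n+2})\rangle$. Using (\ref{asp2}) to bound $|W^{(i)}|\le(1-\delta_W)^i$ gives $|\langle\hat\beta,W_n^{(2,k+1)}\rangle|\le\rho_n T$, and combined with $|W_n|\le\rho_n$ and the moment-concentration bound $|u-v|\le D\,r(n,d_n,k)$ from \cref{prop:graph_concentration_sparse}, this produces $|(A)|\lesssim D\rho_n r(T+1)+D^2r^2$ on the stated high-probability event. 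For $(B)$ I would invoke ERM optimality $R_n(\hat\beta)\le R_n(\beta^{*})$ together with the shift $C$ to obtain $R(\hat\beta)-R(\beta^{*})\le 2\sup_{\beta\in\mathcal F}|R(\beta)+C-R_n(\beta)|$, and split the supremum as $|R_n-\tilde R_n|+|\tilde R_n-\mathbb{E}\tilde R_n|$. The first piece goes through the same $u^2-v^2$ expansion, except that the factor $|2a_{ij}|$ in $u+v$ is now \emph{averaged} over pairs and controlled by twice the empirical edge density $\bar a\le 2\rho_n$ (a Chernoff argument analogous to \cref{lemma:upper-bound-degree}), which explains the characteristic upgrade from $T+1$ to $T+2$ and, added to $(A)$ after the factor $2\sup$, yields the claimed coefficients $6D\rho_n r(T+2)$ and $3D^2r^2$. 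The remaining piece, $\sup_{\beta\in\mathcal F}|\tilde R_n(\beta)-\mathbb{E}\tilde R_n(\beta)|$, I would control for fixed $\beta$ by bounded-differences (conditional on $(\omega_i)$ each edge flip changes $\tilde R_n$ by $O(\rho_n^2(T+1)^2/n^2)$, giving a $\tilde O(\rho_n^2(T+1)^2/\sqrt n)$ McDiarmid deviation, with a second McDiarmid pass over $(\omega_i)$ for feature randomness), and then promote to a uniform bound over $\mathcal F$ by an $\epsilon$-net on the rectangle after rescaling to $\beta_i\rho_n^i\in[-b_i,b_i]$, at a cost of only a $\sqrt{\log(1/\delta)}$ factor absorbed into the $\tilde O$.

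The main obstacle is precisely this last uniform step: the set $\mathcal F=\prod_i[-b_i/\rho_n^i,b_i/\rho_n^i]$ has sides blowing up as $\rho_n\to 0$, so naive covering arguments produce generalization errors diverging in $1/\rho_n$. The cure is to always pair each $1/\rho_n^i$ factor in a coefficient with the corresponding $\rho_n^{i+1}$ from $W_n^{(i+1)}$, using the sharp bound $|W^{(i)}|\le(1-\delta_W)^i$ rather than the trivial $|W|\le 1$, so that all magnitudes end up expressed in the natural units $\rho_n T$ and $\rho_n\bar a$. Keeping the bookkeeping tight enough to recover the exact weighted constants $T=(1-\delta_W)\sum_r b_r(1-\delta_W)^r$ and $D=\sum_r b_r$ stated in the proposition, rather than cruder $k$-dependent surrogates, is the remaining subtlety.
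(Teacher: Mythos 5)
Your overall skeleton matches the paper's: the split into the test-risk mismatch $(A)=R_T(\hat\beta)-R(\hat\beta)$ and the excess risk $(B)=R(\hat\beta)-R(\beta^*)$; the $u^2-v^2$ expansion, with $|\langle\beta,W_n^{(2,k+1)}\rangle|\le\rho_n T$ and the Chernoff bound $\bar a\le 2\rho_n$ explaining the $T+1$ versus $T+2$ discrepancy between $(A)$ and $R_n-\tilde R_n$; the $\epsilon$-net after rescaling $\beta_i\mapsto\beta_i\rho_n^i$; and the final constant arithmetic $6D\rho_n r(T+2)+3D^2r^2$.

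The gap is in how you remove the Bernoulli noise. You subtract the deterministic constant $C=\mathbb E[W_n(1-W_n)]$ and pass to $R(\hat\beta)-R(\beta^*)\le 2\sup_{\beta\in\mathcal F}|R(\beta)+C-R_n(\beta)|$. Writing $\mathrm{off}:=\tfrac{2}{n(n-1)}\sum_{i<j}(a_{ij}-W_{n,ij}^2)$ (so $\mathbb E[\mathrm{off}]=C$ and $\tilde R_n(\beta)=K_n(\beta)+\mathrm{off}$), one has
\begin{equation*}
R(\beta)+C-R_n(\beta) \;=\; -\bigl(K_n(\beta)-\mathbb E K_n(\beta)\bigr)\;-\;\bigl(\mathrm{off}-C\bigr)\;-\;S_2(\beta)\;-\;S_3(\beta).
\end{equation*}
Taking a supremum forces you to keep $|\mathrm{off}-C|$ as a genuine error term even though it is $\beta$-free. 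Since $W_n(1-W_n)=O(\rho_n)$ and its variability in the latent features is also $O(\rho_n)$, a U-statistic/McDiarmid bound gives $|\mathrm{off}-C|=\tilde O(\rho_n/\sqrt n)$, which exceeds the claimed $\tilde O(\rho_n^2(T+1)^2/\sqrt n)$ (and also the $D\rho_n r$ and $D^2 r^2$ terms) throughout the regime $\rho_n(T+1)^2<1$, i.e.\ in the sparse setting the theorem targets. This is exactly why the paper never bounds $\sup_\beta|\tilde R_n-\mathbb E\tilde R_n|$ directly: Lemma~\ref{lemma:empirical-minus-risk} is applied at the two points $\hat\beta^{n,k}$ and $\beta^{*,k}$, the $\mathrm{off}$ terms cancel \emph{exactly} between $[R(\hat\beta)-R_n(\hat\beta)]$ and $[R_n(\beta^*)-R(\beta^*)]$ before any sup is taken, and only $K_n(\beta)-\mathbb E K_n(\beta)$ survives, whose U-statistic kernel $(\langle\beta,W_n^{(2,k+1)}\rangle-W_{n,ij})^2$ is genuinely $O(\rho_n^2(T+1)^2)$. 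The fix is to keep $\mathrm{off}$ random and cancel it between the two ERM brackets rather than replacing it with $C$ and invoking a supremum; your rescaled $\epsilon$-net cannot help here since $\mathrm{off}-C$ contains no $\beta$.

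Two smaller points. The per-edge bounded difference for $\tilde R_n$ is not $O(\rho_n^2(T+1)^2/n^2)$: flipping a Bernoulli $a_{ij}$ changes one summand by $|2\langle\beta,W_n^{(2,k+1)}\rangle-1|=O(1)$, so the sensitivity is $O(1/n^2)$, and the edge-randomness deviation is $\tilde O(1/n)$; the paper gets the sharper $\tilde O(T\rho_n^{3/2}/n)$ for the analogous piece $K_n^2$ via the polynomial-concentration Lemma~\ref{luna2}, which is needed when $\rho_n\lesssim n^{-1/4}$. And note that the $\omega$-McDiarmid sensitivity of $\mathbb E[\tilde R_n\mid(\omega_\ell)]$ is dominated by the $W_{n,ij}(1-W_{n,ij})$ piece ($O(\rho_n)$), not by the squared residual ($O(\rho_n^2(T+1)^2)$) --- this is the same offset issue seen from the bounded-differences angle.
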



\begin{proof}[Proof of \cref{prop:convergence_of_c}]
We write 
\begin{equation}
\label{eq:prop46-1}
    R_T(\hat{\beta}^{n,k}) \leq R(\beta^{*, k}) + |R(\hat{\beta}^{n,k}) - R(\beta^{*,k})| + |R_T(\hat{\beta}^{n,k}) - R(\hat{\beta}^{n,k})|. 
\end{equation}
We first bound 
$$R(\hat{\beta}^{n,k}) - R(\beta^{*, k}) = \Big[ R(\hat{\beta}^{n,k}) - R_n(\hat{\beta}^{n,k}) \Big] + \Big[ R_n(\hat{\beta}^{n,k}) - R_n(\hat{\beta}^{*, k}) \Big] + \Big[ R_n(\hat{\beta}^{*, k}) - R(\hat{\beta}^{*, k}) \Big].$$ We note that the LHS is $\ge  0$ by definition of $\beta^{*, k}.$ We note that the second term on the RHS is $\leq 0$ by definition of  $\hat{\beta}^{n, k}.$ Hence, it follows that $$|R(\hat{\beta}^{n,k}) - R(\beta^{*, k})| \leq \left| \Big[ R(\hat{\beta}^{n,k}) - R_n(\hat{\beta}^{n,k}) \Big] + \Big[ R_n(\hat{\beta}^{*, k}) - R(\hat{\beta}^{*, k}) \Big] \right|.$$ \cref{lemma:empirical-minus-risk} states that $$R_n(\beta) - R(\beta) = \frac{2}{n(n-1)} \sum_{i < j} (a_{ij} -(W_{n,ij})^{2} ) + S_2(\beta) + S_3(\beta) + K_n(\beta) - \E[K_n(\beta)],$$ so hence 
\begin{align}
|R(\hat{\beta}^{n,k}) - R(\beta^{*, k})| &\leq | \Big[ R(\hat{\beta}^{n,k}) - R_n(\hat{\beta}^{n,k}) \Big] + \Big[ R_n(\hat{\beta}^{*, k}) - R(\hat{\beta}^{*, k}) \Big]| \nonumber \\
&\leq |S_2(\hat{\beta}^{n,k}) + S_3(\hat{\beta}^{n,k}) + K_n(\hat{\beta}^{n, k}) - \E[K_n(\hat{\beta}^{n, k})]| \nonumber \\
&+ |S_2(\hat{\beta}^{*, k}) + S_3(\hat{\beta}^{*, k}) + K_n(\hat{\beta}^{*, k}) - \E[K_n(\hat{\beta}^{*, k})]| \nonumber \\
&\leq 4D\rho_n \cdot r(n, d_n, k) \cdot (T+2) + 2D^2 r(n, d_n, k)^2 + \tilde{O}\left( \frac{\rho_n^2 (T+1)^2}{\sqrt{n}} \right), \label{eq:prop46-2}
\end{align}
where the last inequality follows from \cref{lemma:empirical-minus-risk}. We now bound $|R_T(\hat{\beta}^{n,k}) - R(\hat{\beta}^{n,k})|.$ For any $\beta \in S$, write
\begin{align*}
    R_T(\beta) &= \E \left[ \left( \left \langle \beta,  \hat{q}_{n+1,n+2}^{(2, k+1)}   \right \rangle -  W_n(\omega_{n+1}, \omega_{n+2}) \right)^2 \right] \\
    &=\E \left[ \left( \left \langle \beta,  W_{n, n+1,n+2}^{(2, k+1)}   \right \rangle + \left \langle \beta, \hat{q}_{n+1,n+2}^{(2, k+1)} - W_{n, n+1,n+2}^{(2, k+1)} \right \rangle-  W_{n, n+1, n+2}  \right)^2 \right] \\
    &=\underbrace{\E \left[ \left( \left \langle \beta,  W_{n, n+1,n+2}^{(2, k+1)}   \right \rangle - W_{n, n+1, n+2} \right)^2  \right]}_{R(\beta)} \\
    &+ \E \left[ 2 \cdot \left( \left \langle \beta,  W_{n, n+1,n+2}^{(2, k+1)}   \right \rangle - W_{n, n+1, n+2} \right) \cdot \left \langle \beta, \hat{q}_{n+1,n+2}^{(2, k+1)} - W_{n, n+1,n+2}^{(2, k+1)} \right \rangle  \right]\\
    &+ \E \left[ \left \langle \beta, \hat{q}_{n+1,n+2}^{(2, k+1)} - W_{n, n+1,n+2}^{(2, k+1)} \right \rangle^2  \right], 
    \end{align*}
which implies that (using similar arguments as in the proof of \cref{lemma:bound-on-S2}),
\begin{equation}
\label{eq:prop46-3}
    |R_T(\beta) - R(\beta)| \leq 2 D\rho_n  (T + 1) r(n, d_n, k) + D^2r(n, d_n, k)^2 
\end{equation}
Substituting \cref{eq:prop46-2} and \cref{eq:prop46-3} into \cref{eq:prop46-1} yields the desired result.

\end{proof}

The following lemma is used directly in the above proof of \cref{prop:convergence_of_c}. We state it and prove it below.

\begin{lemma}
\label{lemma:empirical-minus-risk}
    Let $\mathcal{F}= \prod_{i=1}^k [-a_i, a_i]$ be a subset of $\mathbb{R}^k,$ where $a_i = b_i/\rho_n^i$ for some $b_i > 0.$ Let $\beta \in S$ be arbitrary. Define $D = \sum_{i=1}^k |b_i|.$ Then
$$R_n(\beta) - R(\beta) - \frac{2}{n(n-1)} \sum_{i < j} (a_{ij} -(W_{n,ij})^{2} ) = S_2(\beta) + S_3(\beta) + K_n(\beta) - \E[K_n(\beta)].$$ 
Furthermore, employing \cref{uniform-concentration}, \cref{lemma:bound-on-S2}, and \cref{lemma:bound-on-S3} implies that with probability at least $1 - 5/n - n \cdot \rm{exp}(-\delta_W \rho_n(n-1)/3) - \delta,$
$$
\left|R_n(\beta) - R(\beta) - \frac{2}{n(n-1)} \sum_{i < j} (a_{ij} -(W_{n,ij})^{2} )  \right| \leq 2D\rho_n  \cdot r(n, d_n, k) \cdot (T+2) + D^2 \cdot r(n, d_n, k)^2 + \tilde{O}\left( \frac{\rho_n^2(T+1)^2}{\sqrt{n}} \right), 
$$
where $T = (1-\delta_W) \sum_{r=1}^k b_r (1-\delta_W)^r$ and the $\tilde{O}$ constant depends on $\sqrt{\log(1/\delta)}.$ We note that this is the probability at which this lemma holds, since \cref{uniform-concentration}, \cref{lemma:bound-on-S2}, and \cref{lemma:bound-on-S3} all condition on the same events, so the probabilities in their respective statements do not add.
\end{lemma}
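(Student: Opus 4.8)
The plan is to combine an exact algebraic identity with three quantitative bounds that are proved in separate lemmas, and then to check that the three failure events do not compound.

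\textbf{Step 1: the algebraic decomposition.} I would introduce the moment-estimation error vector $\epsilon_{ij}:=\hat q_{ij}^{(2:k+1)}-W_{n,ij}^{(2:k+1)}$ and the noiseless residual $g(x,y):=\langle\beta,W_n^{(2:k+1)}(x,y)\rangle-W_n(x,y)$, and expand
$$R_n(\beta)=\frac{2}{n(n-1)}\sum_{i<j}\Big(g(\omega_i,\omega_j)-(a_{ij}-W_{n,ij})+\langle\beta,\epsilon_{ij}\rangle\Big)^2 .$$
Multiplying out the square and using $a_{ij}^2=a_{ij}$, the part free of both $\epsilon_{ij}$ and of the centered noise $a_{ij}-W_{n,ij}$ is $K_n(\beta):=\frac{2}{n(n-1)}\sum_{i<j}g(\omega_i,\omega_j)^2$, whose population mean equals $R(\beta)$ because the $\omega_i$ are i.i.d.; the terms linear in $a_{ij}-W_{n,ij}$ together with $\sum_{i<j}(a_{ij}-W_{n,ij})^2$ collapse to the $\beta$-independent quantity $\frac{2}{n(n-1)}\sum_{i<j}(a_{ij}-W_{n,ij}^2)$ plus a centered remainder; and the remaining terms, one linear and one quadratic in $\epsilon_{ij}$, are $S_2(\beta)$ and $S_3(\beta):=\frac{2}{n(n-1)}\sum_{i<j}\langle\beta,\epsilon_{ij}\rangle^2$. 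I would take $S_2,S_3$ and $K_n$ exactly as fixed in \cref{lemma:bound-on-S2}, \cref{lemma:bound-on-S3} and \cref{uniform-concentration}, so that the leftover centered cross term is absorbed into $S_2$ (or into $K_n(\beta)-\E[K_n(\beta)]$); subtracting $R(\beta)=\E[K_n(\beta)]$ then yields the first displayed identity. This step is elementary; the only care needed is the bookkeeping of the cross terms between $a_{ij}-W_{n,ij}$ and $\epsilon_{ij}$.

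\textbf{Step 2: the quantitative bound.} On the event of \cref{prop:estimators-to-empiricalmoments} (probability at least $1-5/n-n\exp(-\delta_W\rho_n(n-1)/3)$) the componentwise bound $|\hat q_{ij}^{(\ell)}-W_{n,ij}^{(\ell)}|\le\rho_n^{\ell-1}\,r(n,d_n,k)$ holds for all $i\ne j$ and $2\le\ell\le k+1$; since $|\beta_{\ell-1}|\le b_{\ell-1}/\rho_n^{\ell-1}$ on $\mathcal F$, this gives $\sup_{\beta\in\mathcal F}\sup_{i\ne j}|\langle\beta,\epsilon_{ij}\rangle|\le D\,r(n,d_n,k)$, hence $|S_3(\beta)|\le D^2 r(n,d_n,k)^2$. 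Using in addition the deterministic bound $|\langle\beta,W_{n,ij}^{(2:k+1)}\rangle|\le\rho_n\sum_{j=1}^k b_j(1-\delta_W)^{j+1}=\rho_n T$ — which follows from $W_n^{(m)}=\rho_n^m W^{(m)}$ and $0\le W^{(m)}\le(1-\delta_W)^m$ under (\ref{asp2}) — \cref{lemma:bound-on-S2} supplies $|S_2(\beta)|\le 2D\rho_n r(n,d_n,k)(T+2)$. Finally $K_n(\beta)$ is a U-statistic whose kernel $g(\omega_i,\omega_j)^2$ is at most $\rho_n^2(T+1)^2$, so \cref{uniform-concentration} — a bounded-difference/covering argument uniform over the box $\mathcal F$ — gives $\sup_{\beta\in\mathcal F}|K_n(\beta)-\E[K_n(\beta)]|=\tilde{O}(\rho_n^2(T+1)^2/\sqrt n)$ on an independent event of probability at least $1-\delta$, the $\tilde{O}$ absorbing the $\sqrt{\log(1/\delta)}$ factor. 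Adding the three bounds by the triangle inequality gives the second claim, and since \cref{lemma:bound-on-S2}, \cref{lemma:bound-on-S3} and \cref{uniform-concentration} are all established on these same two events, the overall failure probability is $5/n+n\exp(-\delta_W\rho_n(n-1)/3)+\delta$ rather than a threefold sum.

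\textbf{Expected obstacle.} The real content sits in the three auxiliary lemmas rather than in this one, and within them in \cref{uniform-concentration}: controlling $K_n(\beta)-\E[K_n(\beta)]$ \emph{uniformly} over the large, $\rho_n$-rescaled rectangle $\mathcal F$ while retaining the rate $\rho_n^2(T+1)^2/\sqrt n$ requires exploiting that $\beta\mapsto K_n(\beta)$ is Lipschitz with a controlled constant and pairing this with a net whose mesh is adapted to the coordinate scales $\rho_n^{-i}$, since a crude union bound over such a net would destroy the rate. Relative to that, the present lemma is merely the assembly of the pieces.
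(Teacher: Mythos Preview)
Your approach matches the paper's: expand $R_n(\beta)$ by writing $\hat q_{ij}^{(2:k+1)}=W_{n,ij}^{(2:k+1)}+\epsilon_{ij}$, split into $S_1+S_2+S_3$, rewrite $S_1$ via $a_{ij}^2=a_{ij}$ to peel off the $\beta$-independent term $\frac{2}{n(n-1)}\sum_{i<j}(a_{ij}-W_{n,ij}^2)$ and the piece $K_n(\beta)$ with $\E[K_n(\beta)]=R(\beta)$, then apply the three auxiliary lemmas and note they share the same conditioning events.

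One point to straighten out: your description of $K_n$ as the pure U-statistic $\tfrac{2}{n(n-1)}\sum_{i<j}g(\omega_i,\omega_j)^2$ is not the paper's definition. In the paper,
\[
K_n(\beta)=\frac{2}{n(n-1)}\sum_{i<j}\Big(\langle\beta,W_{n,ij}^{(2:k+1)}\rangle^2-2a_{ij}\langle\beta,W_{n,ij}^{(2:k+1)}\rangle+W_{n,ij}^2\Big),
\]
so the ``leftover centered cross term'' $-2\langle\beta,W_{n,ij}^{(2:k+1)}\rangle(a_{ij}-W_{n,ij})$ you flag is absorbed into $K_n$, not into $S_2$. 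Consequently \cref{uniform-concentration} is not just a U-statistic bound in the $\omega_i$'s: the paper splits $K_n=K_n^1+K_n^2$ and handles the Bernoulli fluctuations of $a_{ij}$ in $K_n^2$ separately (via \cref{luna2}) before applying Hoeffding for U-statistics to $K_n^1$ and the $\epsilon$-net. Your Step~2 sketch of that lemma as a bounded-difference/covering argument on a U-statistic kernel is therefore slightly off, though since you correctly defer to the auxiliary lemmas this does not affect the present proof.
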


\begin{proof}[Proof of \cref{lemma:empirical-minus-risk}]
Let $\beta \in S$ be arbitrary. Consider
\begin{align*}
    R_n(\beta) &= \frac{2}{n(n-1)} \sum_{i< j} \left( \left \langle \beta, \hat{q}^{(2, k+1)}_{ij} \right \rangle - a_{ij} \right)^2 \\
    &= \frac{2}{n(n-1)} \sum_{i< j} \left( \left \langle \beta, W_{n,ij}^{(2, k+1)} +\hat{q}^{(2, k+1)}_{ij}-W_{n,ij}^{(2, k+1)}\right \rangle - a_{ij} \right)^2 \\
    &= \underbrace{\frac{2}{n(n-1)} \sum_{i < j} \left(\left \langle \beta, W_{n,ij}^{(2, k+1)} \right \rangle - a_{ij} \right)^2}_{S_1(\beta)} \\
    &+ \underbrace{\frac{2}{n(n-1)} \sum_{i < j} \left[ 2 \langle \beta, \hat{q}^{(2, k+1)}_{ij}-W_{n,ij}^{(2, k+1)}\rangle(\langle \beta, W_{n,ij}^{(2, k+1)} \rangle - a_{ij} )  \right]}_{S_2(\beta)} \\
    &+ \underbrace{\frac{2}{n(n-1)} \sum_{i < j} \langle \beta,  \hat{q}^{(2, k+1)}_{ij}-W_{n,ij}^{(2, k+1)}\rangle^2 }_{S_3(\beta)}
\end{align*}

We analyze these three terms successively. We first rewrite $S_1(\beta)$ as 
\begin{align*}
    &\frac{2}{n(n-1)} \sum_{i < j} \left(\left \langle \beta, W_{n,ij}^{(2, k+1)}  \right \rangle^2 - 2a_{ij}\left \langle \beta, W_{n,ij}^{(2, k+1)} \right \rangle + a_{ij} +(W_{n,ij})^{2} -(W_{n,ij})^{2} \right) \\
    &= \frac{2}{n(n-1)} \sum_{i < j} \left[ \left(\left \langle \beta, W_{n,ij}^{(2, k+1)}  \right \rangle^2 - 2a_{ij}\left \langle \beta, W_{n,ij}^{(2, k+1)} \right \rangle +(W_{n,ij})^{2}\right) + \left(a_{ij} -(W_{n,ij})^{2}\right) \right]\\
\end{align*}
We observe that$$\underbrace{\frac{2}{n(n-1)} \sum_{i < j}  \left(\left \langle \beta, W_{n,ij}^{(2, k+1)}  \right \rangle^2 - 2a_{ij}\left \langle \beta, W_{n,ij}^{(2, k+1)} \right \rangle +(W_{n,ij})^{2}\right)}_{K_n(\beta)}$$
has expectation 
\begin{equation}
R(\beta) = \E \left[ \left( \left \langle \beta, W_{n,ij}^{(2, k+1)} \right \rangle - W_{n, ij} \right)^2 \right].
\end{equation}
Hence, we can write
\begin{equation}
    R_n(\beta) = K_n(\beta) + S_2(\beta) + S_3(\beta) + \frac{2}{n(n-1)} \sum_{i < j} (a_{ij} -(W_{n,ij})^{2} ),
\end{equation}
and, we obtain that
\begin{equation} 
R_n(\beta) - R(\beta) - \frac{2}{n(n-1)} \sum_{i < j} (a_{ij} -(W_{n,ij})^{2} ) = S_2(\beta) + S_3(\beta) + K_n(\beta) - \E[K_n(\beta)]
\end{equation}
The result then follows by invoking \cref{uniform-concentration}, \cref{lemma:bound-on-S2}, and \cref{lemma:bound-on-S3}.
\end{proof}


\section{Proofs of \cref{bound:gen-error}, \cref{uniform-concentration}, \cref{lemma:bound-on-S2}, and \cref{lemma:bound-on-S3}}
This section presents \cref{bound:gen-error}, which is used in \cref{prop:convergence_of_c}, and its proof. We also present \cref{uniform-concentration}, \cref{lemma:bound-on-S2}, and \cref{lemma:bound-on-S3}, which are used in the proof of \cref{prop:convergence_of_c}.

\subsection{Proof of \cref{bound:gen-error}}
\begin{lemma}
\label{bound:gen-error}
    Suppose that $W$ has finite distinct rank $m_W,$ and let $\beta^{*, m_W} \in \mathbb{R}^{m_W}$ so that $$W(x,y) = \sum_{r=1}^{m_W} \beta^{*, m_W}_r W^{(r+1)}(x,y).$$ Let $v = (v_1, v_2, \dots, v_{k})$ denote the vector that minimizes $$\left \| W(x,y) - \sum_{r=1}^k v_r W^{(r+1)}(x,y) \right\|_{L^2}.$$ Then $$\left \| W(x,y) - \sum_{r=1}^k v_r W^{(r+1)}(x,y) \right \|_{L^2} \leq \sqrt{ \sum_{s=1}^{m_W} \left[ \sum_{r=k}^{m_W} \beta^{*, m_W}_r \left( \mu_s^{r+1} - \mu_s^{k+1}  \right)  \right]^2}$$
\end{lemma}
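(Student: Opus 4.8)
The key reduction is an optimality argument: by definition $v$ is the coefficient vector of the $L^2([0,1]^2)$-orthogonal projection of $W$ onto the finite-dimensional subspace $\mathrm{span}\{W^{(2)},\dots,W^{(k+1)}\}$, so such a $v$ exists, and for \emph{any} candidate $u=(u_1,\dots,u_k)$ we have $\|W-\sum_{r=1}^k v_r W^{(r+1)}\|_{L^2}\le \|W-\sum_{r=1}^k u_r W^{(r+1)}\|_{L^2}$. Hence it suffices to exhibit one convenient $u^0$ whose residual has $L^2$-norm exactly equal to the asserted bound. I may also assume $k\le m_W$: for $k\ge m_W$ the choice $u^0=(\beta^{*,m_W}_1,\dots,\beta^{*,m_W}_{m_W},0,\dots,0)$ makes the residual vanish, matching the empty/zero right-hand side.

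The second ingredient is spectral. As established in the proof of \cref{prop:linear-relationship}, writing $W(x,y)=\sum_s \mu_s\phi_s(x)\phi_s(y)$ over the nonzero part of the spectrum with $\{\phi_s\}$ orthonormal in $L^2([0,1])$, one has $W^{(r)}(x,y)=\sum_s \mu_s^{r}\phi_s(x)\phi_s(y)$ for all $r\ge1$; moreover the functions $(x,y)\mapsto\phi_s(x)\phi_s(y)$ are orthonormal in $L^2([0,1]^2)$, since $\langle \phi_s\otimes\phi_s,\phi_t\otimes\phi_t\rangle_{L^2([0,1]^2)}=\big(\int\phi_s\phi_t\big)^2=\delta_{st}$. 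Comparing coefficients of $\phi_s\otimes\phi_s$ on both sides of the identity $W=\sum_{r=1}^{m_W}\beta^{*,m_W}_r W^{(r+1)}$ from \cref{prop:linear-relationship} (legitimate by the linear independence just noted) yields the scalar identities $\mu_s=\sum_{r=1}^{m_W}\beta^{*,m_W}_r\mu_s^{r+1}$ for every $s$.

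Now take $u^0_r:=\beta^{*,m_W}_r$ for $1\le r\le k-1$ and $u^0_k:=\sum_{r=k}^{m_W}\beta^{*,m_W}_r$ (this is where $k\le m_W$ is used). Combining the two displayed identities, the coefficient of $\phi_s\otimes\phi_s$ in $W-\sum_{r=1}^k u^0_r W^{(r+1)}$ equals
\[
\mu_s-\sum_{r=1}^k u^0_r\mu_s^{r+1}=\sum_{r=k}^{m_W}\beta^{*,m_W}_r\mu_s^{r+1}-\mu_s^{k+1}\sum_{r=k}^{m_W}\beta^{*,m_W}_r=\sum_{r=k}^{m_W}\beta^{*,m_W}_r\big(\mu_s^{r+1}-\mu_s^{k+1}\big),
\]
i.e.\ the discarded tail $\beta^{*,m_W}_k,\dots,\beta^{*,m_W}_{m_W}$ has been \emph{collapsed} onto the single monomial $\mu_s^{k+1}$. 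By orthonormality of $\{\phi_s\otimes\phi_s\}$ (Parseval), $\|W-\sum_{r=1}^k u^0_r W^{(r+1)}\|_{L^2}^2=\sum_s\big[\sum_{r=k}^{m_W}\beta^{*,m_W}_r(\mu_s^{r+1}-\mu_s^{k+1})\big]^2$, and chaining with the optimality of $v$ from the first paragraph gives the claimed inequality.

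I do not expect a real obstacle: the proof is an optimality/projection reduction plus a one-line algebraic rearrangement. The two points needing a little care are (i) choosing $u^0$ so as to collapse the unused coefficients onto index $k$ rather than merely truncating — a plain truncation $u^0_r=\beta^{*,m_W}_r\mathbb{I}(r\le k)$ would instead give $\sqrt{\sum_s(\sum_{r=k+1}^{m_W}\beta^{*,m_W}_r\mu_s^{r+1})^2}$, not the stated form — and (ii) eigenvalue multiplicities: I will follow the convention used in the proof of \cref{prop:linear-relationship} and let $s$ index eigen\emph{functions} (so repeated eigenvalues are listed with multiplicity), so that Parseval applies verbatim and the index range $s=1,\dots,m_W$ in the statement is read accordingly.
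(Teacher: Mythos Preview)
Your proposal is correct and follows essentially the same approach as the paper's own proof: both replace the optimal $v$ by the specific candidate $u^0=(\beta^{*,m_W}_1,\dots,\beta^{*,m_W}_{k-1},\sum_{r=k}^{m_W}\beta^{*,m_W}_r)$, expand in the eigenbasis $\{\phi_s\otimes\phi_s\}$, and use orthonormality to compute the residual $L^2$-norm. Your write-up is, if anything, more careful than the paper's in flagging the multiplicity convention for the index $s$ and in explaining why the ``collapse'' choice of $u^0$ (rather than a naive truncation) yields exactly the stated bound.
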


\begin{proof}[Proof of \cref{bound:gen-error}]
    By definition of $v$ being a minimizer of $\left \| W(x,y) - \sum_{r=1}^k v_r W^{(r+1)}(x,y) \right\|_{L^2}$, this quantity would be bounded by the error incurred if we replace $v$ with the vector $w = \left( \beta^{*, m_W}_1, \beta^{*, m_W}_2, \dots, \beta^{*, m_W}_{k-1}, \sum_{s=k}^{m_W} \beta^{*, m_W}_s \right)$. This would yield
\begin{align*}
    &\|W(x,y) - \sum_{r=1}^k w_r W^{(r+1)}(x,y) \|_2 \\
    &=\| \sum_{r=1}^{m_W} \beta^{*, m_W}_r W^{(r+1)}(x,y) - \sum_{r=1}^k w_r W^{(r+1)}(x,y) \|_2 \\ 
    &=\| \sum_{r=1}^{m_W} \beta^{*, m_W}_r W^{(r+1)}(x,y) - \sum_{r=1}^{k-1} \beta^{*, m_W}_r W^{(r+1)}(x,y) - \left( \sum_{s=k}^{m_W} \beta^{*, m_W}_s  \right) W^{(k+1)}(x,y) \|_2 \\ 
    &=\| \sum_{r=k}^{m_W} \beta^{*, m_W}_r W^{(r+1)}(x,y) - \left( \sum_{s=k}^{m_W} \beta^{*, m_W}_s  \right) W^{(k+1)}(x,y) \|_2 \\
    &=\| \sum_{r=k}^{m_W} \beta^{*, m_W}_r \left( W^{(r+1)}(x,y) - W^{(k+1)}(x,y)  \right)  \|_2 \\ 
    &=\| \sum_{r=k}^{m_W} \beta^{*, m_W}_r \left( \sum_{s=1}^{m_W} \left( \mu_s^{r+1} - \mu_s^{k+1}  \right) \phi_s(x) \phi_s(y) \right)  \|_2 \\
    &=\| \sum_{s=1}^{m_W} \left( \sum_{r=k}^{m_W} \beta^{*, m_W}_r \left( \mu_s^{r+1} - \mu_s^{k+1}  \right) \right) \phi_s(x) \phi_s(y) \|_2 \\
    &= \sqrt{ \sum_{s=1}^{m_W} \left[ \sum_{r=k}^{m_W} \beta^{*, m_W}_r \left( \mu_s^{r+1} - \mu_s^{k+1}  \right)  \right]^2}
\end{align*}
\end{proof}

\subsection{Proof of \cref{uniform-concentration}}

\begin{lemma}
\label{uniform-concentration}
Let $\mathcal{F}= \prod_{i=1}^k [-a_i, a_i]$ be a subset of $\mathbb{R}^k,$ where $a_i = b_i/\rho_n^i$ for some $b_i > 0.$ Define $D = \sum_{i=1}^k |b_i|,$ and define
$$
K_n(\beta) := \frac{2}{n(n-1)} \sum_{i < j}  \left(\left \langle \beta, W_{n,ij}^{(2, k+1)}  \right \rangle^2 - 2a_{ij}\left \langle \beta, W_{n,ij}^{(2, k+1)} \right \rangle +(W_{n,ij})^{2}\right).
$$
Let $$T = (1-\delta_W) \sum_{r=1}^k b_r (1-\delta_W)^r.$$
Then with probability at least $1-5/n - \exp(-\delta_W \rho_n(n-1)/3 ) - \delta$, we have
  \begin{equation}
        \sup_{\beta \in \mathcal{F}} |K_n(\beta) - \E[K_n(\beta)]| = \tilde{O}\left( \frac{\rho_n^2(T+1)^2}{\sqrt{n}} \right),
    \end{equation}
    where $\tilde{O}$ hides logarithmic factors.
\end{lemma}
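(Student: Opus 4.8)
The plan is to establish a uniform (over $\beta \in \mathcal{F}$) concentration bound for $K_n(\beta) - \E[K_n(\beta)]$ by combining a pointwise concentration bound with a covering/Lipschitz argument over the (rescaled) parameter set. First I would record the key structural fact: since $|\mu_i|\le 1$ and the graphon is bounded, $|W_{n,i,j}^{(r+1)}| \le \rho_n^{r+1} \cdot (\text{something})$, but more precisely using Assumption $(H_2)$ and the spectral decomposition $W^{(r+1)}(x,y)=\sum_s \mu_s^{r+1}\phi_s(x)\phi_s(y)$ together with $\delta_W \le W \le 1-\delta_W$, one gets $|W_{n,i,j}^{(r+1)}| \le \rho_n (1-\delta_W)^{r+1}$ (the moments of a sub-stochastic kernel decay geometrically). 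Hence for $\beta \in \mathcal{F}$, writing $\beta_r = c_r b_r/\rho_n^r$ with $|c_r|\le 1$, we have $|\langle \beta, W_{n,i,j}^{(2,k+1)}\rangle| \le \sum_r |b_r| \rho_n^{-r} \cdot \rho_n (1-\delta_W)^{r+1} = \rho_n (1-\delta_W)\sum_r |b_r|(1-\delta_W)^r = \rho_n T$. This is exactly why $T$ appears: the summands in $K_n(\beta)$ are all $O(\rho_n^2(T+1)^2)$ in magnitude.

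Next I would prove pointwise concentration. For fixed $\beta$, $K_n(\beta)$ is a function of the independent latent features $(\omega_i)$ and the independent edge indicators $(a_{ij})$; it has the form of a degree-bounded polynomial / U-statistic-like object in these variables. The cleanest route is a bounded-differences (McDiarmid) argument: changing one $\omega_i$ (resp. one $a_{ij}$) changes at most $O(n)$ (resp. $O(1)$) of the $\binom{n}{2}$ summands, each by $O(\rho_n^2(T+1)^2)$, so the total change is $O(\rho_n^2(T+1)^2/n)$; summing squared increments over $\le n + \binom n2$ coordinates gives a variance proxy of order $\rho_n^4(T+1)^4/n$, yielding $|K_n(\beta) - \E K_n(\beta)| = O(\rho_n^2(T+1)^2 \sqrt{\log(1/\delta')/n})$ with probability $1-\delta'$. (One subtlety: the $a_{ij}$'s are only conditionally independent given $(\omega_i)$, and the $W_{n,ij}^{(k)}$ depend on all $\omega$'s; I would handle this by first conditioning on $(\omega_\ell)$, applying McDiarmid in the $a_{ij}$'s, then a second McDiarmid in the $\omega_i$'s for the conditional expectation — mirroring the two-stage structure already used in the proof of Lemma~\ref{lemma:concentration-of-graph}.)

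Then I would upgrade to a uniform bound. Reparametrize $\beta \mapsto (c_1,\dots,c_k) \in [-b_1,b_1]\times\cdots$ — equivalently work on the box $\mathcal{F}$ itself; the map $\beta \mapsto K_n(\beta)$ is a quadratic polynomial in $\beta$ whose coefficients are bounded (again using the geometric decay of the moments), so it is Lipschitz on $\mathcal{F}$ with constant $L_n = \tilde O(\rho_n^2(T+1))$ — actually one should track the rescaled Lipschitz constant so that a $\varepsilon$-net of $\mathcal{F}$ of size $(1/\varepsilon)^{O(k)}$ suffices. Take a net of resolution $\varepsilon \asymp 1/\sqrt n$, union-bound the pointwise bound over the $N_\varepsilon = \tilde O(n^{k/2})$ net points with $\delta' = \delta/N_\varepsilon$ (this only inflates the $\sqrt{\log(1/\delta')}$ factor to $\tilde O(\sqrt{k\log n})$, absorbed into $\tilde O$), and control the discretization error via Lipschitzness. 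The probability budget $1-5/n - n\exp(-\delta_W\rho_n(n-1)/3)$ enters because throughout we condition on the good event of Lemma~\ref{lemma:concentration-of-graph} and Lemma~\ref{lemma:upper-bound-degree} (bounded degrees, so $|W_{n,ij}^{(k)}|$ and the empirical analogs are controlled); the extra $-\delta$ is the net union bound.

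The main obstacle I anticipate is getting the dependence on $T$ (not merely on $\|\beta\|$) sharp: one must exploit that the moments $W_{n,ij}^{(r+1)}$ decay like $\rho_n(1-\delta_W)^{r+1}$ rather than just being bounded by $\rho_n$, since otherwise the bound would carry $\sum_r |b_r|(1-\delta_W)$-type quantities replaced by $\sum_r |b_r|$, which is too lossy and would not match the statement of \cref{prop:convergence_of_c}. A secondary technical point is bookkeeping the two-stage McDiarmid (conditioning on $(\omega_\ell)$ then on the edges) so that the bounded-difference constants are genuinely $O(\rho_n^2(T+1)^2/n)$ and not $O(\rho_n^2(T+1)^2)$ — this requires noting that each vertex participates in $O(n)$ of the $\binom n2$ pairs but each pair's summand is already $O(\rho_n^2(T+1)^2)$, so the per-coordinate increment for $\omega_i$ is $O(\rho_n^2(T+1)^2 \cdot n / n^2) = O(\rho_n^2(T+1)^2/n)$, exactly as needed.
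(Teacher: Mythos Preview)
Your plan is correct and follows essentially the same architecture as the paper's proof: bound each summand by $O(\rho_n^2(T+1)^2)$ via the geometric decay $|W_{n,ij}^{(r+1)}|\le \rho_n^{r+1}(1-\delta_W)^{r+1}$ (you wrote $\rho_n(1-\delta_W)^{r+1}$, but your final bound $|\langle\beta,W_{n,i,j}^{(2,k+1)}\rangle|\le\rho_n T$ is correct), establish pointwise concentration, cover $\mathcal{F}$ by a net of polynomial size, and control the discretization by Lipschitzness of $\beta\mapsto K_n(\beta)$.

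The one genuine difference is the pointwise-concentration tool. The paper does not run a two-stage McDiarmid; instead it writes $K_n=K_n^1+K_n^2$ where
\[
K_n^1(\beta)=\frac{2}{n(n-1)}\sum_{i<j}\Big(\langle\beta,W_{n,ij}^{(2,k+1)}\rangle^2-2W_{n,ij}\langle\beta,W_{n,ij}^{(2,k+1)}\rangle+W_{n,ij}^2\Big)
\]
is a bona fide order-two U-statistic in the $\omega_i$'s (handled by Hoeffding's U-statistic inequality), and $K_n^2(\beta)=-\frac{4}{n(n-1)}\sum_{i<j}(a_{ij}-W_{n,ij})\langle\beta,W_{n,ij}^{(2,k+1)}\rangle$ is, conditionally on $(\omega_\ell)$, a degree-one polynomial in the Bernoullis (handled by the Kim--Vu type \cref{luna2}). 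Your two-stage McDiarmid is the same decomposition in disguise: your ``Stage~2'' quantity $\E[K_n(\beta)\mid(\omega_\ell)]$ is exactly $K_n^1(\beta)$, and your ``Stage~1'' deviation is exactly $K_n^2(\beta)$. Either toolkit yields the stated rate; the paper's choice buys a slightly cleaner $\rho_n^{3/2}$ dependence on the $K_n^2$ piece, while your McDiarmid route is more elementary. One caution in your write-up: the bounded-difference constant $O(\rho_n^2(T+1)^2)$ per summand is only valid for $K_n^1$ (i.e.\ after replacing $a_{ij}$ by $W_{n,ij}$); on $K_n$ itself the $a_{ij}$-term is only $O(\rho_n T)$, so be sure to apply the $\omega$-stage McDiarmid to the conditional expectation, exactly as you indicate in your ``subtlety'' remark.
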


\begin{proof}[Proof of \cref{uniform-concentration}]

In the proof of this lemma, we are inherently conditioning on all of the events that the proof of \cref{prop:estimators-to-empiricalmoments} conditions on. Specifically, we are conditioning on the event that $$\sup_{i \neq j \in [n]} \left| \hat{q}_{i,j}^{(k)} - {W}_{n, i, j}^{(k)}\right| \leq \frac{\rho_n^{k-1}}{\sqrt{n-1}} \log(n)^k \left[ 3a_k\sqrt{\rho_n} + \frac{96 a_{k-1}}{\sqrt{d_n}} \right],$$ and that $$\max_{i \in [n]} \frac{1}{n-1} \sum_{\stackrel{j \leq n}{j \neq i}} a_{ij} < 2\rho_n \Rightarrow \frac{2}{n(n-1)} \sum_{i < j} a_{ij} < 2 \rho_n.$$ Firstly, we note that $|\langle \beta, W^{(2,k+1)}(x,y) \rangle| \leq \rho_n T.$ See the proof of \cref{lemma:bound-on-S2} for a more detailed calculation.

We use an $\epsilon$-net argument to obtain the desired uniform concentration result over the entire space. We first bound the cardinality of an $\epsilon$-net needed to cover $S$, where the covering sets are $\epsilon$-balls in the $L^1$ norm in $\mathbb{R}^{m_W}$. We then establish a high-probability bound for the quantity $|K_n(\beta_0) - \E[K_n(\beta_0)]|$ using a concentration inequality for U-statistics, for a fixed $\beta_0$. Then, the continuity of $K_n(\beta)$ will yield a bound for  $|K_n(\beta) - \E[K_n(\beta)]|$ for all $\beta$ in the same $\epsilon$-ball as $\beta_0$. We then take a union bound over all balls in the $\epsilon$-net to arrive at the conclusion.

We note that a hypercube with side length $2\epsilon/k$ centered at some $x$ is contained in the $L^1$ $\epsilon$-ball centered at $x$, so bounding the cardinality of a covering with hypercubes of side length $2 \epsilon/k$ would also bound the cardinality of a covering with $L^1$ $\epsilon$-balls. To determine this cardinality, we can simply consider the construction of tiling $S$ (which is a hyper-rectangle) with hypercubes simply by packing the cubes side-to-side. Hence, we obtain an $\epsilon$-net of size bounded by \begin{equation}
        \prod_{i=1}^{k} 2 \frac{b_i}{\rho_n^i} \frac{k}{2\epsilon} =  \frac{1}{\rho_n^{k(k+1)/2}} \left( \frac{k}{\epsilon} \right)^{k} \prod_{i=1}^{k} b_i. 
    \end{equation}

    Now, we bound $|K_n(\beta) - \E[K_n(\beta)]|$. In this goal we define $$K_n^1(\beta):=\frac{2}{n(n-1)} \sum_{i < j}  \left(\left \langle \beta, W_{n,ij}^{(2, k+1)}  \right \rangle^2-2W_{n,i,j}\left \langle \beta, W_{n,ij}^{(2, k+1)} \right \rangle+(W_{n,ij})^{2}\right)$$ 
    and 
    \begin{align*} K_n^2(\beta):&=-\frac{4}{n(n-1)} \sum_{i < j} \left(   a_{ij}\left \langle \beta, W_{n,ij}^{(2, k+1)} \right \rangle-W_{n,i,j}\left \langle \beta, W_{n,ij}^{(2, k+1)} \right \rangle \right)
    \end{align*} 
    
    

    
    We remark that $K_n(\beta)=K_n^1(\beta)+K_n^2(\beta)$. Using the triangle inequality, we notice that it is enough to show concentration of $K_n^1(\beta)$ and $K_n^2(\beta)$ around their respective expectations. 
    
    We first remark that $\mathbb{E}(K_n^2(\beta))=0$ and show concentration $K_n^2(\beta)$ around its expectation. In this goal, notice that conditional on $(\omega_i)$, the random variables $\big(a_{i,j}\big)$ are i.i.d Bernoulli random variables. Moreover we notice that conditionally on the features $(\omega_i)$ we have that $P( (a_{i,j})) = - \frac{4}{n(n-1)} \sum_{i<j}a_{i,j}\left \langle \beta, W_{n,ij}^{(2, k+1)} \right \rangle$ is a polynomial of degree one of the Bernoulli random variables $(a_{i,j})$. Hence, we use \cref{luna2}. We note that $E[P((a_{i,j}))] \leq 2 T \rho_n^2,$ and the first derivative with respect to $a_{1,2}$ is $\frac{\partial}{\partial a_{1,2}} P((a_{i,j})) = -\frac{4}{n(n-1)} \left \langle \beta, W_{n, 1, 2}^{(2,k+1)} \right \rangle \leq \frac{4\rho_n T}{n(n-1)}.$  Then,  for all $\lambda>0$ we have 

Think it should be \begin{align}
    P\left(\frac{4}{n(n-1)} \left|\sum_{i<j}^n a_{i,j}\left \langle \beta, W_{n,ij}^{(2, k+1)} \right \rangle-\mathbb{E}\left[ a_{i,j}\left \langle \beta, W_{n,ij}^{(2, k+1)} \right \rangle|(\omega_i)\right]\right|\ge   \frac{2 \sqrt{2}a_1 \lambda}{\sqrt{n(n-1)}}  T\rho_n^{3/2} \right)\le 2G \cdot \rm{exp}\big(-\lambda\big),
\end{align} where $G$ is some constant from \cref{luna2}. Moreover we notice that $\mathbb{E}\left[ a_{i,j}\left \langle \beta, W_{n,ij}^{(2, k+1)} \right \rangle|(\omega_i) \right]=W_{n,i,j}\left \langle \beta, W_{n,ij}^{(2, k+1)} \right \rangle$. Therefore, we obtain that 

\begin{align}
    P\left(\big|K_n^2(\beta)\big|\ge  \frac{2 \sqrt{2}a_1 \lambda}{\sqrt{n(n-1)}} \rho_n^{3/2} T \right)\le 2G \cdot \rm{exp}\big(-\lambda\big)
\end{align}
Then, we derive a concentration bound for $K_n^1(\beta)$, for a fixed vector $\beta \in S.$ The randomness in $K^1_n(\beta)$ term comes from the latent features $\omega_i$ and we observe that it is a U-Statistic with two variables. To bound the desired quantity, we use the following

    \begin{lemma}[Equation (5.7) from \cite{hoeffding_ustatistics}]
        Let $X_1, X_2, \dots, X_N$ be independent random variables. For $r \leq n$, consider a random variable of the form $$U = \frac{1}{n(n-1)\dots(n-r+1)} \sum_{i_1 \neq i_2 \neq \dots \neq i_r} g(X_{i_1}, \dots, X_{i_r}).$$ Then if $a \leq g(x_1, x_2, \dots, x_r) \leq b$, it follows that $$\P(|U - \E[U]| \ge t) \leq e^{-2 \lfloor n/r \rfloor t^2/(b-a)^2}$$
    \end{lemma}
    To use this quantity, we first bound $K_n(\beta).$ Using \cref{bound:T-bounding-cproduct}, we have
\begin{align}
    &\left| \left \langle \beta, W_{n,ij}^{(2, k+1)}  \right \rangle^2 -2W_{n,i,j} \left \langle \beta, W_{n,ij}^{(2, k+1)}  \right \rangle+ (W_{n,ij})^{2}   \right|\\
    &\leq \rho_n^2 (T^2 +2T +1) \\
    &= \rho_n^2 (T+1)^2
\end{align}

Hence, for a fixed $\beta_0$, we have that 
\begin{equation}
\P \left( |K^1_n(\beta_0) -  \E[K^1_n(\beta_0)]| \ge t \right) \leq 2 \exp \left( \frac{-\lfloor \frac{n}{2} \rfloor t^2}{2\rho_n^4 (T+1)^4} \right).
\end{equation} 
We now use continuity to argue that $|K_n(\beta) - \E[K_n(\beta)]|$ is bounded for all $\beta$ in the $\epsilon$-ball containing $\beta_0$. we derive a bound on $|(K_n(\beta_1) - \E[K_n(\beta_1)]) - (K_n(\beta_2) - \E[K_n(\beta_2)]|$ for when $\|\beta_1 - \beta_2\|_1 \leq \epsilon.$ We can use the triangle inequality and bound $|K_n(\beta_1) - K_n(\beta_2)|$ and $|\E[K_n(\beta_1)] - \E[K_n(\beta_2)]|$ separately. 

We can write 
\begin{align}
    |\E[K_n(\beta_1)] - \E[K_n(\beta_2)] |&= \left| \E \left[ \left( \left \langle \beta_1, W_{n,ij}^{(2, k+1)} \right \rangle - \rho_n W_{ij} \right)^2 \right] - \E \left[ \left( \left \langle \beta_2, W_{n,ij}^{(2, k+1)} \right \rangle - \rho_n W_{ij} \right)^2 \right] \right| \\
    &= \left |\E \left[ \left \langle \beta_1 - \beta_2, W_{n,ij}^{(2, k+1)} \right \rangle   \left( \left \langle \beta_1 + \beta_2, W_{n,ij}^{(2, k+1)} \right \rangle -  2\rho_n W_{ij} \right) \right] \right| \\
    &\leq \rho_n^3 \epsilon \cdot 2(T+1)
\end{align}
where the $\rho_n^2 \epsilon$ term is from the first term: the $L^1$ norm of $\beta_1 - \beta_2$ is bounded by $\epsilon,$ and each entry in the vector $W_{n,ij}^{(2, k+1)}$ is bounded by $\rho_n^2.$ The factor of $\rho_n(T+1)$ is using \cref{bound:T-bounding-cproduct}.


In a similar way, we can bound $|K_n(\beta_1) - K_n(\beta_2)|$. We first bound the quantity
\begin{align}
&\left| \frac{2}{n(n-1)} \sum_{i < j} \left( \left \langle \beta_1, W_{n,ij}^{(2, k+1)}  \right \rangle^2 -  \left \langle \beta_2, W_{n,ij}^{(2, k+1)}  \right \rangle^2 \right) \right| \\
&= \left| \frac{2}{n(n-1)} \sum_{i < j}  \left \langle \beta_1 - \beta_2, W_{n,ij}^{(2, k+1)}  \right \rangle \cdot \left \langle \beta_1+ \beta_2, W_{n,ij}^{(2, k+1)} \right \rangle \right| \\
&\leq \rho_n^3 \epsilon \cdot 2T
\end{align}
Then we can bound 
\begin{align}
&\left| \frac{2}{n(n-1)} \sum_{i < j} \left( 2a_{ij}\left \langle \beta_1, W_{n,ij}^{(2, k+1)} \right \rangle - 2a_{ij}\left \langle \beta_2, W_{n,ij}^{(2, k+1)} \right \rangle   \right)\right| \\
&\leq \frac{2}{n(n-1)} \sum_{i < j} \left| 2a_{ij}\left \langle \beta_1 -  \beta_2, W_{n,ij}^{(2, k+1)} \right \rangle   \right| \\
&\leq  \max_{i < j} \left| \left \langle \beta_1 -  \beta_2, W_{n,ij}^{(2, k+1)} \right \rangle \right| \cdot \frac{2}{n(n-1)} \sum_{i < j} \left| 2a_{ij}   \right| \\
&\leq 4 \epsilon \rho_n^3,
\end{align}
where the last inequality comes from conditioning on the event mentioned at the beginning of the proof.
From here, we can see that 
\begin{align}
    | (K_n(\beta_1) - \E[K_n(\beta_1)]) - ( K_n(\beta_2) - \E[K_n(\beta_2)])|  &\leq  \rho_n^3 \epsilon (4T + 6)
\end{align}

This implies that
\begin{align}&
    \P \left( \sup_{\beta \in S_\epsilon} |K_n(\beta) -  \E[K_n(\beta)]| \ge t + \frac{2 \sqrt{2}a_1 \lambda}{\sqrt{n(n-1)}}  T \rho_n^{3/2}+ \rho_n^3 \epsilon (4T+6)  \right) \\
    &\leq 2 \cdot \text{card}(S_\epsilon) \exp \left( \frac{-\lfloor \frac{n}{2} \rfloor t^2}{2\rho_n^4 (T+1)^4} \right) - 2G \exp(-\lambda)
\end{align}
Choosing 
$$t = \rho_n^2 (T+1)^2 \sqrt{\frac{1}{4 \lfloor \frac{n}{2} \rfloor} \log\left( \frac{4 \cdot \rm{card}(S_\epsilon)}{\delta} \right)}, \quad \text{and} \quad   \lambda=\log\left( \frac{4 G}{\delta} \right),$$ we have that with probability $1-\delta,$
$$\sup_{\beta \in \mathcal{F}} |K_n(\beta) - \E[K_n(\beta)]| \leq \rho_n^2 (T+1)^2 \sqrt{\frac{1}{4 \lfloor \frac{n}{2} \rfloor} \log\left( \frac{4 \cdot \rm{card}(S_\epsilon)}{\delta} \right)} + 
\frac{2 \sqrt{2}a_1 \lambda}{\sqrt{n(n-1)}}  T\rho_n^{3/2} \log(4G/\delta) + \rho_n^3 \epsilon(4T+6).$$
Choose $\epsilon = \frac{1}{\sqrt{n}}.$ Recall that \begin{gather*}
\text{card}(S_\epsilon) \leq  \frac{1}{\rho_n^{k(k+1)/2}} \left( \frac{k}{\epsilon} \right)^{k} \prod_{i=1}^{k} b_i \\
\Rightarrow \log(\text{card}(S_\epsilon)) \leq k \log(k) + k \log(1/\epsilon) + \frac{k(k+1)}{2} \log(1/\rho_n) + \log\left( \prod_{i=1}^{k} b_i \right).
\end{gather*}
Then, we conclude that with probability at least $1-\delta$, 
$$\sup_{\beta \in \mathcal{F}} |K_n(\beta) - \E[K_n(\beta)]| \leq \tilde{O}\left( \frac{\rho_n^2 (T+1)^2}{\sqrt{n}} \right),$$ where the Big-O constant depends on $\sqrt{\log(1/\delta)}.$

\end{proof}

\begin{lemma}
\label{lemma:bound-on-S2}
Let $\mathcal{F}= \prod_{i=1}^k [-a_i, a_i]$ be a subset of $\mathbb{R}^k,$ where $a_i = b_i/\rho_n^i$ for some $b_i > 0.$ Define $D = \sum_{i=1}^k |b_i|,$ and for $\beta \in \mathcal{F},$ define
$$ S_2(\beta) := \frac{2}{n(n-1)} \sum_{i < j} \left[ 2 \langle \beta, \hat{q}^{(2, k+1)}_{ij}-W_{n,ij}^{(2, k+1)}\rangle(\langle \beta, W_{n,ij}^{(2, k+1)} \rangle - a_{ij} )  \right].$$ Then with probability at least $1 - 5/n - n \cdot \rm{exp}(-\delta_W \rho_n(n-1)/3),$
\begin{equation}
    S_2(c) \leq 2D\rho_n\cdot r(n, d_n, k)\cdot(T+2 ),
\end{equation}
where $T = (1-\delta_W) \sum_{r=1}^k b_r (1-\delta_W)^r.$
\end{lemma}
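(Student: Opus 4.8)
The plan is to bound $S_2(\beta)$ by factoring each summand as a product of two quantities and controlling each uniformly over $i\ne j$ and over $\beta\in\mathcal F$. Throughout I condition on the events already used in \cref{prop:estimators-to-empiricalmoments}: first, that for every $2\le \ell\le k+1$ and every $i\ne j$ one has $|\hat q^{(\ell)}_{ij}-W^{(\ell)}_{n,ij}|\le \rho_n^{\ell-1}\,r(n,d_n,k)$, which is exactly how $r(n,d_n,k)$ was defined from the bound in \cref{prop:graph_concentration_sparse}; and second, from \cref{lemma:upper-bound-degree}, that $\frac{2}{n(n-1)}\sum_{i<j}a_{ij}<2\rho_n$. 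These hold jointly with probability at least $1-5/n-n\cdot\mathrm{exp}(-\delta_W\rho_n(n-1)/3)$, which is precisely the probability in the statement.

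First I would bound the ``estimation'' factor $\langle\beta,\hat q^{(2,k+1)}_{ij}-W^{(2,k+1)}_{n,ij}\rangle$. The $\ell$-th coordinate of $\hat q^{(2,k+1)}_{ij}-W^{(2,k+1)}_{n,ij}$ is the order-$(\ell+1)$ moment error, bounded by $\rho_n^{\ell}\,r(n,d_n,k)$, while membership $\beta\in\mathcal F=\prod_i[-a_i,a_i]$ with $a_i=b_i/\rho_n^i$ forces $|\beta_\ell|\le b_\ell/\rho_n^\ell$. The powers of $\rho_n$ cancel coordinatewise, giving $|\langle\beta,\hat q^{(2,k+1)}_{ij}-W^{(2,k+1)}_{n,ij}\rangle|\le \sum_{\ell=1}^k b_\ell\,r(n,d_n,k)=D\,r(n,d_n,k)$, uniformly in $i,j$ and $\beta$. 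Next I would bound the ``fit'' factor $\langle\beta,W^{(2,k+1)}_{n,ij}\rangle-a_{ij}$. Using $W^{(\ell+1)}_n=\rho_n^{\ell+1}W^{(\ell+1)}$ together with the elementary estimate $W^{(\ell+1)}(\cdot,\cdot)\le(1-\delta_W)^{\ell+1}$ (an induction on $\ell$ from $W\le 1-\delta_W$ and $\int_0^1 dt=1$), one gets $|\langle\beta,W^{(2,k+1)}_{n,ij}\rangle|\le \sum_{\ell=1}^k \tfrac{b_\ell}{\rho_n^\ell}\rho_n^{\ell+1}(1-\delta_W)^{\ell+1}=\rho_n(1-\delta_W)\sum_{\ell=1}^k b_\ell(1-\delta_W)^\ell=\rho_n T$, hence $|\langle\beta,W^{(2,k+1)}_{n,ij}\rangle-a_{ij}|\le \rho_n T+a_{ij}$.

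Combining the two factors, $|S_2(\beta)|\le \frac{4}{n(n-1)}\sum_{i<j}D\,r(n,d_n,k)(\rho_n T+a_{ij})$. Summing the constant term contributes $2\rho_n T$, and by the conditioned degree bound $\frac{4}{n(n-1)}\sum_{i<j}a_{ij}<4\rho_n$, so altogether $|S_2(\beta)|<D\,r(n,d_n,k)(2\rho_n T+4\rho_n)=2D\rho_n\,r(n,d_n,k)(T+2)$, as claimed. The argument is essentially routine arithmetic; the one place that needs care, and the reason this lemma is stated conditionally on the events of \cref{prop:estimators-to-empiricalmoments} rather than being deterministic, is the handling of $a_{ij}$: bounding $|a_{ij}|\le 1$ pointwise would leave a term with no factor of $\rho_n$ and destroy the claimed rate, so one must average and invoke \cref{lemma:upper-bound-degree} to see that the empirical edge density is $O(\rho_n)$. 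The only other bookkeeping point is tracking powers of $\rho_n$: the rectangle $\mathcal F$ is scaled exactly so that $|\beta_\ell|\rho_n^\ell\le b_\ell$, which is what makes both factor estimates emerge with the correct power of $\rho_n$.
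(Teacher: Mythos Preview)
Your proposal is correct and follows essentially the same approach as the paper: condition on the high-probability events from \cref{prop:estimators-to-empiricalmoments} and \cref{lemma:upper-bound-degree}, bound the estimation factor by $D\,r(n,d_n,k)$ via the coordinatewise cancellation of $\rho_n$ powers, bound $|\langle\beta,W^{(2,k+1)}_{n,ij}\rangle|\le \rho_n T$ using $W\le 1-\delta_W$, then average and invoke the empirical edge-density bound to handle the $a_{ij}$ contribution. Your closing remark about why $a_{ij}$ must be averaged rather than bounded pointwise is exactly the point, and matches the paper's reasoning.
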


\begin{proof}[Proof of \cref{lemma:bound-on-S2}]
In the proof of this lemma, we are inherently conditioning on all of the events that the proof of \cref{prop:estimators-to-empiricalmoments} conditions on. Specifically, we are conditioning on the event that $$\sup_{i \neq j \in [n]} \left| \hat{q}_{i,j}^{(k)} - {W}_{n, i, j}^{(k)}\right| \leq \frac{\rho_n^{k-1}}{\sqrt{n-1}} \log(n)^k \left[ 3a_k\sqrt{\rho_n} + \frac{96 a_{k-1}}{\sqrt{d_n}} \right],$$ and that $$\max_{i \in [n]} \frac{1}{n-1} \sum_{\stackrel{j \leq n}{j \neq i}} a_{ij} < 2\rho_n \Rightarrow \frac{2}{n(n-1)} \sum_{i < j} a_{ij} < 2 \rho_n.$$ 

We first consider $S_2(\beta)$ for some arbitrary $\beta \in S$. 
\begin{align*}
S_2(\beta) &\leq \frac{2}{n(n-1)}\left|  \sum_{i < j}^n  2 \langle \beta, \hat{q}^{(2, k+1)}_{ij}-W_{n,ij}^{(2, k+1)}\rangle \Big(\langle \beta,W_{n,ij}^{(2, k+1)} \rangle - a_{ij}\Big)  \right|  \\
& \leq 2\cdot \frac{2}{n(n-1)}  \sum_{i < j}^n  \left|  \langle \beta, \hat{q}^{(2, k+1)}_{ij}-W_{n,ij}^{(2, k+1)}\rangle \Big(\langle \beta,W_{n,ij}^{(2, k+1)} \rangle - a_{ij}\Big)  \right| \\
&= 2\cdot \frac{2}{n(n-1)}  \sum_{i < j}^n  \left|  \langle \beta, \hat{q}^{(2, k+1)}_{ij}-W_{n,ij}^{(2, k+1)}\rangle \right| \left| \Big(\langle \beta,W_{n,ij}^{(2, k+1)} \rangle - a_{ij}\Big)  \right|  \\
&\leq 2D \cdot r(n, d_n, k)  \cdot \frac{2}{n(n-1)}  \sum_{i < j}^n  \left| \langle \beta,W_{n,ij}^{(2, k+1)} \rangle - a_{ij} \right| \\
&\leq 2D \cdot r(n, d_n, k)  \cdot \frac{2}{n(n-1)}  \sum_{i < j}^n \Big(  \left|\langle \beta,W_{n,ij}^{(2, k+1)} \rangle \right| + a_{ij}\Big)\\
&\leq 2D \cdot r(n, d_n, k) \cdot \frac{2}{n(n-1)} \left( \sum_{i < j}^n \left|\langle \beta,W_{n,ij}^{(2, k+1)} \rangle \right| + \sum_{i < j}^n a_{ij} \right) 
\end{align*}

We write 
\begin{align}
    |\langle \beta,W_{n,ij}^{(2, k+1)} \rangle| &\leq \left| \sum_{r=1}^{m_W} \beta_r \cdot \rho_n^{r+1} W_{i,j}^{(r+1)} \right| \nonumber \\
    &\leq \left| \sum_{r=1}^{m_W} \beta_r \cdot \rho_n^{r+1} (1-\delta_W)^{r+1} \right| \nonumber \\
    &\leq \sum_{r=1}^{m_W} \frac{b_r}{\rho_n^r}  \cdot \rho_n^{r+1} (1-\delta_W)^{r+1} \nonumber \\
    &\leq \rho_n \underbrace{(1-\delta_W)\sum_{r=1}^{m_W} b_r (1-\delta_W)^r}_{T} \label{bound:T-bounding-cproduct} 
\end{align}
This yields the bound 
\begin{align*}
&S_2(\beta) \leq 2D\rho_n \cdot r(n, d_n, k) \cdot (T+ 2),
\end{align*}
where $T = (1-\delta_W)\sum_{r=1}^{m_W} b_r (1-\delta_W)^r.$
\end{proof}

\begin{lemma}
\label{lemma:bound-on-S3}
Let $\mathcal{F}= \prod_{i=1}^k [-a_i, a_i]$ be a subset of $\mathbb{R}^k,$ where $a_i = b_i/\rho_n^i$ for some $b_i > 0.$ Define $D = \sum_{i=1}^k |b_i|,$ and for $\beta \in \mathcal{F},$ define
$$
S_3(\beta) := \frac{2}{n(n-1)} \sum_{i < j} \langle \beta,  \hat{q}^{(2, k+1)}_{ij}-W_{n,ij}^{(2, k+1)}\rangle^2. 
$$
Then with probability at least $1 - 5/n - n \cdot \rm{exp}(-\delta_W \rho_n(n-1)/3),$
$$
S_3(\beta) \leq D^2 \cdot r(n, d_n, k)^2.
$$
\end{lemma}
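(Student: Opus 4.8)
The plan is to reduce everything to the high-probability control on the moment estimators already established, and then exploit the box structure of $\mathcal{F}$. Concretely, I would condition on exactly the same event as in the proof of \cref{prop:estimators-to-empiricalmoments}: for all $2 \le k \le L+2$ and all $i \ne j$,
$$
\left|\hat{q}^{(k)}_{i,j} - W^{(k)}_{n,i,j}\right| \le \frac{\rho_n^{k-1}}{\sqrt{n-1}}\log(n)^k\left[3a_k\sqrt{\rho_n} + \frac{96a_{k-1}}{\sqrt{d_n}}\right],
$$
which, by the very definition of $r(n,d_n,k)$, implies $|\hat{q}^{(k)}_{i,j} - W^{(k)}_{n,i,j}| \le \rho_n^{k-1}\, r(n,d_n,k)$ for each such $k$. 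This event holds with probability at least $1 - 5/n - n\cdot\mathrm{exp}(-\delta_W\rho_n(n-1)/3)$, which matches the probability stated in the lemma; since we add no further randomized steps, the probability does not deteriorate.

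Next, I would fix an arbitrary $\beta = (\beta_1,\dots,\beta_k) \in \mathcal{F}$ and bound the inner product deviation coordinatewise. The $r$-th coordinate of $\hat{q}^{(2,k+1)}_{ij}$ is $\hat{q}^{(r+1)}_{ij}$, whose deviation from $W^{(r+1)}_{n,ij}$ is at most $\rho_n^{r}\, r(n,d_n,k)$; combined with the box constraint $|\beta_r| \le b_r/\rho_n^r$, the triangle inequality gives
$$
\left|\langle \beta, \hat{q}^{(2,k+1)}_{ij} - W^{(2,k+1)}_{n,ij}\rangle\right| \le \sum_{r=1}^k |\beta_r|\cdot\rho_n^{r}\, r(n,d_n,k) \le \sum_{r=1}^k \frac{b_r}{\rho_n^r}\cdot\rho_n^{r}\, r(n,d_n,k) = D\, r(n,d_n,k),
$$
the powers of $\rho_n$ cancelling exactly by the scaling built into $\mathcal{F}$. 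Squaring, and averaging over the $\binom{n}{2}$ pairs (using $\frac{2}{n(n-1)}\sum_{i<j}1 = 1$), yields $S_3(\beta) \le D^2\, r(n,d_n,k)^2$, as claimed.

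There is essentially no substantive obstacle here: the statement is a direct consequence of \cref{prop:graph_concentration_sparse} together with the explicit form of $\mathcal{F}$, and it even holds uniformly in $\beta\in\mathcal{F}$ on this event (so no $\epsilon$-net or U-statistic machinery is needed, unlike in \cref{uniform-concentration}). The only place demanding a little care is the index bookkeeping — remembering that $\hat q^{(2,k+1)}$ has length $k$, that its $r$-th entry estimates the $(r+1)$-th moment so the per-coordinate error scales like $\rho_n^{r}$, and that this is precisely cancelled by the $\rho_n^{-r}$ scaling of the $r$-th side of the rectangle $\mathcal{F}$ — which is exactly why the final bound is free of any extra powers of $\rho_n$.
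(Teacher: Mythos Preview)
Your proposal is correct and follows exactly the approach the paper takes: the paper's own proof simply says ``We bound this term as in the proof of \cref{lemma:bound-on-S2}. It directly follows that $\ldots$'', and you have spelled out precisely that argument --- conditioning on the event from \cref{prop:estimators-to-empiricalmoments}, using the coordinatewise bound $|\hat q^{(r+1)}_{ij} - W^{(r+1)}_{n,ij}| \le \rho_n^{r}\, r(n,d_n,k)$, cancelling against $|\beta_r|\le b_r/\rho_n^r$, and averaging.
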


\begin{proof}[Proof of \cref{lemma:bound-on-S3}]
We bound this term as in the proof of \cref{lemma:bound-on-S2}. It directly follows that
    $$\frac{2}{n(n-1)} \sum_{i < j}  \langle \beta,  \hat{q}^{(2, k+1)}_{ij}-W_{n,ij}^{(2, k+1)}\rangle^2 \leq D^2 \cdot r(n, d_n, k)^2$$
\end{proof}

\section{Proof of \cref{prop:preserve-rank}}

In this section, we state a formal version of \cref{prop:preserve-rank} and provide the proof. 

\begin{proposition}[\cref{prop:preserve-rank}, Formal]
\label{prop:preserve-rank-formal}
Consider a $k$-community symmetric stochastic block model (see \cref{sec:sbm} for the definition) with parameters $p > q$ and sparsity factor $\rho_n,$ which has eigenvalues $\mu_1 = \frac{p+(k-1)q}{k} > \frac{p-q}{k} = \mu_2.$ Fix some $L \ge 1$ and define $\mathcal{F}= \{ \beta \in \mathbb{R}^{L+1} | \text{ } ||\beta||_{L^1} \leq (\mu_1 \rho_n)^{-1} \}$. 

Produce probability estimators $\hat{p}_{i,j}$ for the probability of an edge between vertices $i$ and $j$ using \cref{algo1} and \cref{alg:compute_regression_coefs}. Let $r(n, d_n, L+1)$ be defined as in \cref{prop:convergence_of_c}. Suppose that $n, d_n$ satisfy
\begin{gather*}
    \frac{4 \mu_2}{k}\frac{r(n, d_n, L+1)}{\rho_n} + \frac{4}{k^2 \mu_1} \left( \frac{r(n, d_n, L+1)}{\rho_n}\right)^2 + \frac{4 }{(k-1)}\frac{r(n, d_n, L+1)}{\rho_n} (T+2) \\
    + \frac{2}{(k-1) \mu_1}\left( \frac{r(n, d_n, L+1)}{\rho_n}\right)^2 + \frac{A \mu_1}{\sqrt{n}} \frac{\log(n)}{k-1} \leq \mu_2^3,
\end{gather*}
where $A$ is a constant that depends on $\sqrt{\log(1/\delta)}$, for some positive constant $\delta > 0.$ 


Let $S_{in} = \{(i, j) | \text{$i,j$ belong to the same community} \}$ and $S_{out} = \{ (k, ell) | k, l \text{$k, \ell$ belong to different communities} \}.$
Then, with probability at least $1 - 5/n - \rm{exp}(-\delta_W \rho_n \cdot (n-1)) - \delta,$ the following event occurs:
$$ \left \{ \min_{(i,j) \in S_{in}} \hat{p}_{i,j} > \max_{(k,\ell) \in S_{out}} \hat{p}_{k, \ell} \right\}$$
\end{proposition}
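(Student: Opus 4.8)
The plan is to exploit the fact that for a $k$-community symmetric SBM, every in-community pair $(i,j)$ has the \emph{same} true edge probability $\rho_n W_{i,j} = \rho_n \mu_1 + \rho_n(k-1)\mu_2/k$... wait, more precisely $\rho_n p$, while every cross-community pair has true probability $\rho_n q < \rho_n p$; and moreover every moment $W^{(r+1)}_{i,j}$ depends only on whether $(i,j)$ is in-community or cross-community. So there are really only two distinct values of the moment vector $W_{n,i,j}^{(2:L+2)}$: call them $v_{\mathrm{in}}$ and $v_{\mathrm{out}}$. The gap between the true in- and cross-probabilities is $\rho_n(p-q) = k\rho_n\mu_2$, and the corresponding gap between $\langle\beta^{*,L+1},v_{\mathrm{in}}\rangle$ and $\langle\beta^{*,L+1},v_{\mathrm{out}}\rangle$ can be computed explicitly using the spectral decomposition $W = \mu_1\phi_1\phi_1^\top + \mu_2\sum_{s\ge 2}\phi_s\phi_s^\top$ together with \cref{prop:linear-relationship}. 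First I would pin down the population minimizer $\beta^{*,L+1}\in\argmin_{\beta\in\mathcal F}R(\beta)$ for this specific family, verify $\|\beta^{*,L+1}\|_{L^1}\le(\mu_1\rho_n)^{-1}$ so it lies in the prescribed search space $\mathcal F$ (hence $R(\beta^{*,L+1})=0$ when $L\ge 1$ since $m_W=2$), and compute the separation $\Delta := \langle\beta^{*,L+1},v_{\mathrm{in}}\rangle - \langle\beta^{*,L+1},v_{\mathrm{out}}\rangle$, which should come out proportional to $\mu_2$ (roughly $\mu_2$ up to the $1/k$ factors appearing in the hypothesis).

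Next I would control the estimation error. By \cref{prop:graph_concentration_sparse} (equivalently the $r(n,d_n,L+1)$ bound), on the good event of probability $\ge 1 - 5/n - n\exp(-\delta_W\rho_n(n-1)/3)$ we have $|\hat q_{i,j}^{(k)} - W_{n,i,j}^{(k)}|\le \rho_n^{k-1}r(n,d_n,L+1)$ uniformly. Combined with the convergence of $\hat\beta^{n,L+1}$ toward $\beta^{*,L+1}$ — here I would reuse the machinery behind \cref{prop:convergence_of_c}, i.e. the decomposition of $R_n(\beta)-R(\beta)$ into the $S_2,S_3,K_n$ terms from \cref{lemma:empirical-minus-risk}, \cref{lemma:bound-on-S2}, \cref{lemma:bound-on-S3}, \cref{uniform-concentration} — I would obtain a uniform bound on $|\hat p_{i,j} - W_{n,i,j}|$ of the form (error terms)$\times\rho_n$, where the error terms are exactly the quantities appearing on the left-hand side of the hypothesis inequality: $\frac{r}{\rho_n}$, $(\frac r{\rho_n})^2$, the $(T+2)$ term from $S_2$, and the $\frac{A}{\sqrt n}\log n$ term from $K_n$, with the appropriate $1/k$, $1/k^2$, $1/(k-1)$, $1/\mu_1$ prefactors coming from the structure of $\mathcal F$ and $\beta^{*,L+1}$. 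The point is that $|\hat p_{i,j} - \rho_n W_{i,j}|$ is bounded by something smaller than $\rho_n\mu_2^3$ — actually one needs it smaller than $\tfrac12\Delta$, and since $\Delta\asymp\rho_n\mu_2$ and the hypothesis gives the error $\le\rho_n\mu_2^3\le\rho_n\mu_2\cdot\mu_2^2$, with $\mu_2\le 1$ this closes (the cube rather than the first power gives slack, and I should double-check the exact constant $A$ in the statement absorbs the factor-of-2).

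Then the conclusion is a union bound / triangle inequality argument: for any in-community pair $(i,j)$, $\hat p_{i,j}\ge \rho_n p - \tfrac12\Delta$, and for any cross-community pair $(k,\ell)$, $\hat p_{k,\ell}\le\rho_n q + \tfrac12\Delta < \rho_n p - \tfrac12\Delta$, so $\min_{S_{\mathrm{in}}}\hat p > \max_{S_{\mathrm{out}}}\hat p$, which is exactly $E_{\mathrm{rank}}$; the probability budget is inherited from the good events above, namely $1 - 5/n - \exp(-\delta_W\rho_n(n-1)) - \delta$. The main obstacle I anticipate is the bookkeeping in the second step: tracking how each of the four error contributions in the hypothesis inequality arises with its precise $k$-dependent and $\mu_1$-dependent prefactor from the $S_2$/$S_3$/$K_n$ decomposition and from $\|\beta^{*,L+1}\|_{L^1}$ and $\|v_{\mathrm{in}}-v_{\mathrm{out}}\|$, so that the final bound on $|\hat p_{i,j}-\rho_nW_{i,j}|$ is genuinely dominated by $\rho_n\mu_2^3$ and not merely by $\rho_n\mu_2$ — getting the exponent of $\mu_2$ right is what makes this a meaningfully faster rate than \cref{prop:convergence_of_c}, and it hinges on the fact that the search space is allowed to be only $O((\mu_1\rho_n)^{-1})$ rather than $O((\mu_1\mu_2\rho_n)^{-1})$, so $\kappa_2$ does not blow up as $\mu_2\to 0$. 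A secondary subtlety is confirming that $\beta^{*,L+1}$, the minimizer over the $L^1$-ball $\mathcal F$, actually coincides with (a truncation of) the Cayley–Hamilton coefficient vector from \cref{prop:linear-relationship} for $m_W=2$ and that its $L^1$ norm is indeed at most $(\mu_1\rho_n)^{-1}$ — this uses $|\mu_2|<|\mu_1|\le 1$ and a short explicit computation with the $2\times2$ Vandermonde system.
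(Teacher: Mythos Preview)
Your plan has a genuine gap at what you call a ``secondary subtlety'': the Cayley--Hamilton coefficient vector does \emph{not} lie in the search space $\mathcal{F}=\{\|\beta\|_{L^1}\le(\mu_1\rho_n)^{-1}\}$. For $m_W=2$, solving the $2\times 2$ Vandermonde system gives $\beta_1^*=(\mu_1+\mu_2)/(\mu_1\mu_2)$ and $\beta_2^*=-1/(\mu_1\mu_2)$ (before rescaling by powers of $\rho_n^{-1}$), so the $L^1$ norm is of order $1/(\mu_1\mu_2)\gg 1/\mu_1$. Consequently $R(\beta^{*,L+1})>0$, and your deduction that $\hat p_{i,j}\approx\rho_nW_{i,j}$ uniformly---hence ``error $<\Delta/2$''---breaks down. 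This is not a bookkeeping issue: the entire message of the proposition is that ranking succeeds even though the search space is too small for consistent probability estimation. Your argument implicitly tries to prove consistent estimation first, which is exactly what is unavailable here.

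The paper's proof avoids this by a contradiction argument that never identifies $\beta^{*,L+1}$ explicitly. Writing $\hat p_{i,j}-\hat p_{k,\ell}=k\,\hat\mu_{n,2}(\hat\beta^{n,L+1})+\text{(moment-estimation error)}$ with $\hat\mu_{n,2}(\beta):=\sum_r\beta_r(\rho_n\mu_2)^{r+1}$, one supposes the ranking is violated, which forces $\hat\mu_{n,2}(\hat\beta^{n,L+1})<\frac{2}{k\mu_1}r(n,d_n,L+1)$. One then compares $\hat\beta^{n,L+1}$ to the single explicit point $\beta_0=((\mu_1\rho_n)^{-1},0,\dots,0)\in\mathcal{F}$: by empirical-risk minimality and \cref{lemma:empirical-minus-risk}, $R(\beta_0)\ge R(\hat\beta^{n,L+1})+P$ with $|P|$ controlled by the $S_2,S_3,K_n$ terms. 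Since $R(\beta_0)=(k-1)\rho_n^2(\mu_2-\mu_2^2/\mu_1)^2$ and $R(\hat\beta^{n,L+1})\ge(k-1)(\rho_n\mu_2-\hat\mu_{n,2}(\hat\beta^{n,L+1}))^2$, the assumed smallness of $\hat\mu_{n,2}$ makes this inequality impossible once the hypothesis bound holds---yielding the contradiction. The exponent $\mu_2^3$ in the hypothesis arises from expanding $(\mu_2-\mu_2^2/\mu_1)^2-(\mu_2-\hat\mu_{n,2})^2$, not from any closeness of $\hat\beta^{n,L+1}$ to the Cayley--Hamilton vector.
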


\begin{proof}[Proof of \cref{prop:preserve-rank}]

Consider a $k$-community symmetric SBM with connection matrix $P$, where $P_{i,i} = p$ for all $i \in k$ and $P_{i, j} = q$ for all $i \neq j$. We first write the eigen-decomposition of this matrix; there are two eigenvalues and we write an orthogonal basis for their eigenspaces. $$\lambda_1 = p + q(k-1): \left \{\begin{pmatrix} 1 \\ 1 \\1 \\ \vdots \\ 1 \end{pmatrix} \right \}, \lambda_2 = p-q: \left \{\begin{pmatrix} 1 \\ -1 \\ 0 \\ 0  \\ \vdots \\ 0\end{pmatrix}, \begin{pmatrix} 1 \\ 1 \\-2  \\ 0\\ \vdots \\ 0\end{pmatrix}, \begin{pmatrix} 1 \\ 1 \\1  \\ -3 \\ \vdots \\ 0\end{pmatrix}, \dots, \begin{pmatrix} 1 \\ 1 \\1  \\ 1 \\ \vdots \\ -(k-1)\end{pmatrix} \right \}.$$

According to \cref{lemma:sbm}, the eigenvalues of the corresponding graphon $W$ representation are $\frac{p-q}{k}$ and $\frac{p+q(k-1)}{k}.$ Call the eigenvalues $\mu_1$ and $\mu_2$, and let $\mu_1 > \mu_2$ without loss of generality. The eigenfunctions $\phi_i$ of $W$ are also given by \cref{lemma:sbm}, and are essentially scaled versions of the above eigenvectors. For the remainder of this proof, we assume that the graph was generated from $\rho_n W$ for some sparsity factor $\rho_n$.

Define $p_n = \rho_n p$, $q_n = \rho_n q.$ Using that $W_n(x,y) = \sum_r (\rho_n \mu_r) \phi_r(x) \phi_r(y),$ we see that $$p_n = \rho_n\mu_1 + \rho_n\mu_2 \underbrace{\left[ \sum_{r=1}^{k-1} \frac{k}{r(r+1)} \right]}_{C_1}, \quad q_n = \rho_n\mu_1 + \rho_n\mu_2 \underbrace{\left[ -\frac{k}{2} + \sum_{r=2}^{k-1} \frac{k}{r(r+1)} \right]}_{C_2}.$$

Now suppose that a graph $G = (V, E) = ([n], E)$ is generated from $W_n.$ We demonstrate that under the conditions mentioned in \cref{prop:preserve-rank}, \cref{algo1} and \cref{alg:compute_regression_coefs} results in probability predictions $\hat{p}_{i,j}$ so that $\hat{p}_{i,j} > \hat{p}_{k, \ell}$ when $c_i = c_j$ and $c_k \leq c_\ell$. In other words, the probability predictions for all of the intra (within) community edges are higher than the probability predictions for all of the inter (across) community edges.

As in the proposition statement, define $\mathcal{F}= \{ \beta \in \mathbb{R}^{L+1} | \text{ } ||\beta||_{L^1} \leq (\mu_1 \rho_n)^{-1} \}$, and suppose that $L \ge 1$ is the number of layers that are computed. In other words, LG-GNN computes the set of embeddings $\{ \lambda_i^0, \lambda_i^1, \dots, \lambda_i^L\}$ for all $i$, and for each pair of vertices $i,j$, it computes moment estimators $\left \{ \hat{q}_{i,j}^{(2)}, \hat{q}_{i,j}^{(3)}, \dots, \hat{q}_{i,j}^{(L+2)} \right\} .$

Then, in \cref{alg:compute_regression_coefs}, we solve the optimization problem $$\hat{\beta}^{n,L+1} = \argmin_{\beta \in \mathcal{F}} \sum_{i < j} \left( a_{ij} - \left\langle \beta, \hat{q}_{i,j}^{(2, 3, \dots, L+2)} \right\rangle \right)^2.$$ For $i = 1,2$ and any $\beta \in \mathbb{R}^{L+1}$, define $\hat{\mu}_{n,i}(\beta) = \sum_{r=1}^m \beta_r (\rho_n \mu_i)^{r+1},$ where the subscript $n$ makes implicit that there is dependence on $\rho_n.$ Defining $$R(\beta) = \E \left[ \left( \left \langle \beta,  W_n^{(2,L+2)}(x,y)   \right \rangle - W_n(x,y) \right) ^2 \right],$$ we note that for any fixed $\beta \in \mathcal{F}$, we have that 
\begin{align*}
    R(\beta) = (\rho_n \mu_1 - \hat{\mu}_{n, 1}(\beta))^2 + (k-1) \cdot (\rho_n \mu_2 - \hat{\mu}_{n, 2}(\beta))^2.
\end{align*}
Let $\omega_i, \omega_j$ be the latent features of two vertices that both correspond to being in community 1, and let $\omega_k, \omega_\ell$ be the latent features of two vertices that correspond to being in communities 1 and 2, respectively. Now, suppose that for the edges $(i,j)$ and $(k, \ell)$, LG-GNN assigns them predicted probabilities $\hat{p}_{i,j} = \left \langle \hat{\beta}^{n,L+1}, \hat{q}_{i,j}^{(2, \dots,L+2)}    \right \rangle,$ $\hat{p}_{k, \ell} = \left \langle \hat{\beta}^{n,L+1}, \hat{q}_{k, \ell}^{(2, \dots,L+2)}    \right \rangle,$ respectively, and suppose that $\hat{p}_{k, \ell} > \hat{p}_{i, j}.$ Consider
\begin{align*}
\hat{p}_{i,j} = \left \langle \hat{\beta}^{n,L+1}, \hat{q}_{i,j}^{(2, \dots,L+2)}  \right \rangle &= \left \langle \hat{\beta}^{n,L+1}, W_{n,i, j}^{(2, \dots, L+2)} \right \rangle + \left \langle \hat{\beta}^{n,L+1}, \hat{q}_{i,j}^{(2, \dots,L+2)} - W_{n,i, j}^{(2, \dots, L+2)}  \right \rangle \\
&= \hat{\mu}_{n,1}(\hat{\beta}^{n,L+1}) + C_1 \hat{\mu}_{n, 2}(\hat{\beta}^{n,L+1}) + \left \langle \hat{\beta}^{n,L+1}, \hat{q}_{i,j}^{(2, \dots,L+2)} - W_{n,i, j}^{(2, \dots, L+2)}  \right \rangle,
\end{align*}
where we simplified the first term this way in the last equality by using the form of the eigenvectors $\phi_r$, and noting that $\omega_i, \omega_j$ both correspond to vertices in community 1. In a similar way, we have that 
\begin{align*}
\hat{p}_{k,\ell} = \left \langle \hat{\beta}^{n,L+1}, \hat{q}_{k,\ell}^{(2, \dots,L+2)}  \right \rangle &= \left \langle \hat{\beta}^{n,L+1}, W_{n,k, \ell}^{(2, \dots, L+2)} \right \rangle + \left \langle \hat{\beta}^{n,L+1}, \hat{q}_{k,\ell}^{(2, \dots,L+2)} - W_{n,k, \ell}^{(2, \dots, L+2)}  \right \rangle \\
&= \hat{\mu}_{n, 1}(\hat{\beta}^{n,L+1}) + C_2 \hat{\mu}_{n,2}(\hat{\beta}^{n,L+1}) + \left \langle \hat{\beta}^{n,L+1}, \hat{q}_{k,\ell}^{(2, \dots,L+2)} - W_{n,k, \ell}^{(2, \dots, L+2)}  \right \rangle,
\end{align*}
Hence, if $\hat{p}_{k, \ell} > \hat{p}_{i, j}$, then noting that $C_1 - C_2 = k,$ we have
\begin{align*}
    k \hat{\mu}_2(\hat{\beta}^{n,L+1}) < \left \langle \hat{\beta}^{n,L+1}, \hat{q}_{k,\ell}^{(2, \dots,L+2)} - W_{n,k, \ell}^{(2, \dots, L+2)}  \right \rangle - \left \langle \hat{\beta}^{n,L+1}, \hat{q}_{i,j}^{(2, \dots,L+2)} - W_{n,i, j}^{(2, \dots, L+2)}  \right \rangle 
\end{align*}
We note that \cref{prop:graph_concentration_sparse} states that with probability at least $1 - 5/n - n \cdot \rm{exp}(-\delta_W \rho_n(n-1)/3) - \delta,$ we have that for all $2 \leq k \leq L+2$,
$$\left| \hat{q}_{i,j}^{(k)} - {W}_{n, i, j}^{(k)} \right| 
    \leq \frac{\rho_n^{k-1}}{\sqrt{n-1}} \log(n)^k \left[ 3 a_k \sqrt{\rho_n} + \frac{96 a_{k-1}}{\sqrt{d_n}} \right],$$ for some constants $a_k$, and also that the conclusion from \cref{uniform-concentration} holds. We will be conditioning on these events for the remainder of the proof. We also note that
\begin{align*}
    r(n, d_n, L+2) &:= \max_{2 \leq k\leq  L+2} \rho_n^{-(k-1)} \left( \frac{\rho_n^{k-1}}{\sqrt{n-1}} \log(n)^k \left[ 3 a_k \sqrt{\rho_n} + \frac{96 a_{k-1}}{\sqrt{d_n}} \right] \right) \\
     &= \max_{2 \leq k \leq L+2} \frac{\log(n)^k}{\sqrt{n-1}}  \left[ 3 a_k \sqrt{\rho_n} + \frac{96 a_{k-1}}{\sqrt{d_n}} \right] \\
     &= o(\rho_n) \quad \text{if $\rho_n \gg \rho_n^{2(L+2)}{n}$}
\end{align*}


Using these definitions, noting that $\left\|\hat{\beta}^{n,L+1} \right\|_{L^1} \leq \frac{1}{\mu_1 \rho_n},$ we have that $$\hat{\mu}_{n, 2}(\hat{\beta}^{n,L+1}) < \frac{2}{k \mu_1} r(n, d_n, L+1). $$
Define $\beta_0 = (1/(\mu_1 \rho_n), 0, 0, \dots, 0) \in \mathbb{R}^m$ to have $1/\mu_1$ as the first component, and 0 everywhere else. Now, we note that \cref{lemma:empirical-minus-risk} states, with the same probability above, that $$R(\beta_0) - R(\hat{\beta}^{n, L+1}) = R_n(\beta_0) - R_n(\hat{\beta}^{n,L+1}) + P,$$ where $$|P| \leq \frac{4}{\mu_1} \rho_n \cdot r(n, d_n, L+1) (T+2) + \frac{2}{\mu_1^2} \cdot r(n, d_n, L+1)^2 + A \frac{\rho_n^2}{\sqrt{n}} \log(n),$$ where $T = \frac{p^2}{\mu_1}$ and $A$ is some constant that depends on $\sqrt{\log(1/\delta)}$. We also note that $R_n(\beta_0) - R_n(\hat{\beta}^{n,L+1}) \ge 0$ because $\hat{\beta}^{n, L+1}$ is the minimizer of the empirical risk. This implies that $R(\beta_0) \ge R(\hat{\beta}^{n,L+1}) + P.$ So, noting that $R(\beta_0) = \rho_n^2 (k-1) \left( \mu_2 - \frac{\mu_2^2}{\mu_1} \right)^2$, and noting that $\frac{\mu_2}{\mu_1} < 1,$
\begin{gather*}
    \rho_n^2 (k-1) \left(  \mu_2 - \frac{\mu_2^2}{\mu_1} \right)^2 \ge ( \rho_n\mu_1 - \hat{\mu}_{n, 1}(\hat{\beta}^{n,L+1}))^2 + (k-1) ( \rho_n\mu_2 - \hat{\mu}_{n, 2}(\hat{\beta}^{n,L+1}))^2 + P \\
    \Rightarrow \rho_n^2 (k-1) \left( \mu_2 - \frac{\mu_2^2}{\mu_1} \right)^2 \ge  (k-1) (\rho_n\mu_2 - \hat{\mu}_{n, 2}(\hat{\beta}^{n,L+1}))^2 + P \\
    \Rightarrow 0 <  \rho_n^2\frac{\mu_2^3}{\mu_1}\left[ 2 - \frac{\mu_2}{\mu_1} \right] \leq  (2 \rho_n \mu_2 \hat{\mu}_{n, 2}(\hat{\beta}^{n,L+1}) + \hat{\mu}_{n, 2}(\hat{\beta}^{n,L+1})^2)- \frac{P}{k-1}.
\end{gather*}
However, this is a contradiction when $|(2 \rho_n \mu_2 \hat{\mu}_{n, 2}(\hat{\beta}^{n,L+1}) - \hat{\mu}_{n, 2}(\hat{\beta}^{n,L+1})^2)- \frac{P}{k-1}| < \rho_n^2\frac{\mu_2^3}{\mu_1}\left[ 2 - \frac{\mu_2}{\mu_1} \right].$ Consider the bounds 
$$
|P| \leq \frac{4}{\mu_1} \rho_n \cdot r(n, d_n, L+1) (T+2) + \frac{2}{\mu_1^2} \cdot r(n, d_n, L+1)^2 + A \frac{\rho_n^2}{\sqrt{n}} \log(n)
$$
Hence, when 
\begin{gather*}
    \frac{4 \rho_n \mu_2}{k \mu_1}r(n, d_n, L+1) + \frac{4}{k^2 \mu_1^2} r(n, d_n, L+1)^2 + \frac{4\rho_n }{(k-1)\mu_1}r(n, d_n, L+1) (T+2) \\
    + \frac{2}{(k-1) \mu_1^2}r(n, d_n, L+1)^2 + A \frac{\rho_n^2}{\sqrt{n}} \frac{\log(n)}{k-1} \leq \rho_n^2 \frac{\mu_2^3}{\mu_1},
\end{gather*}
the result follows. Dividing both sides by $\rho_n^2$, and multiplying by $\mu_1,$ the above is equivalent to
\begin{gather*}
    \frac{4 \mu_2}{k}\frac{r(n, d_n, L+1)}{\rho_n} + \frac{4}{k^2 \mu_1} \left( \frac{r(n, d_n, L+1)}{\rho_n}\right)^2 + \frac{4 }{(k-1)}\frac{r(n, d_n, L+1)}{\rho_n} (T+2) \\
    + \frac{2}{(k-1) \mu_1}\left( \frac{r(n, d_n, L+1)}{\rho_n}\right)^2 + \frac{A \mu_1}{\sqrt{n}} \frac{\log(n)}{k-1} \leq \mu_2^3.
\end{gather*}
The result follows.

\end{proof}

\section{Proof of \cref{prop:idenfiability}}

Note that in this proof, we assume that the sparisity factor $\rho_n = 1$. Consider a 2-community stochastic block model (see \cref{sec:sbm} for more details) parameterized by the matrix $\begin{pmatrix} p & r \\ r & q \end{pmatrix}.$ The eigenvalues and eigenvectors are given by
$$
\lambda_1 = \frac{1}{2} \left( p+q+A \right), \quad v_1 = \begin{pmatrix}
\frac{p-q+A}{2r} \\1  
\end{pmatrix}, \quad \lambda_2 = \frac{1}{2} \left( p+q-A \right), \quad v_2 = \begin{pmatrix}
\frac{p-q-A}{2r} \\1  
\end{pmatrix},
$$
where $A = \sqrt{(p-q)^2 + 4r^2}.$ Then, recall that \cref{lemma:sbm} states that the eigenvalues of the graphon representation $W$ of this SBM has eigenvalues $\mu_i = \frac{1}{2} \lambda_i.$ We also use the eigenfunctions $\phi_i$ for $W$ as written in \cref{lemma:sbm}. We recall that \cref{lemma:expectation-dotproducts} states that for all $L \ge 0$, we have
$$
    \E\left[ \langle \lambda_i^{L}, \lambda_j^{L} \rangle | A, (\omega_i)_{i=1}^n \right] = \sum_{q_1=0}^{L} \sum_{q_2=0}^{L} \binom{L}{q_1} \binom{L}{q_2} \hat{W}_{n, i, j}^{(q_1+q_2+2)} 
$$
Then, the proof of \cref{prop:estimators-to-empiricalmoments} implies that for all $i,j$,
$$
    \Rightarrow \langle \lambda_i^{L}, \lambda_j^{L} \rangle \stackrel{p}{\to} \sum_{q_1=0}^{L} \sum_{q_2=0}^{L} \binom{L}{q_1} \binom{L}{q_2} W_{n, i, j}^{(q_1+q_2+2)}
$$
In this proof, we let $c_i$ denote the community of vertex $i$ and let $S_j$ denote all of the vertices in community $j$. In the graphon reprentation of this 2-community SBM, if $\omega_i$ is the latent feature for vertex $i$, then $\omega_i \in [0, 1/2)$ if and only if vertex $i$ belongs to community 1, and $\omega_i \in [1/2, 1]$ if and only if vertex $i$ belongs to community 2. To reflect this and simplify notation, we let $W_{n, S_i, S_j} := W_n(\omega_a, \omega_b)$, where $\omega_a$ and $\omega_b$ are any $\omega \in [0,1]$ so that correspond to the appropriate communities. For example, $W_{n, S_1, S_1} = W_{n}(1/4, 1/4) = p$, which is the probability that two vertices in community 1 are connected. We write the above as

$$\sup_{k \in S_i, \ell \in S_j} \left|   \langle \lambda_k^{L}, \lambda_\ell^{L} \rangle- \sum_{q_1=0}^{L} \sum_{q_2=0}^{L} \binom{L}{q_1} \binom{L}{q_2} W_{n, S_i, S_j}^{(q_1+q_2+2)} \right| \stackrel{p}{\to} 0 $$





In the remainder of the proof, we choose parameters $p, q, r \in [0,1]$ so that $\sum_{q_1=0}^{k_1} \sum_{q_2=0}^{k_2} \binom{k_1}{q_1} \binom{k_2}{q_2} W_{n, S_2,S_2}^{(q_1+q_2+2)} = \sum_{q_1=0}^{k_1} \sum_{q_2=0}^{k_2} \binom{k_1}{q_1} \binom{k_2}{q_2} W_{n, S_1,S_2}^{(q_1+q_2+2)}$, but $W_{n, S_2, S_2} \neq W_{n, S_1, S_2}$ (this last equality indicates that the connection probability between two vertices in community 2 is different than the connection probability between a vertex in community 1 and a vertex in community 2). This would suffice for the proof, since the continuous mapping theorem would imply that for any continuous function $f$, 

$$\sup_{k \in S_1, \ell \in S_2} \left|   f \left( \langle \lambda_k^{L}, \lambda_\ell^{L} \rangle \right) - f \left(\sum_{q_1=0}^{L} \sum_{q_2=0}^{L} \binom{L}{q_1} \binom{L}{q_2} W_{n, S_1, S_2}^{(q_1+q_2+2)} \right) \right| \stackrel{p}{\to} 0 $$

$$\sup_{k \in S_2, \ell \in S_2} \left|   f \left( \langle \lambda_k^{L}, \lambda_\ell^{L} \rangle \right) - f \left(\sum_{q_1=0}^{L} \sum_{q_2=0}^{L} \binom{L}{q_1} \binom{L}{q_2} W_{n, S_2, S_2}^{(q_1+q_2+2)} \right) \right| \stackrel{p}{\to} 0,$$
and we note that $W_{n, S_2, S_2} \neq W_{n, S_1, S_2}.$ With this in mind, consider
\begin{align*}
    \sum_{q_1=0}^{L} \sum_{q_2=0}^{L} \binom{L}{q_1} \binom{L}{q_2} W_{n, S_i,S_j}^{(q_1+q_2+2)} &= \sum_{q_1=0}^{L} \sum_{q_2=0}^{L} \binom{L}{q_1} \binom{L}{q_2} \left(\sum_{r=1}^2  \mu_r^{q_1+q_2+2} \phi_r(S_i) \phi_r(S_j) \right) \\
    &= \sum_{k=0}^{2L} \sum_{r=1}^2  \binom{2L}{k} \mu_r^{k+2} \phi_r(S_i) \phi_r(S_j)  \\
    &= \sum_{r=1}^2 \mu_r^2 (\mu_r+1)^{2L} \phi_r(S_i) \phi_r(S_j) 
\end{align*}
Substituting in the forms of the eigenvectors $\phi_1, \phi_2$, it suffices to show that there exist values of $p, q, r$, with $q \neq r$, so that 
\begin{gather*}
\mu_1^2 (\mu_1 + 1)^{2L} + \mu_2^2 (\mu_2+1)^{2L} = \frac{p-q+A}{2r} \mu_1^2 (\mu_1+1)^{2L} + \frac{p-q-A}{2r} \mu_2^2 (\mu_2+1)^{2L} \\
\Leftrightarrow \mu_2^2(\mu_2+1)^{2L} (2r - (p-q) + A) = \mu_1^2(\mu_1+1)^{2L} (-2r + (p-q) + A),
\end{gather*}
which would suffice for the proof. Recall that $A = \sqrt{(p-q)^2 + 4r^2},$ $\mu_1 = \frac{1}{4} (p+q+A),$ and $\mu_2 = \frac{1}{4}(p+q-A).$ Choosing $q=0$ and substituting these values in, we obtain that the above is equivalent to 
\begin{gather*}
\underbrace{\left( p-\sqrt{p^2+4r^2} \right)^2 \left( \frac{1}{4}(p-\sqrt{p^2+4r^2} )+1  \right)^{2L} (2r-p+\sqrt{p^2 + 4r^2})}_{(1)}\\ -  \underbrace{\left( p+\sqrt{p^2+4r^2} \right)^2 \left( \frac{1}{4}(p+\sqrt{p^2+4r^2} )+1  \right)^{2L} (p-2r+\sqrt{p^2 + 4r^2})}_{(2)} = 0
\end{gather*}
We show that there exist values $p$ and $r$, $r \neq 0$, so that there exists a root for some $p, r \in (0,1).$ We will fix $p = \epsilon \ll 1$ to be some small number to be decided later. $\epsilon$ might depend on $L$. Since this function is continuous in all variables, we use the intermediate value theorem (by varying $r$) to deduce that there exists a root for some sufficiently small $p.$ Firstly, we observe that $\lim_{r \downarrow 0} (1) = 0$. On the other hand, $\lim_{r \downarrow 0} (2) = (2p)^3(1+p/2)^{2L} > 0.$ This implies that $(1) - (2) < 0$ for sufficiently small $r$. Then, it suffices to argue that $(1)-(2)> 0$ for $r=1$, as then the intermediate value theorem implies the desired result.

Let $p = \epsilon.$ Taylor's theorem implies that $\sqrt{\epsilon^2 + 4} = 2 + \frac{\epsilon^2}{4} + O(\epsilon^4).$ Hence, we can write $(1)$ as 
$$(1) = \left( 2 - \epsilon + O(\epsilon^2) \right)^2 \left( \frac{1}{2} + \frac{1}{4} \epsilon + O(\epsilon^2)  \right)^{2L} (4-\epsilon + O(\epsilon^2)).$$ We can also write $$(2) = (2 + \epsilon + O(\epsilon^2)) \left( \frac{3}{2} + \frac{1}{4}\epsilon + O(\epsilon^2) \right)^{2L} (\epsilon + O(\epsilon^2)).$$
We note that the first two terms in both $(1)$ and $(2)$ are of constant order (the constants are difference, but both of constant order). However, the third term in $(1)$ is of constant order, but the third term in $(2)$ is of order $\epsilon.$ This implies that for sufficiently small $\epsilon$, $(1) - (2) < 0$. This suffices to imply that for this small enough $\epsilon$, there is some $r > 0$ so that $(1) - (2)$ has a root. This suffices for the proof.



\section{Experiments}
\label{sec:experiments}

We present three different sets of results. The first is for real-data (the Cora dataset), and we use the in-sample train/test splitting scheme for this. The second are in-sample experiments for a variety of random graph models, and the third set are out-sample experiments for these random graph models. For clarity, we explain and define the metrics we are using in the experiments. For the real-data experiments, we consider only the in-sample setting, while for the random graph experiments, we consider both the in-sample and out-of-sample settings.

One point of clarification is that our link prediction procedure is not simply to guess a particular edge to be a positive edge if its predicted probability is over 0.5. If the graphon $W (\cdot, \cdot) < 0.5$, then this doens't make sense because the LG-GNN estimates the underlying probability of edges. 

Instead, we concern ourselves more with the \textit{ranking} of the edges and choose evaluation metrics to reflect that. Concretely, we evaluate our algorithms on whether they are able to assign higher probabilities to edges with higher underlying probabilities (and in the real-data case, whether they are able to assign higher probabilities to the positive edges than to the negative edges in the testing set). 

To reflect this, the principle metrics we use are the AUC-ROC, Hits@k (as is standard for link prediction tasks in the Stanford Open Graph Benchmark) for the real data experiments. For the random graphs, introduce a new metric called the \textit{Probability Ratio@k}, defined below, which is inspired by the Hits@K metric.

\subsection{Train/Test Splits}
We first describe our train/test split procedures. 


\subsubsection{In-Sample (random graph)}
We generate a graph $G = (V,E)$ with $V = [n]$ ($n$ vertices). Let $N$ be the set of non-edges. Concretely, $N = \{ (i,j), i \neq j \in [n] | (i,j) \not \in E_n\}.$ We then split the edges into a train, validation, and testing set as follows.

For each edge $e \in E$, we remove it from the graph (independently from all the other edges) with probability $p=0.2$. The edges that are not removed are labelled $E_{train}.$ The set $E_{train}$ will be the set of positive training edges. Among the edges that were removed, half of those will be the set of positive validation edges and the remaining will be the set of positive test edges. Call these $E_{val}$ and $E_{test},$ respectively. During training, message passing only occurs along the edges in $E_{train}.$ 

Now, we select the negative edges among the set of edges $N \cup E_{test}.$ Specifically, $1-p$ fraction of these edges will be the negative training edges, $p/2$ fraction will be the negative validation edges, and the remaining $p/2$ fraction will be the negative testing edges. 

It is important to pick the negative training edges from the set $N \cup E_{test}$, as opposed to simply from $N$. If the negative training edges were sampled only from $N$, then this would give implicit information about where the edges are in the graph. The model should not have access to which edges are in $E_{test}$ vs in $N$ a priori; if the negative training edges that were given to it are only from $N$, then it would implicitly know that the edges in $E_{test}$ are less likely to be negative edges. Indeed, when we trained the GCN on a 2-community SBM with parameters 80 and 20 in the setting in which the negative training edges were sampled only from $N$, it was able to estimate the underlying parameters $80$ and $20$ almost perfectly, which should be impossible if it only had access to a graph with edges removed.

\subsubsection{In-Sample (real data)}

For real data, we use the same train/test split procedure as described above. However, during training and testing, we do not use the entire set of negative edges. This is because the graph is very sparse, and hence there are many more negative edges than positive edges. This makes link prediction difficult and causes the training procedure to be erratic.

\subsubsection{Out-Sample}

For each random graph model, we generate a graph $G = (V,E)$ with $V = [n]$ ($n$ vertices). We partition $V = V_1 \cup V_2,$ where $V_1$ contains a random $1-p=0.8$-fraction of the original set of vertices, and $V_2$ contains the remaining $p$ fraction. Let $G_1$ be the subgraph induced by $V_1$ (i.e., the set of all positive and negative edges with both endpoints in $V_1$). Let $E_1$ be the set of edges that have both endpoints in $V_1.$ Let $E_2$ be the set of edges that have at least one vertex in $V_2$, and let $N_2$ be the set of negative edges with at least one vertex in $V_2.$

We pick a random $1-p$ fraction of the positive and negative edges from $G_1$ to be the training positive and negative edges, and the remaining $p$ fraction to be the validation edges. Then, we pick $p$ fraction of the positive edges in $E_2$ and $p$ fraction of the negative edges in $N_2$ to be the testing edges. The remaining edges in $E_2$ and $N_2$ we will refer to as message-passing edges. 

We first train the models on the positive + validation edges. Then once the model is trained, we compute the embedding vectors by running message passing on the set of train  + message passing edges. Finally, we do edge prediction on the testing edges.

\subsection{Definition of Probability Ratio}

    Let $P(e)$ be the underlying probability of an edge $e=(i,j)$. In the graphon model, $P((i,j)) = W_n(\omega_i, \omega_j),$ and note that we have access to these values. Given a set of edges $E = \{ e_i = (v_{i,1}, v_{i,2}) \}_{i=1}^{|E|}$, we say that a link prediction algorithm ranks the edges as  $e_{i_1} > e_{i_2} > \dots > e_{i_{|E|}}$ if  $\hat{p}_{e_{i_1}} > \hat{p}_{e_{i_2}}>\dots>\hat{p}_{e_{i_{|E|}}},$ where $\hat{p}_{e}$ is the probability that the algorithm predicts for the edge $e$. Given some edge ranking as above, define the total predicted probability as $$P_{pred, k} := \sum_{r=1}^k P(e_{i_r})$$ and the maximum probability as $$P_{max, k} := \max_{e_1 \neq e_2 \neq \dots \neq e_k \in E} \sum_{r=1}^k P(e_r).$$ In other words, the $P_{max, k}$ is the sum of the probabilities of the top $k$ most likely edges in $E$. Then, the probability ratio is defined as $P_{pred, k}/P_{max, k}.$

    In essence, the Probability Ratio@k captures what fraction of the top $k$ probabilities a link prediction algorithm can capture. For example, suppose that there are three testing edges $e_1, e_2, e_3$ with underlying probabilities $0.8, 0.5, 0.2$, respectively. Suppose that some edge prediction algorithm ranks the edges as $e_1 > e_3 > e_2.$ Then the Probability Ratio@2 is equal to $\frac{0.8+0.2}{0.8+0.5} \approx 0.77.$



\subsection{Real-Data: Cora}

For the dataset, we perform a train/test split using the StellarGraph edge splitter, which randomly removes positive edges while ensuring that the resulting graph remains connected. For the negative training edges, we sample an equal number of negative edges as positive edges to train on. For the negative testing edges, we sample an equal number of negative edges as positive testing edges.

\subsubsection{Results without Node Features}


\begin{center}
    
\begin{table}[h]
\caption{GCN does not have access to node features}
\centering
\captionsetup{justification=centering,margin=2cm}
\label{tab:simulation_results}
\begin{tabular}{lllll}
\toprule
Parameter Set & Model & Cross Entropy & Hits@50 & Hits@100 \\
\midrule
\multirow{3}{*}{layers=2} 
 & GCN & \textbf{0.645} $\pm$ 0.043 & 0.496 $\pm$ 0.025 & 0.633 $\pm$ 0.023 \\
 & LG-GNN & 2.953 $\pm$ 0.013 & 0.565 $\pm$ 0.012 & 0.637 $\pm$ 0.006 \\
 & PLSG-GNN & 0.679 $\pm$ 0.012 & \textbf{0.591} $\pm$ 0.014 & \textbf{0.646} $\pm$ 0.013 \\
\cline{1-5}
\multirow{3}{*}{layers=4}
 & GCN & 0.689 $\pm$ 0.002 & 0.539 $\pm$ 0.008 & 0.665 $\pm$ 0.007 \\
 & LG-GNN & 2.682 $\pm$ 0.010 & 0.564 $\pm$ 0.005 & 0.620 $\pm$ 0.008 \\
 & PLSG-GNN & \textbf{0.660} $\pm$ 0.030 & \textbf{0.578} $\pm$ 0.014 & 0.637 $\pm$ 0.013 \\
\cline{1-5}
\bottomrule
\end{tabular}
\end{table}
\end{center}

\FloatBarrier

\subsubsection{Results with Node Features (GCN has access to node features)}

\begin{table}[h]
\caption{GCN has access to node features}
\centering
\captionsetup{justification=centering,margin=2cm}
\label{tab:simulation_results}
\begin{tabular}{lllll}
\toprule
Parameter Set & Model & Cross Entropy & Hits@50 & Hits@100 \\
\midrule
\multirow{3}{*}{layers=2} 
 & GCN & \textbf{0.487} $\pm$ 0.003 & \textbf{0.753} $\pm$ 0.019 & \textbf{0.898} $\pm$ 0.021 \\
 & LG-GNN & 3.034 $\pm$ 0.285 & 0.555 $\pm$ 0.027 & 0.603 $\pm$ 0.034 \\
 & PLSG-GNN & 0.679 $\pm$ 0.027 & 0.577 $\pm$ 0.033 & 0.626 $\pm$ 0.042 \\
\cline{1-5}
\multirow{3}{*}{layers=4} 
 & GCN & \textbf{0.661} $\pm$ 0.041 & \textbf{0.609} $\pm$ 0.072 & \textbf{0.776} $\pm$ 0.069 \\
 & LG-GNN & 2.711 $\pm$ 0.213 & 0.560 $\pm$ 0.013 & 0.601 $\pm$ 0.012 \\
 & PLSG-GNN & 0.677 $\pm$ 0.019 & 0.574 $\pm$ 0.025 & 0.625 $\pm$ 0.024 \\
\cline{1-5}
\bottomrule
\end{tabular}
\end{table}
\FloatBarrier

\cref{fig:images} shows histograms of the predicted probabilities by each of the algorithms (with the two cases of the GCN having access or not having access to the node features). This is to give a visual demonstrate as to what PLSG-GNN is doing. There is a "low probability" hump around 0.25, but then smaller peaks of high-probability predictions. The humps clearly separate the edges in regimes of how connected they are and show clearly the properties of the graph topology. 

\begin{figure}[h]
\centering
\begin{subfigure}[b]{0.49\columnwidth}
    \includegraphics[width=\linewidth]{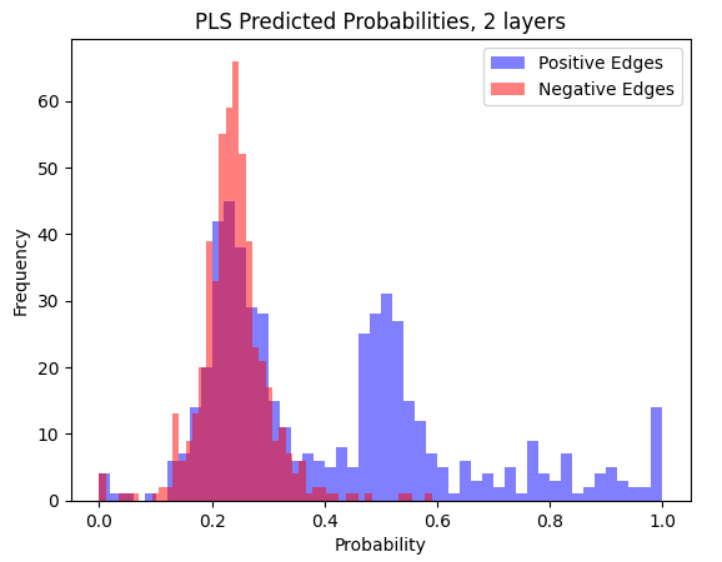}
    \label{fig:image1}
\end{subfigure}
\begin{subfigure}[b]{0.49\columnwidth}
    \includegraphics[width=\linewidth]{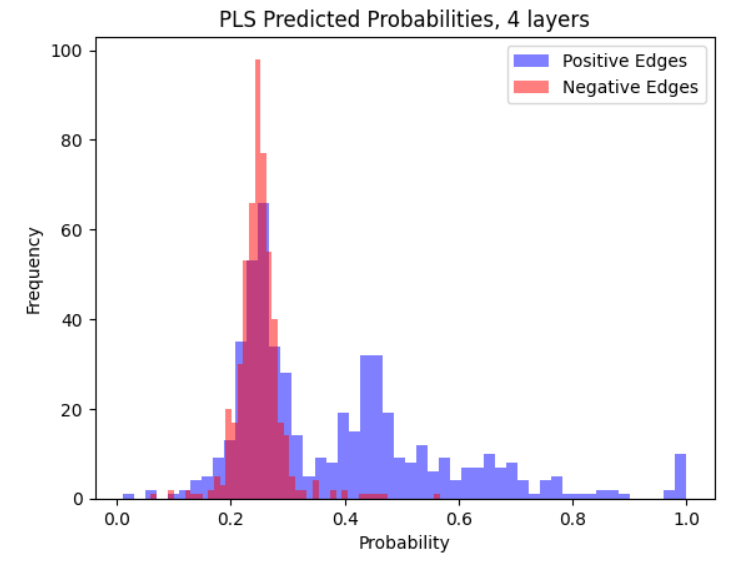}
    \label{fig:image2}
\end{subfigure}


\begin{subfigure}[b]{0.49\columnwidth}
    \includegraphics[width=\linewidth]{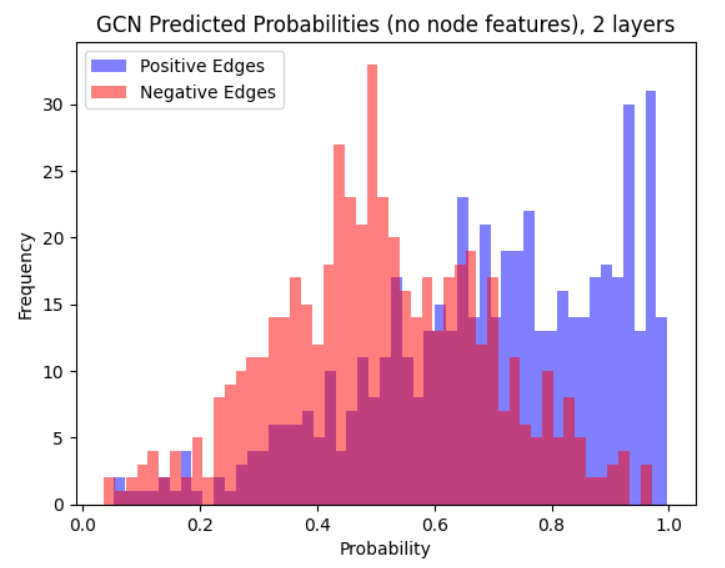}
    \label{fig:image3}
\end{subfigure}
\begin{subfigure}[b]{0.49\columnwidth}
    \includegraphics[width=\linewidth]{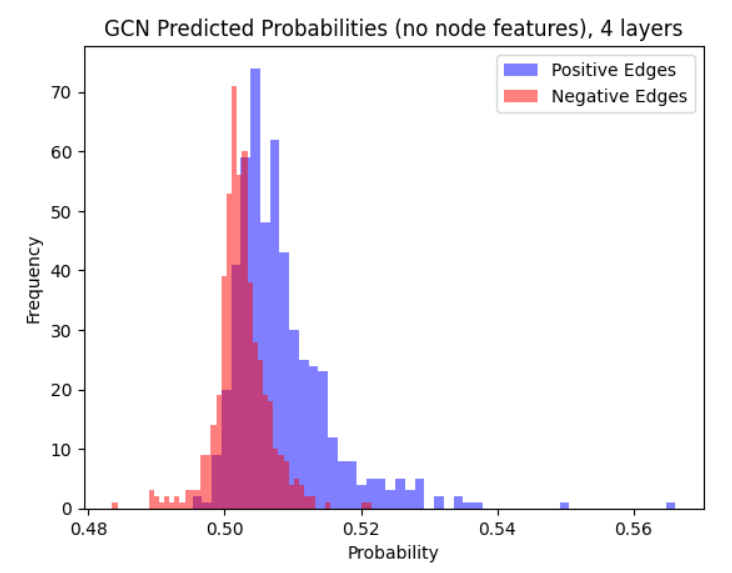}
    \label{fig:image4}
\end{subfigure}


\begin{subfigure}[b]{0.49\columnwidth}
    \includegraphics[width=\linewidth]{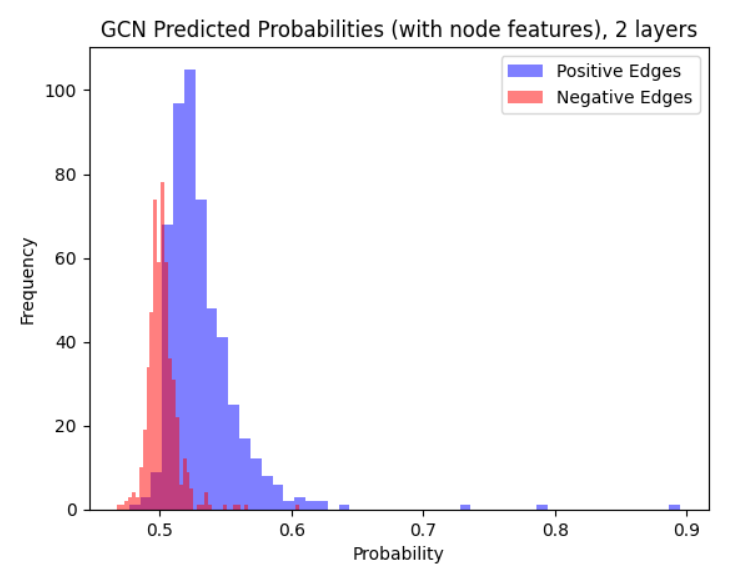}
    \label{fig:image5}
\end{subfigure}
\begin{subfigure}[b]{0.49\columnwidth}
    \includegraphics[width=\linewidth]{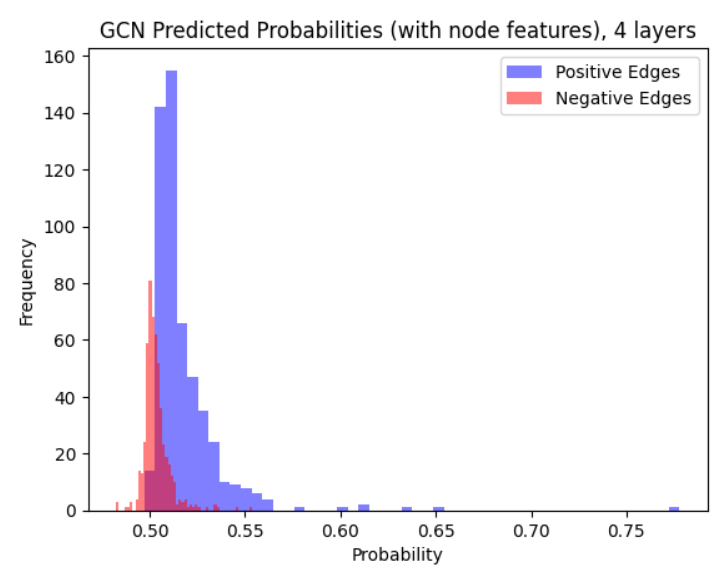}
    \label{fig:image6}
\end{subfigure}

\caption{Plot of the predicted probabilities by the PLS Regression (row 1), GCN without node features (row 2), and GCN with node features (row 3). The left column shows for 2 layers, the right column shows for 4 layers.}
\label{fig:images}
\end{figure}
\FloatBarrier

\section{Experiments (In-Sample)}

\subsection{6SSBM (80-20)}

6-community symmetric stochastic block model with connection probabilities 0.8 and 0.2.
\begin{center}

\begin{table}[h]
\caption{Symmetric Stochastic Block Model with Connection Probabilities 0.8, 0.2}
\centering
\captionsetup{justification=centering,margin=2cm}
\label{tab:simulation_results}
\begin{tabular}{llllll}
\toprule
 &  & Cross Entropy & Prob Ratio @ 100 & Prob Ratio @ 500 & AUC ROC \\
Parameters & Model &  &  &  &  \\
\midrule
\multirow[t]{3}{*}{rho=1, layers=2} & GCN & 0.587 $\pm$ 0.010 & \textbf{1.000} $\pm$ 0.000 & \textbf{1.000} $\pm$ 0.000 & \textbf{0.697} $\pm$ 0.002 \\
 & LG-GNN & \textbf{0.563} $\pm$ 0.001 & \textbf{1.000} $\pm$ 0.000 & \textbf{1.000} $\pm$ 0.000 & 0.677 $\pm$ 0.003 \\
 & PLSG-GNN & 0.583 $\pm$ 0.004 & \textbf{1.000} $\pm$ 0.000 & \textbf{1.000} $\pm$ 0.000 & 0.673 $\pm$ 0.004 \\
\cline{1-6}
\multirow[t]{3}{*}{rho=1, layers=4} & GCN & 0.693 $\pm$ 0.000 & 0.973 $\pm$ 0.009 & 0.978 $\pm$ 0.006 & 0.640 $\pm$ 0.029 \\
 & LG-GNN & \textbf{0.532} $\pm$ 0.000 & \textbf{1.000} $\pm$ 0.000 & \textbf{1.000} $\pm$ 0.000 & \textbf{0.697} $\pm$ 0.001 \\
 & PLSG-GNN & 0.579 $\pm$ 0.002 & \textbf{1.000} $\pm$ 0.000 & 0.999 $\pm$ 0.001 & 0.680 $\pm$ 0.002 \\
\cline{1-6}
\multirow[t]{3}{*}{rho=1/sqrt(n), layers=2} & GCN & 0.693 $\pm$ 0.000 & 0.388 $\pm$ 0.058 & 0.408 $\pm$ 0.006 & \textbf{0.503} $\pm$ 0.007 \\
 & LG-GNN & \textbf{0.090} $\pm$ 0.001 & \textbf{0.458} $\pm$ 0.035 & \textbf{0.450} $\pm$ 0.006 & 0.503 $\pm$ 0.007 \\
 & PLSG-GNN & 0.091 $\pm$ 0.002 & 0.398 $\pm$ 0.014 & 0.393 $\pm$ 0.010 & 0.491 $\pm$ 0.007 \\
\cline{1-6}
\multirow[t]{3}{*}{rho=1/sqrt(n), layers=4} & GCN & 0.693 $\pm$ 0.000 & 0.383 $\pm$ 0.009 & 0.376 $\pm$ 0.004 & 0.499 $\pm$ 0.007 \\
 & LG-GNN & \textbf{0.088} $\pm$ 0.000 & \textbf{0.430} $\pm$ 0.016 & \textbf{0.439} $\pm$ 0.013 & 0.505 $\pm$ 0.004 \\
 & PLSG-GNN & 0.089 $\pm$ 0.001 & 0.388 $\pm$ 0.009 & 0.388 $\pm$ 0.012 & \textbf{0.507} $\pm$ 0.001 \\
\cline{1-6}
\multirow[t]{3}{*}{rho=log(n)/n, layers=2} & GCN & 0.693 $\pm$ 0.000 & \textbf{0.382} $\pm$ 0.009 & 0.376 $\pm$ 0.012 & 0.458 $\pm$ 0.014 \\
 & LG-GNN & 0.021 $\pm$ 0.001 & 0.367 $\pm$ 0.004 & \textbf{0.389} $\pm$ 0.002 & 0.517 $\pm$ 0.006 \\
 & PLSG-GNN & \textbf{0.019} $\pm$ 0.000 & 0.370 $\pm$ 0.027 & 0.374 $\pm$ 0.012 & \textbf{0.521} $\pm$ 0.004 \\
\cline{1-6}
\multirow[t]{3}{*}{rho=log(n)/n, layers=4} & GCN & 0.693 $\pm$ 0.000 & 0.405 $\pm$ 0.007 & 0.380 $\pm$ 0.005 & 0.496 $\pm$ 0.010 \\
 & LG-GNN & 0.021 $\pm$ 0.000 & \textbf{0.470} $\pm$ 0.007 & \textbf{0.410} $\pm$ 0.003 & 0.505 $\pm$ 0.017 \\
 & PLSG-GNN & \textbf{0.019} $\pm$ 0.000 & 0.380 $\pm$ 0.004 & 0.375 $\pm$ 0.007 & \textbf{0.513} $\pm$ 0.010 \\
\cline{1-6}
\bottomrule
\end{tabular}
\end{table}
\end{center}
\FloatBarrier

\newpage
\subsection{6SSBM (55-45)}

6-community symmetric stochastic block model with edge connection probabilities 0.55 and 0.45.

\begin{center}
\begin{table}[h]
\caption{Symmetric Stochastic Block Model with Connection Probabilities 0.55, 0.45}
\centering
\captionsetup{justification=centering,margin=1cm}
\label{tab:simulation_results}
\begin{tabular}{llllll}
\toprule
 &  & Cross Entropy & Prob Ratio @ 100 & Prob Ratio @ 500 & AUC ROC \\
Parameters & Model &  &  &  &  \\
\midrule
\multirow[t]{3}{*}{rho=1, layers=2} & GCN & 0.693 $\pm$ 0.000 & \textbf{0.859} $\pm$ 0.004 & \textbf{0.852} $\pm$ 0.001 & 0.500 $\pm$ 0.001 \\
 & LG-GNN & 0.695 $\pm$ 0.000 & 0.849 $\pm$ 0.006 & 0.849 $\pm$ 0.003 & \textbf{0.500} $\pm$ 0.002 \\
 & PLSG-GNN & \textbf{0.693} $\pm$ 0.000 & 0.849 $\pm$ 0.006 & 0.849 $\pm$ 0.003 & 0.500 $\pm$ 0.002 \\
\cline{1-6}
\multirow[t]{3}{*}{rho=1, layers=4} & GCN & 0.693 $\pm$ 0.000 & 0.847 $\pm$ 0.000 & 0.846 $\pm$ 0.003 & 0.500 $\pm$ 0.000 \\
 & LG-GNN & 0.695 $\pm$ 0.000 & 0.848 $\pm$ 0.005 & 0.850 $\pm$ 0.002 & 0.500 $\pm$ 0.001 \\
 & PLSG-GNN & \textbf{0.693} $\pm$ 0.000 & \textbf{0.853} $\pm$ 0.005 & \textbf{0.851} $\pm$ 0.002 & \textbf{0.501} $\pm$ 0.001 \\
\cline{1-6}
\multirow[t]{3}{*}{rho=1/sqrt(n), layers=2} & GCN & 0.693 $\pm$ 0.000 & 0.847 $\pm$ 0.005 & 0.848 $\pm$ 0.001 & 0.502 $\pm$ 0.002 \\
 & LG-GNN & \textbf{0.130} $\pm$ 0.000 & \textbf{0.852} $\pm$ 0.001 & \textbf{0.849} $\pm$ 0.000 & 0.503 $\pm$ 0.006 \\
 & PLSG-GNN & 0.130 $\pm$ 0.001 & 0.850 $\pm$ 0.011 & 0.848 $\pm$ 0.004 & \textbf{0.506} $\pm$ 0.003 \\
\cline{1-6}
\multirow[t]{3}{*}{rho=1/sqrt(n), layers=4} & GCN & 0.693 $\pm$ 0.000 & 0.848 $\pm$ 0.002 & 0.847 $\pm$ 0.002 & 0.502 $\pm$ 0.006 \\
 & LG-GNN & \textbf{0.121} $\pm$ 0.001 & \textbf{0.849} $\pm$ 0.007 & 0.849 $\pm$ 0.001 & \textbf{0.502} $\pm$ 0.010 \\
 & PLSG-GNN & 0.131 $\pm$ 0.001 & 0.848 $\pm$ 0.005 & \textbf{0.852} $\pm$ 0.002 & 0.495 $\pm$ 0.004 \\
\cline{1-6}
\multirow[t]{3}{*}{rho=log(n)/n, layers=2} & GCN & 0.693 $\pm$ 0.000 & \textbf{0.853} $\pm$ 0.002 & 0.850 $\pm$ 0.002 & 0.489 $\pm$ 0.000 \\
 & LG-GNN & 0.031 $\pm$ 0.000 & 0.845 $\pm$ 0.007 & \textbf{0.850} $\pm$ 0.003 & \textbf{0.508} $\pm$ 0.008 \\
 & PLSG-GNN & \textbf{0.030} $\pm$ 0.001 & 0.852 $\pm$ 0.005 & 0.850 $\pm$ 0.002 & 0.496 $\pm$ 0.006 \\
\cline{1-6}
\multirow[t]{3}{*}{rho=log(n)/n, layers=4} & GCN & 0.693 $\pm$ 0.000 & 0.851 $\pm$ 0.004 & 0.848 $\pm$ 0.002 & 0.487 $\pm$ 0.004 \\
 & LG-GNN & 0.031 $\pm$ 0.001 & 0.847 $\pm$ 0.001 & 0.848 $\pm$ 0.003 & 0.492 $\pm$ 0.005 \\
 & PLSG-GNN & \textbf{0.030} $\pm$ 0.001 & \textbf{0.856} $\pm$ 0.007 & \textbf{0.852} $\pm$ 0.003 & \textbf{0.508} $\pm$ 0.017 \\
\cline{1-6}
\bottomrule
\end{tabular}
\end{table}
\end{center}
\FloatBarrier
\newpage

\subsection{10 SBM}

10-community stochastic block model with parameter matrix $P$ that has randomly generated entries. The diagonal entries $P_{i,i}$ are generated as $\text{Unif}(0.5, 1)$, and $P_{i,j}$ is generated as $\text{Unif}(0, \min(P_{i,i}, P_{j,j}))$. The connection matrix is 

\begin{center}
   $$ \begin{pmatrix}
0.9949 & 0.3084 & 0.4553 & 0.3747 & 0.6187 & 0.0052 & 0.2626 & 0.5787 & 0.4540 & 0.6768 \\
0.3084 & 0.8309 & 0.6851 & 0.0571 & 0.5225 & 0.3345 & 0.1279 & 0.0197 & 0.7063 & 0.7795 \\
0.4553 & 0.6851 & 0.7854 & 0.1000 & 0.7726 & 0.1882 & 0.1736 & 0.6723 & 0.3278 & 0.6033 \\
0.3747 & 0.0571 & 0.1000 & 0.6160 & 0.1168 & 0.0965 & 0.0021 & 0.1856 & 0.3248 & 0.4507 \\
0.6187 & 0.5225 & 0.7726 & 0.1168 & 0.8614 & 0.5492 & 0.1098 & 0.4278 & 0.6386 & 0.1171 \\
0.0052 & 0.3345 & 0.1882 & 0.0965 & 0.5492 & 0.6623 & 0.4277 & 0.0070 & 0.1145 & 0.2878 \\
0.2626 & 0.1279 & 0.1736 & 0.0021 & 0.1098 & 0.4277 & 0.5528 & 0.2016 & 0.5466 & 0.0410 \\
0.5787 & 0.0197 & 0.6723 & 0.1856 & 0.4278 & 0.0070 & 0.2016 & 0.8805 & 0.5233 & 0.0777 \\
0.4540 & 0.7063 & 0.3278 & 0.3248 & 0.6386 & 0.1145 & 0.5466 & 0.5233 & 0.9510 & 0.4890 \\
0.6768 & 0.7795 & 0.6033 & 0.4507 & 0.1171 & 0.2878 & 0.0410 & 0.0777 & 0.4890 & 0.8526 \\
\end{pmatrix}$$

\end{center}
\begin{center}
\begin{table}[h]
\caption{10-community SBM with randomly generated parameters}
\centering
\captionsetup{justification=centering,margin=1cm}
\label{tab:simulation_results}
\begin{tabular}{llllll}
\toprule
 &  & Cross Entropy & Prob Ratio @ 100 & Prob Ratio @ 500 & AUC ROC \\
Parameters & Model &  &  &  &  \\
\midrule
\multirow[t]{3}{*}{rho=1, layers=2} & GCN & 0.599 $\pm$ 0.001 & 0.878 $\pm$ 0.007 & \textbf{0.872} $\pm$ 0.007 & \textbf{0.764} $\pm$ 0.001 \\
 & LG-GNN & 0.588 $\pm$ 0.001 & 0.908 $\pm$ 0.008 & 0.867 $\pm$ 0.009 & 0.726 $\pm$ 0.002 \\
 & PLSG-GNN & \textbf{0.588} $\pm$ 0.001 & \textbf{0.909} $\pm$ 0.007 & 0.867 $\pm$ 0.006 & 0.727 $\pm$ 0.001 \\
\cline{1-6}
\multirow[t]{3}{*}{rho=1, layers=4} & GCN & 0.677 $\pm$ 0.003 & 0.737 $\pm$ 0.094 & 0.758 $\pm$ 0.106 & 0.672 $\pm$ 0.011 \\
 & LG-GNN & \textbf{0.562} $\pm$ 0.008 & 0.868 $\pm$ 0.042 & \textbf{0.868} $\pm$ 0.037 & \textbf{0.780} $\pm$ 0.002 \\
 & PLSG-GNN & 0.588 $\pm$ 0.001 & \textbf{0.896} $\pm$ 0.038 & 0.858 $\pm$ 0.023 & 0.728 $\pm$ 0.001 \\
\cline{1-6}
\multirow[t]{3}{*}{rho=1/sqrt(n), layers=2} & GCN & 0.693 $\pm$ 0.000 & 0.288 $\pm$ 0.022 & 0.315 $\pm$ 0.008 & 0.505 $\pm$ 0.003 \\
 & LG-GNN & 0.111 $\pm$ 0.002 & 0.561 $\pm$ 0.015 & 0.546 $\pm$ 0.023 & 0.515 $\pm$ 0.003 \\
 & PLSG-GNN & \textbf{0.110} $\pm$ 0.001 & \textbf{0.610} $\pm$ 0.008 & \textbf{0.577} $\pm$ 0.005 & \textbf{0.520} $\pm$ 0.005 \\
\cline{1-6}
\multirow[t]{3}{*}{rho=1/sqrt(n), layers=4} & GCN & 0.693 $\pm$ 0.000 & 0.298 $\pm$ 0.048 & 0.312 $\pm$ 0.043 & 0.512 $\pm$ 0.014 \\
 & LG-GNN & \textbf{0.105} $\pm$ 0.003 & 0.584 $\pm$ 0.036 & 0.564 $\pm$ 0.011 & 0.516 $\pm$ 0.008 \\
 & PLSG-GNN & 0.110 $\pm$ 0.002 & \textbf{0.589} $\pm$ 0.022 & \textbf{0.564} $\pm$ 0.003 & \textbf{0.517} $\pm$ 0.011 \\
\cline{1-6}
\multirow[t]{3}{*}{rho=log(n)/n, layers=2} & GCN & 0.693 $\pm$ 0.000 & 0.300 $\pm$ 0.017 & 0.307 $\pm$ 0.014 & 0.478 $\pm$ 0.019 \\
 & LG-GNN & 0.026 $\pm$ 0.000 & 0.486 $\pm$ 0.010 & \textbf{0.494} $\pm$ 0.005 & \textbf{0.525} $\pm$ 0.009 \\
 & PLSG-GNN & \textbf{0.024} $\pm$ 0.000 & \textbf{0.493} $\pm$ 0.012 & 0.490 $\pm$ 0.004 & 0.514 $\pm$ 0.018 \\
\cline{1-6}
\multirow[t]{3}{*}{rho=log(n)/n, layers=4} & GCN & 0.693 $\pm$ 0.000 & 0.312 $\pm$ 0.013 & 0.303 $\pm$ 0.010 & \textbf{0.517} $\pm$ 0.019 \\
 & LG-GNN & 0.026 $\pm$ 0.002 & \textbf{0.498} $\pm$ 0.004 & 0.494 $\pm$ 0.007 & 0.517 $\pm$ 0.014 \\
 & PLSG-GNN & \textbf{0.025} $\pm$ 0.001 & 0.496 $\pm$ 0.008 & \textbf{0.501} $\pm$ 0.006 & 0.514 $\pm$ 0.013 \\
\cline{1-6}
\bottomrule
\end{tabular}
\end{table}
\end{center}
\FloatBarrier

\newpage

\subsection{Geometric Graph}

Each vertex $i$ has a latent feature $X_i$ generated uniformly at random on $\mathbb{S}^{d-1}$, $d=11.$ Two vertices $i$ and $j$ are connected if $\langle X_i, X_j \rangle \ge t = 0.2,$ corresponding to a connection probability $\approx 0.26.$ Higher sparsity is achieved by adjusting $t$. 

\begin{center}
\begin{table}[h]
\caption{Geometric Graph with threshold 0.2 (corresponding to a connection probability of $\approx 0.26$)}
\centering
\captionsetup{justification=centering,margin=1cm}
\label{tab:simulation_results}
\begin{tabular}{llllll}
\toprule
 &  & Cross Entropy & Prob Ratio @ 100 & Prob Ratio @ 500 & AUC ROC \\
Parameters & Model &  &  &  &  \\
\midrule
\multirow[t]{3}{*}{rho=1, layers=2} & GCN & 0.537 $\pm$ 0.012 & \textbf{1.000} $\pm$ 0.000 & \textbf{1.000} $\pm$ 0.000 & 0.886 $\pm$ 0.015 \\
 & LG-GNN & 0.354 $\pm$ 0.005 & \textbf{1.000} $\pm$ 0.000 & \textbf{1.000} $\pm$ 0.000 & 0.916 $\pm$ 0.004 \\
 & PLSG-GNN & \textbf{0.343} $\pm$ 0.006 & \textbf{1.000} $\pm$ 0.000 & 0.996 $\pm$ 0.003 & \textbf{0.918} $\pm$ 0.005 \\
\cline{1-6}
\multirow[t]{3}{*}{rho=1, layers=4} & GCN & 0.693 $\pm$ 0.000 & 0.900 $\pm$ 0.127 & 0.767 $\pm$ 0.075 & 0.759 $\pm$ 0.039 \\
 & LG-GNN & 0.305 $\pm$ 0.002 & \textbf{1.000} $\pm$ 0.000 & \textbf{1.000} $\pm$ 0.000 & 0.950 $\pm$ 0.002 \\
 & PLSG-GNN & \textbf{0.301} $\pm$ 0.002 & \textbf{1.000} $\pm$ 0.000 & 0.999 $\pm$ 0.001 & \textbf{0.956} $\pm$ 0.002 \\
\cline{1-6}
\multirow[t]{3}{*}{rho=1/sqrt(n), layers=2} & GCN & 0.693 $\pm$ 0.000 & 0.333 $\pm$ 0.062 & 0.232 $\pm$ 0.030 & \textbf{0.848} $\pm$ 0.007 \\
 & LG-GNN & 0.046 $\pm$ 0.002 & \textbf{0.637} $\pm$ 0.059 & \textbf{0.379} $\pm$ 0.020 & 0.822 $\pm$ 0.012 \\
 & PLSG-GNN & \textbf{0.046} $\pm$ 0.002 & 0.453 $\pm$ 0.076 & 0.293 $\pm$ 0.035 & 0.844 $\pm$ 0.012 \\
\cline{1-6}
\multirow[t]{3}{*}{rho=1/sqrt(n), layers=4} & GCN & 0.693 $\pm$ 0.000 & 0.410 $\pm$ 0.016 & 0.275 $\pm$ 0.021 & \textbf{0.883} $\pm$ 0.003 \\
 & LG-GNN & \textbf{0.045} $\pm$ 0.001 & \textbf{0.637} $\pm$ 0.054 & \textbf{0.377} $\pm$ 0.024 & 0.827 $\pm$ 0.003 \\
 & PLSG-GNN & 0.045 $\pm$ 0.001 & 0.530 $\pm$ 0.079 & 0.345 $\pm$ 0.012 & 0.848 $\pm$ 0.003 \\
\cline{1-6}
\multirow[t]{3}{*}{rho=log(n)/n, layers=2} & GCN & 0.693 $\pm$ 0.000 & 0.003 $\pm$ 0.005 & 0.019 $\pm$ 0.005 & \textbf{0.624} $\pm$ 0.018 \\
 & LG-GNN & 0.019 $\pm$ 0.001 & \textbf{0.163} $\pm$ 0.021 & \textbf{0.247} $\pm$ 0.005 & 0.607 $\pm$ 0.019 \\
 & PLSG-GNN & \textbf{0.019} $\pm$ 0.000 & 0.100 $\pm$ 0.024 & 0.097 $\pm$ 0.014 & 0.611 $\pm$ 0.009 \\
\cline{1-6}
\multirow[t]{3}{*}{rho=log(n)/n, layers=4} & GCN & 0.693 $\pm$ 0.000 & 0.003 $\pm$ 0.005 & 0.011 $\pm$ 0.009 & 0.608 $\pm$ 0.018 \\
 & LG-GNN & 0.019 $\pm$ 0.001 & \textbf{0.170} $\pm$ 0.022 & \textbf{0.237} $\pm$ 0.029 & 0.609 $\pm$ 0.032 \\
 & PLSG-GNN & \textbf{0.018} $\pm$ 0.001 & 0.133 $\pm$ 0.019 & 0.154 $\pm$ 0.023 & \textbf{0.634} $\pm$ 0.022 \\
\cline{1-6}
\bottomrule
\end{tabular}
\end{table}
\end{center}
\FloatBarrier

\newpage

\newpage

\section{Out-Sample Experiments}

\subsection{6SSBM (80-20)}

6-community symmetric stochastic block model with connection probabilities 0.8 and 0.2.
\begin{center}
\begin{table}[h]
\caption{ Symmetric Stochastic Block Model with Connection Probabilities 0.8, 0.2}
\centering
\captionsetup{justification=centering,margin=2cm}
\label{tab:simulation_results}
\begin{tabular}{llllll}
\toprule
 &  & Cross Entropy & Prob Ratio @ 100 & Prob Ratio @ 500 & AUC ROC \\
Parameters & Model &  &  &  &  \\
\midrule
\multirow[t]{3}{*}{rho=1, layers=2} & GCN & 0.610 $\pm$ 0.022 & \textbf{1.000} $\pm$ 0.000 & \textbf{1.000} $\pm$ 0.000 & \textbf{0.699} $\pm$ 0.001 \\
 & LG-GNN & \textbf{0.569} $\pm$ 0.004 & 0.998 $\pm$ 0.004 & 0.998 $\pm$ 0.001 & 0.682 $\pm$ 0.002 \\
 & PLSG-GNN & 0.730 $\pm$ 0.184 & 0.998 $\pm$ 0.004 & 0.999 $\pm$ 0.001 & 0.677 $\pm$ 0.002 \\
\cline{1-6}
\multirow[t]{3}{*}{rho=1, layers=4} & GCN & 0.693 $\pm$ 0.000 & 0.623 $\pm$ 0.106 & 0.581 $\pm$ 0.090 & 0.520 $\pm$ 0.004 \\
 & LG-GNN & \textbf{0.545} $\pm$ 0.004 & \textbf{1.000} $\pm$ 0.000 & \textbf{1.000} $\pm$ 0.000 & \textbf{0.698} $\pm$ 0.001 \\
 & PLSG-GNN & 0.585 $\pm$ 0.014 & \textbf{1.000} $\pm$ 0.000 & 0.997 $\pm$ 0.003 & 0.680 $\pm$ 0.001 \\
\cline{1-6}
\multirow[t]{3}{*}{rho=1/sqrt(n), layers=2} & GCN & 0.693 $\pm$ 0.000 & 0.448 $\pm$ 0.028 & 0.423 $\pm$ 0.005 & 0.502 $\pm$ 0.008 \\
 & LG-GNN & 0.092 $\pm$ 0.002 & \textbf{0.450} $\pm$ 0.031 & \textbf{0.436} $\pm$ 0.013 & \textbf{0.508} $\pm$ 0.004 \\
 & PLSG-GNN & \textbf{0.091} $\pm$ 0.002 & 0.405 $\pm$ 0.004 & 0.387 $\pm$ 0.003 & 0.503 $\pm$ 0.003 \\
\cline{1-6}
\multirow[t]{3}{*}{rho=1/sqrt(n), layers=4} & GCN & 0.693 $\pm$ 0.000 & 0.415 $\pm$ 0.006 & 0.387 $\pm$ 0.004 & \textbf{0.506} $\pm$ 0.006 \\
 & LG-GNN & \textbf{0.088} $\pm$ 0.002 & \textbf{0.460} $\pm$ 0.011 & \textbf{0.436} $\pm$ 0.008 & 0.501 $\pm$ 0.014 \\
 & PLSG-GNN & 0.089 $\pm$ 0.001 & 0.390 $\pm$ 0.019 & 0.382 $\pm$ 0.011 & 0.496 $\pm$ 0.007 \\
\cline{1-6}
\multirow[t]{3}{*}{rho=log(n)/n, layers=2} & GCN & 0.693 $\pm$ 0.000 & 0.382 $\pm$ 0.018 & 0.371 $\pm$ 0.008 & 0.488 $\pm$ 0.010 \\
 & LG-GNN & 0.021 $\pm$ 0.001 & 0.355 $\pm$ 0.024 & 0.371 $\pm$ 0.012 & \textbf{0.510} $\pm$ 0.019 \\
 & PLSG-GNN & \textbf{0.019} $\pm$ 0.001 & \textbf{0.387} $\pm$ 0.049 & \textbf{0.377} $\pm$ 0.015 & 0.491 $\pm$ 0.030 \\
\cline{1-6}
\multirow[t]{3}{*}{rho=log(n)/n, layers=4} & GCN & 0.694 $\pm$ 0.000 & 0.377 $\pm$ 0.022 & 0.379 $\pm$ 0.014 & 0.506 $\pm$ 0.015 \\
 & LG-GNN & 0.021 $\pm$ 0.000 & \textbf{0.460} $\pm$ 0.054 & \textbf{0.384} $\pm$ 0.011 & 0.499 $\pm$ 0.015 \\
 & PLSG-GNN & \textbf{0.018} $\pm$ 0.000 & 0.395 $\pm$ 0.013 & 0.376 $\pm$ 0.021 & \textbf{0.511} $\pm$ 0.011 \\
\cline{1-6}
\bottomrule
\end{tabular}
\end{table}
\end{center}
\FloatBarrier
\newpage

\subsection{6SSBM (55-45)}

6-community symmetric stochastic block model with edge connection probabilities 0.55 and 0.45.

\begin{center}
\begin{table}[h]
\caption{Symmetric Stochastic Block Model with Connection Probabilities 0.55, 0.45}
\centering
\captionsetup{justification=centering,margin=1cm}
\label{tab:simulation_results}
\begin{tabular}{llllll}
\toprule
 &  & Cross Entropy & Prob Ratio @ 100 & Prob Ratio @ 500 & AUC ROC \\
Parameters & Model &  &  &  &  \\
\midrule
\multirow[t]{3}{*}{rho=1, layers=2} & GCN & \textbf{0.693} $\pm$ 0.000 & \textbf{0.848} $\pm$ 0.006 & \textbf{0.850} $\pm$ 0.002 & \textbf{0.500} $\pm$ 0.002 \\
 & LG-GNN & 0.698 $\pm$ 0.002 & 0.845 $\pm$ 0.005 & 0.846 $\pm$ 0.003 & 0.500 $\pm$ 0.001 \\
 & PLSG-GNN & 0.694 $\pm$ 0.001 & 0.844 $\pm$ 0.003 & 0.846 $\pm$ 0.003 & 0.500 $\pm$ 0.001 \\
\cline{1-6}
\multirow[t]{3}{*}{rho=1, layers=4} & GCN & \textbf{0.693} $\pm$ 0.000 & \textbf{0.848} $\pm$ 0.004 & \textbf{0.850} $\pm$ 0.001 & 0.498 $\pm$ 0.001 \\
 & LG-GNN & 0.702 $\pm$ 0.001 & 0.847 $\pm$ 0.012 & 0.848 $\pm$ 0.004 & 0.499 $\pm$ 0.001 \\
 & PLSG-GNN & 0.695 $\pm$ 0.000 & 0.844 $\pm$ 0.011 & 0.850 $\pm$ 0.004 & \textbf{0.499} $\pm$ 0.001 \\
\cline{1-6}
\multirow[t]{3}{*}{rho=1/sqrt(n), layers=2} & GCN & 0.693 $\pm$ 0.000 & 0.851 $\pm$ 0.004 & 0.850 $\pm$ 0.002 & 0.496 $\pm$ 0.004 \\
 & LG-GNN & 0.131 $\pm$ 0.003 & \textbf{0.859} $\pm$ 0.010 & \textbf{0.851} $\pm$ 0.005 & 0.505 $\pm$ 0.011 \\
 & PLSG-GNN & \textbf{0.131} $\pm$ 0.002 & 0.844 $\pm$ 0.007 & 0.850 $\pm$ 0.001 & \textbf{0.505} $\pm$ 0.003 \\
\cline{1-6}
\multirow[t]{3}{*}{rho=1/sqrt(n), layers=4} & GCN & 0.693 $\pm$ 0.000 & 0.842 $\pm$ 0.008 & 0.847 $\pm$ 0.001 & \textbf{0.500} $\pm$ 0.009 \\
 & LG-GNN & \textbf{0.123} $\pm$ 0.001 & 0.850 $\pm$ 0.003 & \textbf{0.849} $\pm$ 0.001 & 0.497 $\pm$ 0.016 \\
 & PLSG-GNN & 0.130 $\pm$ 0.002 & \textbf{0.852} $\pm$ 0.008 & 0.848 $\pm$ 0.001 & 0.498 $\pm$ 0.012 \\
\cline{1-6}
\multirow[t]{3}{*}{rho=log(n)/n, layers=2} & GCN & 0.693 $\pm$ 0.000 & 0.844 $\pm$ 0.007 & \textbf{0.850} $\pm$ 0.001 & 0.488 $\pm$ 0.031 \\
 & LG-GNN & 0.030 $\pm$ 0.000 & 0.842 $\pm$ 0.002 & 0.846 $\pm$ 0.005 & 0.484 $\pm$ 0.011 \\
 & PLSG-GNN & \textbf{0.029} $\pm$ 0.001 & \textbf{0.851} $\pm$ 0.001 & 0.849 $\pm$ 0.002 & \textbf{0.505} $\pm$ 0.008 \\
\cline{1-6}
\multirow[t]{3}{*}{rho=log(n)/n, layers=4} & GCN & 0.693 $\pm$ 0.000 & \textbf{0.851} $\pm$ 0.004 & \textbf{0.847} $\pm$ 0.002 & 0.493 $\pm$ 0.024 \\
 & LG-GNN & 0.030 $\pm$ 0.002 & 0.844 $\pm$ 0.005 & 0.845 $\pm$ 0.003 & 0.488 $\pm$ 0.015 \\
 & PLSG-GNN & \textbf{0.028} $\pm$ 0.000 & 0.845 $\pm$ 0.009 & 0.846 $\pm$ 0.004 & \textbf{0.504} $\pm$ 0.002 \\
\cline{1-6}
\bottomrule
\end{tabular}
\end{table}
\end{center}
\FloatBarrier

\newpage

\subsection{10 SBM}
10-community stochastic block model with parameter matrix $P$ that has randomly generated entries. The diagonal entries $P_{i,i}$ are generated as $\text{Unif}(0.5, 1)$, and $P_{i,j}$ is generated as $\text{Unif}(0, \min(P_{i,i}, P_{j,j}))$. The connection matrix is 

\begin{center}
    $$\begin{pmatrix}
0.9949 & 0.3084 & 0.4553 & 0.3747 & 0.6187 & 0.0052 & 0.2626 & 0.5787 & 0.4540 & 0.6768 \\
0.3084 & 0.8309 & 0.6851 & 0.0571 & 0.5225 & 0.3345 & 0.1279 & 0.0197 & 0.7063 & 0.7795 \\
0.4553 & 0.6851 & 0.7854 & 0.1000 & 0.7726 & 0.1882 & 0.1736 & 0.6723 & 0.3278 & 0.6033 \\
0.3747 & 0.0571 & 0.1000 & 0.6160 & 0.1168 & 0.0965 & 0.0021 & 0.1856 & 0.3248 & 0.4507 \\
0.6187 & 0.5225 & 0.7726 & 0.1168 & 0.8614 & 0.5492 & 0.1098 & 0.4278 & 0.6386 & 0.1171 \\
0.0052 & 0.3345 & 0.1882 & 0.0965 & 0.5492 & 0.6623 & 0.4277 & 0.0070 & 0.1145 & 0.2878 \\
0.2626 & 0.1279 & 0.1736 & 0.0021 & 0.1098 & 0.4277 & 0.5528 & 0.2016 & 0.5466 & 0.0410 \\
0.5787 & 0.0197 & 0.6723 & 0.1856 & 0.4278 & 0.0070 & 0.2016 & 0.8805 & 0.5233 & 0.0777 \\
0.4540 & 0.7063 & 0.3278 & 0.3248 & 0.6386 & 0.1145 & 0.5466 & 0.5233 & 0.9510 & 0.4890 \\
0.6768 & 0.7795 & 0.6033 & 0.4507 & 0.1171 & 0.2878 & 0.0410 & 0.0777 & 0.4890 & 0.8526 \\
\end{pmatrix}$$
\end{center}

\begin{center}
    
\begin{table}[h]
\caption{10-community SBM with randomly generated parameters}
\centering
\captionsetup{justification=centering,margin=1cm}
\label{tab:simulation_results}
\begin{tabular}{llllll}
\toprule
 &  & Cross Entropy & Prob Ratio @ 100 & Prob Ratio @ 500 & AUC ROC \\
Parameters & Model &  &  &  &  \\
\midrule
\multirow[t]{3}{*}{rho=1, layers=2} & GCN & 0.635 $\pm$ 0.014 & 0.709 $\pm$ 0.125 & 0.726 $\pm$ 0.108 & 0.716 $\pm$ 0.019 \\
 & LG-GNN & 0.586 $\pm$ 0.004 & 0.883 $\pm$ 0.016 & 0.843 $\pm$ 0.014 & 0.734 $\pm$ 0.005 \\
 & PLSG-GNN & \textbf{0.586} $\pm$ 0.004 & \textbf{0.886} $\pm$ 0.016 & \textbf{0.844} $\pm$ 0.013 & \textbf{0.735} $\pm$ 0.005 \\
\cline{1-6}
\multirow[t]{3}{*}{rho=1, layers=4} & GCN & 0.801 $\pm$ 0.193 & 0.645 $\pm$ 0.025 & 0.633 $\pm$ 0.027 & 0.578 $\pm$ 0.109 \\
 & LG-GNN & \textbf{0.564} $\pm$ 0.011 & 0.879 $\pm$ 0.011 & \textbf{0.886} $\pm$ 0.004 & \textbf{0.786} $\pm$ 0.002 \\
 & PLSG-GNN & 0.592 $\pm$ 0.004 & \textbf{0.883} $\pm$ 0.013 & 0.836 $\pm$ 0.015 & 0.732 $\pm$ 0.001 \\
\cline{1-6}
\multirow[t]{3}{*}{rho=1/sqrt(n), layers=2} & GCN & 0.693 $\pm$ 0.000 & 0.344 $\pm$ 0.021 & 0.318 $\pm$ 0.013 & 0.493 $\pm$ 0.004 \\
 & LG-GNN & 0.115 $\pm$ 0.002 & 0.580 $\pm$ 0.020 & 0.557 $\pm$ 0.007 & 0.497 $\pm$ 0.009 \\
 & PLSG-GNN & \textbf{0.112} $\pm$ 0.004 & \textbf{0.586} $\pm$ 0.035 & \textbf{0.561} $\pm$ 0.001 & \textbf{0.521} $\pm$ 0.008 \\
\cline{1-6}
\multirow[t]{3}{*}{rho=1/sqrt(n), layers=4} & GCN & 0.693 $\pm$ 0.000 & 0.285 $\pm$ 0.016 & 0.275 $\pm$ 0.006 & 0.486 $\pm$ 0.006 \\
 & LG-GNN & \textbf{0.105} $\pm$ 0.000 & \textbf{0.589} $\pm$ 0.016 & \textbf{0.563} $\pm$ 0.003 & \textbf{0.532} $\pm$ 0.003 \\
 & PLSG-GNN & 0.111 $\pm$ 0.002 & 0.578 $\pm$ 0.013 & 0.544 $\pm$ 0.009 & 0.508 $\pm$ 0.011 \\
\cline{1-6}
\multirow[t]{3}{*}{rho=log(n)/n, layers=2} & GCN & 0.693 $\pm$ 0.000 & 0.312 $\pm$ 0.011 & 0.316 $\pm$ 0.006 & 0.503 $\pm$ 0.017 \\
 & LG-GNN & 0.026 $\pm$ 0.000 & \textbf{0.528} $\pm$ 0.029 & \textbf{0.504} $\pm$ 0.006 & 0.506 $\pm$ 0.015 \\
 & PLSG-GNN & \textbf{0.023} $\pm$ 0.002 & 0.511 $\pm$ 0.017 & 0.501 $\pm$ 0.013 & \textbf{0.519} $\pm$ 0.002 \\
\cline{1-6}
\multirow[t]{3}{*}{rho=log(n)/n, layers=4} & GCN & 0.693 $\pm$ 0.000 & 0.304 $\pm$ 0.027 & 0.304 $\pm$ 0.015 & \textbf{0.518} $\pm$ 0.005 \\
 & LG-GNN & 0.026 $\pm$ 0.000 & 0.498 $\pm$ 0.017 & 0.486 $\pm$ 0.015 & 0.500 $\pm$ 0.013 \\
 & PLSG-GNN & \textbf{0.024} $\pm$ 0.000 & \textbf{0.546} $\pm$ 0.018 & \textbf{0.505} $\pm$ 0.018 & 0.498 $\pm$ 0.016 \\
\cline{1-6}
\bottomrule
\end{tabular}
\end{table}

\end{center}

\FloatBarrier

\newpage

\subsection{Geometric Graph}

We generate points uniformly on $\mathbb{S}^d$ and connect two points if $\langle X_i, X_j \rangle \ge t.$ For the following experiment, we chose $d=11$ and $t = 0.3.$ This corresponds to a probability of about 0.15.

\begin{table}[h]
\caption{Geometric Graph with threshold 0.2 (corresponding to a connection probability of $\approx 0.26$)}
\centering
\captionsetup{justification=centering,margin=1cm}
\label{tab:simulation_results}
\begin{tabular}{llllll}
\toprule
 &  & Cross Entropy & Prob Ratio @ 100 & Prob Ratio @ 500 & AUC ROC \\
Parameters & Model &  &  &  &  \\
\midrule
\multirow[t]{3}{*}{rho=1, layers=2} & GCN & 0.573 $\pm$ 0.015 & \textbf{1.000} $\pm$ 0.000 & 0.996 $\pm$ 0.002 & 0.873 $\pm$ 0.020 \\
 & LG-GNN & 0.358 $\pm$ 0.009 & \textbf{1.000} $\pm$ 0.000 & \textbf{0.999} $\pm$ 0.001 & 0.915 $\pm$ 0.007 \\
 & PLSG-GNN & \textbf{0.350} $\pm$ 0.013 & 0.997 $\pm$ 0.005 & 0.999 $\pm$ 0.002 & \textbf{0.917} $\pm$ 0.010 \\
\cline{1-6}
\multirow[t]{3}{*}{rho=1, layers=4} & GCN & 0.693 $\pm$ 0.000 & 0.813 $\pm$ 0.021 & 0.733 $\pm$ 0.079 & 0.591 $\pm$ 0.016 \\
 & LG-GNN & 0.303 $\pm$ 0.004 & \textbf{1.000} $\pm$ 0.000 & \textbf{1.000} $\pm$ 0.000 & 0.956 $\pm$ 0.001 \\
 & PLSG-GNN & \textbf{0.298} $\pm$ 0.004 & \textbf{1.000} $\pm$ 0.000 & \textbf{1.000} $\pm$ 0.000 & \textbf{0.958} $\pm$ 0.001 \\
\cline{1-6}
\multirow[t]{3}{*}{rho=1/sqrt(n), layers=2} & GCN & 0.693 $\pm$ 0.000 & 0.333 $\pm$ 0.017 & 0.216 $\pm$ 0.017 & 0.840 $\pm$ 0.008 \\
 & LG-GNN & 0.046 $\pm$ 0.003 & \textbf{0.523} $\pm$ 0.037 & \textbf{0.311} $\pm$ 0.021 & 0.818 $\pm$ 0.022 \\
 & PLSG-GNN & \textbf{0.045} $\pm$ 0.002 & 0.423 $\pm$ 0.054 & 0.244 $\pm$ 0.020 & \textbf{0.842} $\pm$ 0.017 \\
\cline{1-6}
\multirow[t]{3}{*}{rho=1/sqrt(n), layers=4} & GCN & 0.693 $\pm$ 0.000 & 0.313 $\pm$ 0.021 & 0.207 $\pm$ 0.013 & \textbf{0.848} $\pm$ 0.021 \\
 & LG-GNN & \textbf{0.045} $\pm$ 0.001 & \textbf{0.570} $\pm$ 0.016 & \textbf{0.311} $\pm$ 0.010 & 0.823 $\pm$ 0.010 \\
 & PLSG-GNN & 0.045 $\pm$ 0.001 & 0.510 $\pm$ 0.014 & 0.289 $\pm$ 0.003 & 0.843 $\pm$ 0.013 \\
\cline{1-6}
\multirow[t]{3}{*}{rho=log(n)/n, layers=2} & GCN & 0.693 $\pm$ 0.000 & 0.003 $\pm$ 0.005 & 0.012 $\pm$ 0.004 & 0.610 $\pm$ 0.026 \\
 & LG-GNN & 0.018 $\pm$ 0.000 & \textbf{0.210} $\pm$ 0.029 & \textbf{0.276} $\pm$ 0.005 & 0.616 $\pm$ 0.018 \\
 & PLSG-GNN & \textbf{0.018} $\pm$ 0.001 & 0.063 $\pm$ 0.037 & 0.123 $\pm$ 0.033 & \textbf{0.631} $\pm$ 0.027 \\
\cline{1-6}
\multirow[t]{3}{*}{rho=log(n)/n, layers=4} & GCN & 0.693 $\pm$ 0.000 & 0.007 $\pm$ 0.009 & 0.035 $\pm$ 0.007 & \textbf{0.607} $\pm$ 0.002 \\
 & LG-GNN & 0.019 $\pm$ 0.000 & \textbf{0.147} $\pm$ 0.012 & \textbf{0.191} $\pm$ 0.017 & 0.569 $\pm$ 0.015 \\
 & PLSG-GNN & \textbf{0.018} $\pm$ 0.000 & 0.107 $\pm$ 0.012 & 0.143 $\pm$ 0.018 & 0.607 $\pm$ 0.010 \\
\cline{1-6}
\bottomrule
\end{tabular}
\end{table}
\FloatBarrier

\newpage

\end{document}